\documentclass[11pt]{article}

\usepackage{setspace}
\usepackage{cprotect}
\usepackage{amsmath,amssymb,amsthm}
\usepackage[noend]{algorithmic}
\usepackage[ruled,vlined]{algorithm2e}
\usepackage{hyperref}
\usepackage{fullpage}

\usepackage{makeidx}
\usepackage{enumerate}
\usepackage{graphicx,float,psfrag,epsfig}
\usepackage{epstopdf}
\usepackage{color}
\usepackage{enumitem}
\usepackage{subfig}
\usepackage{caption}
\usepackage{bigints}
\usepackage{mathtools}
\usepackage[mathscr]{euscript}

\def\<{\langle}
\def\>{\rangle}
\usepackage{mathtools}
\def\bx{{\boldsymbol x}}
\def\by{{\boldsymbol y}}
\def\bmu{{\boldsymbol \mu}}
\def\bSigma{{\boldsymbol \Sigma}}
\def\bnu{{\boldsymbol \nu}}
\def\bOmega{{\boldsymbol \Omega}}
\def\btOmega{{\tilde{\boldsymbol \Omega}}}
\def\btSigma{{\tilde{\boldsymbol \Sigma}}}
\def\bn{{\boldsymbol n}}
\def\be{{\boldsymbol e}}
\def\bs{{\boldsymbol s}}
\def\br{{\boldsymbol r}}
\def\bdff{{\boldsymbol f}}
\def\bu{{\boldsymbol u}}
\def\bg{{\boldsymbol g}}
\def\bX{{\boldsymbol X}}
\def\bG{{\boldsymbol G}}
\def\bO{{\boldsymbol O}}
\def\bI{{\boldsymbol I}}
\def\bA{{\boldsymbol A}}
\def\bQ{{\boldsymbol Q}}
\def\bLambda{{\boldsymbol \Lambda}}
\def\blambda{{\boldsymbol \lambda}}
\def\bv{{\boldsymbol v}}
\def\bW{{\boldsymbol W}}
\def\sT{{\sf T}}
\def\sb{{\sf b}}
\def\sx{{\sf x}}
\def\sy{{\sf y}}
\def\sa{{\sf a}}
\def\su{{\sf u}}
\def\tsu{{\tilde{\sf u}}}
\def\sv{{\sf v}}
\def\tsv{{\tilde{\sf v}}}
\def\tsb{{\tilde{\sf b}}}
\def\tsa{{\tilde{\sf a}}}
\def\tbu{{\tilde{\boldsymbol{u}}}}
\def\tbv{{\tilde{\boldsymbol{v}}}}
\def\tbf{{\tilde{\boldsymbol{f}}}}
\def\tbg{{\tilde{\boldsymbol{g}}}}
\def\hbu{{\hat{\boldsymbol{u}}}}
\def\hbv{{\hat{\boldsymbol{v}}}}
\def\hbf{{\hat{\boldsymbol{f}}}}
\def\hbg{{\hat{\boldsymbol{g}}}}
\def\hf{\hat{f}}
\def\hg{\hat{g}}
\def\hu{\hat{u}}
\def\hv{\hat{v}}
\def\tmu{\tilde{\mu}}
\def\tsigma{\tilde{\sigma}}
\def\tnu{\tilde{\nu}}
\def\tomega{\tilde{\omega}}
\def\tu{\tilde{u}}
\def\tv{\tilde{v}}
\def\tf{\tilde{f}}
\def\tg{\tilde{g}}
\def\tU{\tilde{U}}
\def\tY{\tilde{Y}}
\def\tV{\tilde{V}}
\def\tZ{\tilde{Z}}
\def\tF{\tilde{F}}
\def\tG{\tilde{G}}

\def\E{{\mathbb E}} 
\def\PL{{\rm PL}}
\def\PCA{{\rm PCA}}
\newcommand{\normal}{\mathcal{N}}
\newcommand{\reals}{\mathbb{R}}
\newcommand{\bzero}{\boldsymbol{0}}
\newcommand{\beq}{\begin{equation}}
\newcommand{\eeq}{\end{equation}}
\newcommand\norm[1]{\left\lVert{#1}\right\rVert}
\newcommand\abs[1]{\left\lvert{#1}\right\rvert}

\numberwithin{equation}{section}

\newtheoremstyle{myexample} 
    {\topsep}                    
    {\topsep}                    
    {\rm }                   
    {}                           
    {\bf }                   
    {.}                          
    {.5em}                       
    {}  

\newtheoremstyle{myremark} 
    {\topsep}                    
    {\topsep}                    
    {\rm}                        
    {}                           
    {\bf}                        
    {.}                          
    {.5em}                       
    {}  

\newtheorem{claim}{Claim}[section]
\newtheorem{lemma}[claim]{Lemma}

\newtheorem{theorem}{Theorem}
\newtheorem{proposition}[claim]{Proposition}

\theoremstyle{myremark}

\theoremstyle{myremark}

\theoremstyle{myexample}

\title{PCA Initialization for Approximate Message Passing in Rotationally Invariant Models}




%

\author{Marco Mondelli\thanks{Institute of Science and Technology (IST) Austria. Email: \texttt{marco.mondelli@ist.ac.at}.}\;\;\;and\;\;\;Ramji Venkataramanan\thanks{Department of Engineering, University of Cambridge. Email: \texttt{ramji.v@eng.cam.ac.uk}.}}
\begin{document}

\maketitle

\begin{abstract}
  We study the problem of estimating a rank-$1$ signal in the presence of rotationally invariant noise---a class of perturbations more general than Gaussian noise.  Principal Component Analysis (PCA) provides a natural estimator, and sharp results on its performance have been obtained in the high-dimensional regime. Recently, an Approximate Message Passing (AMP) algorithm has been proposed as an alternative estimator with the potential to improve the accuracy of PCA. However, the existing analysis of AMP requires an initialization that is both correlated with the signal and independent of the noise, which is often unrealistic in practice. In this work, we combine the two methods, and propose to initialize AMP with PCA. Our main result is a rigorous asymptotic characterization of the performance of this estimator. Both the AMP algorithm and its analysis differ from those previously derived in the Gaussian setting: at every iteration, our AMP algorithm requires a specific term to account for PCA initialization, while in the Gaussian case, PCA initialization affects only the first iteration of AMP. The proof is based on a two-phase artificial AMP that first approximates the PCA estimator and then mimics the true AMP. Our numerical simulations show an excellent agreement between AMP results and theoretical predictions, and suggest an interesting open direction on achieving Bayes-optimal performance. 
\end{abstract}




\section{Introduction}

We consider the problem of estimating a rank-$1$ signal from a noisy data matrix. In the square symmetric case, the data matrix is modeled as 
\begin{equation}\label{eq:defsquare}
    \bX = \frac{\alpha}{n}\bu^*{\bu^*}^{\sT} + \bW \in\mathbb R^{n\times n},
\end{equation}
where $\bu^*\in\mathbb R^n$ is the unknown rank-$1$ signal, $\bW\in\mathbb R^{n\times n}$ is a symmetric noise matrix, and $\alpha >0$ captures the signal-to-noise ratio (SNR). In the rectangular case, we observe the data matrix 
\begin{equation}\label{eq:defrect}
\bX = \frac{\alpha}{m}\bu^*{\bv^*}^\sT + \bW\in\mathbb R^{m\times n},
\end{equation}
where $\bu^*\in\mathbb R^m$ and $\bv^*\in\mathbb R^n$ are the unknown signals, and $\bW\in\mathbb R^{m\times n}$ is a rectangular noise matrix.
A natural estimator of the signal in the symmetric case is the principal eigenvector of $\bX$ (singular vectors, in the rectangular case). The performance of this principal component analysis (PCA) estimator and, more generally, the behavior of eigenvalues and eigenvectors of models like \eqref{eq:defsquare}-\eqref{eq:defrect} has been widely studied in statistics \cite{johnstone2001distribution, paul2007asymptotics} and random matrix theory \cite{baik2005phase, baik2006eigenvalues, benaych2011eigenvalues, benaych2012singular, capitaine2009largest, feral2007largest, knowles2013isotropic}.  

If $\bu^*, \bv^*$ are unstructured (e.g., they are uniformly distributed on a sphere), then it is not generally possible to improve on the PCA estimator. 
 However,  in a broad range of applications, the unknown signals have some underlying structure, e.g., they may be sparse, their entries may belong to a certain set, or they may be modelled using a prior distribution. Examples of structured matrix estimation problems include sparse PCA \cite{deshpande2014information, johnstone2009consistency, zou2006sparse}, non-negative PCA \cite{lee1999learning, montanari2016non}, community detection under the stochastic block model \cite{abbe2017community, deshpande2016asymptotic, moore2017computer}, and group synchronization \cite{perry2018message}.  Since PCA is ill-equipped to capture the structure of the signal, we aim to improve on it using a family of iterative algorithms known as approximate message passing (AMP). AMP algorithms have two particularly attractive features: \emph{(i)} they can be tailored to take advantage of prior information on the structure of the signal; and \emph{(ii)} under suitable model assumptions, their performance in the high-dimensional limit is precisely characterized by a succinct deterministic recursion called \emph{state evolution} \cite{BM-MPCS-2011, bolthausen2014iterative, javanmard2013state}.  
AMP algorithms have been applied to a wide range of inference problems: estimation in linear models \cite{BayatiMontanariLASSO,BM-MPCS-2011,DMM09,  krzakala2012,maleki2013asymptotic}, generalized linear models \cite{barbier2019optimal,ma2019optimization, maillard2020phase,mondelli2021approximate,RanganGAMP,schniter2014compressive,sur2019modern}, and
low-rank matrix estimation with Gaussian noise \cite{BarbierMR20,deshpande2014information,fletcher2018iterative,kabashima2016phase, lesieur2017constrained,montanari2017estimation}. The survey \cite{feng2021unifying} provides a unified description of AMP for these applications. Using the state evolution analysis, it has been proved that AMP achieves Bayes-optimal performance in some Gaussian models \cite{deshpande2014information,DonSpatialC13, montanari2017estimation}, and a bold conjecture from statistical physics posits that AMP is optimal among polynomial-time algorithms.

We study  rank-1 matrix estimation in the setting where the noise matrix $\bW$ is rotationally invariant. This is a much milder assumption than $\bW$ being Gaussian: it only imposes that the orthogonal matrices in the spectral decomposition of $\bW$ are uniformly random, and allows for arbitrary eigenvalues/singular values. Hence, $\bW$ can capture a more complex correlation structure, which is typical in applications. For the models \eqref{eq:defsquare}-\eqref{eq:defrect} with rotationally invariant noise, AMP algorithms were derived in \cite{ccakmak2019memory,opper2016theory} and generalized in \cite{fan2020approximate}. In particular, the AMP algorithm of \cite{fan2020approximate} for the problem \eqref{eq:defsquare} produces estimates $\bu^t \in \reals^n$  as follows:
\begin{equation}\label{eq:AMP0}
    \bu^t = \su_t(\bdff^{t-1}), \quad \bdff^t= \bX \bu^t - \sum_{i=1}^t \sb_{t,i}\bu^i, \qquad t \ge 2.
\end{equation}
The iteration is initialized with a pilot estimate $\bu^1$. We can interpret \eqref{eq:AMP0} as a generalized power method.  Recall that the power method approximates the principal eigenvector of $\bX$ using the iterative updates $\bar{\bu}^t = \bX \bar{\bu}^{t-1}/ \| \bX \bar{\bu}^{t-1} \|$. For each $t$, the function $\su_t$ can be chosen to exploit any structural information known about  the signal (e.g., sparsity).   The ``memory'' coefficients $\{ \sb_{t,1}, \ldots, \sb_{t,t} \}$ have a specific form  to  ensure that the iterates $(\bdff^{t}, \bu^{t+1})$ have desirable statistical properties captured by state evolution. 
A rigorous state evolution result for the iteration \eqref{eq:AMP0} is established in \cite{fan2020approximate}, but the algorithm  and its analysis require an initialization $\bu^1$ that is correlated with the unknown signal and independent of the noise $\bW$. In practice, one typically does not have access to such an initialization. 

\paragraph{Main contribution.} In this paper, we propose an AMP algorithm initialized via the PCA estimator, namely, the principal eigenvector of $\bX$ for the square case \eqref{eq:defsquare} and the left singular vector of $\bX$ for the rectangular case \eqref{eq:defrect}. Our main technical contribution is a state evolution result for this AMP algorithm, which gives a rigorous characterization of its performance in the high-dimensional limit. The challenge is that, as the PCA initialization depends on the noise matrix $\bW$, one cannot apply the state evolution machinery of \cite{fan2020approximate}. To circumvent this issue, our key idea is to construct and analyze a \emph{two-phase artificial AMP} algorithm. In the first phase, the artificial AMP performs a power method approaching the PCA estimator; and in the second phase, it mimics the behavior of the true AMP. We remark that the artificial AMP only serves as a proof technique. Thus, we can initialize it with a vector correlated with the signal $\bu^*$ and independent of the noise matrix $\bW$, which allows us to analyze it using the existing state evolution result. 

Our analysis is tight in the sense that our AMP algorithm can be initialized with PCA whenever the PCA estimate has strictly positive correlation with the signal. This requires showing that, when PCA is effective, the state evolution of the first phase of the artificial AMP has a unique fixed point. To obtain such a result, we exploit free probability tools developed in \cite{benaych2011eigenvalues, benaych2012singular}. The agreement between the practical performance of AMP and the theoretical predictions of state evolution is demonstrated via numerical results for different spectral distributions of $\bW$. Our simulations also show that the performance of AMP---as well as its ability to improve upon the PCA initialization---crucially depends on the choice of the denoising functions $\su_t$ in the algorithm. Thus, the design of a Bayes-optimal AMP remains an exciting avenue for future research. 

\paragraph{Related work.}   The asymptotic Bayes-optimal error for low-rank matrix estimation has been  precisely characterized for Gaussian noise \cite{barbier2016mutual, lelarge2019fundamental}, but remains an open problem for rotationally invariant noise. An AMP algorithm with PCA initialization was proposed in \cite{montanari2017estimation} for the Gaussian setting, and it was shown to be Bayes-optimal for some signal priors.  A recent paper by Zhong et al. \cite{zhong2021empirical} shows how AMP with PCA initialization can be used for estimating the top-$k$ principal components  in applications such as high-dimensional genomics datasets. The authors use an empirical Bayes method to determine a  joint prior distribution for the $k$ principal components, and assuming a Gaussian noise model, employ an AMP algorithm tailored to the prior  to improve the PCA estimates of the principal components.

Both our AMP algorithm and proof technique differ significantly  from those for Gaussian noise.  When $\bW$ is Gaussian, the PCA initialization affects only the first iteration of AMP. In contrast, for more general noise distributions, the AMP algorithm and its associated state evolution require a correction term at every iteration to account for the PCA initialization. This is due to the fact that, while AMP has a single memory term in the Gaussian case, more general noise distributions lead to a more involved memory structure, as  in \eqref{eq:AMP0}. As regards the proof technique, the argument of \cite{montanari2017estimation} consists of decoupling the PCA estimate from the bulk of the spectrum of $\bX$. In contrast,  our approach is based on a two-phase artificial AMP algorithm. This technique has proved successful in the context of generalized linear models \cite{mondelli2020optimal,mondelli2021approximate}, albeit for Gaussian measurements.  Other extensions of AMP beyond the Gaussian setting include Orthogonal AMP \cite{ma2017orthogonal, takeuchi2020rigorous}, Vector AMP \cite{gerbelot2020asymptotic1, gerbelot2020asymptotic2,rangan2019vector,schniter2016vector}, convolutional AMP \cite{takeuchi2021bayes} and Memory AMP \cite{LiuMemoryAMP20}. These algorithms have been derived specifically for linear or generalized linear models, and extending them (with a practical initialization method) to low-rank matrix estimation is an interesting research direction.

Finally, we mention the recent independent work of Zhong et al.  \cite{zhong2021approximate}, which appeared after the original submission of our paper. This work generalizes AMP with PCA initialization to the problem of estimating rank-$k$ matrices in rotationally invariant noise, for $k \ge 1$. We remark that, in order
to prove a state evolution result for AMP initialized with PCA, in \cite{zhong2021approximate} it is assumed that the signal strength is sufficiently large. In contrast, our result holds for any signal strength such that the PCA method is effective, but we require the free cumulants of the noise matrix to be non-negative. We also note that, when the signal strength is large, the assumption on the free cumulants can be automatically satisfied (see the footnote on p.\pageref{foot}). 


\section{Preliminaries}\label{sec:prel}

\paragraph{Notation and definitions.} Given $a\in\mathbb R$, we define $(a)_+=\max(a, 0)$. Given two integers $i \le j$, we define $[i, j]=\{i, \ldots, j\}$. If $i>j$, then $[i, j]$ denotes the empty set; products over the empty set are taken to be 1. Given a vector $\bx\in\mathbb R^{n}$, we denote by $\|\bx\|$ its Euclidean norm and by $\langle \bx\rangle$ its empirical mean, i.e., $\langle \bx\rangle=\frac{1}{n}\sum_{i=1}^n x_i$. The empirical distribution of $\bx=(x_1, \ldots, x_n)^\sT$ is given by $\frac{1}{n}\sum_{i=1}^n \delta_{x_i}$, where $\delta_{x_i}$ denotes a Dirac delta mass on $x_i$. The notation $\bx\stackrel{\mathclap{W}}{\longrightarrow} X$ denotes convergence of the empirical distribution of $\bx$ to the random variable $X$ in Wasserstein distance at all orders.  Given a symmetric square matrix $\bA\in\mathbb R^{n\times n}$, we denote by $\lambda_1(\bA)\ge \lambda_2(\bA)\ge \ldots \ge\lambda_n(\bA)$ its eigenvalues sorted in decreasing order. Given a rectangular matrix $\bA\in\mathbb R^{m\times n}$, with $m<n$, we denote by $\sigma_1(\bA)\ge \sigma_2(\bA)\ge \ldots \sigma_m(\bA)$ its singular values sorted in decreasing order. 

\paragraph{Rank-$1$ estimation -- Symmetric square matrices.} Consider the problem of estimating the signal $\bu^*\in\mathbb R^n$ from the data matrix $\bX$  in \eqref{eq:defsquare}. We assume that $\bW$ is rotationally invariant in law, i.e., $\bW = \bO^\sT \bLambda \bO$, where $\bLambda={\rm diag}(\blambda)$ is a diagonal matrix containing the eigenvalues of $\bW$ and $\bO$ is a Haar orthogonal matrix independent of $\bLambda$. As $n\to\infty$, we assume that the empirical distributions of $\blambda$ and $\bu^*$ satisfy 
\beq
\blambda\stackrel{\mathclap{W}}{\longrightarrow} \Lambda \quad \text{ and }  \quad  \bu^*\stackrel{\mathclap{W}}{\longrightarrow} U_*,
\label{eq:limiting_laws}
\eeq
where $\Lambda$ and $U_*$ represent the limiting spectral distribution of the noise and the prior on the signal, respectively.
We take $\|\bu\|=\sqrt{n}$ so that $\mathbb E\{U_*^2\}=\lim_{n\to\infty}\frac{1}{n}\|\bu^*\|^2=1$. We assume that the moment $\E\{ U_*^{2+\varepsilon} \} < \infty$ for some $\varepsilon >0$.  We also assume that $\Lambda$ has compact support, and denote by $b$ the supremum of this support. We denote by $\{\kappa_k\}_{k\ge 1}$ the free cumulants corresponding to the moments $\{m_k\}_{k\ge 1}$ of the empirical eigenvalue distribution of $\bX$ excluding its largest eigenvalue, i.e., $m_k = \frac{1}{n}\sum_{i=2}^n\lambda_i(\bX)^k$ (for details, see \eqref{eq:mcrel1}-\eqref{eq:mcrel2} in Appendix \ref{sec:fptools}). The assumption \eqref{eq:limiting_laws} implies that, as $n\to\infty$, $m_k \to m^\infty_k=\mathbb E\{\Lambda^k\}$ and $\kappa_k \to \kappa^\infty_k$, where $\{m_k^\infty\}_{k\ge 1}$ and $\{\kappa_k^\infty\}_{k\ge 1}$ are respectively moments and free cumulants of $\Lambda$.

\paragraph{PCA -- Symmetric square matrices.} Let $\bu_{\rm PCA}$ be the principal eigenvector of $\bX$, and define $\alpha_{\rm s}=1/G(b^+)$, where  $G(z)=\mathbb E\{(z-\Lambda)^{-1}\}$ is the Cauchy transform of $\Lambda$, and $G(b^+)=\lim_{z\to b^+}G(z)$. Then, for $\alpha>\alpha_{\rm s}$,  $\lambda_1(\bX)\stackrel{\mathclap{\mbox{\footnotesize a.s.}}}{\longrightarrow} G^{-1}(1/\alpha)$ and $\lambda_2(\bX)\stackrel{\mathclap{\mbox{\footnotesize a.s.}}}{\longrightarrow} b$, where $G^{-1}$ is the inverse of $G$; see Theorem 2.1  in \cite{benaych2011eigenvalues}. Furthermore, Theorem 2.2 in \cite{benaych2011eigenvalues} gives that, for $\alpha>\alpha_{\rm s}$, 
\beq 
\frac{\langle\bu_{\rm PCA}, \bu^*\rangle^2}{n} \stackrel{\mathclap{\mbox{\footnotesize a.s.}}}{\longrightarrow}\rho_\alpha^2=\frac{-1}{\alpha^2G'(G^{-1}(1/\alpha))} >0 . 
\label{eq:rho_alpha_def}
\eeq
 In words, above the spectral threshold $\alpha_{\rm s}$, the principal eigenvalue of $\bX$ escapes the bulk of the spectrum and its associated eigenvector becomes strictly correlated with the signal $\bu^*$. 

\paragraph{Rank-$1$ estimation -- Rectangular matrices.} Consider now the problem of estimating the signals $\bu^*\in\mathbb R^m$ and $\bv^*\in\mathbb R^n$ given the rectangular data matrix $\bX$ in \eqref{eq:defrect}. Without loss of generality, we assume that $m\le n$ (if $m> n$, one can just exchange the role of $\bu^*$ and $\bv^*$ and consider $\bX^\sT$ in place of $\bX$). We assume that $W$ is bi-rotationally invariant in law, i.e., $\bW = \bO^\sT \bLambda \bQ$, where $\bLambda={\rm diag}(\blambda)$ is a $m\times n$ diagonal matrix containing the singular values of $\bW$, and $\bO$, $\bQ$ are Haar orthogonal matrices independent of one another and also of $\bLambda$. As $n\to\infty$, we assume that $\blambda\stackrel{\mathclap{W}}{\longrightarrow} \Lambda$, $\bu^*\stackrel{\mathclap{W}}{\longrightarrow} U_*$, $\bv^*\stackrel{\mathclap{W}}{\longrightarrow} V_*$ and $m/n\to \gamma$, for some constant $\gamma\in (0, 1]$. We take $\|\bu\|=\sqrt{m}$ and $\|\bv\|=\sqrt{n}$ so that $\mathbb E\{U_*^2\}=\mathbb E\{V_*^2\}=1$. As before, $b<\infty$ is the supremum of the compact support of $\Lambda$, and $U_*, V_*$ are assumed to have finite $(2+\varepsilon)$-th moment for some $\varepsilon >0$ . To analyze PCA using the framework in \cite{benaych2012singular}, we also assume that the entries of $\bu^*$ and $\bv^*$ are i.i.d., and their law has zero mean and satisfies a log-Sobolev inequality. We denote by $\{\kappa_{2k}\}_{k\ge 1}$ the rectangular free cumulants associated to the even moments $\{m_{2k}\}_{k\ge 1}$, with $m_{2k}=\frac{1}{m}\sum_{i=2}^m\sigma_i(\bX)^{2k}$ (for details, see \eqref{eq:mcrel1rect}-\eqref{eq:mcrel2rect} in Appendix \ref{sec:fptools}). Furthermore, as $n,m\to\infty$, $m_{2k} \to m^\infty_{2k}=\mathbb E\{\Lambda^{2k}\}$ and $\kappa_{2k}\to \kappa_{2k}^\infty$, where $\{m_{2k}^\infty\}_{k\ge 1}$ and $\{\kappa_{2k}^\infty\}_{k\ge 1}$ are respectively even moments and rectangular free cumulants of $\Lambda$.

\paragraph{PCA -- Rectangular matrices.} Denote by $\bu_{\rm PCA}$ and $\bv_{\rm PCA}$ the left and right principal singular vectors  of $\bX$, and define $\tilde{\alpha}_{\rm s}=1/\sqrt{D(b^+)}$, where $D(z)=\phi(z) \bar{\phi}(z)$, $\phi(z)=\mathbb E\{z/(z^2-\Lambda^2)\}$, $\bar{\phi}(z)=\gamma\phi(z)+(1-\gamma)/z$, and $D(b^+)=\lim_{z\to b^+}D(z)$. Note that the singular value of the rank-one signal $\frac{\alpha}{m}\bu^*{\bv^*}^\sT$ is $\tilde{\alpha}\triangleq \alpha/\sqrt{\gamma}$. Then, for $\tilde{\alpha}>\tilde{\alpha}_{\rm s}$, 
$\sigma_1(\bX)\stackrel{\mathclap{\mbox{\footnotesize a.s.}}}{\longrightarrow} D^{-1}(1/\tilde{\alpha}^2)$ and $\sigma_2(\bX)\stackrel{\mathclap{\mbox{\footnotesize a.s.}}}{\longrightarrow} b$; see Theorem 2.8 in \cite{benaych2012singular}. Furthermore, Theorem 2.9 in \cite{benaych2012singular} gives that, for $\tilde{\alpha}>\tilde{\alpha}_{\rm s}$,
\begin{align}
    \label{eq:DeltaPCA_def}
    \frac{\langle\bu_{\rm PCA}, \bu^*\rangle^2}{m}\stackrel{\mathclap{\mbox{\footnotesize a.s.}}}{\longrightarrow}\Delta_{\PCA}=\frac{-2\phi(D^{-1}(1/\tilde{\alpha}^2))}{\tilde{\alpha}^2D'(D^{-1}(1/\tilde{\alpha}^2))}>0, \\
    \frac{\langle\bv_{\rm PCA}, \bv^*\rangle^2}{n} \stackrel{\mathclap{\mbox{\footnotesize a.s.}}}{\longrightarrow}\Gamma_{\PCA}=\frac{-2\bar{\phi}(D^{-1}(1/\tilde{\alpha}^2))}{\tilde{\alpha}^2D'(D^{-1}(1/\tilde{\alpha}^2))} >0.
    \label{eq:GammaPCA_def}
\end{align} 
In words, above the spectral threshold $\tilde{\alpha}_{\rm s}$, the principal singular value escapes from the bulk of the spectrum and the left/right principal singular vectors become correlated with the signal $\bu^*$/$\bv^*$.

\section{PCA Initialization for Approximate Message Passing}

\subsection{Symmetric Square Matrices}\label{subsec:square}

We consider a family of Approximate Message Passing (AMP) algorithms to estimate $\bu^*$ from $\bX = \frac{\alpha}{n}\bu^*{\bu^*}^{\sT} + \bW$. We initialize using the PCA estimate $\bu_{\rm PCA}$:
\begin{equation}\label{eq:AMPinit}
    \bu^1 = \sqrt{n}\bu_{\rm PCA}, \quad \bdff^1 = \bX \bu^1 - \sb_{1,1}\bu^1,
\end{equation}
with $\sb_{1,1} =\sum_{i=0}^\infty \kappa_{i+1}\alpha^{-i}$. Then, for $t\ge 2$, the algorithm computes 
\begin{equation}\label{eq:AMP}
    \bu^t = \su_t(\bdff^{t-1}), \quad \bdff^t= \bX \bu^t - \sum_{i=1}^t \sb_{t,i}\bu^i,
\end{equation}
where the memory coefficients $\{\sb_{t,i}\}_{i \in [1, t]}$ are given by $\sb_{t,t}=\kappa_1$, and 
\begin{equation}\label{eq:sbtdef}
\begin{split}
  &\sb_{t,1}=\prod_{\ell=2}^t\langle \su_\ell'(\bdff^{\ell-1})\rangle\sum_{i=0}^\infty  \kappa_{i+t}\alpha^{-i}, \quad \sb_{t,t-j}=\kappa_{j+1} \hspace{-5pt}\prod_{i=t-j+1}^t \hspace{-5pt} \langle \su_i'(\bdff^{i-1})\rangle, \  \mbox{ for }\,\, (t-j) \in [2, t-1]. 
\end{split}
\end{equation}
Here, the function $\su_t:\mathbb R\to\mathbb R$ is continuously differentiable and Lipschitz, it is applied component-wise to vectors, i.e., 
$\su_t(\bdff^{t-1})=(\su_t(f^{t-1}_1), \ldots, \su_t(f^{t-1}_n))$, and $\su_t'$ denotes its derivative.  The AMP algorithm in \eqref{eq:AMPinit}-\eqref{eq:sbtdef} is similar to the one in \cite[Sec. 3.1]{fan2020approximate} (and the ones in \cite{ccakmak2019memory,opper2016theory}), with the main differences being the initialization $\bu^1$ and the formula for the memory term $\sb_{t,1}$. We highlight that the algorithm  does not require the knowledge of $\alpha$ or of the noise distribution. In fact, $\alpha$ can be consistently estimated from the principal eigenvalue of $\bX$ via $\hat{\alpha} = (G(\lambda_1(\bX)))^{-1}$. Furthermore, one can compute the moments $\{m_k\}_{k\ge 1}$ of the empirical eigenvalue distribution of $\bX$ (excluding its largest one) and, from these, deduce  the free cumulants $\{\kappa_k\}_{k\ge 1}$.

 The asymptotic empirical distribution of the  iterates $\bu^t, \bdff^t$, for $t\ge 1$, can be succinctly characterized via a deterministic recursion, called \emph{state evolution}, and expressed via a sequence of mean vectors $\bmu_K=(\mu_t)_{t\in [1, K]}$ and covariance matrices $\bSigma_K=(\sigma_{s, t})_{s, t\in [1, K]}$. For $K=1$, set 
$\mu_1 = \alpha \rho_\alpha$ and $\sigma_{11}=\alpha^2 (1-\rho_\alpha^2)$, with $\rho_\alpha$ given in \eqref{eq:rho_alpha_def}.
 Then define $\bmu_{K+1}, \bSigma_{K+1}$ from $\bmu_{K}, \bSigma_{K}$ as follows. Let
 \begin{align}
  &  (F_1, \ldots, F_K) = \bmu_K U_* +  (Z_1, \ldots, Z_K), \text{ where }    (Z_1, \ldots, Z_K) \sim \normal(\bzero, \bSigma_K), \quad   
  \label{eq:F1FK_def}\\
   & U_t = \su_t(F_{t-1}) \   \text{ for } 2 \leq t \leq K+1, \quad  \text{ and } \quad  U_t = \frac{F_1}{\alpha} \  \text{ for }  -\infty < t \leq 1. \label{eq:Ut_def}
 \end{align}
Then, the entries of $\bmu_{K+1}$ are given by $\mu_t = \alpha \E\{ U_t U_* \}$ for $t \in [1, K+1]$. Furthermore, the entries of $\bSigma_{K+1}$ can be expressed via the following formula, for $s,t \in[1, K+1]$:

\begin{align}
    \sigma_{s,t} & = \sum_{j=0}^{\infty} \sum_{k=0}^{\infty} \kappa_{j+k+2}^\infty  \left(\frac{1}{\alpha}\right)^{(k-t+1)_{+}  +  (j-s+1)_{+}}\hspace{-7em} \cdot \hspace{1em}  \E\{U_{s-j} U_{t-k} \}  \cdot  \Big(\hspace{-2em} \prod_{i=\max(2, s+1-j)}^s \hspace{-2em}\E\{ \su_i'(F_{i-1})\}\Big) \cdot\Big( \hspace{-2em}\prod_{i=\max(2, t+1-k)}^t \hspace{-2em}\E\{ \su_i'(F_{i-1})\}\Big).
    \label{eq:sq_sigma_st}
\end{align}

Our main result, Theorem \ref{thm:square}, shows that for $t \geq 1$, the empirical joint distribution of the entries of $(\bu^*, \bdff^1, \ldots, \bdff^t)$ converges in Wasserstein distance $W_2$ to the law of the random vector $(U_*, F_1, \ldots, F_t)$. We provide a proof sketch in Section \ref{sec:proof_sketch}, and the complete proof is deferred to Appendix \ref{app:pfsq}. 
 This result is stated in terms of \emph{pseudo-Lipschitz} test functions.
A function $\psi: \reals^m \to \reals$ is pseudo-Lipschitz of order $2$, i.e., $\psi \in \PL(2)$, if there is a constant $C > 0$  such that 
\beq
\norm{\psi(\bx)-\psi(\by)} \le C (1 + \| \bx \| + \| \by \| )\norm{\bx - \by}.
\label{eq:PL2_prop}
\eeq
The equivalence between convergence in terms of $\PL(2)$ functions and convergence in $W_2$ distance follows from 
 \cite[Definition 6.7 and Theorem 6.8]{villani2008optimal}.
 \begin{theorem}\label{thm:square}
In the square symmetric model \eqref{eq:defsquare}, assume that $\alpha > \alpha_{\rm s }$, and that the free cumulants of order 2 and higher are non-negative, i.e., $\kappa_k^\infty\ge 0$ for $k\ge 2$. 
Consider the AMP algorithm with PCA initialization in \eqref{eq:AMPinit}-\eqref{eq:AMP}, with continuously differentiable and Lipschitz functions $\su_t:\mathbb R\to\mathbb R$. (Without loss of generality, assume that $\< \bu^*, \bu^{\PCA} \> \ge 0$.)

Then, for $t \geq 1$ and any  \PL($2$) function $\psi:\reals^{2t+2} \to\reals$, we almost surely have:
\begin{align}
\lim_{n \to \infty} \frac{1}{n} \sum_{i=1}^n \psi (u^*_{i}, u^1_i, \ldots, u^{t+1}_i, f^1_i, \ldots f^{t}_i)
 = \E \left\{ \psi(U_*, U_1, \ldots, U_{t+1}, F_1, \ldots, F_t) \right\},
 \label{eq:thm_sq_statement}
\end{align}
where $U_1, \ldots, U_{t+1}$ and $F_1, \ldots, F_t$ are defined  in \eqref{eq:F1FK_def}.
 \end{theorem}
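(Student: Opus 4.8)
The plan is to construct a \emph{two-phase artificial AMP} algorithm whose iterates are amenable to the state evolution machinery of \cite{fan2020approximate}, and then to show that (i) after the first phase the artificial iterate is asymptotically aligned with $\bu_{\PCA}$, and (ii) after the second phase the artificial iterates coincide (in the high-dimensional limit, in $W_2$) with the true AMP iterates $(\bdff^1, \ldots, \bdff^t)$ defined in \eqref{eq:AMPinit}--\eqref{eq:AMP}. Concretely, in Phase I I would run an auxiliary AMP that implements the power method on $\bX$ with \emph{linear} denoisers, started from a deterministic vector $\bu^0$ correlated with $\bu^*$ and independent of $\bW$ (e.g.\ $\bu^0 = \bu^* + $ noise, rescaled). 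Since the initialization is independent of $\bW$, the existing state evolution result applies directly and characterizes the joint law of these Phase-I iterates; after $\ell$ steps the normalized iterate has correlation with $\bu^*$ converging to the Phase-I state-evolution fixed point. In Phase II, starting from the last Phase-I iterate, I would run the \emph{same} AMP recursion as \eqref{eq:AMP} but with the memory coefficients and $\sb_{1,1}$-type correction terms that the $\mathcal{O}(\ell)$-memory structure of the artificial process produces; the point is that all of this is still covered by \cite{fan2020approximate}.

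The key analytical steps, in order, are: \textbf{(1)} Verify that the Phase-I state evolution is a scalar (or low-dimensional) recursion on the overlap with $\bu^*$, identify its fixed points, and — using the free-probability results of \cite{benaych2011eigenvalues} (the Cauchy-transform / $G$, $G^{-1}$, $G'$ identities recalled in the Preliminaries) — prove that, under $\alpha > \alpha_{\rm s}$ and $\kappa_k^\infty \ge 0$ for $k \ge 2$, this fixed point is \emph{unique} and equals exactly the PCA overlap $\rho_\alpha^2$ from \eqref{eq:rho_alpha_def}; this is the step that makes the analysis ``tight.'' \textbf{(2)} Show that the normalized Phase-I output not only has the right overlap but actually converges to $\bu_{\PCA}$ itself (up to sign and $o(1)$ error in normalized $\ell_2$); this requires the standard power-method argument that the component of the iterate in the top eigenvector dominates, combined with the spectral-gap fact $\lambda_1(\bX) \to G^{-1}(1/\alpha) > b = \lim \lambda_2(\bX)$. \textbf{(3)} Take $\ell \to \infty$ after $n \to \infty$: argue that the Phase-II artificial iterates, as functions of the Phase-I output, converge to the true AMP iterates \eqref{eq:AMPinit}--\eqref{eq:AMP}, by matching the memory coefficients \eqref{eq:sbtdef} in the limit and invoking a continuity/stability argument (pseudo-Lipschitz denoisers, uniform integrability from the $(2+\varepsilon)$-moment assumption) to interchange the limits. \textbf{(4)} Read off the state-evolution parameters: check that the artificial process's state evolution, specialized to the Phase-II denoisers $u_t$, yields exactly $\mu_1 = \alpha \rho_\alpha$, $\sigma_{11} = \alpha^2(1-\rho_\alpha^2)$ and the general formulas \eqref{eq:F1FK_def}--\eqref{eq:sq_sigma_st}, in particular that the $(k-t+1)_+$ exponents of $1/\alpha$ and the index shifts $U_{t-k}$ with $U_t = F_1/\alpha$ for $t \le 1$ are precisely the bookkeeping of ``how much Phase-I memory leaks into iteration $t$.''

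I expect the \textbf{main obstacle} to be Step (1): proving uniqueness of the Phase-I fixed point. The fixed-point equation is a nonlinear equation involving the infinite sums of free cumulants $\sum_i \kappa_{i+1}^\infty \alpha^{-i}$ (which is where $\sb_{1,1}$ comes from, and is formally $\alpha \cdot (\text{something like } \alpha G^{-1}(1/\alpha) - 1)$ via the moment–cumulant relations), and one must show it has no spurious solution above the spectral threshold. This is where the non-negativity assumption $\kappa_k^\infty \ge 0$ for $k \ge 2$ is essential — it should give monotonicity/convexity of the relevant transform so that the power-method-type recursion has a single attracting fixed point — and translating the combinatorial free-cumulant statement into an analytic statement about $G$ and $G^{-1}$ near $b^+$ (using that $G$ is decreasing and convex on $(b, \infty)$) is the delicate part. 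A secondary obstacle is the double-limit interchange in Step (3): one must be careful that the error between the length-$\ell$ artificial AMP and the true AMP is uniform enough in $\ell$, which typically uses a Bolthausen-style conditioning argument together with the contraction coming from the spectral gap.
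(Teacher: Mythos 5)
Your proposal follows essentially the same route as the paper: a two-phase artificial AMP (linear/power-method denoisers in Phase I started from a $\bW$-independent initialization with overlap $\rho_\alpha$, the true denoisers in Phase II), uniqueness of the Phase-I state-evolution fixed point via the non-negativity of the free cumulants and the identity $R'(1/\alpha)\alpha^{-2}<1$ implied by $\rho_\alpha^2>0$, a spectral-gap argument to show the Phase-I output converges to $\bu_{\PCA}$ in normalized $\ell_2$, and an iterated limit $n\to\infty$ then $T\to\infty$ to match the Phase-II iterates and their state evolution with \eqref{eq:AMPinit}--\eqref{eq:sq_sigma_st}. The only cosmetic difference is that the paper handles the Phase-I covariance recursion as a contraction on a space of infinite matrices in a weighted $\ell_\infty$ norm (rather than a scalar overlap recursion), and the limit interchange is done by direct norm estimates against a ``modified'' AMP with deterministic coefficients rather than a fresh conditioning argument.
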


\paragraph{Assumptions of the theorem.} The basic assumption that the noise matrix is rotationally invariant is rather mild as it allows for arbitrary eigenvalue distributions.
The assumption $\alpha > \alpha_{\rm s }$ ensures that the PCA initialization is correlated with the signal. This condition is necessary and sufficient for PCA to be effective: under the additional requirement that $G'(b)=-\infty$, we have that, if $\alpha < \alpha_{\rm s }$, then the normalized correlation between $\bu_{\rm PCA}$ and $\bu^*$ vanishes almost surely; see Theorem 2.3 of \cite{benaych2011eigenvalues}.
Conversely for $ \alpha > \alpha_{\rm s} $, the asymptotic correlation is strictly non-zero and given by \eqref{eq:rho_alpha_def}.

\emph{Non-negativity of free cumulants}: The assumption that   $\kappa^\infty_k\ge 0$ for $k\ge 2$ appears to be an artifact of the proof technique. As detailed in the proof sketch in Section \ref{sec:proof_sketch}, this assumption is needed to show that the state evolution of the artificial AMP in the first phase has a unique fixed point. We expect our approach to generalize to any limiting noise  distribution $\Lambda$ with compact support, and defer such a generalization to future work. In support of this view, the simulations of Section \ref{sec:simu} verify the claim of  Theorem \ref{thm:square}  in a setting where the free cumulants of $\Lambda$ have alternating signs (corresponding to an eigenvalue distribution $\Lambda \sim \text{Uniform}[-1/2, 1/2]$; see  Figs. \ref{fig:AMPSEsq2}-\ref{fig:AMPSErr} and \ref{fig:AMPspectsq2}--\ref{fig:AMPspectrr}). Finally, we remark that, if $\bW$ follows a Marcenko-Pastur distribution ($\bW = \bA\bA^\sT$, where $\bA$ has i.i.d. Gaussian entries), then the free cumulants of $\Lambda$ are all equal and strictly positive; see \cite[Chap. 2, Exercise 11]{mingo2017free}. Thus, the assumption of Theorem \ref{thm:square} holds for noise distributions that are sufficiently close to the Marcenko-Pastur one, or for sufficiently large values of the signal-to-noise ratio $\alpha$.\footnote{One can add an independent artificial noise matrix with Marcenko-Pastur distribution to the data in order to make the required free cumulants non-negative, and the result would hold for $\alpha$ greater than the new spectral threshold.}\label{foot}

 \emph{Continuous differentiability and other technical assumptions}: The assumption that $\su_t$ is continuously differentiable can be weakened to: \emph{(i)} $\su_t$ being differentiable almost everywhere, and \emph{(ii)} satisfying a mild non-degeneracy condition (Assumption 4.2(e) in \cite{fan2020approximate}). In this way, we can cover most practically relevant choices of $\su_t$ such as soft thresholding and ReLU. Theorem \ref{thm:square} also requires the technical assumptions in \eqref{eq:limiting_laws} and the text below it: convergence of the empirical distributions of the signal and of the eigenvalues of the noise matrix; boundedness of the $(2+\varepsilon)$-moment of the signal; and compact support of the spectrum of the noise matrix.  We regard these technical assumptions as minor, and remark that they are quite standard in the literature. For the rectangular case, we also need the additional assumption that the law of the signal is zero mean and satisfies a log-Sobolev inequality, which is necessary to apply the framework in \cite{benaych2011eigenvalues}. 

\paragraph{How PCA initialization influences AMP.} 
The form of the memory coefficient $\sb_{t,1}$ in \eqref{eq:sbtdef} reflects the PCA initialization of the AMP iteration. PCA  initialization  can be interpreted as the result of a first AMP phase with linear denoisers (see the proof sketch in Sec. \ref{sec:proof_sketch}). The  coefficient $\sb_{t,1}$ multiplying the initialization $\bu_1$ represents the cumulative effect of this first AMP phase leading to the PCA estimate. The main differences from the AMP algorithm in \cite{fan2020approximate} (where the initialization is independent of $\bW$) are the expressions for the  coefficient $\sb_{t,1}$ and the state evolution parameters $\sigma_{s, t}$ (compare \eqref{eq:sbtdef} and \eqref{eq:sq_sigma_st} in this paper with (1.15) and (1.17) in \cite{fan2020approximate}). One can interpret the new form of $\sb_{t,1}$ and $\sigma_{s, t}$ as a memory of the PCA initialization.   For the special case of Gaussian noise, the spectral initialization  only affects  the first iteration of AMP \cite{montanari2017estimation}. This is due to the fact that, while in a rotationally invariant model the AMP iterate at step $t$ depends on \emph{all} previous iterates,  in the Gaussian case it depends only on the iterate at step $t-1$.

\paragraph{Choice of $\su_t(\cdot)$.} Theorem \ref{thm:square} holds for any choice of denoisers $\{ \su_t \}$ that are Lipschitz and continuously differentiable. Indeed, our analysis shows that by picking $\su_t(f)  =f /\alpha$, AMP just returns the PCA estimate; see the proof sketch in Section \ref{sec:proof_sketch}. If some structural information about the signal is available (e.g., sparsity), denoisers that take advantage of this structure can give substantial improvements over PCA. Thus, a key question is how to optimally select the $\su_t$'s. Theorem \ref{thm:square} tells us that the empirical distribution of  $\bdff^t$ converges to the law of $\mu_t U_* + \sqrt{\sigma_{t,t}} Z$, for $Z \sim \normal(0,1)$ and independent of $U_*$. Hence,  the quality of the estimate at each iteration $t$ is governed by the SNR $\rho_t := \mu_t^2/\sigma_{t,t}$. Consider running the algorithm for $\bar{t}$ iterations, and let $\bu^{\bar{t}+1} = \su_{\bar{t}+1}(\bdff^{\bar{t}})$ be the final estimate.  Then, for each $t \in [2, \bar{t} \, ]$, the Bayes-optimal choice for $\su_t$ is the one that maximizes $\rho_{t}$, i.e., the SNR for the next iteration. In the case of Gaussian noise \cite{montanari2017estimation}, the maximum is achieved by the posterior mean $\su_t(f) = \E\{ U_* \, | \, \mu_t U_* + \sqrt{\sigma_{t,t}} Z =f  \}$. For rotationally invariant noise, this choice  minimizes the mean-squared error $\frac{1}{n} \| \bu^t - \bu^* \|^2$ (for fixed $\su_1, \ldots, \su_{t-1}$), but it \emph{does not} necessarily maximize the SNR $\rho_t$.  We provide an example of this behavior in the simulations reported in Section \ref{sec:simu}. Therefore, the optimal strategy would be to choose functions $u_2, \ldots, u_{\bar{t}}$ to maximize the SNRs $\rho_2, \ldots, \rho_{\bar{t}}$, and then in the final iteration, to pick $u_{\bar{t}+1}$ to minimize the desired loss. Note that $\su_t$ depends on the previously chosen functions $u_1, \ldots, \su_{t-1}$ in a complicated way, due to the definition of $\sigma_{t,t}$ in \eqref{eq:sq_sigma_st}. Thus, finding $\su_t$ that maximizes the SNR $\rho_t$ remains an outstanding challenge. Finally, we remark that though we only consider one-step denoisers in this paper, Theorem \ref{thm:square} can be readily extended to cover denoisers with memory, i.e., those of the form $\su_t(\bdff^1, \ldots, \bdff^{t-1})$. \label{par:choice_ut}

\subsection{Rectangular Matrices}

We now present an AMP algorithm to estimate $\bu^*$ and $\bv^*$ from the $m \times n$ data matrix $\bX = \frac{\alpha}{m} \bu^* {\bv^*}^\sT + \bW$. We initialize the algorithm using the PCA estimate $\bu_{\rm PCA}$:
\begin{align}
\label{eq:AMP_rect_init}
    \bu^1 = \sqrt{m} \, \bu_{\rm PCA}, \quad 
    \bg^1 = \Bigg( 1 + \gamma \sum_{i=1}^\infty \kappa_{2i} \Big( \frac{\gamma}{\alpha^2}\Big)^i\Bigg)^{-1}\hspace{-5pt}\bX^{\sT}\bu^1, \,  \,  \quad \bv^1 = \sv_1(\bg^1)=\frac{\gamma}{\alpha} \bg^1.
\end{align}
Then, for $t \geq 1$, we iteratively compute:
\begin{align}
    \bdff^t \hspace{-.15em}=\hspace{-.15em}\bX \bv^t \hspace{-.15em} - \hspace{-.15em} \sum_{i=1}^t  \sa_{t,i} \bu^i, \hspace{.95em}\bu^{t+1}\hspace{-.15em}=\hspace{-.15em}\su_{t+1}(\bdff^t), \hspace{.95em}      \bg^{t+1}\hspace{-.15em} =\hspace{-.15em}\bX \bu^{t+1} \hspace{-.15em} - \hspace{-.15em} \sum_{i=1}^{t}  \sb_{t+1,i} \bv^i, \hspace{.95em} \bv^{t+1}\hspace{-.15em}=\hspace{-.15em} \sv_{t+1}(\bg^{t+1}).
     \label{eq:AMP_rectg}
\end{align}
Here, $\su_{t+1}, \sv_{t+1}: \reals \to \reals$ are continuously differentiable Lipschitz functions that act component-wise on vectors. We define  $\sa_{1,1} =\alpha \sum_{i=1}^\infty \kappa_{2i} \big( \frac{\gamma}{\alpha^2}\big)^i$, and for $t \geq 2$:
\begin{align}
    & \sa_{t,1} = \< \sv_t'(\bg^t) \> \prod_{i=2}^t \< \su_i'(\bdff^{i-1}) \>
    \< \sv_{i-1}'(\bg^{i-1}) \> \left( \sum_{i=0}^\infty \kappa_{2(i+t)} \Big( \frac{\gamma}{\alpha^2}\Big)^i \right), \\
    & \sa_{t,t-j} = \< \sv_t'(\bg^t) \> \prod_{i=t-j+1}^t \< \su_i'(\bdff^{i-1}) \>
    \< \sv_{i-1}'(\bg^{i-1}) \> \kappa_{2(j+1)}, \qquad \mbox{ for }\,\, (t-j) \in [2,t].
\end{align}
Furthermore, for $t \geq 1$,
\begin{align}
   &  \sb_{t+1,1} =  \gamma \< \su'_{t+1}(\bdff^t) \> \prod_{i=2}^{t} 
   \< \sv_i'(\bg^i)\> \< \su_i'(\bdff^{i-1}) \> 
   \left( \kappa_{2t} \, +  \, \sum_{i=1}^\infty \kappa_{2(i+t)} \Big( \frac{\gamma}{\alpha^2}\Big)^i\right), \\
   & \sb_{t+1, t+1-j} = \gamma \< \su'_{t+1}(\bdff^t) \> \prod_{i=t+2-j}^{t} 
   \< \sv_i'(\bg^i)\> \< \su_i'(\bdff^{i-1}) \>  \, \kappa_{2j}, \qquad \mbox{ for }\,\,(t+1-j) \in [2, t].
\end{align}
Similarly to the square case, $\alpha$ can be consistently estimated from the largest singular value of $\bX$ via $\alpha=\sqrt{\gamma(D(\sigma_1(\bX)))^{-1}}$, and the rectangular free cumulants $\{\kappa_{2k}\}_{k\ge 1}$ can be obtained from the even moments of the empirical distribution of the singular values of $\bX$ (excluding its largest one). 

The asymptotic empirical distributions of the iterates $(\bdff^t \, , \, \bg^t)$ can be characterized via a state evolution recursion, which specifies a sequence of mean vectors $\bmu_K=(\mu_t)_{t\in [0,K]}, \,  \bnu_K=(\nu_t)_{t\in [1, K]} $ and covariance matrices $\bSigma_K=(\sigma_{s, t})_{s, t\in [0,K]},  \bOmega_K=(\omega_{s, t})_{s, t\in [1, K]}$. These are iteratively defined, starting with the initialization $\mu_0 = \alpha \sqrt{\Delta_{\PCA}}$ and $\sigma_{0,0}= \alpha^2(1- \Delta_{\PCA})$,
where $\Delta_{\PCA}$ is given by \eqref{eq:DeltaPCA_def}.
Having defined $\bmu_K, \bSigma_K, \bnu_K,  \bOmega_K$, let 
 \begin{align}
  &  (F_0, \ldots, F_K) = \bmu_K U_* +  (Y_0, \ldots, Y_K), \text{ where }    (Y_0, \ldots, Y_K) \sim \normal(\bzero, \bSigma_K), \quad   
  \label{eq:F0FK_rect}\\
   & U_t = \su_t(F_{t-1}) \,  \text{ for } \, 2 \leq t \leq K+1, \quad  \text{ and } \quad  U_t = \frac{F_0}{\alpha} \  \text{ for }  -\infty < t \leq 1, 
    \end{align}
    \begin{align}
    &  (G_1, \ldots, G_K) = \bnu_K V_* +  (Z_1, \ldots, Z_K), \text{ where }    (Z_1, \ldots, Z_K) \sim \normal(\bzero, \bOmega_K),   \label{eq:G0GK_rect}\\
     & V_t = \sv_t(G_t) \,  \text{ for } \,  2 \leq t \leq K+1, \quad  \text{ and } \quad  V_t = \frac{\gamma}{\alpha}G_1 \  \text{ for }  -\infty < t \leq 1. \label{eq:V0VK_rect}
 \end{align}
Given $\bmu_K$
 and $\bSigma_K$, the entries of $\bnu_{K+1}$ are given by $\nu_t = \alpha \E\{ U_{t} U_* \}$ (for $t \in [1, K+1]$), and the entries of $\bOmega_{K+1}$ (for $s+1,t+1 \in[1, K+1]$) are given by
\begin{align}
    \omega_{s+1,t+1} & =  \sum_{j=0}^\infty \sum_{k=0}^\infty
    \gamma  \left( \frac{\gamma}{\alpha^2} \right)^{(j-s)_+ + (k-t)_+}
    \Big(\hspace{-0.25em} \prod_{i=\max(2, s+2-j)}^{s+1} \hspace{-0.25em}\sx_{i}\cdot \sy_{i-1}\Big)  \cdot  \Big(\hspace{-0.25em} \prod_{i=\max(2, t+2-k)}^{t+1}\hspace{-0.25em} \sx_{i}\cdot \sy_{i-1}\Big) \nonumber \\
    &   \cdot \Big[ \kappa_{2(j+ k+1)}^\infty \E\{ U_{s+1-j} U_{t+1-k} \} \, + \, \kappa_{2(j+k+2)}^\infty \E\{ V_{s-j} V_{t-k} \} \sx_{s+1-j}\cdot \sx_{t+1-k} \Big]. 
    \label{eq:omega_rect}
\end{align}
    %
    %
Here, we define $\sx_{i} = \E\{ u_{i}'(F_{i-1}) \}$ if $i \geq 2$, and $\sx_{i} = 1/\alpha$ otherwise; $\sy_{i} = \E\{ v_{i}'(G_{i}) \}$ if $i \geq 2$, and $\sy_{i} = \gamma/\alpha$ otherwise.    We note that 
$\omega_{11}$ is computed by solving the linear equation obtained by setting $s=t=0$ in \eqref{eq:omega_rect} (see \eqref{eq:omega_11}).
Next, given $\bnu_{K+1}$ and $\bOmega_{K+1}$ for some $K \ge 1$, the entries of $\bmu_{K+1}$ are  $\mu_t = \frac{\alpha}{\gamma} \E\{ V_{t} V_* \}$ (for $t \in [0, K+1]$), and the entries of $\bSigma_{K+1}$ (for $s,t \in [0, K+1]$) are 
\begin{align}
    \sigma_{s,t} & = \sum_{j=0}^\infty \sum_{k=0}^\infty
    \left( \frac{\gamma}{\alpha^2} \right)^{(j-s+1)_+ \, + \,  (k-t+1)_+}
    \Big( \prod_{i=\max(2, s+1-j)}^{s} \sx_{i}\cdot \sy_{i}\Big)  \cdot  \Big( \prod_{i=\max(2, t+1-k)}^{t} \sx_{i}\cdot \sy_{i}\Big) \nonumber  \\
    & \qquad \  \cdot \Big[ \kappa_{2(j+ k+1)}^\infty \E\{ V_{s-j} V_{t-k} \} \, + \, \kappa_{2(j+k+2)}^\infty \E\{ U_{s-j} U_{t-k} \} \sy_{s-j} \cdot\sy_{t-k} \Big]. 
    \label{eq:sigma_rect}
\end{align}
       %
    %

Our main result for the rectangular case, Theorem \ref{thm:rect}, shows that for $t \geq 1$, the empirical joint distribution of the entries of $(\bu^*, \bdff^1, \ldots, \bdff^t)$ converges in Wasserstein distance $W_2$ to the law of the random vector 
$(U_*, F_1, \ldots, F_t)$. Similarly, the empirical joint distribution of the entries of $(\bv^*, \bg^1, \ldots, \bg^t)$ converges to the law of $(V_*, G_1, \ldots, G_t)$. The proof is given in Appendix \ref{app:pfrect}. As in the square case, we state this result in terms of pseudo-Lipschitz test functions. 
\begin{theorem}\label{thm:rect}
In the rectangular model \eqref{eq:defrect}, assume that $\tilde{\alpha} > \tilde{\alpha}_{\rm s }$ and that $\kappa_{2k}^\infty\ge 0$ for $k\ge 1$. 
Consider the AMP algorithm with PCA initialization in \eqref{eq:AMP_rect_init}-\eqref{eq:AMP_rectg}, with continuously differentiable and Lipschitz functions $\su_t, \sv_t:\mathbb R\to\mathbb R$. (Assume without loss of generality that $\< \bu^*, \bu^{\PCA} \> \ge 0$.) 

Then, for $t \geq 1$ and any  \PL($2$) functions $\psi:\reals^{2t+2} \to\reals$ and $\varphi: \reals^{2t+1} \to \reals$, we almost surely have:
\begin{align}
\lim_{m \to \infty} \frac{1}{m} \sum_{i=1}^m 
 \psi (u^*_{i}, u^1_i, \ldots, u^{t+1}_i, f^1_i, \ldots f^{t}_i)
 = \E \left\{ \psi(U_*, U_1, \ldots, U_{t+1}, F_1, \ldots, F_t) \right\}, \label{eq:ustat_rect}  \\
 \lim_{n \to \infty} \frac{1}{n} \sum_{i=1}^n \varphi (v^*_{i}, v^1_i, \ldots, v^t_i, g^1_i, \ldots g^{t}_i)
 = \E \left\{ \varphi(V_*,V_1, \ldots, V_t, G_1, \ldots, G_t) \right\}, \label{eq:vstat_rect}
\end{align}
where $(U_1, \ldots, U_{t+1})$, $(F_1, \ldots, F_t)$, $(V_1, \ldots, V_{t})$ and $(G_1, \ldots, G_t)$ are defined as in \eqref{eq:F0FK_rect}-\eqref{eq:V0VK_rect}.
\end{theorem}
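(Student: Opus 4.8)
\textbf{Proof proposal for Theorem \ref{thm:rect}.} The plan is to run the same two-phase artificial AMP strategy used for Theorem \ref{thm:square}, now on the bipartite graph, where the alternation between the $\bu$-side and the $\bv$-side mirrors alternating multiplication by $\bX$ and $\bX^\sT$. I would introduce an auxiliary AMP that proceeds in two phases. In Phase~1 it performs a plain power iteration — linear denoisers of the form $f\mapsto c\,f$ — whose sole purpose is to converge to the leading singular pair of $\bX$; in Phase~2 it switches to the denoisers $u_t,v_t$ of the true algorithm \eqref{eq:AMP_rect_init}--\eqref{eq:AMP_rectg}. The point is that this artificial AMP is \emph{initialized with a vector correlated with $\bu^*$ and independent of $\bW$} (e.g.\ $\bu^{\mathrm{art},1}=a\,\bu^*+\sqrt{1-a^2}\,\bz$ with $\bz\perp\bW$), so the rigorous rectangular state-evolution result of \cite{fan2020approximate} applies to it verbatim. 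With a careful choice of the auxiliary memory coefficients one then shows that, after sending the number of Phase-1 steps to infinity, the Phase-2 artificial AMP iterates agree asymptotically — in normalized $\ell_2$, hence in $W_2$ — with the true AMP iterates, and that the induced state evolution is precisely the one defined by \eqref{eq:F0FK_rect}--\eqref{eq:V0VK_rect}, \eqref{eq:omega_rect}--\eqref{eq:sigma_rect}. This is why the recursion must define $U_t,V_t$ for \emph{all} $t\le 1$ (not just $t\ge 1$): these are the images under the $T\to\infty$ limit of the infinitely many Phase-1 power-method steps, whence the relations $U_t=F_0/\alpha$, $V_t=(\gamma/\alpha)G_1$ for $t\le 1$.

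The combinatorial core of the reduction is to verify that collapsing the contributions of the infinitely many Phase-1 iterates reproduces exactly the series $\kappa_{2t}+\sum_{i\ge1}\kappa_{2(i+t)}(\gamma/\alpha^2)^i$ and $\sum_{i\ge0}\kappa_{2(i+t)}(\gamma/\alpha^2)^i$ that appear in $\sa_{t,1},\sb_{t+1,1}$ and in \eqref{eq:omega_rect}--\eqref{eq:sigma_rect}. Here $\gamma/\alpha^2=1/\tilde\alpha^2$ is the per-step factor by which earlier iterates enter the memory (the analogue of $1/\alpha$ in the square case), and convergence of these geometric-type series is guaranteed precisely by $\tilde\alpha>\tilde\alpha_{\rm s}$, i.e.\ $1/\tilde\alpha^2<D(b^+)$; the same inequality yields the spectral gap $\sigma_1(\bX)\to D^{-1}(1/\tilde\alpha^2)>b=\lim\sigma_2(\bX)$ needed for the Phase-1 power method to converge to the leading singular direction. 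A separate ``continuity of AMP'' lemma — perturbing the initialization in \eqref{eq:AMP_rect_init} by an $o(1)$ normalized amount perturbs every later iterate by $o(1)$, uniformly in $n$ — then lets one pass from ``Phase-1 endpoint $\approx\sqrt{m}\,\bu_{\rm PCA}$'' to ``true AMP started exactly at $\sqrt{m}\,\bu_{\rm PCA}$''. The base parameters $\mu_0=\alpha\sqrt{\Delta_{\PCA}}$, $\sigma_{0,0}=\alpha^2(1-\Delta_{\PCA})$ come from identifying the Phase-1 fixed point with the PCA overlap \eqref{eq:DeltaPCA_def}--\eqref{eq:GammaPCA_def}, invoking Theorems 2.8--2.9 of \cite{benaych2012singular}; the $\PL(2)\Leftrightarrow W_2$ equivalence is handled via \cite[Theorem 6.8]{villani2008optimal}, and the sign ambiguity of $\bu_{\rm PCA}$ by the stated WLOG $\<\bu^*,\bu^{\PCA}\>\ge0$.

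The main obstacle — exactly as flagged in the ``Non-negativity of free cumulants'' remark — is proving that the Phase-1 state evolution has a \emph{unique} fixed point, equal to the PCA overlap, so that the artificial power method converges to $\bu_{\rm PCA}$ irrespective of the (nonzero) initial correlation. This is what forces the hypothesis $\kappa_{2k}^\infty\ge0$: non-negativity of the rectangular free cumulants makes the relevant one-dimensional fixed-point map monotone, and a free-probability computation — writing the fixed-point equation through the functions $\phi,\bar\phi,D$ of Section \ref{sec:prel} and using the subordination identities underlying \cite{benaych2012singular} — pins the unique fixed point to $\Delta_{\PCA}$ (and $\Gamma_{\PCA}$ on the $\bv$-side). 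Everything else is routine bookkeeping: tracking the two interleaved covariance recursions $\bSigma,\bOmega$ (including the linear equation defining $\omega_{11}$), verifying the index shifts between the artificial and true denoisers, and checking that the $W_2$ statements \eqref{eq:ustat_rect}--\eqref{eq:vstat_rect} follow from the joint convergence of $(\bu^{\rm art},\bv^{\rm art})$ together with the continuity lemma.
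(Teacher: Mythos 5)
Your proposal follows essentially the same route as the paper: a two-phase artificial AMP (linear power-method denoisers in Phase 1, the true denoisers in Phase 2) initialized independently of $\bW$ so that the state evolution of \cite{fan2020approximate} applies, a uniqueness/contraction argument for the Phase-1 fixed point of the interleaved $(\bSigma,\bOmega)$ recursions (this is exactly where $\kappa_{2k}^\infty\ge 0$ is used), identification of the Phase-1 endpoint with $\sqrt{m}\,\bu_{\rm PCA}$ via the spectral gap, and a transfer of the $W_2$ statements to the true AMP. The only cosmetic difference is that the paper implements your ``continuity of AMP'' step by introducing an intermediate modified AMP with deterministic Onsager coefficients and bounding both the artificial-to-modified and modified-to-true discrepancies inductively, rather than invoking a standalone perturbation lemma.
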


The condition $\tilde{\alpha} > \tilde{\alpha}_{\rm s }$ is necessary and sufficient for PCA to be effective: under the additional requirement that $\phi'(b^+)=-\infty$, if $\tilde{\alpha} < \tilde{\alpha}_{\rm s }$, then the normalized correlation between $\bu_{\rm PCA}$ and $\bu^*$ vanishes almost surely, see  \cite[Theorem 2.10]{benaych2012singular}. Comments similar to those at the end of Section \ref{subsec:square} can be made about \emph{(i)} the requirement that the rectangular free cumulants are non-negative, \emph{(ii)} the effect of the PCA initialization on AMP, and \emph{(iii)} the choice of the denoisers  $\su_t, \sv_t$.

\section{Numerical Simulations}\label{sec:simu}

We consider the following settings: \emph{(i)} square model \eqref{eq:defsquare} with Marcenko-Pastur noise, i.e., $\bW = \frac{1}{n}\bA \bA^\sT\in\mathbb R^{n\times n}$, where the entries of $\bA \in\mathbb R^{n\times p}$ are i.i.d. standard Gaussian, see (a) in the figures; \emph{(ii)} square model \eqref{eq:defsquare} with uniform noise, i.e., $\bW = \bO^\sT\bLambda\bO \in\mathbb R^{n\times n}$, where $\bO$ is a Haar orthogonal matrix and the entries of $\bLambda$ are i.i.d. and uniformly distributed in the interval $[-1/2, 1/2]$, see (b) in the figures; \emph{(iii)}   rectangular model \eqref{eq:defrect} with uniform noise, i.e., $\bW = \bO^\sT\bLambda\bQ \in\mathbb R^{m\times n}$, where $\bO$, $\bQ$ are Haar orthogonal matrices and the entries of $\bLambda^2$ are i.i.d. and uniformly distributed in the interval $[0, 1]$, see (c)-(d) in the figures.

In the simulations, $\alpha$ is estimated from the largest eigenvalue/singular value of $\bX$. Furthermore, the  free cumulants $\kappa_k$ ($\kappa_{2k}$ in the rectangular case) are replaced by their limits $\kappa_k^\infty$ ($\kappa_{2k}^\infty$ resp.), which are obtained as follows. For (a), all the free cumulants of $\Lambda$ are equal to $c\triangleq p/n$, i.e., $\kappa_k^\infty=c$ for $k\ge 1$, see \cite[Chap. 2, Exercise 11]{mingo2017free}. For (b), the odd free cumulants of $\Lambda$ are $0$ and the even ones are given by $\kappa_{2n}^\infty=B_{2n}/(2n!)$, where $B_{2n}$ denotes the $2n$-th Bernoulli number. For details, see the derivation of \eqref{eq:cumunif} in Appendix \ref{sec:fptools}. For (c)-(d), the even moments of $\Lambda$ are given by $m_{2k}^\infty=1/(k+1)$ and, from these, we numerically compute the rectangular free cumulants via \eqref{eq:mcrel2rect} in Appendix \ref{sec:fptools}. Furthermore,  the spectral threshold for the setting in (a) is $\alpha_{\rm s}=1+\sqrt{c}$ ; for (b), $\alpha_{\rm s}=0$ ; and for (c)-(d), $\tilde{\alpha}_{\rm s}=0$. In (a), we set $n=8000$ and $c=2$; in (b), we set $n=4000$; and in (c)-(d), we set $n=8000$ and $\gamma=1/2$. The signal $\bu^*$ has a Rademacher prior, i.e., its entries are i.i.d. and uniform in $\{-1, 1\}$. In the rectangular case, the signal $\bv^*$ has a Gaussian prior, i.e., it is uniformly distributed on the sphere of radius $\sqrt{n}$. Given these priors, $\su_t$ is chosen to be the single-iterate posterior mean denoiser given by 
$\su_t(x) = \tanh({\mu_t\, x}/\sigma_{t, t})$, where $\mu_t$ and $\sigma_{t, t}$ are the state evolution parameters; these are replaced by consistent estimates in the simulations. 
For the rectangular case, we choose $\sv_t(x)=x$.
 Each experiment is repeated for $n_{\rm trials}=100$ independent runs. We report the average and error bars at $1$ standard deviation.

 \begin{figure}[t]
    \centering
    \subfloat[Square, MP.\label{fig:AMPSEsq}]{\includegraphics[width=.25\columnwidth]{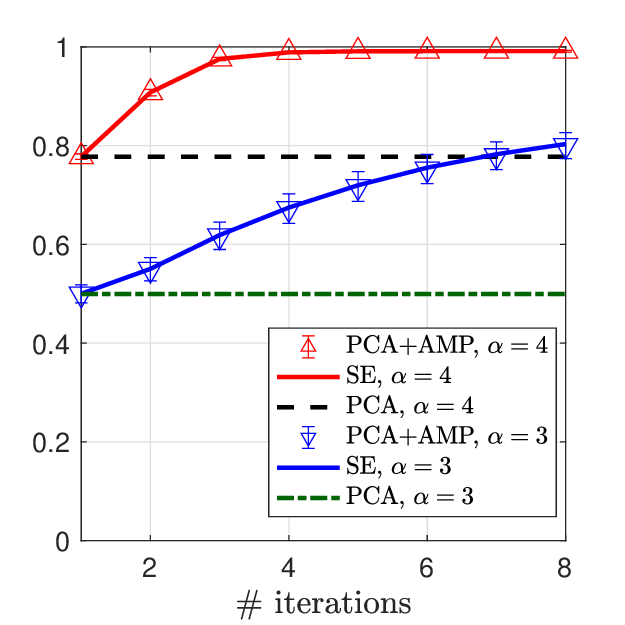}}
        \subfloat[Square, uniform.\label{fig:AMPSEsq2}]{\includegraphics[width=.25\columnwidth]{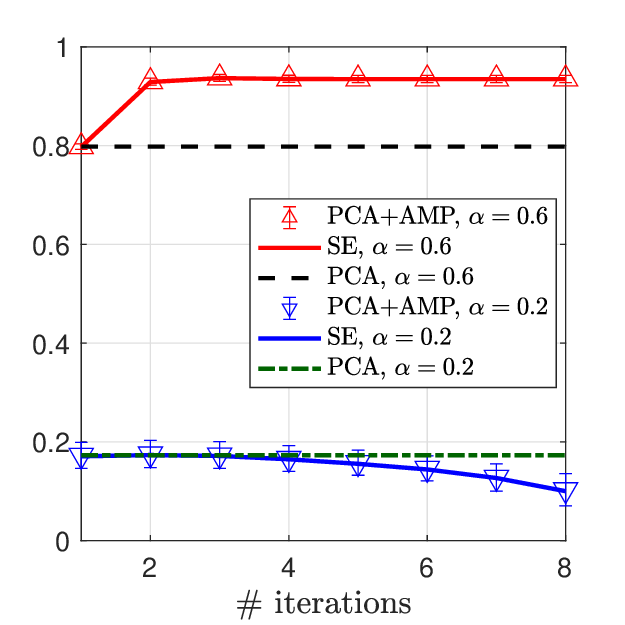}}
    \subfloat[Rectangular, left.\label{fig:AMPSErl}]{\includegraphics[width=.25\columnwidth]{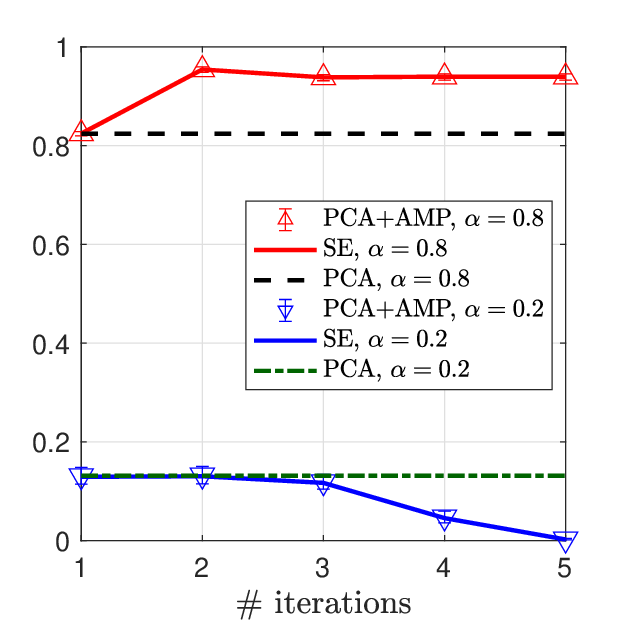}}
    \subfloat[Rectangular, right.\label{fig:AMPSErr}]{\includegraphics[width=.25\columnwidth]{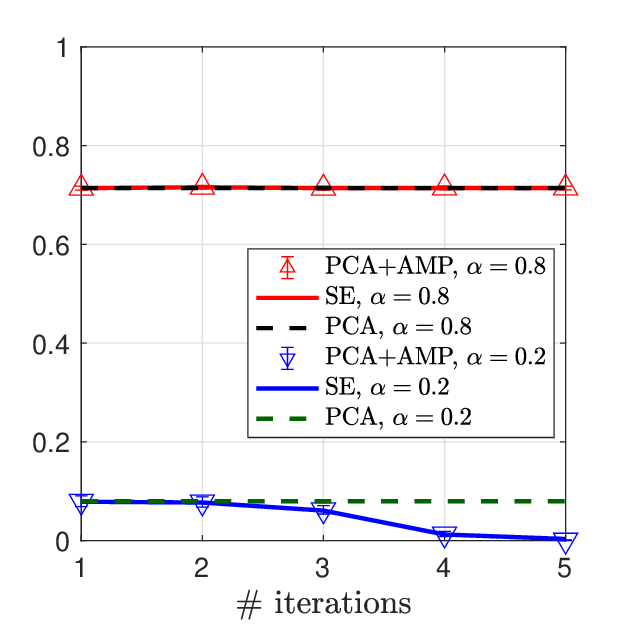}}
\caption{\small Comparison between AMP with PCA initialization and the related state evolution (SE). The plots show the normalized squared correlation between iterate and signal, as a function of the number of iterations.}
\label{fig:AMPSE}
\end{figure}
\begin{figure}[t]
    \centering
    \subfloat[Square, MP.\label{fig:AMPspectsq}]{\includegraphics[width=.25\columnwidth]{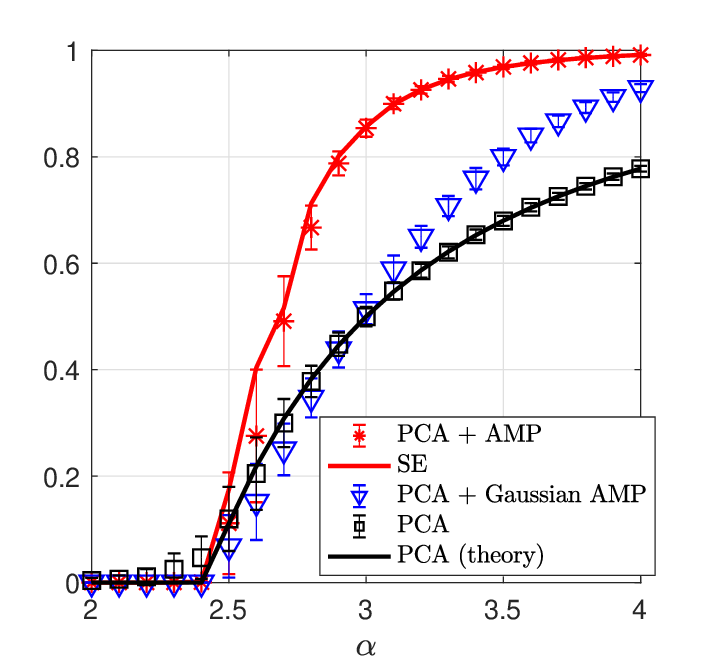}}
    \subfloat[Square, uniform.\label{fig:AMPspectsq2}]{\includegraphics[width=.25\columnwidth]{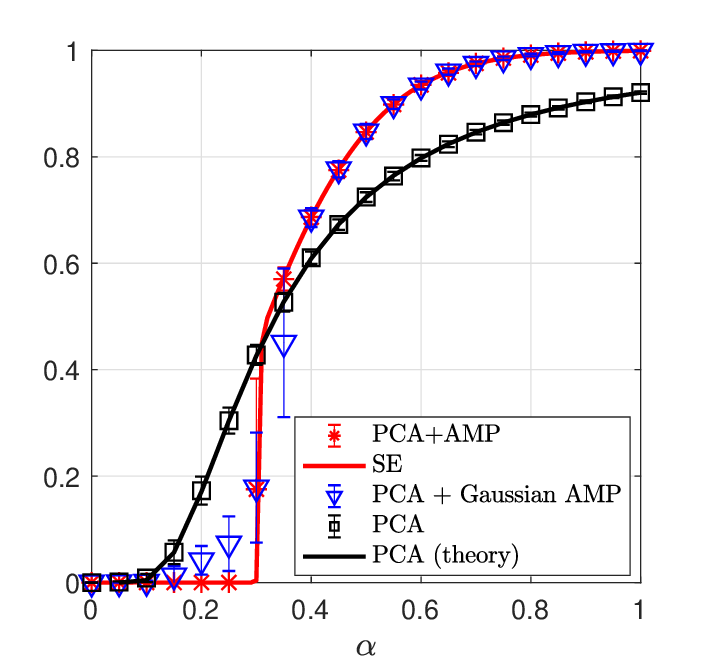}}
    \subfloat[Rectangular, left.\label{fig:AMPspectrl}]{\includegraphics[width=.25\columnwidth]{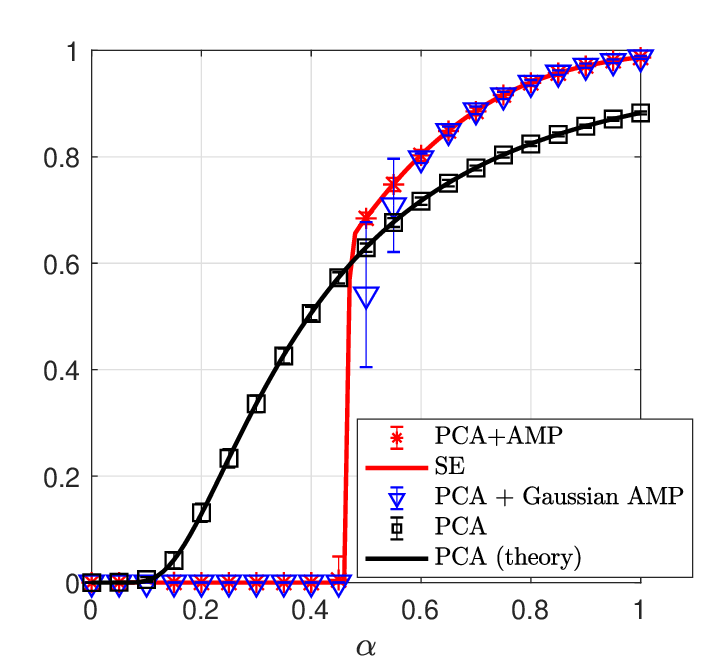}}
    \subfloat[Rectangular, right.\label{fig:AMPspectrr}]{\includegraphics[width=.25\columnwidth]{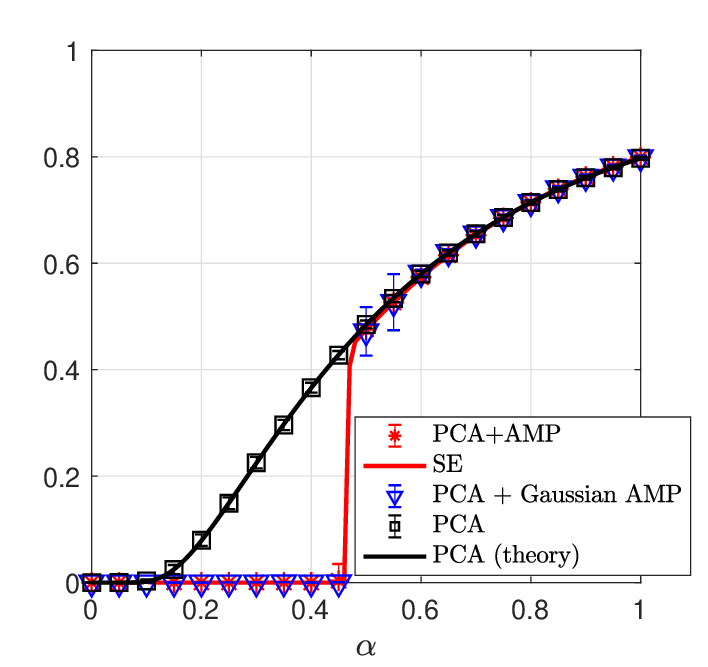}}
\caption{\small Comparison between AMP with PCA initialization and the PCA method alone. The plots show the normalized squared correlation between the signal and the estimate (PCA, or AMP+PCA), as a function of $\alpha$. }
\label{fig:AMPspect}
\end{figure}

Figure \ref{fig:AMPSE} compares the performance between the proposed AMP algorithm with PCA initialization (PCA+AMP) and the theoretical predictions of state evolution (SE), for two different values of $\alpha$. On the $x$-axis, we have the number of iterations of AMP, and on the $y$-axis the normalized squared correlation between the iterate and the signal. As a reference, we also plot the performance of PCA as a horizontal line.  We observe an excellent agreement of AMP with state evolution, even in the settings (b)-(c)-(d) where the free cumulants (resp. rectangular free cumulants) are alternating in sign. This supports our conjecture that Theorems \ref{thm:square}-\ref{thm:rect} hold for more general noise distributions.

In Figure \ref{fig:AMPspect}, we run PCA+AMP until the algorithm converges, and we compare the results with \emph{(i)} the AMP with PCA initialization developed in \cite{montanari2017estimation} which assumes that the noise matrix is Gaussian (with the correct variance), and \emph{(ii)} the PCA method alone, as a function of the SNR $\alpha$.
For Marchenko-Pastur noise (setting (a)), PCA+AMP always improves upon the PCA initialization. However, this is not the case when the eigenvalues/singular values of the noise are uniformly distributed (settings (b), (c) and (d)). In fact, we observe a phase transition phenomenon: below a certain critical $\alpha$, AMP converges to a trivial fixed point at $0$, while PCA shows positive correlation with the signal; above the critical $\alpha$, PCA+AMP is no worse than PCA. This is due to the sub-optimal choice of $\su_t$; recall the discussion on p.\pageref{par:choice_ut}.  We observe no improvement for the estimation of the right singular vector (setting (d)), as the prior of $\bv^*$ is Gaussian, in which case we expect the PCA estimate to be optimal. The interesting behavior demonstrated in Figure \ref{fig:AMPspect} motivates the study of the optimal choice for $\su_t, \sv_t$ in future work. We also note that, in settings (c)-(d), $\tilde{\alpha}_{\rm s}=0$, which means that the PCA estimator has non-zero correlation with the signal for all $\alpha>0$. However, for $\alpha<0.1$, this correlation remains rather small. Finally, we highlight that our proposed rotationally invariant PCA+AMP always improves upon the Gaussian PCA+AMP. In general, this performance gap will be significant unless the sequence of free cumulants $\kappa_k^\infty$ ($\kappa_{2k}^\infty$ in the rectangular case) decays quickly. For Marchenko-Pastur noise, the free cumulants are all equal, and thus the performance gap is significant. If the eigenvalues/singular values of the noise are uniform, then the sequence of free cumulants decays rapidly and the performance gap is small. 


\section{Proof Sketch: Symmetric Square Matrices} \label{sec:proof_sketch}


We consider the following artificial AMP algorithm, whose iterates are denoted by $\tbu^{t}, \tbf^{t}$ for $t \ge 1$. We initialize with $\tilde{\bu}^1 = \rho_\alpha \bu^* + \sqrt{1-\rho_\alpha^2} \, \bn$ and $\tbf^1 = \bX \tbu^1 - \kappa_1 \tbu^1$. Here, $\bn$ is standard Gaussian  and $\rho_\alpha$ is the normalized (limit) correlation of the PCA estimate given in \eqref{eq:rho_alpha_def}. We note that this initialization is impractical, as it requires the knowledge of the unknown signal $\bu^*$. However, this is not an issue since the artificial AMP serves only as a proof technique. (The \emph{true AMP} \eqref{eq:AMP} used for estimation uses the PCA initialization in \eqref{eq:AMPinit}.)  The subsequent iterates of the artificial AMP are defined in two phases. In the first phase, which lasts up to iteration $(T+1)$, the functions defining the artificial AMP are chosen so that  $\tilde{\bu}^{T+1}$ is closely aligned with the eigenvector $\bu_{\PCA}$ as $T \to \infty$. In the second phase, the functions are chosen to match those in the true AMP.  

The artificial AMP initialization $\tilde{\bu}^1$ is chosen such that it has non-zero asymptotic correlation with the signal $\bu^*$.  Indeed, when the signal prior has zero mean, a random initialization (independent of  $\bu^*$) would be asymptotically uncorrelated with the signal; consequently, the first phase of the artificial AMP  would get stuck at a trivial fixed point and the iterates would not be guaranteed to converge to the principal eigenvector. We ensure that this does not happen by defining the initialization $\tilde{\bu}^1$ to be a linear combination of the signal and Gaussian noise.

\paragraph{First phase.} 
For $2\le t\le (T+1)$, the artificial AMP iterates are
\begin{equation}\label{eq:AMPfake1}
\begin{split}
   \tbu^{t} = \tilde{\bdff}^{t-1}/\alpha, 
   \qquad \tbf^t = \bX \tilde{\bu}^t - \sum_{i=1}^t \tilde{\sb}_{t,i}\tilde{\bu}^i,
   \end{split}
\end{equation}
where $\tilde{\sb}_{t,t-j}=\kappa_{j+1}\alpha^{-j}$, for $(t-j)\in [1, t]$. 
We claim that, for sufficiently large $T$, $\tilde{\bu}^{T+1}$ approaches the PCA estimate $\bu_{\rm PCA}$, that is,
$ \lim_{T\to\infty}\lim_{n\to\infty} \frac{1}{\sqrt{n}} \|\tilde{\bu}^{T+1}-\sqrt{n}\bu_{\rm PCA}\| \, = \, 0.  $
This result is proved in Lemma \ref{lem:Phase1_PCA_conv} in Appendix \ref{app:confirstest}. We give a heuristic sanity check here. Assume that the iterate $\tilde{\bu}^{T+1}$ converges to a limit $\tilde{\bu}^\infty$ in the sense that 
$ \lim_{T \to \infty} \lim_{n\to \infty}\frac{1}{\sqrt{n}}\|\tilde{\bu}^{T+1} - \tilde{\bu}^\infty \| =0$. Then, from \eqref{eq:AMPfake1}, the limit  $\tilde{\bu}^\infty$ satisfies
\begin{equation}
    \tilde{\bu}^\infty = \frac{1}{\alpha} \bX \tilde{\bu}^\infty - \sum_{i=1}^\infty\kappa_i \left(\frac{1}{\alpha}\right)^i\tilde{\bu}^\infty \Longleftrightarrow \Bigg(\alpha +\sum_{i=1}^\infty\kappa_i \left(\frac{1}{\alpha}\right)^{i-1}\Bigg)\tilde{\bu}^\infty = \bX \tilde{\bu}^\infty,
\end{equation}
which means that $\tilde{\bu}^\infty$ is an eigenvector of $\bX$. Furthermore, by using known identities in free probability (see \eqref{eq:R} and \eqref{eq:RG}), the eigenvalue $\alpha +\sum_{i=1}^\infty\kappa_i \left(\frac{1}{\alpha}\right)^{i-1}$ can be re-written as $G^{-1}(1/\alpha)$. Recall that, for $\alpha>\alpha_{\rm s}$, $\bX$ exhibits a spectral gap and its largest eigenvalue converges to $G^{-1}(1/\alpha)$. Thus, $\bu^\infty$ must be aligned with the principal eigenvector of $\bX$, as desired. 

A key step in our analysis is to show that, as $T\to\infty$, the state evolution of the artificial AMP in the first phase has the unique fixed point $(\tilde{\mu}=\alpha\rho_\alpha, \tilde{\sigma}=\alpha^2(1-\rho_\alpha^2))$. This is established in Lemma \ref{lem:SE_FP_phase1} proved in Appendix \ref{app:fpsefirst}. The proof follows the approach developed in Section 7 of \cite{fan2020approximate}. However, the analysis of \cite{fan2020approximate} requires that $\alpha$ is sufficiently large, while our result holds for all $\alpha>\alpha_{\rm s}$. Our idea is to exploit the expression of the limit correlation between the PCA estimate and the signal. In particular, we prove that, when the PCA estimate is correlated with the signal, state evolution is close to a limit map which is a contraction. The price to pay for this approach is the requirement that the free cumulants are non-negative. 

\paragraph{Second phase.}

The second phase of the artificial AMP is designed so that its iterates $(\tbu^{T+k}, \tbf^{T+k})$ are close to $(\bu^{k}, \bdff^{k})$, for $k \ge 2$.
For $ t \ge (T+2)$, the  artificial AMP computes:
\begin{equation}\label{eq:AMPfake2}
\tbu^{t} = \su_{t-T}(\tbf^{t-1}), \qquad \tbf^t = \bX \tbu^t - \sum_{i=1}^t\tsb_{t,i} \tbu^i.
\end{equation}
Here, the functions $\{ u_k \}_{k \ge 2}$, are the ones used in the true AMP \eqref{eq:AMP}. The coefficients $\{\tsb_{t,i}\}$ for $t \ge (T+2)$ are given by:
\begin{align}
    \tsb_{tt} = \kappa_1, \quad \tsb_{t,t-j} = \kappa_{j+1}   \left(\frac{1}{\alpha} \right)^{(T+1-(t-j))_+} \hspace{-25pt}\prod_{i= \max\{t-j+1, T+2 \}}^t \hspace{-5pt} \< \su'_{i-T}(\tbf^{i-1}) \>, \quad   (t-j)\in [1, t-1].
    \label{eq:Phase2_onsager}
\end{align}

Since the artificial AMP is initialized with $\tbu^1$ that is correlated with $\bu^*$ and independent of the noise matrix $\bW$, a state evolution result for it can be obtained directly from \cite[Theorem 1.1]{fan2020approximate}. We then show in Lemma \ref{lem:sec_phase_sq} in Appendix \ref{sec:app_sec_phase_analysis} that  the second phase iterates in \eqref{eq:AMPfake2} are close to the true AMP iterates in \eqref{eq:AMP}, and that their state evolution parameters are also close. This result yields Theorem \ref{thm:square}, as shown in Appendix \ref{subsec:app_thm_prof_sq}.  The complete proof of Theorem \ref{thm:rect} (rectangular case) is given in Appendix \ref{app:pfrect}. We describe the artificial AMP for this case along with a proof sketch in Appendix \ref{subsec:proof_sketch_rect}.

\section*{Acknowledgements}

M. Mondelli would like to thank L\'aszl\'o Erd\"os for helpful discussions. M. Mondelli was partially supported by the 2019 Lopez-Loreta Prize. R. Venkataramanan was partially supported by the Alan Turing Institute under the EPSRC grant EP/N510129/1.

{\small{
\bibliographystyle{amsalpha}
\bibliography{all-bibliography}

\newcommand{\etalchar}[1]{$^{#1}$}
\providecommand{\bysame}{\leavevmode\hbox to3em{\hrulefill}\thinspace}
\providecommand{\MR}{\relax\ifhmode\unskip\space\fi MR }
\providecommand{\MRhref}[2]{%
  \href{http://www.ams.org/mathscinet-getitem?mr=#1}{#2}
}
\providecommand{\href}[2]{#2}
\begin{thebibliography}{PWBM18}

\bibitem[Abb17]{abbe2017community}
Emmanuel Abbe, \emph{Community detection and stochastic block models: recent
  developments}, The Journal of Machine Learning Research \textbf{18} (2017),
  no.~1, 6446--6531.

\bibitem[BBAP05]{baik2005phase}
Jinho Baik, G{\'e}rard Ben~Arous, and Sandrine P{\'e}ch{\'e}, \emph{Phase
  transition of the largest eigenvalue for nonnull complex sample covariance
  matrices}, Annals of Probability (2005), 1643--1697.

\bibitem[BDM{\etalchar{+}}16]{barbier2016mutual}
Jean Barbier, Mohamad Dia, Nicolas Macris, Florent Krzakala, Thibault Lesieur,
  and Lenka Zdeborov{\'a}, \emph{Mutual information for symmetric rank-one
  matrix estimation: A proof of the replica formula}, Neural Information
  Processing Systems (NeurIPS), 2016, pp.~424--432.

\bibitem[BG09]{benaych2009rectangular}
Florent Benaych-Georges, \emph{Rectangular random matrices, related
  convolution}, Probability Theory and Related Fields \textbf{144} (2009),
  no.~3-4, 471--515.

\bibitem[BGN11]{benaych2011eigenvalues}
Florent Benaych-Georges and Raj~Rao Nadakuditi, \emph{The eigenvalues and
  eigenvectors of finite, low rank perturbations of large random matrices},
  Advances in Mathematics \textbf{227} (2011), no.~1, 494--521.

\bibitem[BGN12]{benaych2012singular}
\bysame, \emph{The singular values and vectors of low rank perturbations of
  large rectangular random matrices}, Journal of Multivariate Analysis
  \textbf{111} (2012), 120--135.

\bibitem[Bil08]{billingsley2008probability}
Patrick Billingsley, \emph{Probability and measure}, John Wiley \& Sons, 2008.

\bibitem[BKM{\etalchar{+}}19]{barbier2019optimal}
Jean Barbier, Florent Krzakala, Nicolas Macris, L{\'e}o Miolane, and Lenka
  Zdeborov{\'a}, \emph{Optimal errors and phase transitions in high-dimensional
  generalized linear models}, Proceedings of the National Academy of Sciences
  \textbf{116} (2019), no.~12, 5451--5460.

\bibitem[BM11]{BM-MPCS-2011}
Mohsen Bayati and Andrea Montanari, \emph{{The dynamics of message passing on
  dense graphs, with applications to compressed sensing}}, IEEE Transactions on
  Information Theory \textbf{57} (2011), 764--785.

\bibitem[BM12]{BayatiMontanariLASSO}
\bysame, \emph{{The LASSO risk for {G}aussian matrices}}, IEEE Transactions on
  Information Theory \textbf{58} (2012), 1997--2017.

\bibitem[BMR20]{BarbierMR20}
Jean Barbier, Nicolas Macris, and Cynthia Rush, \emph{All-or-nothing
  statistical and computational phase transitions in sparse spiked matrix
  estimation}, Neural Information Processing Systems (NeurIPS), 2020.

\bibitem[Bol14]{bolthausen2014iterative}
Erwin Bolthausen, \emph{An iterative construction of solutions of the {TAP}
  equations for the {S}herrington--{K}irkpatrick model}, Communications in
  Mathematical Physics \textbf{325} (2014), no.~1, 333--366.

\bibitem[BS06]{baik2006eigenvalues}
Jinho Baik and Jack~W Silverstein, \emph{Eigenvalues of large sample covariance
  matrices of spiked population models}, Journal of Multivariate Analysis
  \textbf{97} (2006), no.~6, 1382--1408.

\bibitem[CDMF09]{capitaine2009largest}
Mireille Capitaine, Catherine Donati-Martin, and Delphine F{\'e}ral, \emph{The
  largest eigenvalues of finite rank deformation of large {W}igner matrices:
  convergence and nonuniversality of the fluctuations}, The Annals of
  Probability \textbf{37} (2009), no.~1, 1--47.

\bibitem[{\c{C}}O19]{ccakmak2019memory}
Burak {\c{C}}akmak and Manfred Opper, \emph{Memory-free dynamics for the
  {T}houless-{A}nderson-{P}almer equations of {I}sing models with arbitrary
  rotation-invariant ensembles of random coupling matrices}, Physical Review E
  \textbf{99} (2019), no.~6, 062140.

\bibitem[DAM16]{deshpande2016asymptotic}
Yash Deshpande, Emmanuel Abbe, and Andrea Montanari, \emph{Asymptotic mutual
  information for the balanced binary stochastic block model}, Information and
  Inference \textbf{6} (2016).

\bibitem[DJM13]{DonSpatialC13}
David~L. Donoho, Adel Javanmard, and Andrea Montanari,
  \emph{Information-theoretically optimal compressed sensing via spatial
  coupling and approximate message passing}, IEEE Transactions on Information
  Theory \textbf{59} (2013), no.~11, 7434--7464.

\bibitem[DM14]{deshpande2014information}
Yash Deshpande and Andrea Montanari, \emph{Information-theoretically optimal
  sparse {PCA}}, IEEE International Symposium on Information Theory (ISIT),
  2014, pp.~2197--2201.

\bibitem[DMM09]{DMM09}
David~L. Donoho, Arian Maleki, and Andrea Montanari, \emph{{Message Passing
  Algorithms for Compressed Sensing}}, Proceedings of the National Academy of
  Sciences \textbf{106} (2009), 18914--18919.

\bibitem[Fan20]{fan2020approximate}
Zhou Fan, \emph{Approximate message passing algorithms for rotationally
  invariant matrices}, {\sf arXiv:2008.11892} (2020).

\bibitem[FP07]{feral2007largest}
Delphine F{\'e}ral and Sandrine P{\'e}ch{\'e}, \emph{The largest eigenvalue of
  rank one deformation of large {W}igner matrices}, Communications in
  mathematical physics \textbf{272} (2007), no.~1, 185--228.

\bibitem[FR18]{fletcher2018iterative}
Alyson~K. Fletcher and Sundeep Rangan, \emph{Iterative reconstruction of
  rank-one matrices in noise}, Information and Inference: A Journal of the IMA
  \textbf{7} (2018), no.~3, 531--562.

\bibitem[FVRS21]{feng2021unifying}
Oliver~Y. Feng, Ramji Venkataramanan, Cynthia Rush, and Richard~J. Samworth,
  \emph{A unifying tutorial on {A}pproximate {M}essage {P}assing}, {\sf
  arXiv:2105.02180} (2021).

\bibitem[GAK20a]{gerbelot2020asymptotic1}
C{\'e}dric Gerbelot, Alia Abbara, and Florent Krzakala, \emph{Asymptotic errors
  for high-dimensional convex penalized linear regression beyond {G}aussian
  matrices}, Conference on Learning Theory (COLT), 2020, pp.~1682--1713.

\bibitem[GAK20b]{gerbelot2020asymptotic2}
\bysame, \emph{Asymptotic errors for teacher-student convex generalized linear
  models (or: How to prove {K}abashima's replica formula)}, {\sf
  arXiv:2006.06581} (2020).

\bibitem[JL09]{johnstone2009consistency}
Iain~M. Johnstone and Arthur~Yu Lu, \emph{On consistency and sparsity for
  principal components analysis in high dimensions}, Journal of the American
  Statistical Association \textbf{104} (2009), no.~486.

\bibitem[JM13]{javanmard2013state}
Adel Javanmard and Andrea Montanari, \emph{State evolution for general
  approximate message passing algorithms, with applications to spatial
  coupling}, Information and Inference (2013), 115--144.

\bibitem[Joh01]{johnstone2001distribution}
Iain~M. Johnstone, \emph{On the distribution of the largest eigenvalue in
  principal components analysis}, Annals of Statistics (2001), 295--327.

\bibitem[KKM{\etalchar{+}}16]{kabashima2016phase}
Yoshiyuki Kabashima, Florent Krzakala, Marc M{\'e}zard, Ayaka Sakata, and Lenka
  Zdeborov{\'a}, \emph{Phase transitions and sample complexity in
  {B}ayes-optimal matrix factorization}, IEEE Transactions on Information
  Theory \textbf{62} (2016), no.~7, 4228--4265.

\bibitem[KMS{\etalchar{+}}12]{krzakala2012}
Florent Krzakala, Marc M{\'{e}}zard, Francois Sausset, Yifan Sun, and Lenka
  Zdeborov{\'{a}}, \emph{{Probabilistic reconstruction in compressed sensing:
  algorithms, phase diagrams, and threshold achieving matrices}}, Journal of
  Statistical Mechanics: Theory and Experiment \textbf{2012} (2012), no.~08,
  P08009.

\bibitem[KY13]{knowles2013isotropic}
Antti Knowles and Jun Yin, \emph{The isotropic semicircle law and deformation
  of {W}igner matrices}, Communications on Pure and Applied Mathematics (2013).

\bibitem[LHK20]{LiuMemoryAMP20}
Lei Liu, Shunqi Huang, and Brian~M. Kurkoski, \emph{Memory approximate message
  passing}, {\sf arXiv:2012.10861} (2020).

\bibitem[LKZ17]{lesieur2017constrained}
Thibault Lesieur, Florent Krzakala, and Lenka Zdeborov{\'a}, \emph{Constrained
  low-rank matrix estimation: Phase transitions, approximate message passing
  and applications}, Journal of Statistical Mechanics: Theory and Experiment
  \textbf{2017} (2017), no.~7, 073403.

\bibitem[LM19]{lelarge2019fundamental}
Marc Lelarge and L{\'e}o Miolane, \emph{Fundamental limits of symmetric
  low-rank matrix estimation}, Probability Theory and Related Fields
  \textbf{173} (2019), no.~3, 859--929.

\bibitem[LS99]{lee1999learning}
Daniel~D. Lee and H.~Sebastian Seung, \emph{Learning the parts of objects by
  non-negative matrix factorization}, Nature \textbf{401} (1999), no.~6755,
  788.

\bibitem[MAYB13]{maleki2013asymptotic}
Arian Maleki, Laura Anitori, Zai Yang, and Richard~G Baraniuk, \emph{Asymptotic
  analysis of complex lasso via complex approximate message passing {(CAMP)}},
  IEEE Transactions on Information Theory \textbf{59} (2013), no.~7,
  4290--4308.

\bibitem[MLKZ20]{maillard2020phase}
Antoine Maillard, Bruno Loureiro, Florent Krzakala, and Lenka Zdeborov{\'a},
  \emph{Phase retrieval in high dimensions: Statistical and computational phase
  transitions}, Neural Information Processing Systems (NeurIPS), 2020.

\bibitem[Moo17]{moore2017computer}
Cristopher Moore, \emph{The computer science and physics of community
  detection: landscapes, phase transitions, and hardness}, {\sf
  arXiv:1702.00467} (2017).

\bibitem[MP17]{ma2017orthogonal}
Junjie Ma and Li~Ping, \emph{Orthogonal {AMP}}, IEEE Access \textbf{5} (2017),
  2020--2033.

\bibitem[MR16]{montanari2016non}
Andrea Montanari and Emile Richard, \emph{Non-negative principal component
  analysis: Message passing algorithms and sharp asymptotics}, IEEE
  Transactions on Information Theory \textbf{62} (2016), no.~3, 1458--1484.

\bibitem[MS17]{mingo2017free}
James~A. Mingo and Roland Speicher, \emph{Free probability and random
  matrices}, vol.~35, Springer, 2017.

\bibitem[MTV20]{mondelli2020optimal}
Marco Mondelli, Christos Thrampoulidis, and Ramji Venkataramanan, \emph{Optimal
  combination of linear and spectral estimators for generalized linear models},
  {\sf arXiv:2008.03326} (2020).

\bibitem[MV21a]{mondelli2021approximate}
Marco Mondelli and Ramji Venkataramanan, \emph{Approximate message passing with
  spectral initialization for generalized linear models}, International
  Conference on Artificial Intelligence and Statistics (AISTATS), PMLR, 2021,
  pp.~397--405.

\bibitem[MV21b]{montanari2017estimation}
Andrea Montanari and Ramji Venkataramanan, \emph{Estimation of low-rank
  matrices via approximate message passing}, Annals of Statistics \textbf{45}
  (2021), no.~1, 321--345.

\bibitem[MXM19]{ma2019optimization}
Junjie Ma, Ji~Xu, and Arian Maleki, \emph{Optimization-based amp for phase
  retrieval: The impact of initialization and $\ell_2$ regularization}, IEEE
  Transactions on Information Theory \textbf{65} (2019), no.~6, 3600--3629.

\bibitem[Nov14]{novak2014three}
Jonathan Novak, \emph{Three lectures on free probability}, Random matrix
  theory, interacting particle systems, and integrable systems \textbf{65}
  (2014), no.~309-383, 13.

\bibitem[NS06]{nica2006lectures}
Alexandru Nica and Roland Speicher, \emph{Lectures on the combinatorics of free
  probability}, vol.~13, Cambridge University Press, 2006.

\bibitem[OCW16]{opper2016theory}
Manfred Opper, Burak Cakmak, and Ole Winther, \emph{A theory of solving tap
  equations for {I}sing models with general invariant random matrices}, Journal
  of Physics A: Mathematical and Theoretical \textbf{49} (2016), no.~11,
  114002.

\bibitem[Pau07]{paul2007asymptotics}
Debashis Paul, \emph{Asymptotics of sample eigenstructure for a large
  dimensional spiked covariance model}, Statistica Sinica \textbf{17} (2007),
  no.~4, 1617.

\bibitem[PWBM18]{perry2018message}
Amelia Perry, Alexander~S Wein, Afonso~S Bandeira, and Ankur Moitra,
  \emph{Message-passing algorithms for synchronization problems over compact
  groups}, Communications on Pure and Applied Mathematics \textbf{71} (2018),
  no.~11, 2275--2322.

\bibitem[Ran11]{RanganGAMP}
S.~Rangan, \emph{{Generalized Approximate Message Passing for Estimation with
  Random Linear Mixing}}, IEEE International Symposium on Information Theory
  (ISIT), 2011.

\bibitem[RSF19]{rangan2019vector}
Sundeep Rangan, Philip Schniter, and Alyson~K. Fletcher, \emph{Vector
  approximate message passing}, IEEE Transactions on Information Theory
  \textbf{65} (2019), no.~10, 6664--6684.

\bibitem[SC19]{sur2019modern}
Pragya Sur and Emmanuel~J. Cand{\`e}s, \emph{A modern maximum-likelihood theory
  for high-dimensional logistic regression}, Proceedings of the National
  Academy of Sciences \textbf{116} (2019), no.~29, 14516--14525.

\bibitem[SR14]{schniter2014compressive}
Philip Schniter and Sundeep Rangan, \emph{Compressive phase retrieval via
  generalized approximate message passing}, IEEE Transactions on Signal
  Processing \textbf{63} (2014), no.~4, 1043--1055.

\bibitem[SRF16]{schniter2016vector}
Philip Schniter, Sundeep Rangan, and Alyson~K. Fletcher, \emph{Vector
  approximate message passing for the generalized linear model}, 50th Asilomar
  Conference on Signals, Systems and Computers, IEEE, 2016, pp.~1525--1529.

\bibitem[Tak20]{takeuchi2020rigorous}
Keigo Takeuchi, \emph{Rigorous dynamics of expectation-propagation-based signal
  recovery from unitarily invariant measurements}, IEEE Transactions on
  Information Theory \textbf{66} (2020), no.~1, 368--386.

\bibitem[Tak21]{takeuchi2021bayes}
\bysame, \emph{Bayes-optimal convolutional {AMP}}, IEEE Transactions on
  Information Theory \textbf{67} (2021), no.~7, 4405--4428.

\bibitem[Vil08]{villani2008optimal}
C{\'e}dric Villani, \emph{Optimal transport: Old and new}, vol. 338, Springer
  Science \& Business Media, 2008.

\bibitem[ZHT06]{zou2006sparse}
Hui Zou, Trevor Hastie, and Robert Tibshirani, \emph{Sparse principal component
  analysis}, Journal of computational and graphical statistics \textbf{15}
  (2006), no.~2, 265--286.

\bibitem[ZSF21a]{zhong2021approximate}
Xinyi Zhong, Chang Su, and Zhou Fan, \emph{{A}pproximate {M}essage {P}assing
  for orthogonally invariant ensembles: Multivariate non-linearities and
  spectral initialization}, {\sf arXiv:2110.02318} (2021).

\bibitem[ZSF21b]{zhong2021empirical}
\bysame, \emph{Empirical {B}ayes {PCA} in high dimensions}, {\sf
  arXiv:2012.11676} (2021).

\end{thebibliography}
\addcontentsline{toc}{section}{References}
}}

\newpage

\appendix

\section{Free Probability Background}\label{sec:fptools}


\subsection{Symmetric Square Matrices}

Let $X$ be a random variable of finite moments of all orders, and denote its moments by $m_k =\mathbb E\{X^k\}$. In this paper, $X$ represents either the empirical eigenvalue distribution of the noise matrix $\bW\in\mathbb R^{n\times n}$, or its limit law $\Lambda$ (in the latter case, the moments and free cumulants are denoted by $\{m_k^\infty\}_{k\ge 1}$ and $\{\kappa_k^\infty\}_{k\ge 1}$, respectively). For the model \eqref{eq:defsquare}, note that the empirical eigenvalue distribution of $\bW$ coincides with the empirical eigenvalue distribution of $\bX$ after excluding the largest eigenvalue of $\bX$, since we consider the case $\alpha>\alpha_{\rm s}$. The free cumulants $\{\kappa_k\}_{k\ge 1}$ of $X$ are defined recursively by the moment-cumulant relations
\begin{equation}\label{eq:mcrel1}
    m_k = \sum_{\pi\in {\rm NC}(k)}\prod_{S\in \pi}\kappa_{|S|},
\end{equation}
where ${\rm NC}(k)$ is the set of all non-crossing partitions of $\{1, \ldots, k\}$, and $|S|$ denotes the cardinality of $S$. Furthermore, by exploiting the connection between the formal power series with coefficients $\{m_k\}_{k\ge 1}$ and $\{\kappa_k\}_{k\ge 1}$, each free cumulant $\kappa_k$ can be computed from $m_1, \ldots, m_k$ and $\kappa_1, \ldots, \kappa_{k-1}$ as \cite[Section 2.5]{novak2014three}
\begin{equation}\label{eq:mcrel2}
    \kappa_k = m_k - [z^k]\sum_{j=1}^{k-1}\kappa_j\left(z+m_1z^2 + m_2z^3 + \cdots + m_{k-1}z^k\right)^j,
\end{equation}
where $[z^k](q(z))$ denotes the coefficient of $z^k$ in the polynomial $q(z)$.

Consider now the random variable $\Lambda$ representing the limiting spectral distribution of $\bW$, and recall that $b<\infty$ denotes the supremum of the support of $\Lambda$. Then, for $z>b$, the Cauchy transform $G(z)$ of $\Lambda$ is given by
\begin{equation}\label{eq:Gtdef}
    G(z) = \mathbb E\left\{\frac{1}{z-\Lambda}\right\}.
\end{equation}
Another transform that will be useful in our analysis is the $R$-transform $R(z)$ of $\Lambda$, which can be defined by the convergent series:
\begin{equation}\label{eq:R}
    R(z) = \sum_{i=0}^\infty \kappa_{i+1}^\infty z^{i}, 
\end{equation}
where $\{\kappa^\infty_k\}_{k\ge 1}$ are the free cumulants of $\Lambda$. 
The derivative of the $R$-transform is denoted by $R'(z)$ and given by
\begin{equation}\label{eq:R1}
    R'(z) = \sum_{i=0}^\infty (i+1)\kappa_{i+2}^\infty z^{i}= \sum_{j=0}^\infty\sum_{k=0}^\infty\kappa_{j+k+2}z^{j+k},
\end{equation}
where the second equality follows from a double-counting argument. The series in \eqref{eq:R} and \eqref{eq:R1} are well-defined and converge to a finite value for $z<1/\alpha_{\rm s}$, where $\alpha_{\rm s}=1/G(b^+)$ is the spectral threshold \cite{benaych2011eigenvalues}. The $R$-transform can also be expressed in terms of the Cauchy transform, see e.g. Theorem 12.7 of \cite{nica2006lectures}:
\begin{equation}\label{eq:RG}
    R(z) = G^{-1}(z)-\frac{1}{z}.
\end{equation}
By taking the derivative on both sides of \eqref{eq:R1}, we have
\begin{equation}\label{eq:R1G}
    R'(z) = \frac{1}{G'(G^{-1}(z))} + \frac{1}{z^2}.
\end{equation}

If $\bW$ follows a Marcenko-Pastur distribution (i.e., $\bW = \frac{1}{n}\bG_n \bG_n^\sT\in\mathbb R^{n\times n}$, where the entries of $\bG_n\in\mathbb R^{n\times p}$ are i.i.d. standard Gaussian), then it is well known that $\kappa_k^\infty=c\triangleq p/n$ for $k\ge 1$, see e.g. \cite[Chap. 2, Exercise 11]{mingo2017free}. This corresponds to the setting (a) in the numerical results of Section \ref{sec:simu}. If the eigenvalues of $\bW$ are i.i.d. and uniformly distributed in the interval $[-1/2, 1/2]$, the free cumulants $\kappa_k^\infty$ have also a simple form. In fact, by explicitly computing the expectation in \eqref{eq:Gtdef}, we have that
\begin{equation}
    G(z) = \log\frac{2z+1}{2z-1}.
\end{equation}
 Thus, by applying \eqref{eq:RG}, we deduce that
 \begin{equation}
     R(z) = \frac{1}{2}\coth\left(\frac{z}{2}\right)-\frac{1}{z}.
 \end{equation}
 By comparing the series expansion \eqref{eq:R} with that of the hyperbolic cotangent, we conclude that 
 \begin{equation}\label{eq:cumunif}
     \kappa_k=\begin{cases}
     0, \quad\quad\mbox{ if $k$ is odd,}\vspace{.5em}\\
     \displaystyle\frac{B_{k}}{k!},\,\, \quad\mbox{if $k$ is even,}\\
     \end{cases}
 \end{equation}
 where $B_k$ denotes the $k$-th Bernoulli number.
  This corresponds to the setting (b) in the numerical results of Section \ref{sec:simu}.

 \subsection{Rectangular Matrices}

Let $X$ be a random variable of finite moments of all orders, and denote its even moments by $m_{2k} =\mathbb E\{X^{2k}\}$. In this paper, $X^2$ represents either the empirical eigenvalue distribution of $\bW\bW^\sT\in\mathbb R^{m\times m}$, or its limit law $\Lambda^2$ (in the latter case, the moments and rectangular free cumulants are denoted by $\{m_{2k}^\infty\}_{k\ge 1}$ and $\{\kappa_{2k}^\infty\}_{k\ge 1}$, respectively). 
For the model \eqref{eq:defrect}, note that the empirical eigenvalue distribution of $\bW\bW^\sT$ coincides with the empirical eigenvalue distribution of $\bX\bX^\sT$ after excluding the largest eigenvalue of $\bX\bX^\sT$, since we consider the case $\tilde{\alpha}>\tilde{\alpha}_{\rm s}$.
The rectangular free cumulants $\{\kappa_{2k}\}_{k\ge 1}$ of $X$ are defined recursively by the moment-cumulant relations \cite[Section 3]{benaych2009rectangular}
\begin{equation}\label{eq:mcrel1rect}
    m_{2k} =\gamma \sum_{\pi\in {\rm NC}'(2k)}\prod_{\substack{S\in \pi\\\min S \,\,\,\mbox{\scriptsize is odd}}}\kappa_{|S|}\prod_{\substack{S\in \pi\\\min S \,\,\,\mbox{\scriptsize is even}}}\kappa_{|S|},
\end{equation}
where ${\rm NC}'(2k)$ is the set of non-crossing partitions $\pi$ of $\{1, \ldots, 2k\}$ such that each set $S\in \pi$ has even cardinality. Furthermore, by exploiting the connection between the formal power series with coefficients $\{m_{2k}\}_{k\ge 1}$ and $\{\kappa_{2k}\}_{k\ge 1}$, each rectangular free cumulant $\kappa_{2k}$ can be computed from $m_2, \ldots, m_{2k}$ and $\kappa_2, \ldots, \kappa_{2(k-1)}$ as \cite[Lemma 3.4]{benaych2009rectangular}
\begin{equation}\label{eq:mcrel2rect}
    \kappa_{2k} = m_{2k} - [z^k]\sum_{j=1}^{k-1}\kappa_{2j}\left(z(\gamma M(z)+1)(M(z)+1)\right)^j,
\end{equation}
where $M(z)=\sum_{k=1}^\infty m_{2k}z^k$ and $[z^k](q(z))$ denotes again the coefficient of $z^k$ in the polynomial $q(z)$.

Consider now the random variable $\Lambda$ representing the limiting distribution of the singular values of $\bW$, and recall that $b<\infty$ denotes the supremum of the support of $\Lambda$. Then, for $z>b$, the $D$-transform $D(z)$ of $\Lambda$ is given by 
\begin{equation}
 D(z)=\phi(z)\cdot \bar{\phi}(z) ,  
\end{equation}
where
\begin{equation}
    \phi(z)=\mathbb E\left\{\frac{z}{z^2-\Lambda^2}\right\},\quad \bar{\phi}(z)=\gamma\phi(z)+\frac{1-\gamma}{z}.
\end{equation}
Another transform that will be useful in our analysis is the rectangular $R$-transform $R(z)$ of $\Lambda$, which can be defined by the convergent series:
\begin{equation}\label{eq:Rrect}
    R(z) = \sum_{i=1}^\infty \kappa_{2i}^\infty z^{i}, 
\end{equation}
where $\{\kappa^\infty_{2k}\}_{k\ge 1}$ are the rectangular free cumulants of $\Lambda$. The derivative of the rectangular $R$-transform is denoted by $R'(z)$ and given by  
\begin{equation}\label{eq:R1rect}
    R'(z) = \sum_{i=0}^\infty (i+1)\kappa_{2(i+1)}^\infty z^{i}= \sum_{j=0}^\infty\sum_{k=0}^\infty\kappa_{2(j+k+1)}z^{j+k},
\end{equation}
where the second equality follows from a double-counting argument. By combining \eqref{eq:Rrect} and \eqref{eq:R1rect}, we also obtain the useful identities
\begin{align}
    & \sum_{j=0}^\infty \sum_{k=0}^\infty \kappa_{2(j+k+2)} z^{j+k+2} = z R'(z) - R(z), \label{eq:R20rect} \\
    & \sum_{i=0}^\infty (i+1)\kappa_{2(i+2)}^\infty z^{i}=z^{-1}R'(z)-z^{-2}R(z). \label{eq:R2rect}
\end{align}
The series in \eqref{eq:Rrect}-\eqref{eq:R2rect} are well-defined and converge to a finite value for $z<1/(\tilde{\alpha}_{\rm s})^2$, where $\tilde{\alpha}_{\rm s}=1/\sqrt{D(b^+)}$ is the spectral threshold \cite{benaych2012singular}. The rectangular $R$-transform can also be expressed in terms of the $D$-transform, see e.g. \cite[Section 2.5]{benaych2012singular}:
\begin{equation}\label{eq:RD}
    \gamma R^2(z)+(\gamma+1)R(z)+1 = z(D^{-1}(z))^2.
\end{equation}

\section{Proof of Theorem \ref{thm:square}}\label{app:pfsq}

This appendix is organized as follows. In Appendix \ref{app:sesq}, we present the state evolution recursion associated to the artificial AMP iteration defined in \eqref{eq:AMPfake1} and \eqref{eq:AMPfake2}. In Appendix \ref{app:fpsefirst}, we prove that the first phase of this state evolution admits a unique fixed point. Using this fact, in Appendix \ref{app:confirstest}, we prove that the artificial AMP iterate at the end of the first phase approaches the PCA estimator. Then, in Appendix  \ref{sec:app_sec_phase_analysis}, we show that \emph{(i)} the iterates in the second phase of the artificial AMP are close to the true AMP iterates, and \emph{(ii)} the related state evolution parameters also remain close. Finally, in Appendix \ref{subsec:app_thm_prof_sq}, we give the proof of Theorem \ref{thm:square}.

\subsection{State Evolution for the Artificial AMP}\label{app:sesq}

Consider the artificial AMP iteration defined in \eqref{eq:AMPfake1} and \eqref{eq:AMPfake2}, with initialization 
\beq 
\tilde{\bu}^1 = \rho_\alpha \bu^* + \sqrt{1-\rho_\alpha^2}\bn, \qquad \tbf^1 = \bX \tbu^1 - \kappa_1 \tbu^1.
\label{eq:art_sq_init}
\eeq
Then, its associated state evolution recursion is expressed in terms of a sequence of mean vectors $\tilde{\bmu}_{K}=(\tilde{\mu}_t)_{t\in [0, K]}$ and covariance matrices $\tilde{\bSigma}_{K}=(\tilde{\sigma}_{s, t})_{s, t\in [0, K]}$ defined recursively as follows. We initialize with
\begin{equation}\label{eq:SEfakeinit}
   \tilde{\mu}_0= \alpha \rho_{\alpha}, \qquad  \tilde{\sigma}_{0, 0}=\alpha^2(1-\rho_\alpha^2), \quad \tilde{\sigma}_{0, t}= \tilde{\sigma}_{t, 0}=0, \quad \mbox{ for } t \ge 1. 
\end{equation}

 Given $\tilde{\bmu}_K$ and $\tilde{\bSigma}_K$,  let 
\begin{align}
   &  (\tilde{F_0}, \ldots, \tilde{F}_K) = \tilde{\bmu}_K U_* + (\tilde{Z}_0, \ldots, \tilde{Z}_K), \quad \text{ where } \  (\tilde{Z}_0, \ldots, \tilde{Z}_K) \sim \normal(\bzero, \tilde{\bSigma}_K),  \quad \text{ and  } \nonumber \\ 
  &  \tilde{U}_t = \tsu_t(\tF_{t-1}), \quad 
 \text{ where } \ 
  \tsu_t(x) =
  \begin{cases} 
  x/\alpha, &  1 \le t \le T+1, \\
  \su_{t-T}(x),&   t \ge T+2. 
  \end{cases}
  \label{eq:tilU_tilF_def}
\end{align}
Then,  the entries of $\tilde{\bmu}_{K+1}$ are given by $\tmu_t = \alpha \E\{ \tU_t U_* \}$ (for $t \in [1, K+1]$), and the entries of $\tilde{\bSigma}_{K+1}$ (for $s,t \in[1, K+1]$) are given by
\begin{align}
    \tsigma_{s,t} & = \sum_{j=0}^{s-1} \sum_{k=0}^{t-1}
    \kappa_{j+k+2}^{\infty}  
    \left( \prod_{i=s-j+1}^s 
    \E\{ \tsu'_i (\tF_{i-1}) \} \right)  
    \left( \prod_{i=t-k+1}^t \E\{ \tsu'_i (\tF_{i-1}) \} \right) \E\{ \tU_{s-j} \tU_{t-k} \}.
    \label{eq:SE_artificial_sigma}
\end{align}

\begin{proposition}[State evolution for artificial AMP -- symmetric square matrices]
\label{prop:tilSE}
Consider the setting of Theorem \ref{thm:square}, the artificial AMP iteration described in \eqref{eq:AMPfake1} and \eqref{eq:AMPfake2} with the initialization given in \eqref{eq:art_sq_init}, and the corresponding state evolution parameters defined in \eqref{eq:SEfakeinit}-\eqref{eq:SE_artificial_sigma}. Then,
for $t \geq 1$ and any  \PL($2$) function $\psi:\reals^{2t+2} \to\reals$, the following holds almost surely:
\begin{align}
\lim_{n \to \infty} \frac{1}{n} \sum_{i=1}^n 
\psi (u^*_{i}, \tu^1_i, \ldots, \tu^{t+1}_i, \tilde{f}^1_i, \ldots \tilde{f}^{t}_i)
 = \E \left\{ \psi(U_*, \tU_1, \ldots, \tU_{t+1}, \tilde{F}_1, \ldots, \tilde{F}_t) \right\}.
\end{align}
\end{proposition}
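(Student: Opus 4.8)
The plan is to recognize that the artificial AMP in \eqref{eq:AMPfake1}--\eqref{eq:AMPfake2}, with the signal-correlated initialization \eqref{eq:art_sq_init}, is a legitimate instance of the abstract rotationally invariant AMP iteration covered by the state evolution theorem of \cite{fan2020approximate} (their Theorem 1.1). The main work is therefore bookkeeping: (i) cast \eqref{eq:AMPfake1}--\eqref{eq:AMPfake2} in the canonical form of that reference, (ii) verify the hypotheses required there, and (iii) check that the state evolution parameters \eqref{eq:SEfakeinit}--\eqref{eq:SE_artificial_sigma} coincide with the ones the general theory produces.

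First I would set up the translation. The iteration \eqref{eq:AMPfake1}--\eqref{eq:AMPfake2} has the form $\tbu^t = \tilde u_t(\tbf^{t-1})$, $\tbf^t = \bX\tbu^t - \sum_{i=1}^t \tsb_{t,i}\tbu^i$, exactly the shape of \eqref{eq:AMP0} but with the iteration-dependent denoisers $\tilde u_t$ given in \eqref{eq:tilU_tilF_def} (identity-type $\tilde u_t(x)=x/\alpha$ in the first phase, the true denoisers $u_{t-T}$ in the second). These are continuously differentiable and Lipschitz: $x/\alpha$ trivially, and the $u_k$ by assumption in Theorem \ref{thm:square}. The initialization $\tbu^1 = \rho_\alpha\bu^* + \sqrt{1-\rho_\alpha^2}\,\bn$ is a fixed deterministic-plus-Gaussian vector, independent of $\bW=\bO^\sT\bLambda\bO$, whose empirical distribution converges in $W_2$ to $U_1 := \rho_\alpha U_* + \sqrt{1-\rho_\alpha^2}\,Z$ with $Z\sim\normal(0,1)$ independent of $U_*$; this is precisely the type of initialization \cite{fan2020approximate} requires. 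One also needs the joint convergence assumptions on $(\blambda,\bu^*)$, which are exactly \eqref{eq:limiting_laws} plus the $(2+\varepsilon)$-moment and compact-support conditions assumed in the paper, so the hypotheses of \cite[Thm.~1.1]{fan2020approximate} are met verbatim.

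Next I would verify that the Onsager coefficients $\{\tsb_{t,i}\}$ chosen in \eqref{eq:AMPfake1} and \eqref{eq:Phase2_onsager} are exactly the ``correct'' ones prescribed by the general theory — i.e., the ones built from the free cumulants $\kappa_k$ of the (bulk) spectrum and the empirical derivative averages $\<\tilde u_i'(\tbf^{i-1})\>$, with the identity denoisers contributing factors $1/\alpha$. In the first phase $\<\tilde u_i'(\tbf^{i-1})\> = 1/\alpha$, which reproduces $\tsb_{t,t-j}=\kappa_{j+1}\alpha^{-j}$; across the phase boundary the $(T+1-(t-j))_+$ exponent in \eqref{eq:Phase2_onsager} accounts for the still-present $1/\alpha$ factors from the first-phase indices, while the remaining product over $i\in[\max\{t-j+1,T+2\},t]$ carries the second-phase derivative averages. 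Because the general state evolution theorem is stated for this exact family of debiasing coefficients, once the match is confirmed the conclusion of \cite[Thm.~1.1]{fan2020approximate} gives the $\PL(2)$ convergence of the empirical joint distribution of $(\bu^*,\tbu^1,\dots,\tbu^{t+1},\tbf^1,\dots,\tbf^t)$ to $(U_*,\tU_1,\dots,\tU_{t+1},\tF_1,\dots,\tF_t)$, with the limiting law described by whatever recursion that theorem outputs. The final step is to unwind that recursion and check it is \eqref{eq:SEfakeinit}--\eqref{eq:SE_artificial_sigma}: the mean update $\tmu_t=\alpha\E\{\tU_t U_*\}$ is standard, and the covariance formula \eqref{eq:SE_artificial_sigma} is the specialization of the general covariance recursion of \cite{fan2020approximate} to the present denoisers — with the finite sums $\sum_{j=0}^{s-1}\sum_{k=0}^{t-1}$ (rather than infinite sums) arising because the identity-denoiser prefix has length exactly $\min(s,t)$, truncating the series at the point where one would run past index $0$.

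The main obstacle is not conceptual but notational: making the index arithmetic in \eqref{eq:Phase2_onsager} and in the covariance recursion line up cleanly with the ``memory structure'' written in \cite{fan2020approximate}, especially the handling of the phase boundary at $t=T+1$ where the denoiser type changes. Concretely, one must show that feeding denoisers $\tilde u_t$ with $\E\{\tilde u_i'(\tilde F_{i-1})\}=1/\alpha$ for $i\le T+1$ into the general Onsager/covariance formulas yields exactly the $\alpha^{-(\,\cdot\,)_+}$ powers and truncated products appearing in \eqref{eq:AMPfake1}, \eqref{eq:Phase2_onsager} and \eqref{eq:SE_artificial_sigma}; this is a careful but routine induction on $t$. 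Everything else — Lipschitz regularity of the $\tilde u_t$, independence of the initialization from $\bW$, the assumed convergence of $(\blambda,\bu^*)$ — is immediate from the hypotheses of Theorem \ref{thm:square} and the construction. Hence Proposition \ref{prop:tilSE} follows by invoking \cite[Theorem 1.1]{fan2020approximate} after this verification.
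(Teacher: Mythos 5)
Your proposal is correct and follows exactly the route the paper takes: the paper's entire proof of Proposition \ref{prop:tilSE} is the one-line observation that it follows directly from \cite[Theorem 1.1]{fan2020approximate} because the initialization $\tbu^1$ is independent of $\bW$. Your additional bookkeeping (matching the Onsager coefficients and the state evolution recursion to the general theory, including the phase-boundary index arithmetic) is the implicit verification the paper leaves to the reader, so there is no substantive difference.
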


The proposition follows directly from Theorem 1.1 in \cite{fan2020approximate} since the initialization $\tilde{\bu}^1$ of the artificial AMP is independent of $\bW$.

\subsection{Fixed Point of State Evolution for the First Phase}\label{app:fpsefirst}

From \eqref{eq:SEfakeinit}-\eqref{eq:SE_artificial_sigma}, we note that the state evolution recursion for the first phase $(t \in [1, T+1])$ has the following form:
\begin{equation}\label{eq:SEfake1}
\begin{split}
    \tilde{\mu}_t &= \alpha \rho_\alpha, \quad \mbox{ for }t\in [1, T+1],\\
    \tilde{\sigma}_{s, t} &= \sum_{j=0}^{s-1} \sum_{k=0}^{t-1} \kappa_{j+k+2}^\infty \left(\frac{1}{\alpha}\right)^{j+k+2}\left((\alpha\rho_\alpha)^2+\tilde{\sigma}_{s-j-1, t-k-1}\right), \quad \mbox{ for }s, t\in [1, T+1].
\end{split}
\end{equation}
In this section, we prove the following result concerning the fixed point of the recursion \eqref{eq:SEfake1}.

\begin{lemma}[Fixed point of state evolution for first phase -- Square matrices] Consider 
the state evolution recursion for the first phase given by \eqref{eq:SEfake1}, initialized according to \eqref{eq:SEfakeinit}. Assume that $\kappa_i^\infty\ge 0$ for all $i\ge 2$, and that $\alpha>\alpha_{\rm s}$. Pick any $\xi<1$ such that $\alpha\xi>\alpha_{\rm s}$. Then, \begin{equation}
    \lim_{T\to\infty}\max_{s, t\in [0, T]}\xi^{\max(s, t)}|\tilde{\sigma}_{T+1-s, T+1-t}-\alpha^2(1-\rho_\alpha^2)|=0.
    \label{eq:SE_FP_phase1}
\end{equation}
\label{lem:SE_FP_phase1}
\end{lemma}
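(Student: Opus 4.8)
The plan is to analyze the recursion \eqref{eq:SEfake1} by reindexing from the terminal time $T+1$ and identifying a limiting fixed-point map on an infinite array of covariances, which we then show is a contraction on an appropriate weighted space. Concretely, for fixed $T$ set $\hat\sigma^{(T)}_{s,t} = \tilde\sigma_{T+1-s,\,T+1-t}$ for $s,t \in [0,T]$. The recursion \eqref{eq:SEfake1} becomes, for $s,t \in [0,T-1]$,
\beq
\hat\sigma^{(T)}_{s,t} = \sum_{j=0}^{T-s} \sum_{k=0}^{T-t} \kappa^\infty_{j+k+2}\left(\frac1\alpha\right)^{j+k+2}\left((\alpha\rho_\alpha)^2 + \hat\sigma^{(T)}_{s+j+1,\,t+k+1}\right),
\eeq
with the boundary convention $\hat\sigma^{(T)}_{s,t}=0$ when $\max(s,t) > T$ (since $\tilde\sigma_{0,\cdot}=\tilde\sigma_{\cdot,0}=0$). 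As $T\to\infty$, the natural candidate limit is the constant array $\hat\sigma^\infty_{s,t} \equiv \alpha^2(1-\rho_\alpha^2)$ for all $s,t \ge 0$; one checks this is a fixed point of the limiting (infinite) recursion by summing the double series and using the free-probability identities \eqref{eq:R1}, \eqref{eq:R1G}, and the definition \eqref{eq:rho_alpha_def} of $\rho_\alpha^2 = -1/(\alpha^2 G'(G^{-1}(1/\alpha)))$. Indeed, with $v := \alpha^2(1-\rho_\alpha^2)$ the fixed-point equation reads $v = (v + (\alpha\rho_\alpha)^2)\sum_{j,k\ge 0}\kappa^\infty_{j+k+2}\alpha^{-(j+k+2)} = \alpha^2 \cdot \alpha^{-2} R'(1/\alpha)$, and $R'(1/\alpha) = 1/G'(G^{-1}(1/\alpha)) + \alpha^2$, which after substituting $\rho_\alpha^2$ gives exactly $v = \alpha^2(1-\rho_\alpha^2)$, confirming the guess.

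Next I would set up the contraction. Let $e^{(T)}_{s,t} := \hat\sigma^{(T)}_{s,t} - v$ (with $e^{(T)}_{s,t} = -v$ on the boundary region $\max(s,t)>T$). Subtracting the fixed-point equation from the recursion gives a linear relation
\beq
e^{(T)}_{s,t} = \sum_{j,k \ge 0} a_{j,k}\, e^{(T)}_{s+j+1,\,t+k+1} + (\text{boundary correction}),
\eeq
where $a_{j,k} = \kappa^\infty_{j+k+2}\alpha^{-(j+k+2)} \ge 0$ by the non-negativity hypothesis $\kappa^\infty_i \ge 0$, and $\sum_{j,k} a_{j,k} = \alpha^{-2}R'(1/\alpha) = v/(v+(\alpha\rho_\alpha)^2) < 1$ (the strict inequality is precisely where $\alpha > \alpha_{\rm s}$ enters, guaranteeing $\rho_\alpha > 0$ and the series converges). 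The key point is that each index on the right is strictly larger than on the left, so iterating trades an error at $(s,t)$ for errors at indices at least one step closer to the boundary, shrinking by the factor $\sum a_{j,k}<1$ each time. Working in the weighted norm $\|e\|_\xi := \sup_{s,t} \xi^{\max(s,t)}|e_{s,t}|$ for $\xi<1$ with $\alpha\xi>\alpha_{\rm s}$, I would show the map is a contraction with constant $\le \xi^{-?}\sum a_{j,k}\cdot(\text{something} < 1)$ — more carefully, since the indices increase, $\xi^{\max(s+j+1,t+k+1)} \le \xi^{\max(s,t)+1}$, so the weighted operator norm is bounded by $\xi \sum_{j,k} a_{j,k}' $ where $a'_{j,k} = \kappa^\infty_{j+k+2}(\alpha\xi)^{-(j+k+2)}\cdot$(adjustment); the condition $\alpha\xi > \alpha_{\rm s}$ ensures these rescaled series still converge. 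Combining the contraction with the fact that the boundary correction, measured in $\|\cdot\|_\xi$, is $O(\xi^{?})$ and vanishes as the boundary recedes (i.e. as $T\to\infty$), a standard geometric-series estimate yields $\|e^{(T)}\|_\xi \to 0$, which is exactly \eqref{eq:SE_FP_phase1}.

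The main obstacle I anticipate is handling the boundary terms carefully: the truncated sums $\sum_{j=0}^{T-s}$ versus the full sums, together with the fact that $e^{(T)}$ equals $-v$ (not $0$) outside $[0,T]^2$, mean the "boundary correction" is not small pointwise near the corner $s,t \approx T$ — it is $O(1)$ there. This is exactly why the $\xi^{\max(s,t)}$ weight is essential: it suppresses the uncontrolled corner contributions, since $\xi^{\max(s,t)}$ is exponentially small precisely where the boundary effects are largest. Making this trade-off rigorous requires choosing the right function space and verifying that the contraction constant times the weight growth still beats $1$; this is where the precise interplay between $\xi$, $\alpha$, and $\alpha_{\rm s}$ must be tracked, and where I expect the bulk of the technical work (and the role of the free-probability identities in pinning down $\sum a_{j,k}<1$) to lie. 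A secondary subtlety is that $\alpha > \alpha_{\rm s}$ only gives $\sum a_{j,k} < 1$ in the limit; one must ensure the truncated recursion's effective contraction factor is also bounded away from $1$ uniformly in $T$, which follows since truncation only removes non-negative terms.
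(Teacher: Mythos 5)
Your proposal follows essentially the same route as the paper's proof: embed the reindexed covariance array into the weighted space with norm $\sup_{s,t}\xi^{\max(s,t)}|\cdot|$, verify the constant fixed point via $R'(1/\alpha)=\alpha^2(1-\rho_\alpha^2)$ (using \eqref{eq:R1G} and \eqref{eq:rho_alpha_def}), show the limit map is a contraction with constant $R'(1/(\xi\alpha))(\xi\alpha)^{-2}<1$ thanks to the non-negative cumulants and the positivity of $\rho_{\xi\alpha}^2$, and absorb the $O(1)$ corner/truncation errors into the exponentially small weight. The only detail left implicit --- that the rescaled contraction constant is strictly below $1$ --- follows from the very same spectral-threshold argument you already gave, applied at SNR $\xi\alpha>\alpha_{\rm s}$, which is exactly how the paper closes it.
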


To prove the claim, we consider the space of infinite matrices $\bx=(x_{s, t} : s, t\le 0)$ indexed by the non-positive integers and equipped with the weighted $\ell_\infty$-norm:
\begin{equation}\label{eq:norm}
    \|\bx\|_\xi = \sup_{s, t\le 0}\xi^{\max(|s|, |t|)}|x_{s, t}|.
\end{equation}
We define $\mathcal X=\{\bx : \|\bx\|_\xi<\infty\}$, and note that $\mathcal X$ is complete under $\|\cdot\|_\xi$. For any compact set $I\subset \mathbb R$, we also define 
\begin{equation}\label{eq:defXI}
    \mathcal X_I = \{\bx : x_{s, t}\in I \mbox{ for all }s, t\le 0\}\subset \mathcal X.
\end{equation}
Then, $\mathcal X_I$ is closed in $\mathcal X$ and therefore it is also complete under $\|\cdot\|_\xi$. We embed the matrix $\btSigma_{\bar{T}}$ as an element $\bx\in\mathcal X$ with the following coordinate identification:
\begin{equation*}
    \begin{split}
        \tilde{\sigma}_{s, t}&=x_{s-\bar{T}, t-\bar{T}},\\
        x_{s, t}&=0, \quad \mbox{ if }s<-\bar{T} \mbox{ or }t<-\bar{T}.
    \end{split}
\end{equation*}
The idea is to approximate the map $\btSigma_{\bar{T}-1}\mapsto \btSigma_{\bar{T}}$ with the \emph{limit} map $h^{\Sigma}$ defined as
\begin{equation}\label{eq:fixedmap}
    h_{s, t}^{\Sigma}(\bx) = \sum_{j=0}^\infty \sum_{k=0}^\infty \kappa_{j+k+2}^\infty\left(\frac{1}{\alpha}\right)^{j+k+2}\left((\alpha\rho_\alpha)^2 + x_{s-j, t-k}\right).
\end{equation}
The map $h^{\Sigma}$ has a similar structure to the embedding of the map $\btSigma_{\bar{T}-1}\mapsto \btSigma_{\bar{T}}$ into $\mathcal X$. However, comparing \eqref{eq:SEfake1} and \eqref{eq:fixedmap}, we highlight two important differences. First, the indices of $x_{s-j, t-k}$ are shifted with respect to the indices of $\tilde{\sigma}_{s-j-1, t-k-1}$. This difference is purely technical and it simplifies the proof of the subsequent Lemma \ref{lemma:app}, which shows that $h^{\Sigma}$ is close to the map $\btSigma_{\bar{T}-1}\mapsto \btSigma_{\bar{T}}$. Second, the map $h^\Sigma$ is \emph{fixed}, in the sense that it does not depend on $s, t$. In fact, note that the sums over $j$ and $k$ run from $0$ to $\infty$ in \eqref{eq:fixedmap}. This is in contrast with \eqref{eq:SEfake1} where the two sums run until $j=s-1$ and $k=t-1$. 

The approach of approximating the state evolution map with a fixed limit map was first developed in \cite{fan2020approximate}. The key difference is that, in \cite{fan2020approximate}, it is assumed that $\alpha$ is sufficiently large, which allows to simplify the analysis. On the contrary, our result holds for all $\alpha>\alpha_{\rm s}$, $\alpha_{\rm s}$ being the spectral threshold for PCA. This is because of two main reasons. First, the expressions for the state evolution recursion are simplified by considering linear denoisers in the first phase of the artificial AMP. Second, we crucially exploit the form (and the strict positivity) of the correlation between the signal and the PCA estimate, in order to prove that the limit map \eqref{eq:fixedmap} is a contraction (cf. \eqref{eq:Lip1} in Lemma \ref{lemma:cont}). 

First, we show that $h^\Sigma(\mathcal X_{I^*})\subseteq \mathcal X_{I^*}$ for a suitably defined compact set $I^*$.

\begin{lemma}[Image of limit map -- Square matrices]\label{lemma:image}
    Consider the map $h^{\Sigma}$ defined in \eqref{eq:fixedmap}. Assume that $\kappa_i^\infty\ge 0$ for all $i\ge 2$, and that $\alpha>\alpha_{\rm s}$. Then, there exists $I^*=[-a^*, a^*]$ such that, if $\bx\in \mathcal X_{I^*}$, then $h^\Sigma(\bx)\in \mathcal X_{I^*}$.
\end{lemma}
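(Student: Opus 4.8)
The plan is to exhibit an explicit value of $a^*$ for which the set $I^*=[-a^*,a^*]$ is invariant under $h^\Sigma$, and the natural candidate is the fixed-point value $a^*=\alpha^2(1-\rho_\alpha^2)$ suggested by Lemma~\ref{lem:SE_FP_phase1}. First I would observe that the hypothesis $\kappa_i^\infty\ge 0$ for $i\ge 2$, together with $\kappa_1^\infty = m_1^\infty \in \reals$, makes the map $h^\Sigma$ monotone in the entries $x_{s-j,t-k}$: increasing any entry increases $h^\Sigma_{s,t}(\bx)$. Hence, if $\bx\in\mathcal{X}_{I^*}$, i.e.\ $x_{s,t}\in[-a^*,a^*]$ for all $s,t$, then for every $s,t$ we have the two-sided bound
\begin{equation*}
\sum_{j=0}^\infty\sum_{k=0}^\infty \kappa_{j+k+2}^\infty\Big(\tfrac{1}{\alpha}\Big)^{j+k+2}\big((\alpha\rho_\alpha)^2 - a^*\big) \;\le\; h^\Sigma_{s,t}(\bx)\;\le\; \sum_{j=0}^\infty\sum_{k=0}^\infty \kappa_{j+k+2}^\infty\Big(\tfrac{1}{\alpha}\Big)^{j+k+2}\big((\alpha\rho_\alpha)^2 + a^*\big).
\end{equation*}
So it suffices to choose $a^*$ so that both of these constant bounds lie in $[-a^*,a^*]$.

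The key computation is to identify the scalar $S := \sum_{j=0}^\infty\sum_{k=0}^\infty \kappa_{j+k+2}^\infty (1/\alpha)^{j+k+2}$. By the double-counting identity \eqref{eq:R1} applied at $z=1/\alpha$, this equals $(1/\alpha^2)\,R'(1/\alpha)$, which by \eqref{eq:R1G} is $(1/\alpha^2)\big(1/G'(G^{-1}(1/\alpha)) + \alpha^2\big) = 1 + \frac{1}{\alpha^2 G'(G^{-1}(1/\alpha))}$. Recalling the definition \eqref{eq:rho_alpha_def}, $\rho_\alpha^2 = -1/(\alpha^2 G'(G^{-1}(1/\alpha)))$, we get $S = 1-\rho_\alpha^2$, which lies in $(0,1)$ since $\alpha>\alpha_{\rm s}$ guarantees $\rho_\alpha^2\in(0,1)$ (and the series converges because $1/\alpha < 1/\alpha_{\rm s}$, cf.\ the remark after \eqref{eq:R1}). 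Therefore the bounds above simplify: the upper bound is $S\big((\alpha\rho_\alpha)^2+a^*\big)$ and the lower bound is $S\big((\alpha\rho_\alpha)^2-a^*\big)$. For invariance we need $S\big((\alpha\rho_\alpha)^2+a^*\big)\le a^*$ and $S\big((\alpha\rho_\alpha)^2-a^*\big)\ge -a^*$. The first inequality rearranges to $a^*\ge \frac{S(\alpha\rho_\alpha)^2}{1-S} = \frac{(1-\rho_\alpha^2)\,\alpha^2\rho_\alpha^2}{\rho_\alpha^2} = \alpha^2(1-\rho_\alpha^2)$; the second rearranges to $a^*(1+S)\ge S(\alpha\rho_\alpha)^2$, i.e.\ $a^*\ge \frac{S(\alpha\rho_\alpha)^2}{1+S}$, which is weaker. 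Hence taking $a^* = \alpha^2(1-\rho_\alpha^2)$ (or any larger value) works, and $I^* = [-\alpha^2(1-\rho_\alpha^2),\,\alpha^2(1-\rho_\alpha^2)]$ is the desired compact invariant set.

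The remaining routine checks are: (i) verifying that $h^\Sigma(\bx)$ genuinely lies in $\mathcal{X}$ (finite $\|\cdot\|_\xi$-norm) when $\bx\in\mathcal{X}_{I^*}$ — immediate since all entries of $h^\Sigma(\bx)$ are bounded in absolute value by $S((\alpha\rho_\alpha)^2+a^*)\le a^*<\infty$, so in fact $\|h^\Sigma(\bx)\|_\xi \le a^*$; and (ii) double-checking the interchange of the two sums and the evaluation of the $R'$-series at $z=1/\alpha$, which is justified by absolute convergence for $z<1/\alpha_{\rm s}$. I do not anticipate a serious obstacle here: the only subtlety is making sure the nonnegativity of $\{\kappa_k^\infty\}_{k\ge 2}$ is used correctly to get monotonicity (so that the worst cases are attained at the constant matrices $\pm a^*\mathbf{1}$), and that the free-probability identity reducing $S$ to $1-\rho_\alpha^2$ is applied with the right arguments — this is the one place where the connection between the abstract recursion and the PCA correlation $\rho_\alpha$ enters, and it is exactly what makes the clean choice $a^*=\alpha^2(1-\rho_\alpha^2)$ available.
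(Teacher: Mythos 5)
Your proposal is correct and follows essentially the same route as the paper: bound $|h^\Sigma_{s,t}(\bx)|$ using the non-negativity of $\{\kappa_k^\infty\}_{k\ge 2}$, reduce the double sum to $R'(1/\alpha)/\alpha^2 = 1-\rho_\alpha^2 < 1$ via \eqref{eq:R1} and \eqref{eq:R1G}, and conclude that a sufficiently large $a^*$ makes $I^*$ invariant (the paper only asserts existence of $a^*$, while you pin down the minimal choice $a^*=\alpha^2(1-\rho_\alpha^2)$, consistent with the fixed point in Lemma~\ref{lemma:fixed}). The only blemish is a harmless sign slip in rearranging the lower-bound condition, which is in any case automatically satisfied and does not affect the argument.
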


\begin{proof}
Let $\bx\in \mathcal X_{I^*}$. Then, the following chain of inequalities holds:
\begin{equation*}
    \begin{split}
        |h_{s, t}^\Sigma(\bx)| &\stackrel{\mathclap{\mbox{\footnotesize (a)}}}{=} \left| \rho_\alpha^2 R'\left(\frac{1}{\alpha}\right)+ \sum_{j=0}^\infty \sum_{k=0}^\infty \kappa_{j+k+2}^\infty\left(\frac{1}{\alpha}\right)^{j+k+2} x_{s-j, t-k}\right|\\
        &\stackrel{\mathclap{\mbox{\footnotesize (b)}}}{\le} \rho_\alpha^2 \left|R'\left(\frac{1}{\alpha}\right)\right|+\sum_{j=0}^\infty \sum_{k=0}^\infty \kappa_{j+k+2}^\infty\left(\frac{1}{\alpha}\right)^{j+k+2} |x_{s-j, t-k}|\\
        &\stackrel{\mathclap{\mbox{\footnotesize (c)}}}{\le} \rho_\alpha^2 \left|R'\left(\frac{1}{\alpha}\right)\right|+a^*R'\left(\frac{1}{\alpha}\right)\left(\frac{1}{\alpha}\right)^2.
    \end{split}
\end{equation*}
Here, (a) follows from \eqref{eq:fixedmap} and \eqref{eq:R1}; (b) follows from the hypothesis that $\kappa_i^\infty\ge 0$ for $i\ge 2$; and (c) uses again \eqref{eq:R1} and the fact that $\bx\in \mathcal X_{I^*}$. 

Now, recall from \eqref{eq:rho_alpha_def} that above the spectral threshold, namely, when $\alpha > \alpha_{\rm s}$, the PCA estimator $\bu_{\rm PCA}$ has strictly positive correlation with the signal $\bu^*$:
\begin{equation*}
    \frac{\langle\bu_{\rm PCA}, \bu^*\rangle^2}{n}\stackrel{\mathclap{\mbox{\footnotesize a.s.}}}{\longrightarrow} \rho_\alpha^2 = \frac{-1}{\alpha^2G'(G^{-1}(1/\alpha))},
\end{equation*}
which immediately implies that 
\begin{equation}\label{eq:ineqG1}
    \frac{1}{\alpha^2G'(G^{-1}\left(\frac{1}{\alpha}\right))}<0.
\end{equation}
Thus, by combining \eqref{eq:ineqG1} with \eqref{eq:R1G}, we deduce that
\begin{equation}\label{eq:Rtin}
    R'\left(\frac{1}{\alpha}\right)\left(\frac{1}{\alpha}\right)^2 < 1. 
\end{equation}
Hence, as $R'\left(\frac{1}{\alpha}\right)<\infty$, there exists an $a^*$ such that 
\begin{equation*}
    \rho_\alpha^2 \left|R'\left(\frac{1}{\alpha}\right)\right|+a^*R'\left(\frac{1}{\alpha}\right)\left(\frac{1}{\alpha}\right)^2\le a^*,
\end{equation*}
which implies the desired claim. 
\end{proof}

Next, we compute a fixed point of $h^{\Sigma}$.

\begin{lemma}[Fixed point of limit map -- Square matrices]\label{lemma:fixed}
    Consider the map $h^{\Sigma}$ defined in \eqref{eq:fixedmap}, and let $\bx^*=(x^*_{s, t} : s, t\le 0)$ with $x^*_{s, t}=\alpha^2(1-\rho_\alpha^2)$. Assume that $\alpha>\alpha_{\rm s}$. Then, $\bx^*$ is a fixed point of $h^{\Sigma}$.
\end{lemma}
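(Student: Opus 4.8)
The plan is a direct computation: substitute $\bx = \bx^*$ into the definition \eqref{eq:fixedmap} of $h^\Sigma$ and verify that the output is again the constant matrix with entries $\alpha^2(1-\rho_\alpha^2)$. First I would observe that, since $x^*_{s-j,t-k} = \alpha^2(1-\rho_\alpha^2)$ for every $j,k \ge 0$, the bracketed term in \eqref{eq:fixedmap} simplifies to $(\alpha\rho_\alpha)^2 + \alpha^2(1-\rho_\alpha^2) = \alpha^2$, which is independent of $j,k,s,t$. Hence
\[
h_{s,t}^\Sigma(\bx^*) = \alpha^2 \sum_{j=0}^\infty \sum_{k=0}^\infty \kappa_{j+k+2}^\infty \left(\frac{1}{\alpha}\right)^{j+k+2} = \left(\frac{1}{\alpha}\right)^2 \alpha^2 \, R'\!\left(\frac{1}{\alpha}\right) = R'\!\left(\frac{1}{\alpha}\right),
\]
where the middle equality uses the double-sum form of the $R$-transform derivative in \eqref{eq:R1}, namely $R'(z) = \sum_{j,k\ge 0}\kappa_{j+k+2}^\infty z^{j+k}$ evaluated at $z = 1/\alpha$.

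Next I would invoke the identity \eqref{eq:R1G} relating $R'$ to the Cauchy transform, $R'(z) = 1/G'(G^{-1}(z)) + 1/z^2$, which at $z = 1/\alpha$ gives $R'(1/\alpha) = 1/G'(G^{-1}(1/\alpha)) + \alpha^2$. Finally, recalling from \eqref{eq:rho_alpha_def} that $\rho_\alpha^2 = -1/(\alpha^2 G'(G^{-1}(1/\alpha)))$, so that $1/G'(G^{-1}(1/\alpha)) = -\alpha^2\rho_\alpha^2$, we conclude $R'(1/\alpha) = \alpha^2 - \alpha^2\rho_\alpha^2 = \alpha^2(1-\rho_\alpha^2) = x^*_{s,t}$, which is exactly the fixed-point equation $h^\Sigma(\bx^*) = \bx^*$.

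The only point requiring a word of care is the validity of the series manipulations: since $\alpha > \alpha_{\rm s}$, we have $1/\alpha < 1/\alpha_{\rm s}$, and by the discussion after \eqref{eq:R1} the series defining $R(1/\alpha)$ and $R'(1/\alpha)$ converge absolutely to finite values; the interchange of the order of summation implicit in \eqref{eq:R1} (and hence in the step above) is therefore justified. I do not anticipate any genuine obstacle here — this lemma is essentially a bookkeeping identity assembling \eqref{eq:fixedmap}, \eqref{eq:R1}, \eqref{eq:R1G}, and \eqref{eq:rho_alpha_def}. (The substantive work — showing $h^\Sigma$ is a contraction near $\bx^*$, so that this fixed point is the one the recursion actually converges to, and that $\kappa_k^\infty \ge 0$ keeps the iterates in $\mathcal X_{I^*}$ — is handled in the neighbouring Lemmas \ref{lemma:image} and \ref{lemma:cont} and in the proof of Lemma \ref{lem:SE_FP_phase1}.)
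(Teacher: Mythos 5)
Your proposal is correct and follows exactly the paper's own argument: substitute $\bx^*$, note the bracket collapses to $\alpha^2$ so that $h^\Sigma_{s,t}(\bx^*) = R'(1/\alpha)$, and then combine \eqref{eq:R1G} with the formula \eqref{eq:rho_alpha_def} for $\rho_\alpha^2$ to get $R'(1/\alpha) = \alpha^2(1-\rho_\alpha^2)$. The remark on convergence of the series for $\alpha > \alpha_{\rm s}$ matches the paper's justification as well.
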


\begin{proof}
Note that, for $x=1/\alpha$, the power series expansion \eqref{eq:R1} of $R'$ converges to a finite limit as $\alpha>\alpha_{\rm s}$. Hence, by using the definition \eqref{eq:fixedmap}, we have that 
\begin{equation*}
        h_{s, t}^\Sigma(\bx^*) =  R'\left(\frac{1}{\alpha}\right).
\end{equation*}
Then, the claim  follows from  \eqref{eq:R1G} and the definition $\rho_\alpha = \sqrt{\frac{-1}{\alpha^2G'(G^{-1}(1/\alpha))}}$, which together show that $R'\left(\frac{1}{\alpha}\right) = \alpha^2(1-\rho_\alpha^2)$. 
\end{proof}

Let $I^*$ be such that $h^{\Sigma}: \mathcal X_{I^*}\to \mathcal X_{I^*}$ (the existence of such a set $I^*$ is guaranteed by Lemma \ref{lemma:image}). Then, the next step is to show that $h^{\Sigma}: \mathcal X_{I^*}\to\mathcal X_{I^*}$ is a contraction. We remark that, by the Banach fixed point theorem, this result implies that the fixed point $\bx^*$ defined in Lemma \ref{lemma:fixed} is unique. 

\begin{lemma}[Limit map is a contraction]\label{lemma:cont}
    Consider the map $h^{\Sigma}: \mathcal X_{I^*}\to \mathcal X_{I^*}$ defined in \eqref{eq:fixedmap} and where $I^*$ is given by Lemma \ref{lemma:image}. Assume that $\kappa_i^\infty\ge 0$ for all $i\ge 2$, and let $\xi<1$ be such that $\alpha\xi>\alpha_{\rm s}$. Then, for any $\bx, \by \in \mathcal X_{I^*}$, 
    \begin{equation}\label{eq:Lip}
        \|h^\Sigma(\bx)-h^\Sigma(\by)\|_\xi \le R'\left(\frac{1}{\xi\alpha}\right)\left(\frac{1}{\xi\alpha}\right)^2 \|\bx-\by\|_\xi,
    \end{equation}
where
\begin{equation}\label{eq:Lip1}
    R'\left(\frac{1}{\xi\alpha}\right)\left(\frac{1}{\xi\alpha}\right)^2<1.
\end{equation}
\end{lemma}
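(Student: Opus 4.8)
The claim splits into two parts: the Lipschitz estimate \eqref{eq:Lip} with constant $R'(1/(\xi\alpha))\,(1/(\xi\alpha))^2$, and the fact \eqref{eq:Lip1} that this constant is strictly less than $1$. I would prove the two parts separately, as they use different ingredients.

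\textbf{The Lipschitz bound.} Since $h^{\Sigma}$ is affine in $\bx$ (the term $(\alpha\rho_\alpha)^2$ is constant and the dependence on $\bx$ is through the linear combination $\sum_{j,k}\kappa_{j+k+2}^\infty\alpha^{-(j+k+2)}x_{s-j,t-k}$), the difference $h^{\Sigma}(\bx)-h^{\Sigma}(\by)$ is exactly $\sum_{j,k}\kappa_{j+k+2}^\infty\alpha^{-(j+k+2)}(x_{s-j,t-k}-y_{s-j,t-k})$. To bound $\|h^\Sigma(\bx)-h^\Sigma(\by)\|_\xi$, fix $s,t\le 0$, multiply by $\xi^{\max(|s|,|t|)}$, and use the triangle inequality together with $\kappa_{j+k+2}^\infty\ge 0$. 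The key manipulation is to write $\xi^{\max(|s|,|t|)} = \xi^{\max(|s|,|t|)-\max(|s-j|,|t-k|)}\cdot \xi^{\max(|s-j|,|t-k|)}$ and bound $\xi^{\max(|s-j|,|t-k|)}|x_{s-j,t-k}-y_{s-j,t-k}|\le \|\bx-\by\|_\xi$. Since $s,t\le 0$ and $j,k\ge 0$, we have $|s-j|=|s|+j$ and $|t-k|=|t|+k$, so $\max(|s-j|,|t-k|)\ge \max(|s|,|t|)$, hence the exponent $\max(|s|,|t|)-\max(|s-j|,|t-k|)$ is $\le 0$, i.e. $\xi$ raised to it is at least... wait, no: $\xi<1$ raised to a nonpositive power is $\ge 1$, which goes the wrong way. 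The correct bound is $\max(|s-j|,|t-k|)\le \max(|s|,|t|)+j+k$, so $\xi^{\max(|s|,|t|)-\max(|s-j|,|t-k|)}\le \xi^{-(j+k)}$. This gives $\xi^{\max(|s|,|t|)}|h^\Sigma_{s,t}(\bx)-h^\Sigma_{s,t}(\by)| \le \|\bx-\by\|_\xi\sum_{j,k}\kappa_{j+k+2}^\infty\alpha^{-(j+k+2)}\xi^{-(j+k)}$, and the double sum is exactly $\sum_{j,k}\kappa_{j+k+2}^\infty (1/(\xi\alpha))^{j+k}\cdot \xi^2/\alpha^2 = R'(1/(\xi\alpha))\cdot(1/(\xi\alpha))^2\cdot\xi^{?}$... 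I would carefully match this against the double-counting identity \eqref{eq:R1}, which reads $R'(z)=\sum_{j,k}\kappa_{j+k+2}^\infty z^{j+k}$; taking $z=1/(\xi\alpha)$ and pulling out $\alpha^{-2}$ gives $\sum_{j,k}\kappa_{j+k+2}^\infty\alpha^{-(j+k+2)}\xi^{-(j+k)} = \alpha^{-2}R'(1/(\xi\alpha))$, which is not quite $(1/(\xi\alpha))^2 R'(1/(\xi\alpha))$ unless there is an extra $\xi^{-2}$ I am miscounting — so I would recheck whether the weight should involve $\max(|s|,|t|)$ with the shift absorbed differently, possibly the intended bound uses $\max(|s-j|,|t-k|)\le \max(|s|,|t|)+\max(j,k)$ rather than $j+k$, in which case the series becomes $\sum_{j,k}\kappa_{j+k+2}^\infty\alpha^{-(j+k+2)}\xi^{-\max(j,k)}\le \sum_{j,k}\kappa_{j+k+2}^\infty\alpha^{-(j+k+2)}\xi^{-(j+k)} = \alpha^{-2}R'(1/(\xi\alpha))$ still; reconciling the exact constant in \eqref{eq:Lip} with the series identity is the one routine-but-delicate bookkeeping step, and I expect it resolves by the substitution $z=1/(\xi\alpha)$ combined with the convergence guarantee (series \eqref{eq:R1} converges for $z<1/\alpha_{\rm s}$, and $1/(\xi\alpha)<1/\alpha_{\rm s}$ since $\xi\alpha>\alpha_{\rm s}$).

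\textbf{The contraction constant is $<1$.} For \eqref{eq:Lip1}, I would argue exactly as in the proof of Lemma \ref{lemma:image}: the estimate \eqref{eq:Rtin}, namely $R'(1/\alpha)(1/\alpha)^2<1$, was derived from the strict positivity of $\rho_\alpha^2$ at the threshold via \eqref{eq:R1G}, i.e. from $G'(G^{-1}(1/\alpha))<0$. The same reasoning applies verbatim with $\alpha$ replaced by $\xi\alpha$, provided $\xi\alpha>\alpha_{\rm s}$ — which is the hypothesis. Concretely, since $\xi\alpha>\alpha_{\rm s}$, PCA at SNR $\xi\alpha$ has strictly positive limiting correlation $\rho_{\xi\alpha}^2 = -1/((\xi\alpha)^2 G'(G^{-1}(1/(\xi\alpha)))) > 0$, hence $G'(G^{-1}(1/(\xi\alpha)))<0$, and then \eqref{eq:R1G} evaluated at $z=1/(\xi\alpha)$ gives $R'(1/(\xi\alpha)) = 1/G'(G^{-1}(1/(\xi\alpha))) + (\xi\alpha)^2 < (\xi\alpha)^2$, i.e. $R'(1/(\xi\alpha))(1/(\xi\alpha))^2<1$. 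One should also note $R'(1/(\xi\alpha))<\infty$, which holds because $1/(\xi\alpha)<1/\alpha_{\rm s}$ and the series \eqref{eq:R1} converges there.

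\textbf{Main obstacle.} The conceptual content is light — affineness plus a geometric-series bound plus the already-established threshold inequality applied at $\xi\alpha$. The only place requiring care is the exponent bookkeeping in the weighted norm: getting the shift of indices to produce exactly the factor $(1/(\xi\alpha))^2 R'(1/(\xi\alpha))$ and no more, which hinges on using $\max(|s-j|,|t-k|) - \max(|s|,|t|)\le j+k$ (or a sharper $\le \max(j,k)$, whichever is needed) together with the double-counting identity \eqref{eq:R1}. I would handle this by writing out the single worst-case term explicitly before summing, so the role of each inequality is transparent.
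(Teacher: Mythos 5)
Your proposal follows the paper's proof essentially step for step: bound the difference term-by-term using non-negativity of the free cumulants, pull out the weighted norm via $\max(|s-j|,|t-k|)\le \max(|s|,|t|)+j+k$, sum the series using \eqref{eq:R1}, and obtain \eqref{eq:Lip1} by repeating the argument for \eqref{eq:Rtin} with $\xi\alpha$ in place of $\alpha$.

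The one point you flag as unresolved — that your computation yields $\alpha^{-2}R'\left(\frac{1}{\xi\alpha}\right)$ rather than $\left(\frac{1}{\xi\alpha}\right)^2 R'\left(\frac{1}{\xi\alpha}\right)$ — resolves immediately: since $\xi<1$, you have $\alpha^{-2}R'\left(\frac{1}{\xi\alpha}\right)\le (\xi\alpha)^{-2}R'\left(\frac{1}{\xi\alpha}\right)$, so your bound is in fact \emph{sharper} than the stated one and \eqref{eq:Lip} follows a fortiori. The paper lands exactly on the stated constant by using the slightly lossier inequality $\xi^{-\max(|s-j|,|t-k|)}\le \xi^{-\max(|s|,|t|)-j-k-2}$ (padding the exponent by $2$, which is legitimate since $\xi<1$), so that the summand becomes exactly $\kappa_{j+k+2}^\infty\left(\frac{1}{\xi\alpha}\right)^{j+k+2}$ and the series is exactly $R'\left(\frac{1}{\xi\alpha}\right)\left(\frac{1}{\xi\alpha}\right)^2$ by \eqref{eq:R1}. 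Either way the lemma is proved; there is no gap.
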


\begin{proof}
First of all, for any $s,t \le 0$, we have that
\begin{equation}\label{eq:int1}
    \begin{split}
        |h_{s, t}^\Sigma(\bx)-h_{s, t}^\Sigma(\by)| &\stackrel{\mathclap{\mbox{\footnotesize (a)}}}{=} \left| \sum_{j=0}^\infty \sum_{k=0}^\infty \kappa_{j+k+2}^\infty\left(\frac{1}{\alpha}\right)^{j+k+2} \left(x_{s-j, t-k}-y_{s-j, t-k}\right)\right|\\
        &\stackrel{\mathclap{\mbox{\footnotesize (b)}}}{\le} \sum_{j=0}^\infty\sum_{k=0}^\infty \kappa_{j+k+2}^\infty\left(\frac{1}{\alpha}\right)^{j+k+2} |x_{s-j, t-k}-y_{s-j, t-k}|.
    \end{split}
\end{equation}
Here, (a) follows from \eqref{eq:fixedmap}, and (b) follows from the hypothesis that $\kappa_i^\infty\ge 0$ for $i\ge 2$. Furthermore, we have that
\begin{equation}\label{eq:int2}
    |x_{s-j, t-k}-y_{s-j, t-k}| \le \|\bx-\by\|_\xi \xi^{-\max(|s-j|, |t-k|)}.
\end{equation}
Thus, by using \eqref{eq:int1} and \eqref{eq:int2}, we obtain
\begin{equation}\label{eq:int3}
    \begin{split}
    \|h^\Sigma(\bx)-h^\Sigma(\by)\|_\xi & = \sup_{s, t\le 0}\xi^{\max(|s|, |t|)}|h_{s, t}^\Sigma(\bx)-h_{s, t}^\Sigma(\by)|\\
    &\le \sup_{s, t\le 0}\xi^{\max(|s|, |t|)}\|\bx-\by\|_\xi\sum_{j=0}^\infty\sum_{k=0}^\infty \kappa_{j+k+2}^\infty\left(\frac{1}{\alpha}\right)^{j+k+2}\xi^{-\max(|s-j|, |t-k|)}.
    \end{split}
\end{equation}
Note that, as $\xi<1$,
\begin{equation*}
    \xi^{-\max(|s-j|, |t-k|)}\le \xi^{-\max(|s|, |t|)-j-k-2},
\end{equation*}
which implies that the RHS of \eqref{eq:int3} is  bounded above by
\begin{equation}
    \|\bx-\by\|_\xi\sum_{j=0}^\infty\sum_{k=0}^\infty \kappa_{j+k+2}^\infty\left(\frac{1}{\xi\alpha}\right)^{j+k+2} = R'\left(\frac{1}{\xi\alpha}\right)\left(\frac{1}{\xi\alpha}\right)^2\|\bx-\by\|_\xi,
\end{equation}
where the equality follows from \eqref{eq:R1}. This shows that \eqref{eq:Lip} holds. The proof of \eqref{eq:Lip1} follows the same argument as  \eqref{eq:Rtin}, since  $\xi \alpha>\alpha_{\rm s}$.
\end{proof}

At this point, we show that the state evolution of $\btSigma_{\bar{T}}$ can be approximated via the fixed map $h^\Sigma$.

\begin{lemma}[Limit map approximates state evolution map -- Square matrices]\label{lemma:app}
Consider the map $h^{\Sigma}: \mathcal X_{I^*}\to \mathcal X_{I^*}$ defined in \eqref{eq:fixedmap}, where $I^*$ is given by Lemma \ref{lemma:image}. Assume that $\kappa_i^\infty\ge 0$ for all $i\ge 2$, and let $\xi<1$ be such that $\alpha\xi>\alpha_{\rm s}$. Then, for any $\bx\in \mathcal X_{I^*}$,
\begin{equation}
    \|\btSigma_{\bar{T}} - h^\Sigma(\bx)\|_\xi \le R'\left(\frac{1}{\xi\alpha}\right)\left(\frac{1}{\xi\alpha}\right)^2 \|\btSigma_{\bar{T}-1} - \bx\|_\xi + F(\bar{T}),
\end{equation}
where 
\begin{equation}
    \lim_{\bar{T}\to\infty}F(\bar{T})=0.
\end{equation}
\end{lemma}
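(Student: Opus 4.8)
The plan is to split the index set $\{(s,t):s,t\le 0\}$ into a ``core'' part, where the embedded entry $(\btSigma_{\bar T})_{s,t}$ is literally a finite truncation of $h^\Sigma$, and a ``boundary'' part, where $(\btSigma_{\bar T})_{s,t}$ and $h^\Sigma_{s,t}(\bx)$ are each individually tiny in the weighted norm. Throughout I use the coordinate identification $\tilde\sigma_{s,t}=x_{s-\bar T,t-\bar T}$ (with the convention that out-of-range coordinates vanish, and with the initial values $\tilde\sigma_{0,0}=\alpha^2(1-\rho_\alpha^2)$, $\tilde\sigma_{0,t}=\tilde\sigma_{t,0}=0$), applied to both $\btSigma_{\bar T}$ and $\btSigma_{\bar T-1}$.

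\emph{Core region.} Writing a generic index pair as $(p,q)$ with $p,q\le 0$ and setting $N_p:=p+\bar T$, $N_q:=q+\bar T$, the first step is to check, directly from \eqref{eq:SEfake1} and the initialization \eqref{eq:SEfakeinit}, that whenever $\max(-p,-q)\le \bar T-1$ (equivalently $N_p,N_q\ge 1$),
\begin{equation*}
(\btSigma_{\bar T})_{p,q}=\sum_{j=0}^{N_p-1}\sum_{k=0}^{N_q-1}\kappa_{j+k+2}^{\infty}\,\alpha^{-(j+k+2)}\left((\alpha\rho_\alpha)^2+(\btSigma_{\bar T-1})_{p-j,q-k}\right).
\end{equation*}
The point is that, because the embedding of $\btSigma_{\bar T-1}$ is itself offset by one relative to that of $\btSigma_{\bar T}$, the recursion term $\tilde\sigma_{s-j-1,t-k-1}$ becomes $(\btSigma_{\bar T-1})_{p-j,q-k}$ \emph{with no extra shift} --- this is exactly the index shift remarked upon after \eqref{eq:fixedmap}. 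Hence the display is precisely the truncation of $h^\Sigma_{p,q}$ to $0\le j\le N_p-1$, $0\le k\le N_q-1$, with $\bx$ replaced by $\btSigma_{\bar T-1}$. Subtracting $h^\Sigma_{p,q}(\bx)$ and separating the truncated range from its complement gives $(\btSigma_{\bar T})_{p,q}-h^\Sigma_{p,q}(\bx)=\mathrm{(I)}-\mathrm{(II)}$, where $\mathrm{(I)}$ is the truncated double sum of $\kappa^{\infty}_{j+k+2}\alpha^{-(j+k+2)}\bigl((\btSigma_{\bar T-1})_{p-j,q-k}-x_{p-j,q-k}\bigr)$ and $\mathrm{(II)}=\sum_{j\ge N_p\ \mathrm{or}\ k\ge N_q}\kappa^{\infty}_{j+k+2}\alpha^{-(j+k+2)}\bigl((\alpha\rho_\alpha)^2+x_{p-j,q-k}\bigr)$.

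\emph{Bounding the two terms.} Term $\mathrm{(I)}$ is estimated verbatim as in Lemma \ref{lemma:cont}: extend the sum to all $j,k\ge 0$ (allowed since $\kappa^{\infty}_i\ge 0$), use $|(\btSigma_{\bar T-1})_{p-j,q-k}-x_{p-j,q-k}|\le\|\btSigma_{\bar T-1}-\bx\|_\xi\,\xi^{-\max(|p-j|,|q-k|)}$ together with $\xi^{-\max(|p-j|,|q-k|)}\le\xi^{-\max(|p|,|q|)-(j+k+2)}$ (valid since $\xi<1$ and $p,q\le 0\le j,k$), and recognize $\sum_{j,k\ge0}\kappa^{\infty}_{j+k+2}(\xi\alpha)^{-(j+k+2)}=R'(1/(\xi\alpha))\,(1/(\xi\alpha))^2$; this yields $\xi^{\max(|p|,|q|)}|\mathrm{(I)}|\le R'(1/(\xi\alpha))(1/(\xi\alpha))^2\,\|\btSigma_{\bar T-1}-\bx\|_\xi$. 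For the tail $\mathrm{(II)}$, since $\bx\in\mathcal X_{I^*}$ one has $|(\alpha\rho_\alpha)^2+x_{p-j,q-k}|\le C_0:=(\alpha\rho_\alpha)^2+a^*$; combining with $\{j\ge N_p\}\cup\{k\ge N_q\}\subseteq\{j+k\ge\min(N_p,N_q)\}$ gives $|\mathrm{(II)}|\le C_0\,\tau\bigl(\bar T-\max(-p,-q)\bigr)$, where $\tau(N):=\sum_{m\ge N}(m+1)\kappa^{\infty}_{m+2}\alpha^{-(m+2)}$ is the remainder after $N$ terms of the series $\sum_{m\ge 0}(m+1)\kappa^{\infty}_{m+2}\alpha^{-(m+2)}=R'(1/\alpha)\alpha^{-2}$, which converges because $\alpha>\alpha_{\rm s}$ (cf.\ \eqref{eq:R1}); in particular $\tau(N)\to 0$.

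\emph{Boundary region, assembly, and the hard point.} When $\max(-p,-q)\ge\bar T$ there is no recursion to invoke, but an induction mirroring Lemma \ref{lemma:image} --- the truncated sums in \eqref{eq:SEfake1} only shrink absolute values because $\kappa^{\infty}_i\ge 0$ --- shows the embedded iterate $\btSigma_{\bar T}$ also lies in $\mathcal X_{I^*}$ (which also makes $\|\btSigma_{\bar T-1}-\bx\|_\xi$ finite); then $|(\btSigma_{\bar T})_{p,q}|\le a^*$ and, by Lemma \ref{lemma:image}, $|h^\Sigma_{p,q}(\bx)|\le a^*$, so $\xi^{\max(|p|,|q|)}|(\btSigma_{\bar T})_{p,q}-h^\Sigma_{p,q}(\bx)|\le 2a^*\xi^{\bar T}$. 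Taking the supremum over all $(p,q)$ then produces $\|\btSigma_{\bar T}-h^\Sigma(\bx)\|_\xi\le R'(1/(\xi\alpha))(1/(\xi\alpha))^2\,\|\btSigma_{\bar T-1}-\bx\|_\xi+F(\bar T)$ with $F(\bar T):=2a^*\xi^{\bar T}+C_0\sup_{0\le r\le\bar T-1}\xi^{r}\tau(\bar T-r)$, which is independent of $\bx$. Finally $F(\bar T)\to 0$: the first summand is obvious, and for the second one splits at $r=\bar T/2$, using $\xi^r\le 1$ with $\tau(\bar T-r)\le\tau(\bar T/2)\to0$ when $r\le\bar T/2$, and $\tau(\bar T-r)\le\tau(0)<\infty$ with $\xi^r\le\xi^{\bar T/2}\to0$ when $r>\bar T/2$. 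I expect the main obstacle to be making the tail estimate $\mathrm{(II)}$ uniform in $(p,q)$: near the edge of the core region the truncation discards many terms, so $\tau(\bar T-\max(-p,-q))$ need not be small, and one must balance this against the weight $\xi^{\max(|p|,|q|)}$ being exponentially small there --- the step $\sup_r\xi^r\tau(\bar T-r)\to 0$ is exactly where this trade-off is cashed in; everything else is bookkeeping around the embedding conventions and a reuse of Lemmas \ref{lemma:image}--\ref{lemma:cont}.
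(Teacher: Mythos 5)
Your proposal is correct and follows essentially the same route as the paper: the same core/boundary split of the index set, the same decomposition of the core difference into a truncated sum of differences (bounded exactly as in Lemma \ref{lemma:cont}) plus a tail term controlled by the remainder of the series for $R'(1/\alpha)\alpha^{-2}$, and the same $\xi^{\bar T}$ bound on the boundary. The only additions are details the paper leaves implicit — the explicit verification that the embedded iterate stays in $\mathcal X_{I^*}$ and the split at $r=\bar T/2$ showing $\sup_r \xi^r\tau(\bar T-r)\to 0$ — both of which are correct.
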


\begin{proof} 
Throughout the proof, we consider $\btSigma_{\bar{T}}, \btSigma_{\bar{T}-1}$ as embedded in $\mathcal{X}$. First, we write
\begin{equation}
\begin{split}
    \|\btSigma_{\bar{T}} - h^\Sigma(\bx)\|_\xi &= \sup_{s, t\le 0} \xi^{\max(|s|, |t|)}|(\btSigma_{\bar{T}})_{s, t} - h^\Sigma_{s, t}(\bx)|\\
    &= \max\Bigg(\sup_{\substack{s, t\le 0\\\max(|s|, |t|)<\bar{T}}} \xi^{\max(|s|, |t|)}|(\btSigma_{\bar{T}})_{s, t} - h^\Sigma_{s, t}(\bx)|,\\
    &\hspace{5em}\sup_{\substack{s, t\le 0\\\max(|s|, |t|)\ge \bar{T}}} \xi^{\max(|s|, |t|)}|(\btSigma_{\bar{T}})_{s, t} - h^\Sigma_{s, t}(\bx)|\Bigg),
    \end{split}
\end{equation}
where  $(\btSigma_{\bar{T}})_{s, t}=\tilde{\sigma}_{s+\bar{T}, t+\bar{T}}$ if $s\ge -\bar{T}$ and $t\ge -\bar{T}$, and $(\btSigma_{\bar{T}})_{s, t}=0$ otherwise. 

Let us look at the case $\max(|s|, |t|)<\bar{T}$, and define $I_1 = \{(j, k): j\ge s+\bar{T} \mbox{ or }k\ge t+\bar{T}\}$. Then, 
\begin{equation}\label{eq:case1int1}
\begin{split}
    |(\btSigma_{\bar{T}})_{s, t} - h^\Sigma_{s, t}(\bx)| &= \bigg|\sum_{j=0}^{s+\bar{T}-1}\sum_{k=0}^{t+\bar{T}-1}\kappa_{j+k+2}^\infty\left(\frac{1}{\alpha}\right)^{j+k+2}\left(\alpha^2\rho_\alpha^2+\tilde{\sigma}_{s-j+\bar{T}-1, t-k+\bar{T}-1}\right)\\
    &\hspace{9em} - \sum_{j=0}^{\infty}\sum_{k=0}^{\infty}\kappa_{j+k+2}^\infty\left(\frac{1}{\alpha}\right)^{j+k+2}\left(\alpha^2\rho_\alpha^2+x_{s-j, t-k}\right)\bigg|\\
    &\le\left|\sum_{j=0}^{s+\bar{T}-1}\sum_{k=0}^{t+\bar{T}-1}\kappa_{j+k+2}^\infty\left(\frac{1}{\alpha}\right)^{j+k+2}\left(\tilde{\sigma}_{s-j+\bar{T}-1, t-k+\bar{T}-1}-x_{s-j, t-k}\right)\right|\\
    &\hspace{2em}+\left|\sum_{j, k\in I_1}\kappa_{j+k+2}^\infty\left(\frac{1}{\alpha}\right)^{j+k+2}\left(\alpha^2\rho_\alpha^2+x_{s-j, t-k}\right)\right|:= T_1 + T_2.
\end{split}
\end{equation}
The term $T_1$ can be upper bounded as follows:
\begin{equation}\label{eq:case1int2}
    \begin{split}
        T_1 &\stackrel{\mathclap{\mbox{\footnotesize (a)}}}{\le} \sum_{j=0}^{s+\bar{T}-1}\sum_{k=0}^{t+\bar{T}-1}\kappa_{j+k+2}^\infty\left(\frac{1}{\alpha}\right)^{j+k+2}\left|\tilde{\sigma}_{s-j+\bar{T}-1, t-k+\bar{T}-1}-x_{s-j, t-k}\right|\\
        &\le \sum_{j=0}^{s+\bar{T}-1}\sum_{k=0}^{t+\bar{T}-1}\kappa_{j+k+2}^\infty\left(\frac{1}{\alpha}\right)^{j+k+2} \|\btSigma_{\bar{T}-1} - \bx\|_\xi \xi^{-\max(|s-j|, |t-k|)}\\
        &\stackrel{\mathclap{\mbox{\footnotesize (b)}}}{\le} \sum_{j=0}^{s+\bar{T}-1}\sum_{k=0}^{t+\bar{T}-1}\kappa_{j+k+2}^\infty\left(\frac{1}{\xi\alpha}\right)^{j+k+2} \|\btSigma_{\bar{T}-1} - \bx\|_\xi \xi^{-\max(|s|, |t|)}\\
        &\stackrel{\mathclap{\mbox{\footnotesize (c)}}}{\le} \sum_{j=0}^{\infty}\sum_{k=0}^{\infty}\kappa_{j+k+2}^\infty\left(\frac{1}{\xi\alpha}\right)^{j+k+2} \|\btSigma_{\bar{T}-1} - \bx\|_\xi \xi^{-\max(|s|, |t|)}\\
        &\stackrel{\mathclap{\mbox{\footnotesize (d)}}}{=} R'\left(\frac{1}{\xi\alpha}\right)\left(\frac{1}{\xi\alpha}\right)^2 \|\btSigma_{\bar{T}-1} - \bx\|_\xi \xi^{-\max(|s|, |t|)}.
    \end{split}
\end{equation}
Here, (a) and (c) follows from the hypothesis that $\kappa_i^\infty\ge 0$ for $i\ge 2$; (b) uses that $\xi<1$; and (d) uses \eqref{eq:R1}. The term $T_2$ can be upper bounded as follows:
\begin{equation}\label{eq:case1int3}
    \begin{split}
        T_2&\le \left(\alpha^2\rho_\alpha^2+a^*\right)\sum_{j, k\in I_1}\kappa_{j+k+2}^\infty\left(\frac{1}{\alpha}\right)^{j+k+2}\\
        &\le \frac{\alpha^2\rho_\alpha^2+a^*}{\alpha^2}\sum_{i=-\max(|s|, |t|)+\bar{T}}^\infty \kappa_{i+2}^\infty (i+1)\left(\frac{1}{\alpha}\right)^{i},
    \end{split}
\end{equation}
where the first inequality uses that $\bx\in \mathcal X_{I^*}$ and the second inequality uses that, if $(j, k)\in I_1$, then $j+k\ge-\max(|s|,|t|)+\bar{T}$. By combining \eqref{eq:case1int1}, \eqref{eq:case1int2} and \eqref{eq:case1int3}, we obtain that 
\begin{equation}\label{eq:case1fin}
\begin{split}
    &\sup_{\substack{s, t\le 0\\\max(|s|, |t|)<\bar{T}}} \xi^{\max(|s|, |t|)}|(\btSigma_{\bar{T}})_{s, t} - h^\Sigma_{s, t}(\bx)|\\
    &\hspace{3em}\le R'\left(\frac{1}{\xi\alpha}\right)\left(\frac{1}{\xi\alpha}\right)^2 \|\btSigma_{\bar{T}-1} - \bx\|_\xi + \frac{\alpha^2\rho_\alpha^2+a^*}{\alpha^2}\sup_{0\le t\le \bar{T}}\xi^t\sum_{i=\bar{T}-t}^\infty \kappa_{i+2}^\infty (i+1)\left(\frac{1}{\alpha}\right)^{i}.
\end{split}
\end{equation}

Let us now look at the case $\max(|s|, |t|)\ge \bar{T}$. Recall that $|h^\Sigma_{s, t}(\bx)|\le a^*$, $\tilde{\sigma}_{0, 0}=(1-\rho_\alpha^2)\alpha^2$ and $\tilde{\sigma}_{0, t}=0$ for $t\in [1, \bar{T}]$. Thus,
\begin{equation*}
    |(\btSigma_{\bar{T}})_{s, t} - h^\Sigma_{s, t}(\bx)|\le c_1,
\end{equation*}
where $c_1$ is a constant independent of $s, t, \bar{T}$. This immediately implies that
\begin{equation*}
    \sup_{\substack{s, t\le 0\\\max(|s|, |t|)\ge \bar{T}}} \xi^{\max(|s|, |t|)}|(\btSigma_{\bar{T}})_{s, t} - h^\Sigma_{s, t}(\bx)|\le c_1 \xi^{\bar{T}},
\end{equation*}
which combined with \eqref{eq:case1fin} allows us to conclude that
\begin{equation}
\begin{split}
    \|\btSigma_{\bar{T}} - h^\Sigma(\bx)\|_\xi &\le R'\left(\frac{1}{\xi\alpha}\right)\left(\frac{1}{\xi\alpha}\right)^2 \|\btSigma_{\bar{T}-1} - \bx\|_\xi\\ &\hspace{2em}+\frac{\alpha^2\rho_\alpha^2+a^*}{\alpha^2}\sup_{0\le t\le \bar{T}}\xi^t\sum_{i=\bar{T}-t}^\infty \kappa_{i+2}^\infty (i+1)\left(\frac{1}{\alpha}\right)^{i}+c_1 \xi^{\bar{T}}.
    \end{split}
\end{equation}
As $\alpha>\alpha_{\rm s}$ and the series in \eqref{eq:R1} is convergent for $z<1/\alpha_{\rm s}$, one readily verifies that 
\begin{equation}
\lim_{\bar{T}\to\infty}    \sup_{0\le t\le \bar{T}}\xi^t\sum_{i=\bar{T}-t}^\infty \kappa_{i+2}^\infty (i+1)\left(\frac{1}{\alpha}\right)^{i}=0,
\end{equation}
which concludes the proof.
\end{proof}

Finally, we can put everything together and prove Lemma \ref{lem:SE_FP_phase1}. 

\begin{proof}[Proof of Lemma \ref{lem:SE_FP_phase1}]
Fix $\epsilon>0$ and denote by $\left(h^\Sigma\right)^{T_0}$ the $T_0$-fold composition of $h^\Sigma$. Recall from Lemmas \ref{lemma:fixed} and \ref{lemma:cont} that $\bx^*$ is the unique fixed point of $h^\Sigma : X_{I^*} \to X_{I^*}$. Then, for any $\bx\in \mathcal X_{I^*}$,
\begin{equation}\label{eq:pt1}
    \|\left(h^\Sigma\right)^{T_0}(\bx)-\bx^*\|_\xi = \|\left(h^\Sigma\right)^{T_0}(\bx)-\left(h^\Sigma\right)^{T_0}(\bx^*)\|_\xi \le \left(R'\left(\frac{1}{\xi\alpha}\right)\left(\frac{1}{\xi\alpha}\right)^2\right)^{T_0}\|\bx-\bx^*\|_\xi,
\end{equation}
where the inequality follows from Lemma \ref{lemma:cont}. Note that $R'\left(\frac{1}{\xi\alpha}\right)\left(\frac{1}{\xi\alpha}\right)^2<1$ (see \eqref{eq:Lip1}) and that $\bx, \bx^*\in \mathcal X_{I^*}$. Thus, we can make the RHS of \eqref{eq:pt1} smaller than $\epsilon/2$ by choosing a sufficiently large $T_0$.
Furthermore, an application of Lemma \ref{lemma:app} gives that, for all sufficiently large $\bar{T}$,
\begin{equation}\label{eq:pt2}
    \|\bSigma_{\bar{T}+T_0}-\left(h^\Sigma\right)^{T_0}(\bx)\|_\xi\le \left(R'\left(\frac{1}{\xi\alpha}\right)\left(\frac{1}{\xi\alpha}\right)^2\right)^{T_0}\|\bSigma_{\bar{T}}-\bx\|_\xi+\frac{\epsilon}{4}.
\end{equation}
Note that $\bx\in \mathcal X_{I^*}$ implies that $\|\bx\|_\xi\le a^*$. In addition, by following the same argument as in Lemma \ref{lemma:image}, one can show that $|\tilde{\sigma}_{s, t}|\le a^*$ for all $s, t$, which in turn implies that $\|\bSigma_{\bar{T}}\|_\xi\le a^*$. As a result, we can make the RHS of \eqref{eq:pt2} is smaller than $\epsilon/2$ by choosing sufficiently large $T_0$. As the RHS of both \eqref{eq:pt1} and \eqref{eq:pt2} can be made smaller than $\epsilon/2$, an application of the triangle inequality gives that  
\begin{equation}
    \limsup_{\bar{T}\to\infty}\|\bSigma_{\bar{T}} -\bx^*\|_\xi\le \epsilon,
\end{equation}
which, after setting $\bar{T}=T+1$, implies the desired result.
\end{proof}

\subsection{Convergence to PCA Estimator for the First Phase}\label{app:confirstest}

In this section, we prove that the artificial AMP iterate at the end of the first phase converges to the PCA estimator in normalized $\ell_2$-norm.

\begin{lemma}[Convergence to PCA estimator  -- Square matrices]
Consider the setting of Theorem \ref{thm:square}, and the first phase of the artificial AMP iteration described in \eqref{eq:AMPfake1}, with the initialization  given in \eqref{eq:art_sq_init}. Assume that $\kappa_i^\infty\ge 0$ for all $i\ge 2$, and that $\alpha>\alpha_{\rm s}$. Then, 
\begin{equation}
\lim_{T \to \infty} \lim_{n \to \infty} \frac{1}{\sqrt{n}} \|\tilde{\bu}^{T+1}-\sqrt{n}\bu_{\rm PCA}\| =0 \ \text{ almost surely}.
     \label{eq:xstxT_diff}
\end{equation}
\label{lem:Phase1_PCA_conv}
\end{lemma}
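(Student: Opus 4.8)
The plan is to show that, as $T\to\infty$, the normalized iterate $\tbu^{T+1}/\norm{\tbu^{T+1}}$ becomes an approximate top eigenvector of $\bX$ with approximate eigenvalue $\beta_\alpha:=G^{-1}(1/\alpha)=\alpha+R(1/\alpha)=\alpha+\sum_{i\ge0}\kappa_{i+1}^\infty\alpha^{-i}$ (the identification using \eqref{eq:R} and \eqref{eq:RG}), and then to invoke the spectral gap of $\bX$: since $\alpha>\alpha_{\rm s}$ we have, almost surely, $\lambda_1(\bX)\to\beta_\alpha$ and $\lambda_2(\bX)\to b<\beta_\alpha$, so any unit vector $\bv$ with $\norm{\bX\bv-\beta_\alpha\bv}$ small must be close to $\pm\bu_{\rm PCA}$; the sign will be pinned by the correlation with $\bu^*$, and $\tfrac1n\norm{\tbu^{T+1}}^2\to1$ will finish the job.

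To make the ``approximate eigenvector'' claim precise, collect the coefficients $\tilde\sb_{t,t-j}=\kappa_{j+1}\alpha^{-j}$, so that the first-phase recursion \eqref{eq:AMPfake1} reads $\alpha\tbu^{t+1}=\bX\tbu^t-\sum_{j=0}^{t-1}\kappa_{j+1}\alpha^{-j}\tbu^{t-j}$ for $t\le T$. Taking $t=T$ and subtracting $\beta_\alpha\tbu^{T+1}$,
\begin{align}
\bX\tbu^T-\beta_\alpha\tbu^{T+1}
&=\sum_{j=0}^{T-1}\kappa_{j+1}\alpha^{-j}\big(\tbu^{T-j}-\tbu^{T+1}\big)\nonumber\\
&\qquad+\Big(\sum_{j=0}^{T-1}(\kappa_{j+1}-\kappa_{j+1}^\infty)\alpha^{-j}\Big)\tbu^{T+1}-\Big(\sum_{j=T}^{\infty}\kappa_{j+1}^\infty\alpha^{-j}\Big)\tbu^{T+1}.\nonumber
\end{align}
I would bound the normalized $\ell_2$ norm of each term, letting $n\to\infty$ and then $T\to\infty$. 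By Proposition~\ref{prop:tilSE}, $\tfrac1{\sqrt n}\norm{\tbu^{T+1}}\to\sqrt{\rho_\alpha^2+\tsigma_{T,T}/\alpha^2}$, which is bounded; the second coefficient is a finite sum of terms vanishing as $n\to\infty$ (since $\kappa_k\to\kappa_k^\infty$), and the third is the tail of the absolutely convergent series $R(1/\alpha)$ (using $\kappa_k^\infty\ge0$), hence $\to0$ as $T\to\infty$. For the first term, Proposition~\ref{prop:tilSE} gives $\tfrac1n\norm{\tbu^{T-j}-\tbu^{T+1}}^2\to\E\{(\tU_{T-j}-\tU_{T+1})^2\}=\alpha^{-2}\big(\tsigma_{T-j-1,T-j-1}-2\tsigma_{T-j-1,T}+\tsigma_{T,T}\big)$, the mean parts cancelling because $\tmu_s\equiv\alpha\rho_\alpha$ in the first phase (see \eqref{eq:SEfake1}). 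By Lemma~\ref{lem:SE_FP_phase1}, each of these three covariances is within $\eta(T)\xi^{-(j+2)}$ of $\alpha^2(1-\rho_\alpha^2)$, where $\eta(T):=\max_{s,t\in[0,T]}\xi^{\max(s,t)}|\tsigma_{T+1-s,T+1-t}-\alpha^2(1-\rho_\alpha^2)|\to0$. Taking square roots \emph{before} summing against $|\kappa_{j+1}^\infty|\alpha^{-j}$, the first term is at most $C\sqrt{\eta(T)}\sum_{j\ge0}\kappa_{j+1}^\infty(\alpha\sqrt\xi)^{-j}$ in the iterated limit, and this series converges since one may pick $\xi<1$ with $\alpha\xi>\alpha_{\rm s}$, hence $\alpha\sqrt\xi>\alpha_{\rm s}$. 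Finally, $\tfrac1{\sqrt n}\norm{\bX\tbu^{T+1}-\bX\tbu^T}\le\norm{\bX}_{\mathrm{op}}\cdot\tfrac1{\sqrt n}\norm{\tbu^{T+1}-\tbu^T}$, where $\norm{\bX}_{\mathrm{op}}$ is a.s.\ bounded (compact support of $\Lambda$ together with $\lambda_1(\bX)\to\beta_\alpha$) and $\tfrac1n\norm{\tbu^{T+1}-\tbu^T}^2\to0$ as $T\to\infty$ by Lemma~\ref{lem:SE_FP_phase1}. Combining, $\lim_{T\to\infty}\lim_{n\to\infty}\tfrac1{\sqrt n}\norm{\bX\tbu^{T+1}-\beta_\alpha\tbu^{T+1}}=0$ almost surely.

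To conclude, put $\hbv^{T+1}:=\tbu^{T+1}/\norm{\tbu^{T+1}}$; dividing the previous display by $\norm{\tbu^{T+1}}/\sqrt n\to\sqrt{\rho_\alpha^2+\tsigma_{T,T}/\alpha^2}>0$ gives $\norm{\bX\hbv^{T+1}-\beta_\alpha\hbv^{T+1}}\to0$. Writing $\hbv^{T+1}=c\,\bu_{\rm PCA}+\bw$ with $\bw\perp\bu_{\rm PCA}$, and using $\norm{(\bX-\beta_\alpha\bI)\bw}\ge(\beta_\alpha-\lambda_2(\bX))\norm{\bw}$ (valid once $\lambda_2(\bX)<\beta_\alpha$) together with $(\bX-\beta_\alpha\bI)\bw\perp\bu_{\rm PCA}$, we get $\norm{\bw}^2\le\norm{\bX\hbv^{T+1}-\beta_\alpha\hbv^{T+1}}^2/(\beta_\alpha-\lambda_2(\bX))^2\to0$, so $c^2=1-\norm{\bw}^2\to1$. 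The sign of $c$ is fixed because $\tfrac1n\<\tbu^{T+1},\bu^*\>\to\rho_\alpha>0$ (Proposition~\ref{prop:tilSE}, again using $\tmu_T=\alpha\rho_\alpha$) while $\tfrac1{\sqrt n}\<\bu_{\rm PCA},\bu^*\>\to\rho_\alpha>0$ (by \eqref{eq:rho_alpha_def} and the convention $\<\bu^*,\bu_{\rm PCA}\>\ge0$), ruling out $c\to-1$; hence $c\to1$. Therefore
\begin{align}
\tfrac1n\norm{\tbu^{T+1}-\sqrt n\,\bu_{\rm PCA}}^2
&=\tfrac1n\norm{\tbu^{T+1}}^2-\tfrac2{\sqrt n}\<\tbu^{T+1},\bu_{\rm PCA}\>+1\nonumber\\
&\longrightarrow\Big(\rho_\alpha^2+\tfrac{\tsigma_{T,T}}{\alpha^2}\Big)-2\sqrt{\rho_\alpha^2+\tfrac{\tsigma_{T,T}}{\alpha^2}}\;c+1\nonumber
\end{align}
as $n\to\infty$, and letting $T\to\infty$ (so $\tsigma_{T,T}\to\alpha^2(1-\rho_\alpha^2)$ and $c\to1$) the right-hand side tends to $1-2+1=0$, which is \eqref{eq:xstxT_diff}; all convergences are almost sure and over countably many $T$ they combine.

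The main obstacle is making the Onsager-correction step rigorous: unlike the Gaussian case, the correction $\sum_j\kappa_{j+1}\alpha^{-j}\tbu^{T-j}$ involves infinitely many past iterates with coefficients that are only summable against $\alpha^{-j}$ (radius $1/\alpha_{\rm s}$), so one genuinely needs the differences $\norm{\tbu^{T-j}-\tbu^{T+1}}$ to decay geometrically in $j$ at a rate arbitrarily close to critical --- precisely what the weighted-norm fixed-point estimate of Lemma~\ref{lem:SE_FP_phase1} supplies, once one takes the square root of $\E\{(\tU_{T-j}-\tU_{T+1})^2\}$ and chooses $\xi$ with $\alpha\sqrt\xi>\alpha_{\rm s}$. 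Getting the interchange of this $j$-sum with the $n\to\infty$ and then $T\to\infty$ limits right is the crux of the argument.
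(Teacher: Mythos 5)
Your proof is correct and follows essentially the same route as the paper's: you establish that $\tbu^{T+1}$ is an approximate eigenvector of $\bX$ with eigenvalue $G^{-1}(1/\alpha)=\alpha+R(1/\alpha)$ by combining the first-phase recursion with Proposition~\ref{prop:tilSE} and the weighted fixed-point estimate of Lemma~\ref{lem:SE_FP_phase1}, and then invoke the spectral gap to force alignment with $\bu_{\rm PCA}$. The only differences are cosmetic — you apply the recursion at step $T$ and pass through $\|\bX\|_{\rm op}$ rather than working with $\tbf^{T+1}$ directly, you take square roots before summing (so you only need $\alpha\sqrt{\xi}>\alpha_{\rm s}$ instead of the paper's double-sum bound at $\alpha\xi>\alpha_{\rm s}$), and you treat the sign of the overlap somewhat more explicitly than the paper does.
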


\begin{proof}
Consider the following decomposition of $\tilde{\bu}^{T+1}$:
\begin{equation}\label{eq:dc}
    \tilde{\bu}^{T+1} = \zeta_{T+1} \bu_{\rm PCA}+\br^{T+1},
\end{equation}
where $\zeta_{T+1} = \langle \tilde{\bu}^{T+1}, \bu_{\rm PCA}\rangle$ and $\langle \br^{T+1}, \bu_{\rm PCA}\rangle=0$. Define
\begin{equation}\label{eq:err}
    \be^{T+1} = \left(\bX-G^{-1}\left(\frac{1}{\alpha}\right) \bI_n\right)\tilde{\bu}^{T+1},
\end{equation}
where $G^{-1}$ is the inverse of the Cauchy transform of $\Lambda$. Then,  using \eqref{eq:dc}, \eqref{eq:err} can be rewritten as
\begin{equation}
    \left(\bX-G^{-1}\left(\frac{1}{\alpha}\right) \bI_n\right)\br^{T+1} = \be^{T+1} - \left(\bX-G^{-1}\left(\frac{1}{\alpha}\right) \bI_n\right)\zeta_{T+1}\bu_{\rm PCA}.
\end{equation}
First, we will show that 
\begin{equation}\label{eq:lb1lemma}
\left\|    \left(\bX-G^{-1}\left(\frac{1}{\alpha}\right) \bI_n\right)\br^{T+1}\right\|\ge c\|\br^{T+1}\|,
\end{equation}
where $c >0$ is a constant (independent of $n, T$). We start by observing that the matrix $\bX-G^{-1}\left(\frac{1}{\alpha}\right) \bI_n$ is symmetric, hence it can be written in the form $\bQ\tilde{\bLambda}\bQ^\sT$, with $\bQ$ orthogonal and $\tilde{\bLambda}$ diagonal. Furthermore, the columns of $\bQ$ are the eigenvectors of $\bX-G^{-1}\left(\frac{1}{\alpha}\right) \bI_n$ and the diagonal entries  of $\tilde{\bLambda}$ are the corresponding eigenvalues. As $\br^{T+1}$ is orthogonal to $\bu_{\rm PCA}$,  we can write
\begin{equation}\label{eq:comprt}
    \left(\bX-G^{-1}\left(\frac{1}{\alpha}\right) \bI_n \right) \br^{T+1}=\bQ\tilde{\bLambda}'\bQ^\sT\br^{T+1},
\end{equation}
where $\tilde{\bLambda}'$ is obtained from $\tilde{\bLambda}$ by changing the entry corresponding to $\lambda_1(\bX)-G^{-1}\left(\frac{1}{\alpha}\right)$ to any other value. For our purposes, it suffices to substitute $\lambda_1(\bX)-G^{-1}\left(\frac{1}{\alpha}\right)$ with $\lambda_2(\bX)-G^{-1}\left(\frac{1}{\alpha}\right)$. Note that
\begin{equation}\label{eq:comprt2}
\begin{split}
        \|\bQ\tilde{\bLambda}'\bQ^\sT\br^{T+1}\|^2 &\ge \|\br^{T+1}\|^2 \min_{\bs:\|\bs\|=1}\|\bQ\tilde{\bLambda}'\bQ^\sT\bs\|^2\\
        &=\|\br^{T+1}\|^2\min_{\bs:\|\bs\|=1}\<\bs, \bQ\left(\tilde{\bLambda}'\right)^2\bQ^\sT\bs\>\\
        &=\|\br^{T+1}\|^2\,\lambda_{\rm min}(\bQ\left(\tilde{\bLambda}'\right)^2\bQ^\sT),
\end{split}
\end{equation}
where $\lambda_{\rm min}(\bQ\left(\tilde{\bLambda}'\right)^2\bQ^\sT)$ denotes the smallest eigenvalue of $\bQ\left(\tilde{\bLambda}'\right)^2\bQ^\sT$ and the last equality follows from the variational characterization of the smallest eigenvalue of a symmetric matrix. Note that
\begin{equation}\label{eq:Lambda}
    \lambda_{\rm min}(\bQ\left(\tilde{\bLambda}'\right)^2\bQ^\sT)=\lambda_{\rm min}\left((\tilde{\bLambda}')^2\right)=\min_{i\in\{2, \ldots, n\}}\left(\left(G^{-1}\left(\frac{1}{\alpha}\right)-\lambda_i(\bX)\right)^2\right).
\end{equation}
Recall that, for $\alpha>\alpha_{\rm s}$,  $\lambda_1(\bX)\stackrel{\mathclap{\mbox{\footnotesize a.s.}}}{\longrightarrow} G^{-1}(1/\alpha)$ and $\lambda_2(\bX)\stackrel{\mathclap{\mbox{\footnotesize a.s.}}}{\longrightarrow} b<G^{-1}(1/\alpha)$, see \cite[Theorem 2.1]{benaych2011eigenvalues}. Thus, the RHS of \eqref{eq:Lambda} is lower bounded by a constant independent of $n, T$. By combining this result with \eqref{eq:comprt} and \eqref{eq:comprt2}, we deduce that \eqref{eq:lb1lemma}
holds.

Next, we prove that a.s.
\begin{equation}\label{eq:lastlim}
    \lim_{T\to\infty}\lim_{n\to\infty}\frac{1}{\sqrt{n}}\left\|\be^{T+1} - \left(\bX-G^{-1}\left(\frac{1}{\alpha}\right) \bI_n\right)\zeta_{T+1}\bu_{\rm PCA}\right\|=0.
\end{equation}
An application of the triangle inequality gives that 
\begin{equation}\label{eq:pt2re}
\left\|\be^{T+1} - \left(\bX-G^{-1}\left(\frac{1}{\alpha}\right) \bI_n\right)\zeta_{T+1}\bu_{\rm PCA}\right\|\le \left\|\be^{T+1}\right\|+\left\| \left(\bX-G^{-1}\left(\frac{1}{\alpha}\right) \bI_n\right)\zeta_{T+1}\bu_{\rm PCA}\right\|.
\end{equation}
The second term on the RHS of \eqref{eq:pt2re} is equal to 
\begin{equation}
    |\zeta_{T+1}| \left|\lambda_1(\bX)-G^{-1}\left(\frac{1}{\alpha}\right)\right|.
\end{equation}
By using Theorem 2.1 of \cite{benaych2011eigenvalues}, we have that, for $\alpha>\alpha_{\rm s}$, almost surely,
\begin{equation}\label{eq:0lim}
   \lim_{n\to\infty} \left|\lambda_1(\bX)-G^{-1}\left(\frac{1}{\alpha}\right)\right|=0.
\end{equation}
Furthermore,
\begin{equation*}
    \frac{1}{\sqrt{n}}|\zeta_{T+1}|\le \frac{1}{\sqrt{n}}\|\tilde{\bu}^{T+1}\|=\frac{1}{\alpha\sqrt{n}}\|\tilde{\bdff}^{T}\|.
\end{equation*}
By Proposition \ref{prop:tilSE}, we have that
\begin{equation*}
    \lim_{n\to\infty}\frac{1}{\alpha\sqrt{n}}\|\tilde{\bdff}^{T}\| = \frac{1}{\alpha}\sqrt{\tilde{\mu}_{T}^2+\tilde{\sigma}_{T, T}},
\end{equation*}
which, for sufficiently large $T$, is upper bounded by a constant independent of $n, T$, as $\tilde{\mu}_{T}=\alpha\rho_\alpha$ and $\tilde{\sigma}_{T, T}$ converges to $\alpha^2(1-\rho_\alpha^2)$ as $T\to\infty$ by Lemma \ref{lem:SE_FP_phase1}. By combining this result with \eqref{eq:0lim}, we deduce that 
\begin{equation}\label{eq:scd}
   \lim_{T\to\infty}\lim_{n\to\infty}\frac{1}{\sqrt{n}}  \left\| \left(\bX-G^{-1}\left(\frac{1}{\alpha}\right) \bI_n\right)\zeta_{T+1}\bu_{\rm PCA}\right\|=0.
\end{equation}
To bound the first term on the RHS of \eqref{eq:pt2re}, we proceed as follows:
\begin{equation}\label{eq:etbd}
    \begin{split}
\lim_{n\to\infty}        \frac{1}{n}\|\be^{T+1}\|^2 &=\lim_{n\to\infty} \frac{1}{n}\left\|\left(\bX-G^{-1}\left(\frac{1}{\alpha}\right) \bI_n\right)\tilde{\bu}^{T+1}\right\|^2\\
        &\stackrel{\mathclap{\mbox{\footnotesize (a)}}}{=} \lim_{n\to\infty}\frac{1}{n}\left\| \tilde{\bdff}^{T+1}+\sum_{i=1}^{T+1} \kappa_{T-i+2}\left(\frac{1}{\alpha}\right)^{T-i+1}\tilde{\bu}^i -G^{-1}\left(\frac{1}{\alpha}\right)\tilde{\bu}^{T+1}\right\|^2\\
        &\stackrel{\mathclap{\mbox{\footnotesize (b)}}}{=} \lim_{n\to\infty}\frac{1}{n}\left\|\tilde{\bdff}^{T+1}+\sum_{i=1}^{T+1} \kappa_{T-i+2}^\infty\left(\frac{1}{\alpha}\right)^{T-i+1}\tilde{\bu}^i -G^{-1}\left(\frac{1}{\alpha}\right)\tilde{\bu}^{T+1}\right\|^2\\
        & \stackrel{\mathclap{\mbox{\footnotesize (c)}}}{=} \E \left\{\left(\tilde{F}_{T+1}+\sum_{i=1}^{T+1} \kappa_{T-i+2}^\infty\left(\frac{1}{\alpha}\right)^{T-i+1} \tilde{U}_i-G^{-1}\left(\frac{1}{\alpha}\right)\tilde{U}_{T+1}\right)^2\right\}.
    \end{split}
\end{equation}
Here, (a) uses the iteration \eqref{eq:AMPfake1} of the first phase of the artificial AMP, and (c) follows from Proposition \ref{prop:tilSE}, where $\tilde{U}_t$ for $t\in [1, T+1]$ and $\tilde{F}_{T+1}$ are defined in \eqref{eq:tilU_tilF_def}. To obtain (b), we write
\begin{equation}
    \begin{split}
        & \lim_{n \to \infty} \frac{1}{n} \Big\|  \sum_{i=1}^{T+1} (\kappa_{T-i+2} -\kappa_{T-i+2}^\infty) \left(\frac{1}{\alpha}\right)^{T-i+1} \tbu^{i}  \Big\|^2   \\
        & = 
        \lim_{n \to \infty} \sum_{i,j=1}^{T+1} (\kappa_{T-i+2} -\kappa_{T-i+2}^\infty) (\kappa_{T-j+2} -\kappa_{T-j+2}^\infty) \left(\frac{1}{\alpha}\right)^{2T-i-j+2} 
        \frac{\< \tbu^{i}, \tbu^{j} \>}{n} \ .
    \end{split}
    \label{eq:kappj_kappj_inf}
\end{equation}
Using the state evolution result of  Proposition \ref{prop:tilSE} and \eqref{eq:tilU_tilF_def}, we almost surely have
\begin{align}
   \lim_{n \to \infty} \frac{\< \tbu^{i}, \, \tbu^{j} \>}{n} = \frac{1}{\alpha^2}(\alpha^2 \rho_\alpha^2 + \tsigma_{i, j}) < 1,
   \label{eq:uiuj_inner_prod}
\end{align} 
where the last inequality uses  $\tsigma_{i, j} < \tsigma_{0,0}= \alpha^2 (1- \rho_\alpha^2)$.  (This can be deduced from the recursion \eqref{eq:SEfake1} using the formula \eqref{eq:rho_alpha_def} for $\rho_\alpha^2$, and the relations \eqref{eq:R1} and \eqref{eq:R1G}.) 
Therefore, since  $\kappa_{i} \stackrel{n \to \infty}{\longrightarrow} \kappa^\infty_{i}$ for $i \in [1,T+1]$ (by the model assumptions), we almost surely have that (b) holds.

Next, by the triangle inequality, \eqref{eq:etbd} is upper bounded by
\begin{equation}\label{eq:etbd2}
\begin{split}
    3 \cdot \,  &\E\left\{\left(\alpha-G^{-1}\left(\frac{1}{\alpha}\right)+\sum_{i=1}^{T+1} \kappa_{T-i+2}^\infty\left(\frac{1}{\alpha}\right)^{T-i+1}\right)^2\tilde{U}_{T+1}^2\right\} \\
    &+ \,  3\cdot\E\left\{\left(\sum_{i=1}^{T+1} \kappa_{T-i+2}^\infty\left(\frac{1}{\alpha}\right)^{T-i+1}(\tilde{U}_{i}-\tilde{U}_{T+1})\right)^2\right\}\\
    &+ \, 3\cdot\E\left\{(\tilde{F}_{T+1}-\alpha\tilde{U}_{T+1})^2\right\}:= S_1 + S_2 + S_3.
\end{split}
\end{equation}
The term $S_3$ can be expressed as
\begin{equation*}
        S_3 = 3\cdot\E\left\{(\tilde{F}_{T+1}-\tilde{F}_{T})^2\right\}=3(\tilde{\sigma}_{T+1, T+1}-2\tilde{\sigma}_{T+1, T}+\tilde{\sigma}_{T, T}).
\end{equation*}
Thus, by Lemma \ref{lem:SE_FP_phase1}, we have that 
\begin{equation}\label{eq:S3lim}
    \lim_{T\to\infty} S_3 = 0.
\end{equation}
The term $S_1$ can be expressed as 
\begin{equation*}
        S_1 =3\cdot \left(\alpha-G^{-1}\left(\frac{1}{\alpha}\right)+\sum_{i=1}^{T+1} \kappa_{T-i+2}^\infty\left(\frac{1}{\alpha}\right)^{T-i+1}\right)^2\frac{\alpha^2\rho_\alpha^2+\tilde{\sigma}_{T, T}}{\alpha^2}.
\end{equation*}
By Lemma \ref{lem:SE_FP_phase1}, we have $\lim_{T \to \infty} \tsigma_{T,T} = \alpha^2(1-\rho_{\alpha^2})$, and hence
\begin{equation}\label{eq:S1lim}
    \lim_{T\to\infty} S_1 =3\cdot \left(\alpha-G^{-1}\left(\frac{1}{\alpha}\right)+\sum_{i=0}^\infty \kappa_{i+1}^\infty\left(\frac{1}{\alpha}\right)^{i}\right)^2=0,
\end{equation}
where the last equality follows from  \eqref{eq:R} and \eqref{eq:RG}. Finally, consider the term $S_2$, which after expanding the square and some manipulations,  can be expressed as 
\begin{equation}
    \label{eq:S2_sumexp}
    S_2 =\frac{3}{\alpha^2}\sum_{i, j=0}^{T}\kappa_{i+1}^\infty\kappa_{j+1}^\infty \left(\frac{1}{\alpha}\right)^{i+j}\left(\tilde{\sigma}_{T-j, T-i}+\tilde{\sigma}_{T, T}-\tilde{\sigma}_{T, T-i}-\tilde{\sigma}_{T, T-j}\right).
\end{equation}
The expression above can be bounded above as 
\begin{equation}\label{eq:S1re}
\begin{split}
    S_2 & \le \frac{3}{\alpha^2}\sum_{i, j=0}^{T}\kappa_{i+1}^\infty\kappa_{j+1}^\infty \left(\frac{1}{\alpha}\right)^{i+j} \big(|\tilde{\sigma}_{T-j, T-i}-\alpha^2(1-\rho_\alpha^2)|+|\tilde{\sigma}_{T, T}-\alpha^2(1-\rho_\alpha^2)|\\
    &\hspace{10em}+|\tilde{\sigma}_{T, T-i}-\alpha^2(1-\rho_\alpha^2)|+|\tilde{\sigma}_{T, T-j}-\alpha^2(1-\rho_\alpha^2)|\big).
\end{split}
\end{equation}
We now apply Lemma \ref{lem:SE_FP_phase1} to bound each of the four absolute values on the RHS of \eqref{eq:S1re}. Fix any $\xi \in (\frac{\alpha_{\rm s}}{\alpha}, 1)$. Then, by Lemma \ref{lem:SE_FP_phase1}, for any $\epsilon>0$ there exists $T^*(\epsilon)$ such that  for $T>T^*(\epsilon)$, we have
\begin{equation}
\label{eq:S2_small_bound}
\begin{split}
S_2 & \leq \epsilon \cdot  \frac{3}{\alpha^2}\sum_{i, j=0}^{T}\kappa_{i+1}^\infty\kappa_{j+1}^\infty \left(\frac{1}{\alpha}\right)^{i+j}\cdot  \big(\xi^{-\max(i, j)}+1+\xi^{-i}+\xi^{-j}\big)\\
&\stackrel{\mathclap{\mbox{\footnotesize (a)}}}{\le}\epsilon \cdot\frac{12}{\alpha^2}\sum_{i, j=0}^{T}\kappa_{i+1}^\infty\kappa_{j+1}^\infty \left(\frac{1}{\xi\alpha}\right)^{i+j}\\
&\stackrel{\mathclap{\mbox{\footnotesize (b)}}}{\le}\epsilon \cdot\frac{12}{\alpha^2}\sum_{i, j=0}^{\infty}\kappa_{i+1}^\infty\kappa_{j+1}^\infty \left(\frac{1}{\xi\alpha}\right)^{i+j}\\
&\stackrel{\mathclap{\mbox{\footnotesize (c)}}}{\le}\epsilon \cdot\frac{12}{\alpha^2}\left(R\left(\frac{1}{\xi\alpha}\right)\right)^2.
\end{split}
\end{equation}
Here, (a) uses that $\xi<1$, (b) uses that $\kappa_i^\infty\ge 0$ for $i\ge 2$, and (c) uses the power series expansion \eqref{eq:R} of $R(\cdot)$, which converges to a finite limit as $\xi\alpha>\alpha_{\rm s}$. Since $\epsilon$ can be arbitrarily small, we have
\begin{equation}\label{eq:S2lim}
    \lim_{T\to\infty} S_2 = 0.
\end{equation}
By combining \eqref{eq:etbd}, \eqref{eq:etbd2}, \eqref{eq:S3lim}, \eqref{eq:S1lim} and \eqref{eq:S2lim}, we have that 
\begin{equation}
   \lim_{T\to\infty}\lim_{n\to\infty}\frac{1}{\sqrt{n}}  \left\|\be^{T+1}\right\|=0,
\end{equation}
which, combined with \eqref{eq:scd}, gives \eqref{eq:lastlim}. Finally, by using \eqref{eq:lb1lemma} and \eqref{eq:lastlim}, we have that
\begin{equation}
    \lim_{T\to\infty}\lim_{n\to\infty}\frac{1}{\sqrt{n}}  \left\|\br^{T+1}\right\|=0. 
\end{equation}
Thus, from the decomposition \eqref{eq:dc}, we conclude that, as $n\to\infty$ and $T\to\infty$, $\tilde{\bu}^{T+1}$ is aligned with $\bu_{\rm PCA}$. Furthermore, from another application of Proposition \ref{prop:tilSE}, we obtain
\begin{equation}
 \lim_{T\to\infty}\lim_{n\to\infty}\frac{1}{\sqrt{n}}\|\tilde{\bu}^{T+1}\| = \lim_{T\to\infty}\frac{1}{\alpha}\sqrt{\tilde{\mu}_{T}^2+\tilde{\sigma}_{T, T}} = 1, 
\end{equation}
which implies that $\lim_{T\to\infty}\lim_{n\to\infty}\zeta_{T+1} = 1$ and concludes the proof.  
\end{proof}

\subsection{Analysis for the Second Phase} \label{sec:app_sec_phase_analysis}

We first define a  modified version of the true AMP algorithm, in which the memory coefficients $\{\sb_{t,i}\}_{i \in [1, t]}$ in \eqref{eq:AMPinit}-\eqref{eq:AMP} are replaced by deterministic values obtained from state evolution. The iterates of the modified AMP, denoted by $\hbu^t$, are given by:
\begin{align}
    \hbu^1 = \sqrt{n}\bu_{\rm PCA}, \quad \hbf^1 = \bX \hbu^1 - \bar{\sb}_{1,1}\hbu^1, \label{eq:modAMP_init} \\
    \hbu^t = \su_t(\hbf^{t-1}), \quad \hbf^t= \bX \hbu^t - \sum_{i=1}^t \bar{\sb}_{t,i}\hbu^i, \qquad t \ge 2, \label{eq:modAMP}
\end{align}
where
\beq
\begin{split}
& \bar{\sb}_{1,1} =\sum_{i=0}^\infty \kappa_{i+1}^\infty \alpha^{-i}, \\
& \bar{\sb}_{t,t}=\kappa_1^\infty, \,\,\,\,\,\, \bar{\sb}_{t,1}=\sum_{i=0}^\infty \kappa_{i+t}^\infty \alpha^{-i} \prod_{\ell=2}^t \E\{  \su_\ell'(F_{\ell-1}) \},  \\
&\bar{\sb}_{t,t-j}=\kappa_{j+1}^\infty \prod_{i=t-j+1}^t \E\{  \su_i'(F_{i-1}) \}, \mbox{ for }\,\, (t-j) \in [2, t-1]. 
\end{split}
\label{eq:modAMP_onsager}
\eeq
We recall that $\{ \kappa_i^\infty \}$ are the free cumulants of the limiting spectral distribution $\Lambda$, and the random variables $\{F_i\}$ are given by \eqref{eq:F1FK_def}.

The following lemma shows that, as $T$ grows, the iterates of the second phase of the artificial AMP approach those of the modified AMP algorithm above, as do the corresponding state evolution parameters.

\begin{lemma}
\label{lem:sec_phase_sq}
Consider the setting of Theorem \ref{thm:square}. Assume that $\kappa_i^\infty\ge 0$ for all $i\ge 2$, and that $\alpha>\alpha_{\rm s}$. Consider the modified version of the true AMP in \eqref{eq:modAMP_init}-\eqref{eq:modAMP}, and the artificial AMP in \eqref{eq:AMPfake1}-\eqref{eq:AMPfake2} along with its state evolution recursion given by \eqref{eq:SEfakeinit}-\eqref{eq:SE_artificial_sigma}. Then, the following results hold for $s,t \ge 1$:
\begin{enumerate}
\item 
\begin{align}
    \lim_{T \to \infty} \tilde{\mu}_{T+t} = \mu_t, \qquad 
    \lim_{T \to \infty} \tilde{\sigma}_{T+s, T+t} = \sigma_{s,t}.
    \label{eq:fake_true_SEconv}
\end{align}
\item For any $\PL(2)$ function $\psi:\reals^{2t+2} \to \reals$, we have
\end{enumerate}
\beq
\begin{split}
& \lim_{T \to \infty} \lim_{n \to \infty} 
 \bigg\vert  \frac{1}{n}  \sum_{i=1}^n \psi (u^*_{i}, \tu^{T+1}_i, \ldots, \tu^{T+t+1}_i, \tf^{T+1}_i, \ldots \tf^{T+t}_i)\\
&\hspace{9em} - \, 
\frac{1}{n}  \sum_{i=1}^n \psi (u^*_{i}, \hu^1_i, \ldots, \hu^{t+1}_i, \hf^1_i, \ldots \hf^{t}_i)  \bigg\vert  =0 \quad \text{ almost surely}.
\end{split}
\label{eq:fake_modified_conv}
\eeq
\end{lemma}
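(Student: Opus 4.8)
The plan is to prove the two items by separate induction arguments, using Lemma~\ref{lem:SE_FP_phase1} (unique fixed point of the first-phase state evolution) and Lemma~\ref{lem:Phase1_PCA_conv} (convergence of the first-phase iterate to the PCA estimate) as the two main inputs. Throughout, for fixed $T$ one first lets $n\to\infty$ via Proposition~\ref{prop:tilSE} (only finitely many iterations are involved), and then analyzes the resulting $T$-dependent expressions.

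For the state-evolution convergence \eqref{eq:fake_true_SEconv}, I would induct on $\max(s,t)$. The base case is Lemma~\ref{lem:SE_FP_phase1}: for every fixed $\ell,\ell'\le 1$ we have $\tilde\mu_{T+\ell}=\alpha\rho_\alpha\to\mu_1$ and $\tilde\sigma_{T+\ell,T+\ell'}\to\alpha^2(1-\rho_\alpha^2)=\sigma_{11}$, so that $\tilde F_{T+\ell}$ converges in law to $F_1$, consistently with the convention $U_\ell=F_1/\alpha$ for $\ell\le 1$ in the true state evolution. For the inductive step I would substitute $s\mapsto T+s$, $t\mapsto T+t$ in \eqref{eq:SE_artificial_sigma} and send $T\to\infty$: the product $\prod_{i=T+s-j+1}^{T+s}\E\{\tilde u_i'(\tilde F_{i-1})\}$ contributes a factor $1/\alpha$ for each first-phase index $i\le T+1$ and a factor $\E\{u_{i-T}'(\tilde F_{i-1})\}$ for each second-phase index, so in the limit it reproduces exactly the factor $(1/\alpha)^{(j-s+1)_+}\prod_{i=\max(2,s+1-j)}^s\E\{u_i'(F_{i-1})\}$ of \eqref{eq:sq_sigma_st}; likewise $\E\{\tilde U_{T+s-j}\tilde U_{T+t-k}\}\to\E\{U_{s-j}U_{t-k}\}$ in each of the four cases according to whether the indices $s-j,\,t-k$ fall in the first ($\le 1$) or second ($\ge 2$) phase, using that $\tilde F_{T+a}\to F_{a\vee 1}$ jointly. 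Interchanging $\lim_T$ with the double sum is justified by dominated convergence for series: each summand is bounded, uniformly in $T$, by a constant times $\kappa_{j+k+2}^\infty(1/\alpha)^{(j-s+1)_++(k-t+1)_+}$, which is summable since $\kappa_k^\infty\ge 0$ and $R'(1/\alpha)=\sum_{j,k\ge 0}\kappa_{j+k+2}^\infty(1/\alpha)^{j+k}<\infty$ for $\alpha>\alpha_{\rm s}$; the uniform bound on $\E\{\tilde U_{T+a}^2\}$ comes from $\tilde\sigma\le a^*$ in the first phase (cf.\ Lemma~\ref{lemma:image}) and from the inductive hypothesis together with Lipschitzness of $u_\ell$ in the second. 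This yields $\lim_T\tilde\sigma_{T+s,T+t}=\sigma_{s,t}$ and, similarly, $\lim_T\tilde\mu_{T+t}=\mu_t$.

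For \eqref{eq:fake_modified_conv}, since a $\PL(2)$ function is Lipschitz on bounded sets and all iterates have bounded normalized norm, it suffices to prove the stronger statement that, for every fixed $k\ge 1$, $\frac{1}{\sqrt n}\|\tbu^{T+k}-\hbu^k\|\to 0$ and $\frac{1}{\sqrt n}\|\tbf^{T+k}-\hbf^k\|\to 0$ (a.s., as $n\to\infty$ then $T\to\infty$), together with the auxiliary fact $\frac{1}{\sqrt n}\|\tbu^{T+1-l}-\sqrt n\,\bu_{\rm PCA}\|\to 0$ for every fixed $l\ge 0$. The auxiliary fact is Lemma~\ref{lem:Phase1_PCA_conv} applied with $T$ replaced by $T-l$, since the first-phase iterates do not depend on where the second phase starts. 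The base case $k=1$ is Lemma~\ref{lem:Phase1_PCA_conv} itself for $\tbu^{T+1}\approx\hbu^1=\sqrt n\,\bu_{\rm PCA}$; combining it with the first-phase recursion $\tbf^{T+1}=\bX\tbu^{T+1}-\sum_{j=0}^{T}\kappa_{j+1}\alpha^{-j}\tbu^{T+1-j}$, the auxiliary fact, boundedness of $\|\bX\|_{\rm op}$, the convergence $\kappa_j\to\kappa_j^\infty$, the decay of the tail $\sum_{j>l_0}\kappa_{j+1}^\infty\alpha^{-j}$, and the identity $\sum_{j\ge 0}\kappa_{j+1}^\infty\alpha^{-j}=\bar{\sb}_{1,1}$ (from \eqref{eq:R}), gives $\tbf^{T+1}\approx\bX\hbu^1-\bar{\sb}_{1,1}\hbu^1=\hbf^1$. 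For the inductive step, $\tbu^{T+k}=u_k(\tbf^{T+k-1})\approx u_k(\hbf^{k-1})=\hbu^k$ by Lipschitzness and the hypothesis $\tbf^{T+k-1}\approx\hbf^{k-1}$; then in $\tbf^{T+k}=\bX\tbu^{T+k}-\sum_{i=1}^{T+k}\tsb_{T+k,i}\tbu^i$ I would split the memory sum into the second-phase block $\sum_{i'=2}^{k}\tsb_{T+k,T+i'}\tbu^{T+i'}$ and the first-phase block $\sum_{i=1}^{T+1}\tsb_{T+k,i}\tbu^i$. In the former, reindexing $j=k-i'$ in \eqref{eq:Phase2_onsager} and using $\langle u'\rangle\to\E\{u'\}$ together with item~1 gives $\tsb_{T+k,T+i'}\to\bar{\sb}_{k,i'}$, while $\tbu^{T+i'}\approx\hbu^{i'}$. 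In the latter, every $\tbu^i\approx\sqrt n\,\bu_{\rm PCA}$ (the near-$T$ terms controlled by the auxiliary fact, the far terms by their geometrically small coefficients), and $\sum_{i=1}^{T+1}\tsb_{T+k,i}=\big(\prod_{i'=2}^{k}\langle u_{i'}'(\tbf^{T+i'-1})\rangle\big)\sum_{m=0}^{T}\kappa_{m+k}\alpha^{-m}\to\bar{\sb}_{k,1}$. Collecting terms gives $\tbf^{T+k}\approx\bX\hbu^k-\sum_{i=1}^{k}\bar{\sb}_{k,i}\hbu^i=\hbf^k$, closing the induction; \eqref{eq:fake_modified_conv} then follows since the two empirical joint distributions are asymptotically equal in $W_2$.

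The hard part is the uniform control needed to interchange $\lim_{T\to\infty}$ with sums whose number of terms grows with $T$ --- the $\sum_{j,k}$ in \eqref{eq:SE_artificial_sigma} and the memory sum in \eqref{eq:AMPfake2}--\eqref{eq:Phase2_onsager}. Handling both tails at once relies on $\kappa_k^\infty\ge 0$ and the convergence radius of the $R$-transform ($\alpha>\alpha_{\rm s}$) to produce a summable, $T$-independent majorant, and on the fact that the ``old'' first-phase iterates $\tbu^{T+1-l}$ --- whose alignment with $\bu_{\rm PCA}$ is only controlled for each fixed $l$ --- are weighted by memory coefficients decaying geometrically in $l$, so that a fixed-cutoff splitting separates the finitely many near-$T$ iterates from a uniformly small tail. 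A secondary point is the bookkeeping that matches, in the limit, the single memory term $\bar{\sb}_{k,1}$ of the modified AMP with the whole first-phase block of artificial-AMP memory coefficients, which is where the rewriting $\sum_{j\ge k-1}\kappa_{j+1}^\infty\alpha^{-(j-k+1)}=\sum_{m\ge 0}\kappa_{m+k}^\infty\alpha^{-m}$ enters.
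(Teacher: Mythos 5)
Your proposal is correct and its architecture matches the paper's proof of Lemma \ref{lem:sec_phase_sq}: part 1 proceeds by the same induction, with the same four-way split of the double sum in \eqref{eq:SE_artificial_sigma} according to whether $s-j$ and $t-k$ fall in the first or second phase (the paper's terms $A_1,\ldots,A_4$), and the same two inputs — Lemma \ref{lem:SE_FP_phase1} for the first-phase covariances and uniform integrability plus Lemma \ref{lem:lipderiv} for the second-phase factors; the tail interchange that you justify by a dominated-convergence majorant built from $\kappa^\infty_k \ge 0$ and $R'(1/\alpha)<\infty$ is handled in the paper by the explicit $\delta\,\xi^{-\max(\cdot)}$ bounds (the $\Delta_2,\Delta_3,\Delta_4$ estimates), and the two mechanisms are equivalent here. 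Part 2 uses the same $\PL(2)$/Cauchy--Schwarz reduction and the same split of the memory sum into a second-phase block and a first-phase block. The one genuinely different local step is your treatment of the first-phase block $\sum_{j=0}^{T}\tsb_{T+t,T+1-j}\,\tbu^{T+1-j}$: the paper centers each $\tbu^{T+1-j}$ at $\tbu^{T+1}$ and evaluates $\frac{1}{n}\|\sum_j \kappa^\infty_{\cdot}\alpha^{-j}(\tbu^{T+1-j}-\tbu^{T+1})\|^2$ exactly through state-evolution covariances (the computation in \eqref{eq:four_sigma_split1}--\eqref{eq:tuTj_tuT_conv}), whereas you replace each $\tbu^{T+1-l}$, for fixed $l$, by $\sqrt{n}\,\bu_{\rm PCA}$ via Lemma \ref{lem:Phase1_PCA_conv} applied with phase-one length $T-l$ — legitimate, since the first-phase recursion \eqref{eq:AMPfake1} does not depend on where the second phase starts — and dispose of the far tail using the geometric decay of the coefficients together with the uniform bound on $\|\tbu^{T+1-j}\|/\sqrt{n}$. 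Your route spares the covariance-difference computation at that point at the cost of an extra cutoff argument and repeated invocations of Lemma \ref{lem:Phase1_PCA_conv}; both yield $\sum_{i\le T+1}\tsb_{T+t,i}\tbu^i \to \bar{\sb}_{t,1}\hbu^1$ and the overall bookkeeping is comparable.
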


\begin{proof}
\textbf{ Proof of \eqref{eq:fake_true_SEconv}}.
We prove by induction. Consider the base case $t=1$. The formula in \eqref{eq:SEfake1} for $\tmu_t$ shows that $\tmu_{t} = \alpha \rho_\alpha = \mu_1$ for $t \in [1, T+1]$. Furthermore, Lemma \ref{lem:SE_FP_phase1} shows that 
$\lim_{T \to \infty} \tsigma_{T+1,T+1} = \alpha^2(1- \rho_\alpha^2)$, which equals $\sigma_{11}$ (defined right before \eqref{eq:F1FK_def}). 

For $t \ge 2$, assume towards induction that   $\lim_{T \to \infty} \tmu_{T+\ell} = \mu_\ell$ and $\sigma_{T+k, T+\ell} = \sigma_{k,\ell}$, for $k, \ell \in [1, t-1]$. From \eqref{eq:tilU_tilF_def}-\eqref{eq:SE_artificial_sigma}, we have
\beq
\tmu_{T+t} = \alpha \E\{  \su_t(\tmu_{T+t-1}U_* +  \tZ_{T+t-1}) \, U_*  \}.
\eeq
Recalling that $\tZ_{T+t-1} \sim \normal(0, \tsigma_{T+t-1, T+t-1})$ and $Z_{t-1} \sim \normal(0, \sigma_{t-1, t-1})$, by the induction hypothesis and the continuous mapping theorem, the sequence of random variables 
$\{\su_t( \tmu_{T+t-1}U_* +  \tZ_{T+t-1}) U_* \}$ converges in distribution  as $T \to \infty$ to $\su_t(\mu_{t-1}U_* +  Z_{t-1}) \, U_*$. We now claim that the sequence 
$\{  \su_t(\tmu_{T+t-1}U_* +  \tZ_{T+t-1}) \, U_* \}$ is uniformly integrable, from which it follows that \cite{billingsley2008probability}
\beq
\lim_{T \to \infty} \tmu_{T+t} =  \alpha \E\{ \su_t(\mu_{t-1}U_* +  Z_{t-1}) \, U_* \} = \mu_{t}.
\eeq
We show uniform integrability by showing that 
$\sup_{T} \, \E\{  | \su_t(\tmu_{T+t-1}U_* +  \tZ_{T+t-1}) U_* |^{1+\varepsilon/2} \}$ is bounded, where we recall that $\varepsilon >0$ is any constant such that $\E\{ U_*^{2+\varepsilon}\}$ exists. Using $L_t \ge 1$ to denote a Lipschitz constant of $\su_t$, we have 
\begin{align}
    & \E\{  | \su_t(\tmu_{T+t-1}U_* +  \tZ_{T+t-1})  U_* |^{1+\varepsilon/2} \}  \nonumber \\
     & \leq L_t^{1 + \varepsilon/2} \E\left\{ \abs{\abs{\tmu_{T+t-1}} U_*^2 +  |\tZ_{T+t-1} U_*| + |\su_t(0) U_* |}^{1+\varepsilon/2} \right\} \nonumber \\
     & \stackrel{\mathclap{\mbox{\footnotesize (a)}}}{\leq} (3L_t )^{1 + \varepsilon/2} \bigg( \abs{\tmu_{T+t-1}}^{1+ \varepsilon/2} \E\{ |U_*|^{2 + \varepsilon}  \} \, + \,   \big( \E\{|\tZ_{T+t-1}|^{1 + \varepsilon/2} \} + |\su_t(0)| ^{1 + \varepsilon/2} \big)
     \E\{ |U_*|^{1 + \varepsilon/2} \} \bigg)  \nonumber  \\
     & \stackrel{\mathclap{\mbox{\footnotesize (b)}}}{<} \infty,
     \label{eq:UI_utU}
\end{align}
where (a) is obtained using H{\"o}lder's inequality, and (b) holds because, by the induction hypothesis, $\tmu_{T+t-1} \to \mu_{t-1}$ and $\tsigma_{T+t-1, T+t-1} \to \sigma_{t-1, t-1}$. 

Next, consider $\tsigma_{T+s, T+t}$ for $s \in [1, t]$. From \eqref{eq:SE_artificial_sigma},
\begin{align}
    \tsigma_{T+s, T+t} &= \sum_{j=0}^{T+s-1} \sum_{k=0}^{T+t-1}
    \kappa_{j+k+2}^{\infty}  
    \left( \prod_{i=T+s-j+1}^{T+s} 
    \E\{ \tsu'_i (\tF_{i-1}) \} \right)  \nonumber  \\
    &\hspace{1.3in} \cdot \left( \prod_{i=T+t-k+1}^{T+t} \E\{ \tsu'_i (\tF_{i-1}) \} \right) \E\{ \tU_{T+s-j} \tU_{T+t-k} \}\nonumber \\
    & := A_1 + A_2 +A_3 + A_4,
\end{align}
where the four terms correspond to the sum over different subsets  of the indices $(j,k)$. By using the definition of $\tu_i(\cdot)$ in \eqref{eq:tilU_tilF_def}, those terms can be written as
\begin{align}
    A_{1} & = \sum_{j=0}^{s-2} \sum_{k=0}^{t-2} \kappa_{j+k+2}^{\infty}  
    \left( \prod_{i=s-j+1}^{s} \E\{ \su'_{i} (\tF_{T+i-1}) \} \right)\left( \prod_{i= t+1-k}^{t} \E\{ \su'_{i} (\tF_{T+i-1}) \} \right) \nonumber \\
    & \qquad \qquad \qquad \cdot \E\{ \tU_{T+s-j} \tU_{T+t-k} \},
\end{align}
\begin{align}
    A_{2} & = \sum_{j=0}^{s-2} \sum_{k=t-1}^{T+t-1} \Big(\frac{1}{\alpha}\Big)^{(k-t+1)} \kappa_{j+k+2}^{\infty}  
    \left( \prod_{i=s-j+1}^{s} \E\{ \su'_{i} (\tF_{T+i-1}) \} \right)\left( \prod_{i= 2}^{t} \E\{ \su'_{i} (\tF_{T+i-1}) \} \right) \nonumber \\
    & \hspace{1.5in}  \cdot \E\{ \tU_{T+s-j} \tU_{T+t-k} \},
    \label{eq:A2_def}
\end{align}
\begin{align}
    A_{3} & = \sum_{j=s-1}^{T+s-1} \sum_{k=0}^{t-2} \Big(\frac{1}{\alpha}\Big)^{(j-s+1)} \kappa_{j+k+2}^{\infty}  
    \left( \prod_{i=2}^{s} \E\{ \su'_{i} (\tF_{T+i-1}) \} \right)\left( \prod_{i= k-t+1}^{t} \E\{ \su'_{i} (\tF_{T+i-1}) \} \right) \nonumber \\
    & \hspace{1.5in}  \cdot \E\{ \tU_{T+s-j} \tU_{T+t-k} \},
    \label{eq:A3_def}
\end{align}
\begin{align} 
    A_{4} &= \sum_{j=s-1}^{T+s-1}\sum_{k=t-1}^{T+t-1} \Big(\frac{1}{\alpha}\Big)^{(j+k-s-t+2)} \kappa_{j+k+2}^{\infty}  \left( \prod_{i=2}^{s} \E\{ \su'_{i} (\tF_{T+i-1}) \} \right)\left( \prod_{i=2}^{t} \E\{ \su'_{i} (\tF_{T+i-1}) \} \right)  \nonumber \\
    & \hspace{1.5in}  \cdot \E\{ \tU_{T+s-j} \tU_{T+t-k} \}.
    \label{eq:A4_def}
\end{align}

 For $i \in [2,t]$, the induction hypothesis implies that $\tF_{T+i-1} = \tmu_{T+i-1}U_* + \tZ_{T+i-1} \stackrel{d}{\to} F_{i-1} = \mu_{i-1}U_* + Z_{i-1}$. Since $u_i$ is Lipschitz and continuously differentiable, Lemma \ref{lem:lipderiv} implies that
\beq 
\lim_{T \to \infty} \, \E\{ \su'_{i} (\tF_{T+i-1}) \} =  \E\{ \su'_{i} (F_{i-1}) \}, \qquad i \in [2,t]. 
\label{eq:ui_der_conv}
\eeq

Next, note that
\beq
\begin{split}
& \tU_{T+s-j} = \begin{cases}
\su_{s-j}(\tF_{T+s-j-1}),  & 0 \leq j \leq s-2, \\
\tF_{T+s-j-1}/\alpha, & s-1 \leq j \leq T+s-1,
\end{cases} \\
& \tU_{T+t-k} = \begin{cases}
\su_{t-k}(\tF_{T+t-k-1}),  & 0 \leq k \leq t-2, \\
\tF_{T+t-k-1}/\alpha, & t-1 \leq k \leq T+t-1.
\end{cases}
\end{split}
\label{eq:Ustjk}
\eeq
We separately consider $\E\{ \tU_{T+s-j} \tU_{T+t-k}  \}$ for the  four cases of $(j,k)$, corresponding to $A_1, A_2, A_3, A_4$. 
First, for $j \in [0,t-2]$, $k \in [0,s-2]$, we have
\begin{align}
\E\{ \tU_{T+s-j} \tU_{T+t-k}  \} = 
\E\{  \su_{s-j}(\tF_{T+s-j-1}) \, \su_{t-k}(\tF_{T+t-k-1}) \}.
\end{align}
By the induction hypothesis and the continuous mapping theorem, the sequence  $$\{ \su_{s-j}(\tF_{T+s-j-1}) \su_{t-k}(\tF_{T+t-k-1}) \}$$ converges in distribution to $ \su_{s-j}(F_{s-j-1}) \su_{t-k}(F_{t-k-1})$ as $T \to \infty$. From an argument similar to \eqref{eq:UI_utU}, we also deduce that $\{ \su_{s-j}(\tF_{T+s-j-1}) \su_{t-k}(\tF_{T+t-k-1}) \}$ is uniformly integrable, from which it follows that 
\beq
\begin{split}
    & \lim_{T \to \infty} \E\{  \su_{s-j}(\tF_{T+s-j-1}) \su_{t-k}(\tF_{T+t-k-1}) \} =  \E\{ \su_{s-j}(F_{s-j-1}) \su_{t-k}(F_{t-k-1}) \}, \\
    & \hspace{2in} j \in [0,t-2], \ k \in [0,s-2].
\end{split}
\label{eq:usuk_conv}
\eeq
Eqs. \eqref{eq:ui_der_conv} and \eqref{eq:usuk_conv} imply that 
\begin{align}
    \lim_{T \to \infty} A_1 = \sum_{j=0}^{s-2} \sum_{k=0}^{t-2} \kappa_{j+k+2}^{\infty}  
    \left( \prod_{i=s-j+1}^{s} \E\{ \su'_{i} (F_{i-1}) \} \right)\left( \prod_{i= t+1-k}^{t} \E\{ \su'_{i} (F_{i-1}) \} \right)  \E\{ U_{s-j} U_{t-k} \}.
    \label{eq:A1_limit}
\end{align}

Next consider the case where $j \in [s-1, \,T+s-1]$ and $k \in [t-1, \, T+t-1]$. Here, 
\beq 
\label{eq:UjUk_dev1}
\E\{ \tU_{T+s-j} \tU_{T+t-k}  \} = \frac{1}{\alpha^2}
\E\{  \tF_{T-(j+1-s)}  \tF_{T-(k+1-t)} \}
=\rho_\alpha^2 + \frac{1}{\alpha^2} \tsigma_{T-(j+1-s), T- (k+1-t)}.
\eeq
From Lemma \ref{lem:SE_FP_phase1}, for any $\delta >0$, for sufficiently large $T$, we have
\beq 
\label{eq:tsig_dev1}
|\tilde{\sigma}_{T-(j+1-s), T-(k+1-t)}-\alpha^2(1-\rho_\alpha^2)| < \delta \xi^{-\max(j+1-s, \, k+1-t)},
\eeq
for some $\xi >0$ such that $\xi \alpha > \alpha_s$.
Combining \eqref{eq:UjUk_dev1}-\eqref{eq:tsig_dev1} and noting from \eqref{eq:F1FK_def} that $\E\{U_{s-j} U_{t-k} \} =\frac{1}{\alpha^2} \E\{ F_1^2 \}= 1$ we obtain, for sufficiently large $T$:
\beq
| \E\{ \tU_{T+s-j} \tU_{T+t-k}  \} - \E\{ U_{s-j} U_{t-k}  \}| < \frac{\delta}{\alpha^2} \xi^{-\max(j+1-s, \, k+1-t)}, \quad \text{ for } j \ge (s-1), \ k \geq (t-1). 
\label{eq:tUjtUk_diff4}
\eeq
Now we write  $A_4$ in \eqref{eq:A4_def} as
\beq
\begin{split}
 A_4& =  \left( \prod_{i=2}^{s} \E\{ \su'_{i} (\tF_{T+i-1}) \} \right)\left( \prod_{i= 2}^{t} \E\{ \su'_{i} (\tF_{T+i-1}) \} \right) \\ 
&\qquad \cdot \left[  \sum_{j=s-1}^{T+s-1}\sum_{k=t-1}^{T+t-1}
\Big(\frac{1}{\alpha}\Big)^{(j+k-s-t+2)} \kappa_{j+k+2}^{\infty}  \, \E\{ U_{s-j} U_{t-k} \}   
\,  + \,  \Delta_{4}\right],
 \end{split}
 \label{eq:A4_split}
\eeq
where 
\beq 
\Delta_4 =  \sum_{j=s-1}^{T+s-1}\sum_{k=t-1}^{T+t-1}
\Big(\frac{1}{\alpha}\Big)^{(j+k-s-t+2)} \kappa_{j+k+2}^{\infty} \ [\E\{ \tU_{T+s-j} \tU_{T+t-k} \} - \E\{ U_{s-j} U_{t-k} \}].
\label{eq:Del4_def}
\eeq
Using \eqref{eq:tUjtUk_diff4}, for sufficiently large $T$ we have
\begin{equation}
    \begin{split}
        | \Delta_4 | & <  \frac{\delta}{\alpha^2} \, \sum_{j=s-1}^{T+s-1}\sum_{k=t-1}^{T+t-1}
\left(\frac{1}{\xi\alpha} \right)^{(j+k-s-t+2)}  \kappa_{j+k+2}^{\infty}  \\
& =  \frac{\delta}{\alpha^2} \, (\xi\alpha)^{s+t} \sum_{j=0}^{T}\sum_{k=0}^{T}
\left(\frac{1}{\xi\alpha} \right)^{(j+k +s +t)}  \kappa_{j+k+s+t}^{\infty} \\
& < C_{s,t} \delta ,
    \end{split}
\end{equation}
for a positive constant $C_{s,t}$ since the double sum is bounded for $\xi\alpha > \alpha_s$ (see \eqref{eq:R1}). Since $\delta >0$ is arbitrary, this shows that $\Delta_4 \to 0$ as $T \to \infty$. Using this in \eqref{eq:A4_split} along with \eqref{eq:ui_der_conv}, we obtain
\beq
\begin{split}
 \lim_{T \to \infty} A_4 & =    \left( \prod_{i=2}^{s} \E\{ \su'_{i} (F_{i-1}) \} \right)\left( \prod_{i= 2}^{t} \E\{ \su'_{i} (F_{i-1}) \} \right) \\
 & \qquad 
 \cdot \sum_{j=s-1}^{\infty}\sum_{k=t-1}^{\infty}
\Big(\frac{1}{\alpha}\Big)^{(j+k-s-t+2)} \kappa_{j+k+2}^{\infty}  \, \E\{ U_{s-j} U_{t-k} \}.
\end{split}
\label{eq:A4_limit}
\eeq

Next consider  $j \in [0,s-2]$, $k \in [t-1, \,T+t-1]$. Here
\begin{align}
 &  \E\{ \tU_{T+s-j} \tU_{T+t-k}  \} = \frac{1}{\alpha}
  \E\{  \su_{s-j}(\tF_{T+s-j-1})  \tF_{T-(k+1-t)}\} \nonumber \\
  & = \frac{1}{\alpha} \E\{  \su_{s-j}(\tF_{T+s-j-1}) \tF_{T+1} \}
  + \frac{1}{\alpha}\E\{  \su_{s-j}(\tF_{T+s-j-1}) (\tF_{T+1} - \tF_{T-(k+1-t)})\}.
  \label{eq:tUjtUk_split}
\end{align}
By the induction hypothesis and the uniform integrability of $\{  \su_{s-j}(\tF_{T+s-j-1})\}$, we have
\beq
\lim_{T \to \infty} \frac{1}{\alpha} \E\{  \su_{s-j}(\tF_{T+s-j-1}) \tF_{T+1} \} = \frac{1}{\alpha} \E\{ \su_{s-j}(F_{s-j-1}) F_1 \} = \E\{ U_{s-j} U_{t-k} \}.
\eeq
The second term in \eqref{eq:tUjtUk_split} can be bounded as follows, using the Cauchy-Schwarz inequality:
\begin{align}
    & |\E\{  \su_{s-j}(\tF_{T+s-j-1}) (\tF_{T+1} - \tF_{T-(k+1-t)})\}| \nonumber \\
    & \leq L_t (\tmu_{T+s-j-1}^2 + \tsigma_{T+s-j-1, T+s-j-1} + C)^{1/2} \big(\E\{ (\tF_{T+1} - \tF_{T-(k+1-t)})^2 \} \big)^{1/2}.
    \label{eq:tUUT1_diff}
\end{align}
Using Lemma \ref{lem:SE_FP_phase1}, for any $\delta >0$ and $T$ sufficiently large, we have
\begin{align}
    & \E\{ (\tF_{T+1} - \tF_{T-(k+1-t)})^2 \} \leq 
    |\sigma_{T+1, T+1} - \alpha^2 (1- \rho_\alpha^2)| \nonumber  \\
    & \quad \, + \, 
    |\sigma_{T-(k+1-t), T-(k+1-t)} - \alpha^2 (1- \rho_\alpha^2)|\, + \, 
    2|\sigma_{T-(k+1-t), T+1} - \alpha^2 (1- \rho_\alpha^2)| \nonumber  \\
    & <  \delta \xi^{-(k+1-t)}.
\end{align}
Combining \eqref{eq:tUjtUk_split}-\eqref{eq:tUUT1_diff}, we deduce that for any $\delta>0$, the following holds for  sufficiently large $T$:
\begin{align}
| \E\{ \tU_{T+s-j} \tU_{T+t-k}  \}  - \E\{ U_{s-j} U_{t-k} \}|
< \delta \xi^{-(k+1-t)}, \ \text{ for }  j \in [0, s-2], \ k \in [t-1, T+t-1].
\label{eq:Delta2_bnd0}
\end{align}
We write $A_2$ in \eqref{eq:A2_def} as
\begin{align}
  A_2 & =  \left(  \prod_{i= 2}^{t} \E\{ \su'_{i} (\tF_{T+i-1}) \} \right)\sum_{j=0}^{s-2} 
    \left( \prod_{i=s-j+1}^{s} \E\{ \su'_{i} (\tF_{T+i-1}) \} \right) \nonumber \\
    & \qquad   \cdot\left[ \,  \sum_{k=t-1}^{T+t-1}  \Big(\frac{1}{\alpha}\Big)^{(k-t+1)} \kappa_{j+k+2}^{\infty} \,   \E\{ U_{s-j} U_{t-k} \} + \Delta_{2,j} \right],  
\label{eq:A2_split}
\end{align}
where
\beq
\Delta_{2,j} = \sum_{k=t-1}^{T+t-1} \Big(\frac{1}{\alpha}\Big)^{(k-t+1)} \kappa_{j+k+2}^{\infty} \,   
(\E\{ \tU_{T+s-j} \tU_{T+t-k} \} - \E\{ U_{s-j} U_{t-k}) \}).
\label{eq:Delta2_def}
\eeq
From \eqref{eq:Delta2_bnd0}, for any $\delta >0$ and sufficiently large $T$ we have
\begin{align}
    |\Delta_{2,j}| < \delta (\xi \alpha)^{j+t+1} \sum_{k=1}^{T+1} 
    \left( \frac{1}{\xi \alpha} \right)^{j+k+ t} \kappa^\infty_{j+k+t} < C_{s,j} \delta,
    \label{eq:bnd_Csj}
\end{align}
for a positive constant $C_{s,j}$ since the sum over $k$ is bounded (see \eqref{eq:R}). Using this in \eqref{eq:A2_split}  along with \eqref{eq:ui_der_conv}, we obtain
\begin{align}
    \lim_{T \to \infty} A_2 &  =   \sum_{j=0}^{s-2} \sum_{k=t-1}^{\infty}
      \Big(\frac{1}{\alpha}\Big)^{(k-t+1)} \kappa_{j+k+2}^{\infty} \,   \E\{ U_{s-j} U_{t-k} \}      \left( \prod_{i=s-j+1}^{s} \E\{ \su'_{i} (F_{i-1}) \} \right) \nonumber  \\
     & \hspace{1.9in} \cdot  \left(  \prod_{i= 2}^{t} \E\{ \su'_{i} (F_{i-1}) \} \right).
     \label{eq:A2lim}
\end{align}
Using a similar argument, we also have 
\begin{align}
 \lim_{T \to \infty} A_3 &  =   \sum_{j=s-1}^{\infty} \sum_{k=0}^{t-2}
      \Big(\frac{1}{\alpha}\Big)^{(j-s+1)} \kappa_{j+k+2}^{\infty} \,   \E\{ U_{s-j} U_{t-k} \}      \left( \prod_{i=2}^{s} \E\{ \su'_{i} (F_{i-1}) \} \right) \nonumber  \\
     & \hspace{1.9in} \cdot  \left(  \prod_{i= t-k+1}^{t} \E\{ \su'_{i} (F_{i-1}) \} \right).
     \label{eq:A3lim}
\end{align}
Noting that the sum of the limits in \eqref{eq:A1_limit}, \eqref{eq:A4_limit},  \eqref{eq:A2lim} and \eqref{eq:A3lim} equals $\sigma_{s,t}$ (defined in  \eqref{eq:sq_sigma_st}), we have shown that $\lim_{T \to \infty} \tsigma_{T+s, T+t} = \sigma_{s,t} $.

\textbf{Proof of \eqref{eq:fake_modified_conv}}.
 Since $\psi \in \PL(2)$, for some universal constant $C >0$ we have
\begin{align}
& 
\left\vert \frac{1}{n}  \sum_{i=1}^n \psi (u^*_{i},  \tu^{T+1}_i, \ldots, \tu^{T+t+1}_i, \tf^{T+1}_i, \ldots \tf^{T+t}_i)
\, - \, 
\frac{1}{n}  \sum_{i=1}^n \psi (u^*_{i}, \hu^1_i, \ldots, \hu^{t+1}_i, \hf^1_i, \ldots \hf^{t}_i)  \right\vert  \nonumber \\
 & \le \frac{C}{n} \sum_{i=1}^n \left( 1 + |u^*_i| + \sum_{\ell=1}^{t+1} \big( |\tu^{T+\ell}_i| + |\hu^{\ell}_i| \big)
 + \sum_{\ell=1}^{t} \big( |\tf^{T+\ell}_i| + |\hf^{\ell}_i| \big) \right)
 \nonumber \\
 & \qquad  \quad \cdot \left( (\tu^{T+1}_i- \hat{u}^1_i)^2+ \ldots + (\tu^{T+t+1}_i - \hat{u}^{t+1}_i)^2 + (\tf^{T+1}_i- \hat{f}^1_i)^2+ \ldots + (\tf^{T+t}_i - \hat{f}^t_i)^2\right)^{\frac{1}{2}} \nonumber  \\
  & \le 2C(t+2) \left[ 1 + \frac{\| \bu^* \|^2}{n}  +  
 \sum_{\ell=1}^{t+1} \Big( \frac{\| \tbu^{T+\ell} \|^2}{n} +
 \frac{\| \hbu^{\ell} \|^2}{n} \Big) + 
 \sum_{\ell=1}^{t} \Big( \frac{\| \tbf^{T+\ell} \|^2}{n}
 + \frac{\| \hbf^{\ell} \|^2}{n} \Big) \right]^{\frac{1}{2}} \nonumber  \\ 
 &  \cdot \left( \frac{\| \tbu^{T+1} - \hbu^1 \|^2}{n}+ \ldots + \frac{\| \tbu^{T+t+1} - \hbu^{t+1} \|^2}{n}  +\frac{\| \tbf^{T+1} - \hbf^1  \|^2}{n} + \ldots  + \frac{\| \tbf^{T+t} - \hbf^t  \|^2}{n}\right)^{\frac{1}{2}}, \label{eq:PL2_psi_bnd}
\end{align}
where the last inequality is obtained by using Cauchy-Schwarz inequality (twice).

 We will inductively show that in the limit $T,n \to \infty$ (with the limit in $n$ taken first): i) the terms $\frac{\| \tbu^{T+1} - \hbu^1 \|^2}{n}, \ldots, \frac{\| \tbf^{T+1} - \hbf^1  \|^2}{n}, \ldots$, $\frac{\| \tbf^{T+t} - \hbf^t  \|^2}{n}$ all converge to $0$ almost surely, and ii) each of the terms within the square brackets in \eqref{eq:PL2_psi_bnd} converges to a finite deterministic value.

\underline{Base case: $t=1$}. From Lemma  \ref{lem:Phase1_PCA_conv}, we have 
\beq \label{eq:ut1_hu1_conv}
\lim_{T \to \infty} \lim_{n \to \infty} 
\frac{\| \tbu^{T+1} - \hbu^1 \|^2}{n} =0.
\eeq 
From the definitions of $\tbf^{T+1}$ and $\hbf^1$ in \eqref{eq:AMPfake1} and \eqref{eq:modAMP_init}, we have
\beq
\label{eq:fT1f1_split}
\begin{split}
\| \tbf^{T+1} - \hbf^1 \|^2&  = \Big \| \bX( \tbu^{T+1} - \hbu_1 )
- \big( \sum_{i=1}^{T+1}\tsb_{T+1,i}  \tbu^{i}\, - \, \bar{\sb}_{1,1} \hbu^1  \big) \Big \|^2 \\
& \leq 2 \| \bX \|^2_{\rm{op}} \| \tbu^{T+1} - \hbu^1 \|^2 \,  + \, 
\Big\| \sum_{i=1}^{T+1}\tsb_{T+1,i}  \tbu^{i}\, - \, \bar{\sb}_{1,1} \hbu^1  \Big\|^2 .
\end{split}
\eeq
From \cite[Theorem 2.1]{benaych2011eigenvalues}, we know that the $\| \bX \|_{\rm op} = |\lambda_1(\bX)| \stackrel{n \to \infty}{\to} |G^{-1}(1/\alpha)|$ almost surely.  Therefore, from \eqref{eq:ut1_hu1_conv}, we almost surely have
\beq
\lim_{T \to \infty} \lim_{n \to \infty}   \| \bX \|^2_{\rm{op}} \frac{\| \tbu^{T+1} - \hbu^1 \|^2}{n}=0.
\label{eq:fT1f1_term1}
\eeq
For the second term in \eqref{eq:fT1f1_split}, recalling that  $\tsb_{T+1, T+1-j}= \kappa_{j+1} \alpha^{-j}$ for $j \in [0,T]$ (see \eqref{eq:AMPfake1}),  and $\bar{\sb}_{1,1}= \sum_{j=0}^\infty \kappa_{j+1}^\infty \alpha^{-j}$ (see \eqref{eq:modAMP_init}), we write
\begin{align}
   \sum_{i=1}^{T+1}\tsb_{T+1,i}  \tbu^{i}\, - \, \bar{\sb}_{1,1} \hbu^1 
   & =\sum_{j=0}^T (\kappa_{j+1} -\kappa_{j+1}^\infty) \alpha^{-j}  \tbu^{T+1-j} 
    +   \sum_{j=0}^T \kappa_{j+1}^\infty \alpha^{-j} (\tbu^{T+1-j} - \tbu^{T+1}) \nonumber \\
    & \qquad +
   \sum_{j=0}^T \kappa_{j+1}^\infty \alpha^{-j} (\tbu^{T+1} - \hbu^{1}) 
    -  \sum_{j=T+1}^\infty \kappa_{j+1}^{\infty}\alpha^{-j} \hbu^1. 
   %
\end{align}
Hence,
\begin{align}
   &  \frac{1}{n}  \Big\| \sum_{i=1}^{T+1}\tsb_{T+1,i}  \tbu^{i}\, - \, \bar{\sb}_{1,1} \hbu^1  \Big\|^2 \leq 
      \frac{4}{n} \Big\|  \sum_{j=0}^T (\kappa_{j+1} -\kappa_{j+1}^\infty) \alpha^{-j}  \tbu^{T+1-j}  \Big\|^2   \nonumber \\
    & \qquad +\frac{4}{n}  \Big \| \sum_{j=0}^T \kappa_{j+1}^\infty \alpha^{-j} (\tbu^{T+1-j} - \tbu^{T+1}) \Big \|^2  +   \frac{4}{n} \Big\| \sum_{j=0}^T \kappa_{j+1}^\infty \alpha^{-j} (\tbu^{T+1} - \hbu^{1}) \Big\|^2 \nonumber  \\
    & \qquad +   \frac{4}{n} \Big\| \sum_{j=T+1}^\infty \kappa_{j+1}^{\infty}\alpha^{-j} \hbu^1 \Big\|^2 := 
    R_1 +R_2 +R_3 +R_4.  \label{eq:three_term_split}
\end{align}
First, by using passages analogous to \eqref{eq:kappj_kappj_inf}-\eqref{eq:uiuj_inner_prod}, we almost surely have $\lim_{T \to \infty} \lim_{n \to \infty} R_1=0$. Considering $R_2$ next, Proposition \ref{prop:tilSE} implies that almost surely
\begin{align}
    & \lim_{n \to \infty} \frac{1}{n} \Big \| \sum_{j=0}^T \kappa_{j+1}^\infty \alpha^{-j} (\tbu^{T+1-j} - \tbu^{T+1}) \Big \|^2 =   \Big( \sum_{j=0}^T \kappa_{j+1}^\infty \alpha^{-j}     \E\{ \tU_{T+1-j} - \tU_{T+1}  \} \Big)^2 \nonumber \\
    & =
    \sum_{i=0}^T \sum_{j=0}^T \kappa_{i+1}^\infty \kappa_{j+1}^\infty \alpha^{-(i+j)} \frac{1}{n} 
    \E\{ (\tU_{T+1-i} - \tU_{T+1}) (\tU_{T+1-j} - \tU_{T+1}) \} 
    \nonumber \\
    & \stackrel{\mathclap{\mbox{\footnotesize (a)}}}{=} \sum_{i=0}^T \sum_{j=0}^T \kappa_{i+1}^\infty \kappa_{j+1}^\infty  \alpha^{-(i+j)} 
    ( \tsigma_{T-j, T-i} + \tsigma_{T,T} - \tsigma_{T-i, T} - \tsigma_{T-j,T} ). \label{eq:four_sigma_split1}
\end{align}
Here, (a) is obtained from the definition  $\tilde{U}_{\ell} = \tF_{\ell-1}/\alpha$ from \eqref{eq:tilU_tilF_def}, for $\ell \in [1, T+1]$. 
As $T \to \infty$, it was shown in \eqref{eq:S2_sumexp}-\eqref{eq:S2lim} that the sum on the RHS of \eqref{eq:four_sigma_split1} converges to $0$. Therefore 
\beq
\lim_{T \to \infty} \lim_{n \to \infty}   \frac{1}{n} \Big \| \sum_{j=0}^T \kappa_{j+1}^\infty \alpha^{-j} (\tbu^{T+1-j} - \tbu^{T+1}) \Big \|^2 =0 \ \text{ almost surely.}
\label{eq:tuTj_tuT_conv}
\eeq
For the third term in \eqref{eq:three_term_split}, recalling that $\hbu^1= \sqrt{n} \bu_{\PCA}$,  we almost surely have
\begin{align}\label{eq:extraeq}
    \lim_{T \to \infty} \Big( \sum_{j=0}^T \kappa_{j+1}^\infty \alpha^{-j} \Big)^2 
\lim_{T \to \infty} \lim_{n \to \infty} \frac{\| \tbu^{T+1} - \hbu^{1}\|^2}{n} =0,
\end{align}
where we use Lemma \ref{lem:Phase1_PCA_conv} and the fact that  $\sum_{j=0}^\infty \kappa_{j+1}^\infty \alpha^{-j} = R(1/\alpha)$ is convergent (see \eqref{eq:R}).  The convergence of this series also implies that 
$ \lim_{T \to \infty} \sum_{j=T+1}^\infty \kappa_{j+1}\alpha^{-j} = 0$, and hence the fourth term in  \eqref{eq:three_term_split} goes to $0$. We have therefore shown that
\beq
\lim_{T \to \infty} \lim_{n \to \infty} \frac{1}{n}  \Big\| \sum_{i=1}^{T+1}\tsb_{T+1,i}  \tbu^{i}\, - \, \bar{\sb}_{1,1} \hbu^1  \Big\|^2 = 0,
\label{eq:fT1f1_term2}
\eeq
almost surely. Using \eqref{eq:fT1f1_term1} and \eqref{eq:fT1f1_term2} in \eqref{eq:fT1f1_split} shows that almost surely
\beq
\lim_{T \to \infty} \lim_{n \to \infty} \, \frac{1}{n} \| \tbf^{T+1} - \hbf^1\|^2 =0. 
\label{eq:fT1f1_conv}
\eeq
Recalling that $\tbu^{T+2}= u_2(\tbu^{T+1})$, $\hbu^{2} = u_2(\hbf^1)$ and that $u_2$ is Lipschitz, we have $\| \tbu^{T+2} - \hbu^2\| \leq L_2\| \tbf^{T+1} - \hbf^1 \|$, where $L_2$ is the Lipschitz constant. Eq. \eqref{eq:fT1f1_conv} therefore implies
\beq
\lim_{T \to \infty} \lim_{n \to \infty} \, \frac{1}{n} \| \tbu^{T+2} - \hbu^2\|^2 = 0 \ \text{ almost surely}.
\label{eq:uT2u2_conv}
\eeq
By the triangle inequality, we have for $t \geq 1$:
\begin{equation}
    \begin{split}
    \| \tbu^{T+t} \| - \| \tbu^{T+t} - \hbu^{t} \|  \leq   \| \hbu^{t} \| \leq  \| \tbu^{T+t} \| +  \| \tbu^{T+t} - \hbu^{t} \|.
    \end{split}
    \label{eq:tbu_triangle}
\end{equation}
Therefore, from \eqref{eq:ut1_hu1_conv}, Proposition \ref{prop:tilSE}, \eqref{eq:F1FK_def} and \eqref{eq:Ut_def}, we almost surely have
\begin{equation}
    \label{eq:hu1_lim}
     \lim_{n \to \infty} \,  
    \frac{\| \hbu^1 \|^2}{n} = \lim_{T \to \infty}\lim_{n \to \infty} \, 
    \frac{\| \tbu^{T+1} \|^2}{n} = \lim_{T \to \infty} \frac{1}{\alpha^2}(\tmu_{T+1}^2 + \tsigma_{T+1, T+1}) \stackrel{\mathclap{\mbox{\footnotesize (a)}}}{=} \frac{1}{\alpha^2}(\mu_1^2+ \sigma_{1,1}) =1,
\end{equation}
where (a) is due to \eqref{eq:fake_true_SEconv}. Similarly using \eqref{eq:fT1f1_conv}, \eqref{eq:uT2u2_conv}, Proposition \ref{prop:tilSE}, and \eqref{eq:Ut_def}, we almost surely have
\begin{equation}
\begin{split}
        \label{eq:hu2_lim}
   &\lim_{n \to \infty} \,
    \frac{\| \hbu^2 \|^2}{n} = \lim_{T \to \infty}\lim_{n \to \infty} \,  
    \frac{\| \tbu^{T+2} \|^2}{n} = \E\{ u_2(\mu_2 U_* + Z_{2})^2\},\\
& \lim_{n \to \infty} \, 
    \frac{\| \hbf^1 \|^2}{n} = \lim_{T \to \infty}\lim_{n \to \infty} \, 
    \frac{\| \tbf^{T+1} \|^2}{n}  = \mu_1^2+ \sigma_{1,1} = \alpha^2.
\end{split}
\end{equation}
Using \eqref{eq:ut1_hu1_conv}, \eqref{eq:fT1f1_conv}, \eqref{eq:uT2u2_conv},
\eqref{eq:hu1_lim}, and  \eqref{eq:hu2_lim} in \eqref{eq:PL2_psi_bnd}, we conclude 
\begin{equation}
    \left\vert \frac{1}{n}  \sum_{i=1}^n \psi (u^*_{i},  \tu^{T+1}_i, \tu^{T+2}_i, \tf^{T+1}_i)
\, - \, 
\frac{1}{n}  \sum_{i=1}^n \psi (u^*_{i}, \hu^1_i,  \hu^{2}_i, \hf^1_i)  \right\vert  =0 \quad \text{ almost surely}.
\end{equation}

\underline{Induction step}: For $t \ge 2$, assume towards induction that almost surely 
\begin{align}
    &  \lim_{T \to \infty} \lim_{n \to \infty} \frac{1}{n}\| \tbf^{T+ \ell-1} - \hbf^{\ell-1} \|^2=0, \quad  \quad 
    \lim_{T \to \infty} \lim_{n \to \infty} \frac{1}{n}\| \tbu^{T+ \ell} - \hbu^{\ell} \|^2=0, \quad \text{ for }   2 \leq \ell  \leq t, \
    \nonumber \\
    &  \lim_{T \to \infty} \lim_{n \to \infty} 
 \bigg\vert  \frac{1}{n}  \sum_{i=1}^n \psi (u^*_{i}, \tu^{T+1}_i, \ldots, \tu^{T+\ell}_i, \tf^{T+1}_i, \ldots \tf^{T+\ell-1}_i)\nonumber\\
&\hspace{9em} - 
\frac{1}{n}  \sum_{i=1}^n \psi (u^*_{i}, \hu^1_i, \ldots, \hu^{\ell}_i, \hf^1_i, \ldots \hf^{\ell-1}_i)  \bigg\vert =0, \qquad \text{ for } \ \   2 \leq \ell  \leq t.
\label{eq:ind_hyp}
\end{align}
Using the definitions of $\tbf^{T+t}$ and $\hbf^t$ in \eqref{eq:AMPfake2} and \eqref{eq:modAMP} and applying the Cauchy-Schwarz inequality, we have
\begin{align}
   &  \frac{1}{n}\| \tbf^{T+t} - \hbf^{t} \|^2 \leq 
   \frac{(t+1)}{n}\Bigg( \| \bX (\tbu^{T+t} - \hbu^t) \|^2 + 
    \sum_{\ell=2}^t \| \tsb_{T+t,T+\ell} \tbu^{T+\ell} - \bar{\sb}_{t,\ell} \hbu^\ell \|^2  \nonumber \\
   & \hspace{2in}   + \Big\|  \sum_{i=1}^{T+1} \tsb_{T+t,i} \tbu^i  - \bar{\sb}_{t,1} \hbu^1\Big\|^2 \Bigg).
   \label{eq:fTt_ft_bnd}
\end{align}
For the first term on the right, we have  $ \| \bX (\tbu^{T+t} - \hbu^t) \|^2 \le \| \bX \|^2_{\rm op} \| \tbu^{T+t} - \hbu^t \|^2$. 
Since $\| \bX \|_{\rm op} \to | G^{-1}(1/\alpha) |$, using the induction hypothesis we obtain
\begin{equation}
    \lim_{T \to \infty} \lim_{n \to \infty} \frac{1}{n}  \| \bX (\tbu^{T+t} - \hbu^t) \|^2 =0 \quad \text{ almost surely}.
    \label{eq:XuTthut}
\end{equation}
Next consider $ \frac{1}{n}\| \tsb_{T+t,T+\ell} \tbu^{T+\ell} - \bar{\sb}_{t,\ell} \hbu^\ell \|^2$, which, for $\ell \in [2,t]$ can be bounded as 
\begin{equation}
   \frac{1}{n}\| \tsb_{T+t, T+\ell} \tbu^{T+\ell} - \bar{\sb}_{t,\ell} \hbu^\ell \|^2 \le 2 \tsb_{T+t, T+ \ell} 
   \frac{\| \tbu^{T+\ell} - \hbu^\ell \|^2}{n} + 2 \frac{\| \hbu^\ell \|^2}{n}(\tsb_{T+t, T+\ell} - \bar{\sb}_{t,\ell})^2.
   \label{eq:uTlul_bnd}
\end{equation}
By the induction hypothesis, we almost surely have
\begin{equation}
    \lim_{T \to \infty} \lim_{n \to \infty} \frac{\| \tbu^{T+\ell} - \hbu^\ell \|^2}{n}  = 0, \quad \text{ and }
    \label{eq:tuTt_ht_conv}
\end{equation}
\begin{align}
\lim_{n \to \infty} \frac{\| \hbu^\ell \|^2}{n} = \lim_{T \to \infty}   \lim_{n \to \infty} \frac{\| \tbu^{T+\ell} \|^2}{n} = 
     \lim_{T \to \infty}
    (\tmu_{T+\ell}^2 + \tsigma_{T+\ell, T+\ell} ) = \mu_\ell^2 + \sigma_{\ell,\ell},
    \label{eq:huTl}
\end{align}
where the last equality is due to \eqref{eq:fake_true_SEconv}. Furthermore, $\tsb_{T+t, T+t}= \kappa_1 \to \kappa_1^\infty = \bar{\sb}_{t,t}$ as $n \to \infty$. For $\ell \in [2, t-1]$, from \eqref{eq:Phase2_onsager} we have
 $   \tsb_{T+t, T+\ell}= \kappa_{t-\ell+1} \prod_{i=\ell+1}^t \<  \su'_{i}(\tbf^{T+i-1})\> $.
Proposition \ref{prop:tilSE} implies that the empirical distribution of $\tbf^{T+i-1}$ converges almost surely in Wasserstein-2 distance to the law of $\tF_{T+i-1} \equiv \tmu_{T+i-1}U_* + \tilde{Z}_{T+i-1}$. Therefore, applying Lemma \ref{lem:lipderiv}, we almost surely have
\begin{align}
    \lim_{n \to  \infty} \tsb_{T+t, T+\ell} = 
    \kappa_{t-\ell+1}^\infty \prod_{i=\ell+1}^t \E\{ \su_i'(\tF_{T+i-1}) \}.
    \label{eq:tsb_tl_nlim}
\end{align}
Since $\tF_{T+i-1}$ converges in distribution to 
$F_{i-1} \equiv \mu_{i-1}U_* + Z_{i-1}$ as $T \to \infty$, applying Lemma \ref{lem:lipderiv} once again, we obtain
\begin{align}
\lim_{T \to \infty} \lim_{n \to  \infty} \tsb_{T+t, T+\ell} = \kappa_{t-\ell+1}^\infty \prod_{i=\ell+1}^t \E\{ \su_i'(F_{i-1}) \}.
    \label{eq:tsb_tl_lim}
\end{align}
Using \eqref{eq:tuTt_ht_conv}, \eqref{eq:huTl} and \eqref{eq:tsb_tl_lim} in \eqref{eq:uTlul_bnd},
we obtain
\begin{align}
       \lim_{T \to \infty} \lim_{n\to \infty} \frac{1}{n}\| \tsb_{T+t, T+\ell} \tbu^{T+\ell} - \bar{\sb}_{t,\ell} \hbu^\ell \|^2 =0 \quad \text{ almost surely  for } \ell \in [2,t]. 
       \label{eq:tsbtu_barsbhu}
\end{align}
To bound the last term in \eqref{eq:fTt_ft_bnd}, we write it as
\begin{align}
    \Big\|  \sum_{i=1}^{T+1} \tsb_{T+t,i} \tbu^i  - \bar{\sb}_{t,1} \bu^1\Big\|^2 = \Big\|  \sum_{j=0}^{T} \tsb_{T+t,T+1-j} \tbu^{T+1-j}  - \bar{\sb}_{t,1} \hbu^1\Big\|^2,
\end{align}
where from \eqref{eq:Phase2_onsager} we have
\begin{equation}
    \tsb_{T+t,T+1-j}= \kappa_{t+j} \alpha^{-j} \prod_{i=2}^t
    \< \su_i'(\tbf^{T+i-1})\>, \qquad 0 \le j \le T.  
\end{equation}
Using this together with the formula for $\bar{\sb}_{t,1}$ in \eqref{eq:modAMP_onsager}, we have 
\begin{align}
    & \frac{1}{n} \Big\|  \sum_{i=1}^{T+1} \tsb_{T+t,i} \tbu^i  - \bar{\sb}_{t,1} \hbu^1\Big\|^2 \nonumber \\
    & = \frac{1}{n} \Big\| \prod_{\ell=2}^t 
    \< \su_\ell'(\tbf^{T+\ell-1})\>  \sum_{j=0}^{T} \kappa_{t+j} \alpha^{-j} \, \tbu^{T+1-j} \,   -  \,
 \prod_{\ell=2}^t \E\{  \su_\ell'(F_{\ell-1}) \} \,  
 \sum_{i=0}^\infty \kappa_{t+i}^\infty \alpha^{-i} \hbu^1 \Big \|^2  \nonumber \\
    & \le 3\Bigg( \frac{1}{n} \Big\| \prod_{\ell=2}^t 
    \< \su_\ell'(\tbf^{T+\ell-1})\>  \sum_{j=0}^{T} \kappa_{t+j} \alpha^{-j} \, \tbu^{T+1-j}
    - \prod_{\ell=2}^t 
    \E\{  \su_\ell'(F_{\ell-1}) \}  \sum_{j=0}^{T} \kappa^\infty_{t+j} \alpha^{-j} \, \tbu^{T+1-j} \Big\|^2  \nonumber \\
    & \quad + \,  \frac{1}{n}\Big\| \prod_{\ell=2}^t 
    \E\{  \su_\ell'(F_{\ell-1}) \}  \sum_{j=0}^{T} \kappa^\infty_{t+j} \alpha^{-j} \, (\tbu^{T+1-j} -\hbu^1) \Big\|^2  +  \nonumber \\
    & \quad  + \frac{1}{n} \Big\| \prod_{\ell=2}^t 
    \E\{  \su_\ell'(F_{\ell-1}) \} \sum_{i=T+1}^\infty \kappa_{t+i}^\infty \alpha^{-i} \hbu^1 \Big\|^2 \Bigg) 
    := 3(S_1 + S_2 + S_3).
    \label{eq:tbusum_hbu1_split}
\end{align}
Considering the second term $S_2$ first, we have
\begin{align}
    & \frac{1}{n} \Big\|  \sum_{j=0}^{T} \kappa^\infty_{t+j} \alpha^{-j} \, (\tbu^{T+1-j} -\hbu^1) \Big\|^2 \nonumber  \\
    & \le 2 \Bigg( \frac{1}{n} \Big\|\sum_{j=0}^{T} \kappa^\infty_{t+j} \alpha^{-j} \, (\tbu^{T+1-j} -\tbu^{T+1}) \Big\|^2 \, + \,  
    \Big( \sum_{j=0}^{T} \kappa^\infty_{t+j} \alpha^{-j}  \Big)^2 \frac{\| \tbu^{T+1} - \hbu^1 \|^2}{n} \Bigg).
\end{align}
By an argument similar to \eqref{eq:four_sigma_split1}-\eqref{eq:tuTj_tuT_conv}, we have
\beq
\lim_{T \to \infty} \lim_{n \to \infty} 
\frac{1}{n} \Big\|\sum_{j=0}^{T} \kappa^\infty_{t+j} \alpha^{-j} \, (\tbu^{T+1-j} -\tbu^{T+1}) \Big\|^2 =0  \quad \text{ almost surely}.
\eeq
Moreover, since $R(1/\alpha) < \infty$, from \eqref{eq:R} we have
\[ 
\lim_{T \to \infty} \sum_{j=0}^{T} \kappa^\infty_{t+j} \alpha^{-j} = \alpha^{t-1}\Big(R(1/\alpha) - \sum_{i=0}^{t-2} \kappa^\infty_{i+1} \alpha^{-i} \Big). \] 
Combining this with  \eqref{eq:ut1_hu1_conv}, we have that almost surely 
\beq
\lim_{T \to \infty} \lim_{n \to \infty}  S_2 =0.
\label{eq:S2_lim}
\eeq
Next consider $S_3$. Since the series $\sum_{j=0}^{\infty} \kappa^\infty_{t+j} \alpha^{-j}$ converges, $ \lim_{T \to 0} \sum_{i=T+1}^\infty \kappa_{t+i}^\infty \alpha^{-i} = 0$. Furthermore, by \eqref{eq:hu1_lim}, $\| \hbu^1 \|^2/n$ converges almost surely to a finite value. Therefore
\beq
\lim_{T \to \infty} \lim_{n \to \infty}  S_3 =0.
\label{eq:S3_lim}
\eeq
Finally, we consider the term $S_1$ in \eqref{eq:tbusum_hbu1_split}. We have
\begin{align}
    S_1 & \leq  2\Big( \prod_{\ell=2}^t \< \su_\ell'(\tbf^{T+\ell-1})\> \Big)^2  \frac{1}{n} \Big\|  \sum_{j=0}^{T}   (\kappa_{t+j} - \kappa^\infty_{t+j}) \alpha^{-j} \, \tbu^{T+1-j}  \Big\|^2 \nonumber  \\
    &   \quad   + \, 2 \Big( \prod_{\ell=2}^t \< \su_\ell'(\tbf^{T+\ell-1})\>
    - \prod_{\ell=2}^t \E\{  \su_\ell'(F_{\ell-1}) \} \Big)^2
   \frac{1}{n} \Big\| \sum_{j=0}^{T} \kappa^\infty_{t+j} \alpha^{-j} \, \tbu^{T+1-j} \Big \|^2.
\end{align}
Proposition \ref{prop:tilSE} implies that for $\ell \in [2,t]$, the empirical distribution of $\tbf^{T+\ell-1}$ converges almost surely in Wasserstein-2 distance to the law of $\tF_{T+\ell-1}$, which converges in distribution to $F_{\ell-1}$
(due to \eqref{eq:fake_true_SEconv}).
Therefore, applying Lemma \ref{lem:lipderiv} twice (as in \eqref{eq:tsb_tl_nlim}-\eqref{eq:tsb_tl_lim}) we  almost surely have
\begin{align}
     \prod_{\ell=2}^t \< \su_\ell'(\tbf_{T+\ell-1})\> =  
     \prod_{\ell=2}^t \E\{  \su_\ell'(F_{\ell-1}) \}.
     \label{eq:prod_u'_Fell}
\end{align}
Next, we have already shown that $\lim_{T \to \infty} \lim_{n \to\infty} \frac{1}{n} \|  \sum_{j=0}^{T}   (\kappa_{t+j} - \kappa^\infty_{t+j}) \alpha^{-j} \, \tbu^{T+1-j}  \|^2 =0$ almost surely. (See \eqref{eq:kappj_kappj_inf}-\eqref{eq:uiuj_inner_prod} and the subsequent argument.)
This, together with \eqref{eq:prod_u'_Fell} implies that that $\lim_{T \to \infty} \lim_{n \to \infty} S_1=0$ almost surely. Thus, using \eqref{eq:S2_lim} and \eqref{eq:S3_lim} in \eqref{eq:tbusum_hbu1_split}, 
we have 
\beq
\lim_{T\to\infty} \lim_{n \to \infty}  \frac{1}{n} \Big\|  \sum_{i=1}^{T+1} \tsb_{T+t,i} \tbu^i  - \bar{\sb}_{t,1} \hbu^1\Big\|^2 =0 \ \text{ almost surely}.
\label{eq:tbusum_hbu1_zero}
\eeq

Using \eqref{eq:XuTthut}, \eqref{eq:tsbtu_barsbhu}, and \eqref{eq:tbusum_hbu1_split}   in \eqref{eq:fTt_ft_bnd}, we conclude
\beq
\lim_{T\to\infty} \lim_{n \to \infty} \frac{1}{n} \| \tbf^{T+t} - \hbf^t \|^2 =0 \quad \text{ almost surely }. 
\label{eq:fTt_ft_lim}
\eeq
Since $\tbu^{T+t+1}= \su_{t+1}( \tbf^{T+t})$ and $\hbu^{t+1} = \su_{t+1}(\hbf^{t})$, with $\su_{t+1}$ Lipschitz, \eqref{eq:fTt_ft_lim} implies that
\beq
\lim_{T\to\infty} \lim_{n \to \infty} \frac{1}{n} \| \tbu^{T+t+1} - \hbu^{t+1} \|^2 =0 \quad \text{ almost surely }. 
\eeq
Using the arguments in \eqref{eq:tbu_triangle}-\eqref{eq:hu2_lim}, we also have almost surely:
\beq
\begin{split}
&\lim_{T\to\infty} \lim_{n \to \infty} \frac{\| \tbf^{t+T} \|^2}{n} =\lim_{n \to \infty} \frac{\| \hbf^{t} \|^2}{n} = \E\{ F_t^2 \},\\
&\lim_{T\to\infty} \lim_{n \to \infty} \frac{\| \tbu^{T+t+1} \|^2}{n} =\lim_{n \to \infty} \frac{\| \hbu^{t+1} \|^2}{n} =  \E\{ \su_{t+1}(F_t)^2 \}.
\end{split}
\eeq
Using these together with the induction hypothesis \eqref{eq:ind_hyp} in \eqref{eq:PL2_psi_bnd} completes the proof that 
\beq
\begin{split}
& \left\vert \frac{1}{n}  \sum_{i=1}^n \psi (u^*_{i},  \tu^{T+1}_i, \ldots, \tu^{T+t+1}_i, \tf^{T+1}_i, \ldots \tf^{T+t}_i)
\, - \, 
\frac{1}{n}  \sum_{i=1}^n \psi (u^*_{i}, \hu^1_i, \ldots, \hu^{t+1}_i, \hf^{1}_i, \ldots \hf^{t}_i)  \right\vert \\
& =0 \quad  \text{ almost surely}.
\end{split}
\eeq
\end{proof}

 \subsection{Proof of Theorem \ref{thm:square}} \label{subsec:app_thm_prof_sq}
We will first use Lemma \ref{lem:sec_phase_sq} to prove that the state evolution result holds for the iterates of the modified AMP, i.e., for $\psi \in \PL(2)$:
\begin{equation}
    \lim_{n \to \infty} \frac{1}{n} \sum_{i=1}^n \psi(u_i^*, \hu_i^1, \ldots, \hu^{t+1}_i, \hf^1_i, \ldots, \hf^t_i)
    = \E\{ \psi(U_*, U_1, \ldots, U_{t+1}, F_1, \ldots, F_t) \}.
    \label{eq:hatAMP_conv}
\end{equation}
Using the triangle inequality, for $T >0$ we have the bound
\begin{equation}
    \begin{split}
      &   \abs{\frac{1}{n} \sum_{i=1}^n \psi(u_i^*, \hu_i^1, \ldots, \hu^{t+1}_i, \hf^1_i, \ldots, \hf^t_i) - \E\{ \psi(U_*, U_1, \ldots, U_{t+1}, F_1, \ldots, F_t) \} } \\
    & \leq \abs{\frac{1}{n} \sum_{i=1}^n \psi(u_i^*, \hu_i^1, \ldots, \hu^{t+1}_i, \hf^1_i, \ldots, \hf^t_i) - 
    \frac{1}{n} \sum_{i=1}^n \psi(u_i^*, \tu_i^{T+1}, \ldots, \tu^{T+t+1}_i, \tf^{T+1}_i, \ldots, \tf^{T+t}_i)}  \\
    &  + \bigg|\frac{1}{n} \sum_{i=1}^n \psi(u_i^*, \tu_{i}^{T+1}, \ldots, \tu^{T+t+1}_i, \tf^{T+1}_i, \ldots, \tf^{T+t}_i) \\
    &\hspace{10em}- \E\{ \psi(U_*, \tU_{T+1}, \ldots, \tU_{T+t+1}, \tF_{T+1}, \ldots, \tF_{T+t}) \}\bigg| \\ 
    &  + \abs{\E\{ \psi(U_*, \tU_{T+1}, \ldots, \tU_{T+t+1}, \tF_{T+1}, \ldots, \tF_{T+t}) \} \, -  \, \E\{ \psi(U_*, U_1, \ldots, U_{t+1}, F_1, \ldots, F_t) \}} \\
    & := S_1 + S_2 +S_3.
        \end{split}
        \label{eq:S1S2S3_sq}
\end{equation}
First consider $S_3$. From \eqref{eq:fake_true_SEconv}, 
$(U_*, \tU_{T+1}, \ldots, \tU_{T+t+1}, \tF_{T+1}, \ldots, \tF_{T+t}))$ converges in distribution to the the law of $(U_*, U_1, \ldots, \su_{t+1}, F_1, \ldots, F_t)$ as $T \to \infty$. By Skorokhod's representation theorem \cite{billingsley2008probability}, to compute the expectations in $S_3$, we can take the sequence of random vectors $(U_*, \tU_{T+1}, \ldots, \tU_{T+t+1}, \tF_{T+1}, \ldots, \tF_{T+t})$ to be such that they belong to the same  probability space and converge almost surely to $(U_*, U_1, \ldots, U_{t+1}, F_1, \ldots, F_t)$ as $T \to \infty$.  
Then, using the pseudo-Lipschitz property of $\psi$ and using Cauchy-Schwarz inequality (twice, as in \eqref{eq:PL2_psi_bnd}), we obtain 
\begin{equation}
\begin{split}
        S_3 &  \le 2C(t+2)\left( 2 + \sum_{\ell=1}^{t+1} (\E\{ \tU_{T+\ell}^2 \} + \E\{ U_{\ell}^2 \} ) +  \sum_{\ell=1}^{t} (\E\{ \tF_{T+\ell}^2 \} + 
        \E\{ F_{\ell}^2 \} ) \right)^{1/2} \\
    & \quad \cdot  \left( \sum_{\ell=1}^{t+1} \E \big\{ (\tU_{T+\ell} - U_{\ell})^2 \big\} \,  + \,  \sum_{\ell=1}^{t} \E \big\{ (\tF_{T+\ell} - F_{\ell})^2 \big\}   \right)^{1/2}.
\end{split}
\label{eq:S3_sq_bnd}
\end{equation}
From Lemma \ref{lem:sec_phase_sq}, we have $ \lim_{T \to \infty} \E \{ \tF_{T+\ell}^2 \} =  \E\{ F_{\ell}^2 \}$ and $\lim_{T \to \infty} \E \{ \tU_{T+\ell}^2 \} =  \E\{ U_{\ell}^2 \}$. Moreover, since for each $\ell$, 
\begin{align}
    \E \big\{ (\tF_{T+\ell} - F_{\ell})^2 \big\} \leq 2 \E \big\{ \tF_{T+\ell}^2 \} + 2 \E \big\{ F_{\ell}^2 \} < \infty \quad \forall \,T,
\end{align}
by dominated convergence we have $\lim_{T \to \infty} \E \{ (\tU_{T+\ell} - U_{\ell})^2 \} = \lim_{T \to \infty} \E \{ (\tF_{T+\ell} - F_{\ell})^2 \} =0$.  Therefore $\lim_{T \to \infty} S_3=0$. Furthermore,  by Lemma \ref{lem:sec_phase_sq} and Proposition \ref{prop:tilSE}, we also have $\lim_{T \infty} \lim_{n \to \infty} S_1 = \lim_{T \to \infty} \lim_{n \to \infty} S_2=0$ almost surely. This proves the state evolution result \eqref{eq:hatAMP_conv} for the modified AMP. 

We now prove the result of Theorem \ref{thm:square} by showing that for $t \ge 1$, almost surely:
\begin{align}
    & \lim_{n \to \infty} \abs{\frac{1}{n} \sum_{i=1}^n \psi(u_i^*, u_i^1, \ldots, u^{t+1}_i, f^1_i, \ldots, f^t_i) - 
\frac{1}{n} \sum_{i=1}^n \psi(u_i^*, \hu_i^1, \ldots, \hu^{t+1}_i, \hf^1_i, \ldots, \hf^t_i)} =0, \label{eq:lim_bubf_hbuhbf_joint} \\
 & \lim_{n \to \infty} \frac{\| \bdff^{t} - \hbf^{t} \|^2}{n}=0, \quad  \lim_{n \to \infty} \frac{\| \bu^{t+1} - \hbu^{t+1} \|^2}{n}=0.
 \label{eq:lim_bubf_hbuhbf_sep}
\end{align}
The proof of \eqref{eq:lim_bubf_hbuhbf_joint}-\eqref{eq:lim_bubf_hbuhbf_sep} is by induction and similar to that of \eqref{eq:fake_modified_conv}. Noting that $\bu^1=\hbu^{1} = \sqrt{n} \bu_{\rm PCA}$, assume towards induction that \eqref{eq:lim_bubf_hbuhbf_joint}-\eqref{eq:lim_bubf_hbuhbf_sep} hold with $t$ replaced by $t-1$. Since $\psi \in \PL(2)$, by the same arguments as in \eqref{eq:PL2_psi_bnd} we have 
\begin{align}
& 
\left\vert \frac{1}{n}  \sum_{i=1}^n \psi (u^*_{i},  u^{1}_i, \ldots, u^{t+1}_i, f^1_i, \ldots f^{t}_i)
\, - \, 
\frac{1}{n}  \sum_{i=1}^n \psi (u^*_{i}, \hu^1_i, \ldots, \hu^{t+1}_i, \hf^1_i, \ldots \hf^{t}_i)  \right\vert  \nonumber \\
  & \le 2C(t+2) \left[ 1 + \frac{\| \bu^* \|^2}{n}  +  
 \sum_{\ell=1}^{t+1} \Big( \frac{\| \bu^{\ell} \|^2}{n} +
 \frac{\| \hbu^{\ell} \|^2}{n} \Big) + 
 \sum_{\ell=1}^{t} \Big( \frac{\| \bdff^{\ell} \|^2}{n}
 + \frac{\| \hbf^{\ell} \|^2}{n} \Big) \right]^{\frac{1}{2}} \nonumber  \\ 
 &  \quad \cdot \left( \frac{\| \hbu^{1} - \bu^1 \|^2}{n}+ \ldots + \frac{\| \bu^{t+1} - \hbu^{t+1} \|^2}{n}  +\frac{\| \bdff^{1} - \hbf^1  \|^2}{n} + \ldots  + \frac{\| \bdff^{t} - \hbf^t  \|^2}{n}\right)^{\frac{1}{2}}. \label{eq:PL2_uf_hbuf_bnd}
\end{align}
 Using the definitions of $\bdff^{t}$ and $\hbf^t$ in \eqref{eq:AMP} and \eqref{eq:modAMP}, and applying the Cauchy-Schwarz inequality, we have
\begin{align}
   &  \frac{1}{n}\| \bdff^{t} - \hbf^{t} \|^2 \leq 
   \frac{(t+1)}{n}\Big( \| \bX (\bu^{t} - \hbu^t) \|^2 + 
    \sum_{\ell=1}^t \| \sb_{t,\ell} \bu^{\ell} - \bar{\sb}_{t,\ell} \hbu^\ell \|^2 \Big) \nonumber \\
    & \leq (t+1)\Big( \| \bX \|^2_{\rm op} \,  \frac{1}{n} \|\bu^{t} - \hbu^t \|^2 +  \sum_{\ell=1}^t 
    \frac{2}{n} \| \sb_{t,\ell} \bu^{\ell} - \bar{\sb}_{t,\ell} \bu^{\ell} \|^2 
    + \frac{2}{n} \| \bar{\sb}_{t,\ell} \bu^{\ell} - \bar{\sb}_{t,\ell} \hbu^{\ell} \|^2 \Big).
   \label{eq:ft_hft_bnd}
\end{align}
Recall that $\| \bX \|_{\rm op}$ converges almost surely to $|G^{-1}(1/\alpha)|$ and by the induction hypothesis, $\frac{1}{n} \|\bu^{\ell} - \hbu^\ell \|^2 \to 0$, for $\ell \in [1, t]$. Next, we note that $\sb_{t, t}= \kappa_1 \to \kappa_1^\infty = \bar{\sb}_{t,t}$ as $n \to \infty$. For $\ell \in [2, t-1]$,  we have
 $   \sb_{t, \ell}= \kappa_{t-\ell+1} \prod_{i=\ell+1}^t \<  \su'_{i}(\bdff^{i-1})\> $.
The induction hypothesis \eqref{eq:lim_bubf_hbuhbf_joint} implies that the empirical distribution of $\bdff^{i-1}$ converges almost surely in Wasserstein-2 distance to the law of $F_{i-1}$ for $i \in [1, t]$. Therefore, applying Lemma \ref{lem:lipderiv} we almost surely have
\begin{align}
    \lim_{n \to  \infty} \sb_{t,\ell} = 
    \kappa_{t-\ell+1}^\infty \prod_{i=\ell+1}^t \E\{ \su_i'(F_{i-1}) \}.
\end{align}
This shows that $\lim_{n \to \infty} \frac{1}{n}\| \bdff^{t} - \hbf^{t} \|^2=0$ almost surely. Since $\bu^{t+1} = \su_{t+1}(\bdff^t)$ with $\su_{t+1}$ Lipschitz, we also have $\lim_{n \to \infty} \frac{1}{n}\| \bu^{t+1} - \hbu^{t+1} \|^2=0$ almost surely. Moreover using a triangle inequality argument similar to \eqref{eq:tbu_triangle}, for $\ell \in [1, t]$, we almost surely have 
\beq
\lim_{n \to \infty} \frac{\| \bdff^\ell \|^2}{n} = \lim_{n \to \infty} \frac{\| \hbf^\ell \|^2}{n} = \E\{ F_\ell^2  \}, \qquad 
\lim_{n \to \infty} \frac{\| \bu^{\ell+1} \|^2}{n} = \lim_{n \to \infty} \frac{\| \hbu^{\ell+1} \|^2}{n} = \E\{ \su_{\ell+1}(F_\ell)^2  \}.
\label{eq:fl_hfl_ul_hul_sq}
\eeq
Using this in  \eqref{eq:PL2_uf_hbuf_bnd}, we conclude that
\begin{equation}
  \lim_{n \to \infty}  \left\vert \frac{1}{n}  \sum_{i=1}^n \psi (u^*_{i},  u^{1}_i, \ldots, u^{t+1}_i, f^1_i, \ldots f^{t}_i)
\, - \, 
\frac{1}{n}  \sum_{i=1}^n \psi (u^*_{i}, \hu^1_i, \ldots, \hu^{t+1}_i, \hf^1_i, \ldots \hf^{t}_i)  \right\vert =0, 
\end{equation}
which combined with \eqref{eq:hatAMP_conv} completes the proof of the theorem. \qed

\section{Proof of Theorem \ref{thm:rect}}\label{app:pfrect}

This appendix is organized as follows. In Appendix \ref{subsec:proof_sketch_rect}, we present the artificial AMP for the rectangular model \eqref{eq:defrect}, and  provide a sketch of the proof. In Appendix \ref{app:serect}, we present the state evolution recursion associated with the artificial AMP iteration.
In Appendix \ref{app:fpsefirstrect}, we prove that the first phase of this state evolution admits a unique fixed point. Using this fact, in Appendix \ref{app:firstestrect}, we prove that the artificial AMP iterate at the end of the first phase approaches the left singular vector produced by PCA. Then, in Appendix  \ref{sec:app_sec_phase_analysis_rect}, we show that \emph{(i)} the iterates in the second phase of the artificial AMP are close to the true AMP iterates, and \emph{(ii)} the related state evolutions also remain close. Finally, in Appendix \ref{subsec:app_thm_prof_rect}, we give the proof of Theorem \ref{thm:rect}.

\subsection{Proof Sketch} \label{subsec:proof_sketch_rect}

\paragraph{First phase.} We consider the following artificial AMP algorithm. We initialize with 
\begin{equation}
    \begin{split}
        \tilde{\bu}^1 = \sqrt{\Delta_{\rm PCA}} \bu^* + \sqrt{1-\Delta_{\rm PCA}}\bn,  \ \quad \  \tilde{\bg}^1= \bX^{\sT} \tbu^{1}, \ \quad \  \tbv^1 = 
        \frac{\gamma}{\alpha} \tilde{\bg}^1, \ \quad \ \tbf^1 = \bX \tbv^1 - \kappa_2 \frac{\gamma}{\alpha} \tbu^1. 
    \end{split}
    \label{eq:AMPfake_init_rect}
\end{equation} 
Here, $\bn$ has i.i.d. standard Gaussian components and $\Delta_{\rm PCA}$ is the (limiting) normalized  squared correlation of the left PCA estimate, given in \eqref{eq:DeltaPCA_def}. As in the square case, the initialization of the artificial AMP is impractical. However, this is not a problem, as the artificial AMP is only used as a proof technique.
Then, for $2 \le t\le T+1$, the artificial AMP iterates are
\begin{equation}\label{eq:AMPfake1rect}
\begin{split}
   & \tilde{\bu}^{t} = \frac{1}{\alpha}\tilde{\bdff}^{t-1}, \qquad 
      \tilde{\bg}^t = \bX^\sT \tilde{\bu}^t - \sum_{i=1}^{t-1} \tilde{\sb}_{t,i}\tilde{\bv}^i, \\
   & \tilde{\bv}^{t} = \frac{\gamma}{\alpha}\tilde{\bg}^{t}, \qquad \tilde{\bdff}^t = \bX \tilde{\bv}^t - \sum_{i=1}^t \tilde{\sa}_{t,i}\tilde{\bu}^i,
\end{split}
\end{equation}
where $\tilde{\sb}_{t,t-j}=\kappa_{2j}\frac{\gamma}{\alpha}\left(\frac{\gamma}{\alpha^2}\right)^{j-1}$ for $j\in [1, t-1]$, and $\tilde{\sa}_{t,t-j}=\kappa_{2(j+1)}\frac{\gamma}{\alpha}\left(\frac{\gamma}{\alpha^2}\right)^{j}$ for $j\in [0, t-1]$. We claim that, for sufficiently large $T$, $\tilde{\bu}^{T+1}$ approaches the left PCA estimate $\bu_{\rm PCA}$, that is,
$ \lim_{T\to\infty}\lim_{n\to\infty} \frac{1}{\sqrt{m}} \|\tilde{\bu}^{T+1}-\sqrt{m}\bu_{\rm PCA}\| \, = \, 0.  $
This result is proved in Lemma \ref{lemma:convrect} in Appendix \ref{app:firstestrect}. Here we give a heuristic sanity check. Assume that the iterates $\tilde{\bu}^{T+1}$ and $\tilde{\bv}^{T+1}$ converge to the limits $\tilde{\bu}^\infty$ and $\tilde{\bv}^\infty$, respectively, in the sense that 
$ \lim_{T \to \infty} \lim_{n\to \infty}\frac{1}{\sqrt{m}}\|\tilde{\bu}^{T+1} - \tilde{\bu}^\infty \| =0$ and $ \lim_{T \to \infty} \lim_{n\to \infty}\frac{1}{\sqrt{n}}\|\tilde{\bv}^{T+1} - \tilde{\bv}^\infty \| =0$. Then, from \eqref{eq:AMPfake1rect}, the limits  $\tilde{\bu}^\infty$ and $\tilde{\bv}^\infty$ satisfy
\begin{equation}\label{eq:heur}
\begin{split}
    \tilde{\bu}^\infty &= \frac{1}{\alpha} \bX \tilde{\bv}^\infty - \sum_{i=1}^\infty\kappa_{2i} \left(\frac{\gamma}{\alpha^2}\right)^{i}\tilde{\bu}^\infty ,\\
    \tilde{\bv}^\infty & = \frac{\gamma}{\alpha}\bX^\sT\tilde{\bu}^\infty-\gamma\sum_{i=1}^\infty\kappa_{2i} \left(\frac{\gamma}{\alpha^2}\right)^{i} \tilde{\bv}^\infty.
\end{split}
\end{equation}
By using \eqref{eq:Rrect}, we can re-write \eqref{eq:heur} as
\begin{equation}
\begin{split}
    \left(1+R\left(\frac{\gamma}{\alpha^2}\right)\right)\tilde{\bu}^\infty &= \frac{1}{\alpha} \bX \tilde{\bv}^\infty ,\\
    \left(1+\gamma R\left(\frac{\gamma}{\alpha^2}\right)\right)\tilde{\bv}^\infty & = \frac{\gamma}{\alpha}\bX^\sT\tilde{\bu}^\infty,
\end{split}
\end{equation}
which leads to
\begin{equation}
\left(1+\gamma R\left(\frac{\gamma}{\alpha^2}\right)\right)    \left(1+R\left(\frac{\gamma}{\alpha^2}\right)\right)\tilde{\bu}^\infty=\frac{\gamma}{\alpha^2}\bX\bX^\sT\tilde{\bu}^\infty.
\end{equation}
As a result, $\tilde{\bu}^\infty$ is an eigenvector of $\bX\bX^\sT$. Furthermore, by using \eqref{eq:RD}, the eigenvalue $$\frac{\alpha^2}{\gamma}\left(1+\gamma R\left(\frac{\gamma}{\alpha^2}\right)\right)    \left(1+R\left(\frac{\gamma}{\alpha^2}\right)\right)$$ can be re-written as $$\left(D^{-1}\left(\frac{\gamma}{\alpha^2}\right)\right)^2.$$ Recall that, for $\tilde{\alpha}>\tilde{\alpha}_{\rm s}$, $\bX$ exhibits a spectral gap and its largest singular value converges to $D^{-1}\left(\frac{\gamma}{\alpha^2}\right)$. Thus, $\bu^\infty$ must be aligned with the left principal singular vector of $\bX$, as desired. 

A key step in our analysis is to show that, as $T\to\infty$, the state evolution of the artificial AMP in the first phase has a unique fixed point. This is established in Lemma \ref{lem:SE_FP_phase1rect},  proved in Appendix \ref{app:fpsefirstrect}. As for the square case, we follow the approach of \cite[Section 7]{fan2020approximate}. The crucial difference with \cite{fan2020approximate} is that we provide a result for all $\tilde{\alpha}>\tilde{\alpha}_{\rm s}$, while the analysis of \cite{fan2020approximate} requires that $\tilde{\alpha}$ is sufficiently large. To achieve this goal, we exploit the expression \eqref{eq:DeltaPCA_def} of the limit correlation between $\bu_{\rm PCA}$ and $\bu^*$, and  show that, as soon as the left PCA estimate is correlated with the signal $\bu^*$, state evolution is close to a limit map which is a contraction. For this approach to work, we need the rectangular free cumulants to be non-negative. 

\paragraph{Second phase.} The second phase is designed so that the iterates $(\tbg^{T+k}, \tbf^{T+k})$ are close to $(\bg^{k}, \bdff^{k})$, for $k \ge 2$. For $t \ge (T+2)$, the artificial AMP computes
\begin{equation}
    \begin{split}
            & \tbu^{t} = \su_{t-T}(\tbf^{t-1}), \qquad \tilde{\bg}^{t} = \bX^{\sT} \tbu^{t} - \sum_{i=1}^{t-1} \tsb_{t, i} \tbv^{i}, \\
            & \tbv^{t} = \sv_{t-T}(\tbg^t), \qquad  \tbf^t = \bX \tbv^t - \sum_{i=1}^t \tsa_{t,i} \tbu^t.
     \end{split}
    \label{eq:AMPfake2rect}
\end{equation}
Here, the functions $\{ v_k, u_{k} \}_{k \ge 2}$ are the ones used in the true AMP \eqref{eq:AMP_rectg}. Additionally, letting $u_1(x) = x/\alpha$ and $v_1(x) = \gamma x /\alpha$,  the coefficients $\{ \tsa_{t,i} \}$ and $\{ \tsb_{t,i} \}$ are given by:
\begin{align}
    & \tsa_{t, t-j} = \kappa_{2(j+1)}\< \sv'_{t-T}(\tbg^t) \> 
    \left( \frac{\gamma}{\alpha^2}\right)^{(T+1- (t-j))_+}
    \hspace{-10pt}
    \prod_{i=\max\{ t-j+1, \, T+2 \}} ^t \< \su'_{i-T}(\tbf^{i-1}) \> \<\sv'_{i-1-T}(\tbg^{i-1}) \>, \nonumber  \\
    & \hspace{3.8in} (t-j) \in [1, t], \label{eq:tsa_def} \\
    & \tsb_{t,t-j} = \gamma \kappa_{2j} \< \su'_{t-T}(\tbf^{t-1}) \> 
    \left( \frac{\gamma}{\alpha^2} \right)^{(T- (t-j))_+} 
    \prod_{i=\max\{t-j+1,\,  T+1\}}^{t-1} \< \sv'_{i-T}(\tbg^{i}) \>
    \< \su'_{i-T}(\tbf^{i-1}) \>, \nonumber \\
    & \hspace{3.6in} (t-j) \in [1, t-1].
\end{align}
Since the artificial AMP is initialized with $\tbu^1$ that is correlated with $\bu^*$ and independent of the noise matrix $\bW$, a state evolution result for it can be obtained directly from \cite[Theorem 1.4]{fan2020approximate}. We then show in Lemma \ref{lem:sec_phase_rect} in Appendix \ref{sec:app_sec_phase_analysis_rect} that  the second phase iterates in \eqref{eq:AMPfake2rect} are close to the true AMP iterates in \eqref{eq:AMP_rectg}, and that their state evolution parameters are also close. This result yields Theorem \ref{thm:rect}, as shown in Appendix \ref{subsec:app_thm_prof_rect}.

\subsection{State Evolution for the Artificial AMP}\label{app:serect}

Consider the artificial AMP iteration defined in \eqref{eq:AMPfake1rect} and \eqref{eq:AMPfake2rect}, with initialization 
$\tilde{\bu}^1 = \sqrt{\Delta_{\rm PCA}} \bu^* + \sqrt{1-\Delta_{\rm PCA}}\bn$. Then, its associated state evolution recursion is expressed in terms of a sequence of mean vectors $\tilde{\bmu}_K=(\tilde{\mu}_t)_{t\in [0, K]}$, $\tilde{\bnu}_K=(\tilde{\nu}_t)_{t\in [1, K]}$ and covariance matrices $\tilde{\bSigma}_{K}=(\tilde{\sigma}_{s, t})_{s, t\in [0, K]}$, $\tilde{\bOmega}_{K}=(\tilde{\Omega}_{s, t})_{s, t\in [1, K]}$ defined recursively as follows. We initialize with 
\begin{equation}\label{eq:SEfakeinitrect}
\begin{split}
   \tmu_0 = \alpha \sqrt{\Delta_{\rm PCA}}, \qquad  \tilde{\sigma}_{0, 0}&=\alpha^2(1-\Delta_{\rm PCA}), \quad \tilde{\sigma}_{0, t}= \tilde{\sigma}_{t, 0}=0, \quad \mbox{ for }t\ge 1.
\end{split}
\end{equation}
Given $\tilde{\bmu}_K, \tilde{\bSigma}_K, \tilde{\bnu}_K, \tilde{\bOmega}_K$, let
\begin{align}
  &  (\tF_0, \ldots, \tF_K) = \tilde{\bmu}_K U_* +  (\tY_0, \ldots, \tY_K), \text{ where }    (\tY_0, \ldots, \tY_K) \sim \normal(\bzero, \tilde{\bSigma}_K),    \label{eq:F0FKtilde_rect}\\
  & \tU_t = \tsu_t(\tF_{t-1}) \ \text{ where } \  \tsu_t(x) = \begin{cases}
  x/\alpha, & 1 \leq t \leq (T+1), \\
  \su_{t-T}(x), & t \ge T+2,\end{cases}  \label{eq:Utilde_rect}  \\
    &  ( \tG_1, \ldots, \tG_K) = \tilde{\bnu}_K V_* +  (\tZ_1, \ldots, \tZ_K), \text{ where }    (\tZ_1, \ldots, \tZ_K) \sim \normal(\bzero, \tilde{\bOmega}_K),   \label{eq:G0GKtilde_rect}\\
     & \tV_t = \tilde{v}_t(\tG_t) \ \text{ where } \ \tilde{v}_t(x) = 
     \begin{cases}
  \gamma x/\alpha, & 1 \leq t \leq T+1, \\
  \sv_{t-T}(x), & t \ge T+2.\end{cases} \label{eq:Vtilde_rect}
\end{align}
Given $\tilde{\bmu}_K$ and $\tilde{\bSigma}_K$, the entries of $\tilde{\bnu}_{K+1}$ are given by $\tilde{\nu}_t = \alpha \E\{ \tilde{U}_t U_* \}$ (for $t \in [1, K+1]$), and the entries of $\tilde{\bOmega}_{K+1}$ (for $s+1,t+1 \in[1, K+1]$) are given by
\begin{equation}
   \begin{split}
      &   \tilde{\omega}_{s+1, t+1}  = \gamma \sum_{j=0}^s \sum_{k=0}^t \hspace{-1pt}
 \Big( \hspace{-5pt} \prod_{i=s-j+2}^{s+1}  \hspace{-6pt} \E\{ \tsu_i'(\tF_{i-1})\} \E\{ \tsv_{i-1}'(\tG_{i-1})\} \Big) 
  \Big( \hspace{-5pt} \prod_{i=t-k+2}^{t+1} \hspace{-6pt}   \E\{ \tsu_i'(\tF_{i-1})\} \E\{ \tsv_{i-1}'(\tG_{i-1})\}\Big)   \\
   &\ \Big[ \kappa_{2(j+ k+1)}^\infty \E\{ \tU_{s+1-j} \tU_{t+1-k} \}  +  \kappa_{2(j+k+2)}^\infty \E\{ \tsu'_{s+1-j}(\tF_{s-j}) \} \E\{ \tsu'_{t+1-k}(\tF_{t-k}) \} 
   \E\{ \tV_{s-j} \tV_{t-k} \} \Big].
 \end{split}
 \label{eq:tomega_update}
\end{equation}
(We use the convention that $\tV_0=0$.)
Next,  given $\tilde{\bnu}_{K+1}$ and $\tilde{\bOmega}_{K+1}$ for some $K \ge 1$, the entries of $\tilde{\bmu}_{K+1}$ are given by $\tilde{\mu}_t = \frac{\alpha}{\gamma} \E\{ \tV_t V_* \}$ (for $t \in [0, K+1]$), and the entries of $\tilde{\bSigma}_{K+1}$ (for $s,t \in [0, K+1]$) are given by
\begin{align}
    \tsigma_{s,t} & = \sum_{j=0}^{s-1} \sum_{k=0}^{t-1}
    \Big( \prod_{i=s-j+1}^{s} \E\{ \tsu_i'(\tF_{i-1})\} \E\{ \tv_{i}'(\tG_{i})\}\Big) 
  \Big( \prod_{i=t-k+1}^{t} \E\{ \tsu_i'(\tF_{i-1})\} \E\{ \tv_{i}'(\tG_{i})\}\Big) \nonumber  \\
    &   \cdot \Big[ \kappa_{2(j+ k+1)}^\infty \E\{ \tV_{s-j} \tV_{t-k} \} +  \kappa_{2(j+k+2)}^\infty \E\{ \tsv'_{s-j}(\tG_{s-j}) \} \E\{ \tsv'_{t-k}(\tG_{t-k}) \}  \E\{ \tU_{s-j} \tU_{t-k} \} \Big].
    \label{eq:tsigma_update}
\end{align}

\begin{proposition}[State evolution for artificial AMP]
\label{prop:tilSErect}
Consider the setting of Theorem \ref{thm:rect}, the artificial AMP iteration described in  \eqref{eq:AMPfake1rect} and \eqref{eq:AMPfake2rect}, with initialization given by \eqref{eq:AMPfake_init_rect}, and the corresponding state evolution parameters defined in \eqref{eq:SEfakeinitrect}-\eqref{eq:tsigma_update}.

Then,
for $t \geq 1$ and any  \PL($2$) functions $\psi:\reals^{2t+2} \to\reals$ and $\varphi: \reals^{2t+1} \to \reals$, the following hold almost surely:
\begin{align}
\lim_{m \to \infty} \frac{1}{m} \sum_{i=1}^m 
 \psi (u^*_{i}, \tu^1_i, \ldots, \tu^{t+1}_i, \tf^1_i, \ldots \tf^{t}_i)
 = \E \left\{ \psi(U_*, \tU_1, \ldots, \tU_{t+1}, \tF_1, \ldots, \tF_t) \right\},  \\
 \lim_{n \to \infty} \frac{1}{n} \sum_{i=1}^n \varphi (v^*_{i}, \tv^1_i, \ldots, \tv^t_i, \tg^1_i, \ldots \tg^{t}_i)
 = \E \left\{ \varphi(V_*,\tV_1, \ldots, \tV_t, \tG_1, \ldots, \tG_t) \right\}.
\end{align}
\end{proposition}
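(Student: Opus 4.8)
The plan is to obtain this proposition as an immediate consequence of the state evolution theorem for rotationally invariant AMP on rectangular (bipartite) models, \cite[Theorem 1.4]{fan2020approximate} --- exactly as Proposition \ref{prop:tilSE} follows from \cite[Theorem 1.1]{fan2020approximate} in the square case. The crucial point is that, unlike the true AMP \eqref{eq:AMP_rect_init}--\eqref{eq:AMP_rectg}, the artificial AMP is initialized with $\tbu^1 = \sqrt{\Delta_{\rm PCA}}\,\bu^* + \sqrt{1-\Delta_{\rm PCA}}\,\bn$ (see \eqref{eq:AMPfake_init_rect}), which is independent of the noise matrix $\bW$ and whose empirical distribution converges in $W_2$ at all orders to $\sqrt{\Delta_{\rm PCA}}\,U_* + \sqrt{1-\Delta_{\rm PCA}}\,N$, with $N \sim \normal(0,1)$ independent of $U_*$. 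Hence the existing state evolution machinery applies verbatim, and all that remains is to (i) check that \eqref{eq:AMPfake1rect}--\eqref{eq:AMPfake2rect} is a legitimate instance of the class of algorithms covered by that theorem, and (ii) verify that the abstract state evolution recursion it produces specializes to \eqref{eq:SEfakeinitrect}--\eqref{eq:tsigma_update}.

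For (i), the denoisers $\tu_t, \tv_t$ of \eqref{eq:Utilde_rect}--\eqref{eq:Vtilde_rect} are continuously differentiable and Lipschitz: for $t \le T+1$ they are the linear maps $x \mapsto x/\alpha$ and $x \mapsto \gamma x/\alpha$, and for $t \ge T+2$ they are the functions $u_{t-T}, v_{t-T}$ of the true AMP, which satisfy these hypotheses by assumption; moreover each acts on a single previous iterate ($\tbf^{t-1}$ or $\tbg^t$), and the iteration alternates in the bipartite pattern $\tbu \to \tbg \to \tbv \to \tbf$ required by \cite{fan2020approximate}. Together with the model assumptions of Section \ref{sec:prel} (bi-rotationally invariant $\bW$ with compactly supported $\Lambda$, $m/n \to \gamma \in (0,1]$, finite $(2+\varepsilon)$-moments for $U_*, V_*$), this supplies the hypotheses of Theorem 1.4. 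One then checks that the memory coefficients $\tsa_{t,i}, \tsb_{t,i}$ in \eqref{eq:AMPfake1rect} and \eqref{eq:tsa_def} are exactly the Onsager corrections prescribed by that theorem for this choice of denoisers: those corrections are built from the rectangular free cumulants $\{\kappa_{2k}\}$ and running products of the empirical derivatives $\<\tu_i'(\tbf^{i-1})\>, \<\tv_i'(\tbg^i)\>$, and in the first phase these empirical derivatives equal the constants $1/\alpha$ and $\gamma/\alpha$, which produces precisely the powers of $\gamma/\alpha^2$ and the exponents $(T+1-(t-j))_+$ and $(T-(t-j))_+$ appearing there.

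For (ii), after invoking Theorem 1.4 one unfolds the abstract recursion by substituting the denoisers \eqref{eq:Utilde_rect}--\eqref{eq:Vtilde_rect}, using $\E\{\tu_i'(\tF_{i-1})\} = 1/\alpha$ and $\E\{\tv_i'(\tG_i)\} = \gamma/\alpha$ for $i \le T+1$, and the identity $\E\{\tF_0^2\} = \alpha^2$ (so that the initialization $\tmu_0 = \alpha \sqrt{\Delta_{\rm PCA}}$, $\tsigma_{0,0} = \alpha^2(1-\Delta_{\rm PCA})$ is consistent with the PCA correlation \eqref{eq:DeltaPCA_def} and with $\|\tbu^1\|^2/m \to 1$). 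This recovers \eqref{eq:SEfakeinitrect}--\eqref{eq:tsigma_update}, including the linear equation for $\tomega_{11}$ obtained by setting $s=t=0$ in \eqref{eq:tomega_update} and the convention $\tV_0 = 0$. The only laborious part of the argument is this last reconciliation of notations --- tracking the two sums over the depth indices $j,k$ and the boundary terms for $\tomega_{11}$ and $\tsigma_{0,0}$ --- but there is no conceptual obstacle: once the artificial AMP is recognized as a special case of the framework of \cite{fan2020approximate}, the proposition is a corollary, mirroring the square-case derivation that precedes Proposition \ref{prop:tilSE}.
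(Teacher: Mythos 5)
Your proposal is correct and follows exactly the route the paper takes: the paper's proof is the single observation that the proposition is an immediate consequence of Theorem 1.4 of \cite{fan2020approximate} because the artificial initialization $\tbu^1$ is independent of $\bW$, with the remaining work being the (routine) verification that the denoisers and Onsager coefficients of \eqref{eq:AMPfake1rect}--\eqref{eq:AMPfake2rect} instantiate that framework and that the abstract recursion specializes to \eqref{eq:SEfakeinitrect}--\eqref{eq:tsigma_update}. Your write-up simply makes explicit the checks the paper leaves implicit.
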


The proposition follows directly from Theorem 1.4 in \cite{fan2020approximate} since the initialization $\tilde{\bu}^1$ of the artificial AMP is independent of $\bW$.

\subsection{Fixed Point of State Evolution for the First Phase}\label{app:fpsefirstrect}

From \eqref{eq:SEfakeinitrect}-\eqref{eq:tsigma_update}, we note that the state evolution recursion for the first phase $(t \in [1, T+1])$ has the following form:
\begin{equation}\label{eq:SEfake1rect}
\begin{split}
    \tilde{\mu}_t &= \tilde{\nu}_t=\alpha \sqrt{\Delta_{\rm PCA}}, \quad \mbox{ for }t\in [1, T+1],\\
    \tilde{\sigma}_{s, t} &= \sum_{j=0}^{s-1} \sum_{k=0}^{t-1}  \left(\frac{\gamma}{\alpha^2}\right)^{j+k}\bigg(\kappa_{2(j+k+1)}^\infty\left(\frac{\gamma}{\alpha}\right)^2(\alpha^2\Delta_{\rm PCA}+\tilde{\omega}_{s-j, t-k})\\
    &\hspace{3em}+\kappa_{2(j+k+2)}^\infty\left(\frac{\gamma}{\alpha^2}\right)^2(\alpha^2\Delta_{\rm PCA}+\tilde{\sigma}_{s-j-1, t-k-1})\bigg), \quad \mbox{ for }s, t\in [1, T+1].\\
    \tilde{\omega}_{s, t} &= \gamma\sum_{j=0}^{s-1} \sum_{k=0}^{t-1}  \left(\frac{\gamma}{\alpha^2}\right)^{j+k}\bigg(\kappa_{2(j+k+1)}^\infty\frac{1}{\alpha^2}(\alpha^2\Delta_{\rm PCA}+\tilde{\sigma}_{s-j-1, t-k-1})\\
    &\hspace{3em}+\kappa_{2(j+k+2)}^\infty\left(\frac{\gamma}{\alpha^2}\right)^2(\alpha^2\Delta_{\rm PCA}+\tilde{\omega}_{s-j-1, t-k-1})\bigg), \quad \mbox{ for }s, t\in [1, T+1].
\end{split}
\end{equation}

In this section, we prove the following result characterizing the fixed point of state evolution for the first phase in the rectangular setting.

\begin{lemma}[Fixed point of state evolution for first phase -- Rectangular matrices] Consider the setting of Theorem \ref{thm:rect}, and the state evolution recursion for the first phase given by \eqref{eq:SEfake1rect}. Assume that $\kappa_{2i}^\infty\ge 0$ for all $i\ge 2$, and that $\tilde{\alpha}>\tilde{\alpha}_{\rm s}$. Pick any $\xi<1$ such that $\tilde{\alpha}\sqrt{\xi}>\tilde{\alpha}_{\rm s}$. Then, \begin{equation}
\begin{split}
    \lim_{T\to\infty}\max_{s, t\in [0, T]}&\xi^{\max(s, t)}|\tilde{\sigma}_{T+1-s, T+1-t}-a^*|=0,\\
\lim_{T\to\infty}\max_{s, t\in [0, T]}&\xi^{\max(s, t)}|\tilde{\omega}_{T+1-s, T+1-t}-b^*|=0,    
\end{split}
\end{equation}
where
\begin{equation}\label{eq:asbs}
\begin{split}
a^*&= \alpha^2(1-\Delta_{\rm PCA}),\\
b^*&= \frac{\Delta_{\PCA} \gamma\alpha^2 (x R'(x) - R(x))  \, + \, \gamma R'(x)}{1 + \gamma R(x) - \gamma x R'(x)},\  \text{ with }  \ x=\frac{\gamma}{\alpha^2}.
\end{split}
\end{equation}
\label{lem:SE_FP_phase1rect}
\end{lemma}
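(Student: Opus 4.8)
The plan is to mirror the strategy used in the square case (Lemmas \ref{lemma:image}--\ref{lemma:app} and the proof of Lemma \ref{lem:SE_FP_phase1}), but now working with the \emph{pair} of coupled recursions for $\tilde\sigma$ and $\tilde\omega$ in \eqref{eq:SEfake1rect}. Concretely, I would work in the product space $\mathcal{X} \times \mathcal{X}$ of pairs of infinite matrices indexed by the non-positive integers, equipped with the norm $\|(\bx,\by)\|_\xi = \max(\|\bx\|_\xi, \|\by\|_\xi)$ (with $\|\cdot\|_\xi$ as in \eqref{eq:norm}), and embed the pair $(\tilde\bSigma_{\bar T}, \tilde\bOmega_{\bar T})$ into it via the same coordinate shift used in the square case. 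The limit map $h = (h^\Sigma, h^\Omega)$ is obtained from \eqref{eq:SEfake1rect} by replacing the finite double sums $\sum_{j=0}^{s-1}\sum_{k=0}^{t-1}$ with $\sum_{j=0}^{\infty}\sum_{k=0}^{\infty}$ and shifting indices by one (so $\tilde\sigma_{s-j-1,t-k-1} \mapsto x_{s-j,t-k}$, $\tilde\omega_{s-j,t-k}\mapsto y_{s-j,t-k}$, etc.) exactly as in \eqref{eq:fixedmap}. The first step is then to verify the analogue of Lemma \ref{lemma:image}: there is a box $I^* = [-a^*, a^*] \times [-b^*, b^*]$ (or a single symmetric box containing both) that is invariant under $h$. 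This uses $\kappa_{2i}^\infty \ge 0$ for $i \ge 2$ to pass absolute values through the sums, the identities \eqref{eq:Rrect}--\eqref{eq:R2rect} to recognize the resulting constant as a value of $R$ and $R'$ at $\gamma/\alpha^2$, and the strict positivity $\Delta_{\PCA} > 0$ from \eqref{eq:DeltaPCA_def} to guarantee the relevant contraction-type inequality $\gamma x^2 R'(x)/\alpha^2 + \text{(something)} < 1$ holds strictly — the rectangular counterpart of \eqref{eq:Rtin}.

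The second step is to identify the fixed point. Plugging the constant candidate $x_{s,t} = a^*$, $y_{s,t} = b^*$ into $h$ collapses every double sum and produces, by \eqref{eq:Rrect} and \eqref{eq:R20rect}, a pair of \emph{linear} equations in $a^*$ and $b^*$ with coefficients built from $R(\gamma/\alpha^2)$, $R'(\gamma/\alpha^2)$, $\gamma$, and $\alpha$. I would solve this $2\times 2$ linear system and check that its solution is exactly the pair $(a^*, b^*)$ displayed in \eqref{eq:asbs}; the identity \eqref{eq:RD} relating the rectangular $R$-transform to the $D$-transform, together with the definition of $\Delta_{\PCA}$ and $\Gamma_{\PCA}$ in \eqref{eq:DeltaPCA_def}--\eqref{eq:GammaPCA_def}, should make the two displayed formulas fall out (the first, $a^* = \alpha^2(1-\Delta_{\PCA})$, is the direct analogue of Lemma \ref{lemma:fixed}; the formula for $b^*$ is the new ingredient and is where I expect to spend real algebra). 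The third step is the contraction estimate, the analogue of Lemma \ref{lemma:cont}: for any two pairs in $\mathcal{X}_{I^*}\times\mathcal{X}_{I^*}$, bounding $\|h(\bx,\by) - h(\bx',\by')\|_\xi$ uses $\xi < 1$ to trade the $\xi^{-\max(|s-j|,|t-k|)}$ factors for $\xi^{-\max(|s|,|t|)}\xi^{-(j+k)}$, and then the $\kappa^\infty_{2i}\ge 0$ hypothesis plus \eqref{eq:R1rect}--\eqref{eq:R2rect} to resum into an expression involving $R$ and $R'$ evaluated at $\gamma/(\xi\alpha^2)$; the Lipschitz constant is then shown to be $< 1$ because $\tilde\alpha\sqrt{\xi} > \tilde\alpha_{\rm s}$, by the same reasoning that gave \eqref{eq:Lip1}. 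One subtlety here is that $h^\Sigma$ depends on $\by$ and $h^\Omega$ depends on $\bx$, so the contraction bound is genuinely a bound on the coupled system; but since both cross-terms and self-terms carry the same $(\gamma/\alpha^2)^{j+k}$ weights and nonnegative cumulants, taking the max over the two components preserves the single scalar contraction factor.

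The fourth step is the approximation lemma (analogue of Lemma \ref{lemma:app}): the true finite recursion \eqref{eq:SEfake1rect} differs from $h$ only through (i) the index shift, which is harmless, and (ii) the truncation of the double sums at $j = s-1$, $k = t-1$ plus the boundary values $\tilde\sigma_{0,t} = \tilde\omega_{0,t} = 0$ for $t \ge 1$. Using the invariant box to bound the tails of $\sum_i (i+1)\kappa^\infty_{2(i+\cdot)}(\gamma/\alpha^2)^i$, which vanish as $\bar T \to \infty$ precisely because the series converges for $\gamma/\alpha^2 < 1/\tilde\alpha_{\rm s}^2$, one gets $\|(\tilde\bSigma_{\bar T}, \tilde\bOmega_{\bar T}) - h(\bx,\by)\|_\xi \le L \|(\tilde\bSigma_{\bar T - 1}, \tilde\bOmega_{\bar T-1}) - (\bx,\by)\|_\xi + F(\bar T)$ with $L<1$ and $F(\bar T)\to 0$. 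Finally, iterating $h$ and combining the contraction (so $(h)^{T_0}(\bx,\by)$ is within $\epsilon/2$ of the fixed point for $T_0$ large) with the approximation lemma and the triangle inequality — exactly as in the proof of Lemma \ref{lem:SE_FP_phase1} — gives $\limsup_{\bar T}\|(\tilde\bSigma_{\bar T}, \tilde\bOmega_{\bar T}) - (a^*,b^*)\|_\xi \le \epsilon$, and setting $\bar T = T+1$ yields the claim. The main obstacle I anticipate is not conceptual but bookkeeping: the $\tilde\omega$ recursion in \eqref{eq:SEfake1rect} has both $\kappa_{2(j+k+1)}$ and $\kappa_{2(j+k+2)}$ terms with different powers of $\gamma/\alpha^2$ and different index shifts in the two coupled variables, so matching everything to clean $R$/$R'$ expressions — and in particular verifying the exact closed form of $b^*$ — will require care; the contraction and invariance arguments themselves are routine once the nonnegativity of the $\kappa^\infty_{2i}$ is in hand.
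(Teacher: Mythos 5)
Your overall architecture (invariant box, constant fixed point solved from a $2\times 2$ linear system, Lipschitz estimate, approximation lemma, Banach iteration) is exactly the paper's, and your steps on the fixed-point identification and the approximation lemma match Lemmas \ref{lemma:fixedrect} and \ref{lemma:apprect}. The genuine gap is in your contraction step. You propose to work with the max norm $\|(\bx,\by)\|_\xi=\max(\|\bx\|_\xi,\|\by\|_\xi)$ and assert that ``taking the max over the two components preserves the single scalar contraction factor.'' It does not. Writing $\tilde x=\gamma/(\xi\alpha^2)$, the individual Lipschitz bounds (the paper's Lemma \ref{lemma:contrect}) give a $2\times2$ coupling matrix whose rows are $\bigl(\tilde xR'(\tilde x)-R(\tilde x),\ \gamma\tilde xR'(\tilde x)\bigr)$ and $\bigl(\tilde xR'(\tilde x),\ \gamma(\tilde xR'(\tilde x)-R(\tilde x))\bigr)$. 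A contraction in the max norm requires \emph{both row sums} to be $<1$, which is the $\beta=1$ case of the paper's condition \eqref{eq:cnd2} and is not implied by the hypotheses; the condition $\Delta_{\PCA}>0$ (via \eqref{eq:usefulin}) only guarantees that the \emph{spectral radius} of this matrix is below $1$, i.e., that \emph{some} weight $\beta^*>0$ exists for which the weighted norm $\|\bx\|_\xi+\beta^*\|\by\|_\xi$ is contracted. This is why the paper introduces the range condition on $\beta$ in Lemma \ref{lemma:imagerect} (so the invariant boxes $I^*_\Sigma$, $I^*_\Omega$ generally have \emph{different} radii, not the single symmetric box you suggest) and, in Lemma \ref{lemma:contrect2}, passes to the composed map $G^{\Omega,\Sigma}(\bx,\by)=(h^\Sigma(\bx,h^\Omega(\bx,\by)),h^\Omega(\bx,\by))$ equipped with $\|\cdot\|_{\xi,\beta^*}$.

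To repair your argument you must either (i) verify the two row-sum inequalities with $\beta=1$ directly from $\tilde\alpha\sqrt{\xi}>\tilde\alpha_{\rm s}$ --- which you cannot do in general, since \eqref{eq:usefulin} only yields the determinant-type inequality \eqref{eq:lastineqrect}, not the row-sum bounds --- or (ii) replace the max norm by the weighted norm and carry the weight $\beta^*$ through both the invariance and contraction steps, as the paper does. The same issue silently affects your approximation lemma and the final iteration: the recursive inequality you write, $\|(\tilde\bSigma_{\bar T},\tilde\bOmega_{\bar T})-h(\bx,\by)\|_\xi\le L\|(\tilde\bSigma_{\bar T-1},\tilde\bOmega_{\bar T-1})-(\bx,\by)\|_\xi+F(\bar T)$ with a scalar $L<1$, only closes once the norm is the weighted one. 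A further minor point: the hypothesis should be $\kappa_{2i}^\infty\ge0$ for all $i\ge1$ (as used throughout Lemmas \ref{lemma:imagerect}--\ref{lemma:apprect}), since the rectangular recursion \eqref{eq:SEfake1rect} involves $\kappa_2^\infty$ through the $\kappa_{2(j+k+1)}^\infty$ terms.
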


As for the case of square matrices, we consider the space of infinite matrices $\bx=(x_{s, t} : s, t\le 0)$ equipped with the weighted $\ell_\infty$-norm defined in  \eqref{eq:norm}. Let $\mathcal X=\{\bx : \|\bx\|_\xi<\infty\}$ and, for any compact set $I\subset \mathbb R$, define $\mathcal X_I$ as in \eqref{eq:defXI}. Recall that both $\mathcal X$ and $\mathcal X_I$ are complete under $\|\cdot\|_\xi$. We embed the matrices $\btSigma_{\bar{T}}, \btOmega_{\bar{T}}$ as elements $\bx, \by\in\mathcal X$ with the following coordinate identification:
\begin{equation*}
    \begin{split}
        \tilde{\sigma}_{s, t}&=x_{s-\bar{T}, t-\bar{T}},\quad \tilde{\omega}_{s, t}=y_{s-\bar{T}, t-\bar{T}},\\
        x_{s, t}&=0, \quad y_{s, t}=0,\quad \mbox{ if }s<-\bar{T} \mbox{ or }t<-\bar{T}
    \end{split}
\end{equation*}
The idea is to approximate the maps $(\btSigma_{\bar{T}-1}, \btOmega_{\bar{T}-1})\mapsto \btOmega_{\bar{T}}$ and $(\btSigma_{\bar{T}-1}, \btOmega_{\bar{T}})\mapsto \btSigma_{\bar{T}}$ with the \emph{fixed limit} maps $h^{\Sigma}$ and $h^\Omega$, respectively, which are defined as
\begin{equation}\label{eq:fixedmaprect}
\begin{split}
    h_{s, t}^{\Omega}(\bx, \by) &= \gamma\sum_{j=0}^{\infty} \sum_{k=0}^{\infty}  \left(\frac{\gamma}{\alpha^2}\right)^{j+k}\bigg(\kappa_{2(j+k+1)}^\infty\frac{1}{\alpha^2}(\alpha^2\Delta_{\rm PCA}+x_{s-j, t-k})\\
    &\hspace{3em}+\kappa_{2(j+k+2)}^\infty\left(\frac{\gamma}{\alpha^2}\right)^2(\alpha^2\Delta_{\rm PCA}+y_{s-j, t-k})\bigg),\\
    h_{s, t}^{\Sigma}(\bx, \by) &= \sum_{j=0}^{\infty} \sum_{k=0}^{\infty}  \left(\frac{\gamma}{\alpha^2}\right)^{j+k}\bigg(\kappa_{2(j+k+1)}^\infty\left(\frac{\gamma}{\alpha}\right)^2(\alpha^2\Delta_{\rm PCA}+y_{s-j, t-k})\\
    &\hspace{3em}+\kappa_{2(j+k+2)}^\infty\left(\frac{\gamma}{\alpha^2}\right)^2(\alpha^2\Delta_{\rm PCA}+x_{s-j, t-k})\bigg).
    \end{split}
\end{equation}

First, we show that $(h^\Omega(\mathcal X_{I_\Sigma^*}, \mathcal X_{I_\Omega^*}), h^\Sigma(\mathcal X_{I_\Sigma^*}, \mathcal X_{I_\Omega^*}))\subseteq (\mathcal X_{I_\Omega^*}, \mathcal X_{I_\Sigma^*})$ for suitably defined compact sets $I_\Omega^*, I_\Sigma^*$.

\begin{lemma}[Image of limit maps -- Rectangular matrices]\label{lemma:imagerect}
    Consider the maps $h^\Omega, h^{\Sigma}$ defined in \eqref{eq:fixedmaprect}. Assume that $\kappa_{2i}^\infty\ge 0$ for all $i\ge 1$, and that $\tilde{\alpha}>\tilde{\alpha}_{\rm s}$. Then, there exist $I_\Omega^*=[-a_\Omega, a_\Omega]$ and $I_\Sigma^*=[-a_\Sigma, a_\Sigma]$ such that, if $(\bx, \by)\in \mathcal X_{I_\Sigma^*}\times\mathcal X_{I^*_{\Omega}}$, then $(h^\Omega(\bx, \by), h^\Sigma(\bx, \by))\in\mathcal X_{I^*_{\Omega}}\times \mathcal X_{I_\Sigma^*}$.
\end{lemma}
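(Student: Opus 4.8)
The plan is to follow the template of Lemma~\ref{lemma:image}; the one genuinely new feature is that $h^\Omega$ and $h^\Sigma$ are coupled, so the self-consistency requirement on the pair $(a_\Sigma,a_\Omega)$ becomes a $2\times2$ linear inequality rather than a scalar one. Throughout write $x=\gamma/\alpha^2$ (so that $x<D(b^+)$ precisely because $\tilde\alpha>\tilde\alpha_{\rm s}$, which also guarantees convergence of all the rectangular $R$-transform series below), and put $P:=xR'(x)$ and $Q:=xR'(x)-R(x)$. Using \eqref{eq:Rrect}, \eqref{eq:R1rect} and \eqref{eq:R20rect} one has $P=\sum_{j,k\ge0}\kappa^\infty_{2(j+k+1)}x^{j+k+1}$ and $Q=\sum_{j,k\ge0}\kappa^\infty_{2(j+k+2)}x^{j+k+2}$, so the hypothesis $\kappa^\infty_{2i}\ge0$ gives $P\ge Q\ge0$.

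The first step is to put the maps \eqref{eq:fixedmaprect} in closed form by separating the terms carrying $\alpha^2\Delta_{\PCA}$ (which become constants) from those carrying $\bx,\by$. After the substitutions $\gamma/\alpha^2=x$, $(\gamma/\alpha)^2=\gamma x$ and $(\gamma/\alpha^2)^2=x^2$, for every $s,t\le0$ one obtains
\begin{align*}
h^\Omega_{s,t}(\bx,\by) &= (P+\gamma Q)\,\alpha^2\Delta_{\PCA}
+\sum_{j,k\ge0}\kappa^\infty_{2(j+k+1)}x^{j+k+1}x_{s-j,t-k}
+\gamma\sum_{j,k\ge0}\kappa^\infty_{2(j+k+2)}x^{j+k+2}y_{s-j,t-k}, \\
h^\Sigma_{s,t}(\bx,\by) &= (\gamma P+Q)\,\alpha^2\Delta_{\PCA}
+\gamma\sum_{j,k\ge0}\kappa^\infty_{2(j+k+1)}x^{j+k+1}y_{s-j,t-k}
+\sum_{j,k\ge0}\kappa^\infty_{2(j+k+2)}x^{j+k+2}x_{s-j,t-k}.
\end{align*}
Taking absolute values, applying the triangle inequality, and using $\kappa^\infty_{2i}\ge0$ and $\Delta_{\PCA}>0$, one gets, for $(\bx,\by)\in\mathcal X_{I_\Sigma^*}\times\mathcal X_{I_\Omega^*}$ with $I_\Sigma^*=[-a_\Sigma,a_\Sigma]$ and $I_\Omega^*=[-a_\Omega,a_\Omega]$, the entrywise bounds
\begin{equation*}
|h^\Omega_{s,t}(\bx,\by)|\le \beta_\Omega+P\,a_\Sigma+\gamma Q\,a_\Omega,\qquad
|h^\Sigma_{s,t}(\bx,\by)|\le \beta_\Sigma+Q\,a_\Sigma+\gamma P\,a_\Omega,
\end{equation*}
with $\beta_\Omega=(P+\gamma Q)\alpha^2\Delta_{\PCA}$ and $\beta_\Sigma=(\gamma P+Q)\alpha^2\Delta_{\PCA}$. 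Hence it suffices to produce $a_\Sigma,a_\Omega>0$ with $(I-A)(a_\Sigma,a_\Omega)^\sT\ge(\beta_\Sigma,\beta_\Omega)^\sT$ componentwise, where $A=\begin{pmatrix}Q&\gamma P\\ P&\gamma Q\end{pmatrix}$. Since the off-diagonal entries of $I-A$ are $-\gamma P,-P\le0$, the matrix $I-A$ is a $Z$-matrix; if in addition $1-Q>0$ and $\det(I-A)=(1-Q)(1-\gamma Q)-\gamma P^2>0$, then $I-A$ is a nonsingular $M$-matrix, so $(I-A)^{-1}\ge0$ and we may take $(a_\Sigma,a_\Omega)^\sT=(I-A)^{-1}(\beta_\Sigma,\beta_\Omega)^\sT$ (enlarging slightly to make both strictly positive). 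This reduces the whole lemma to the two scalar inequalities $Q<1$ and $(1-Q)(1-\gamma Q)>\gamma P^2$.

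The crux is therefore Step~3: deriving these two inequalities from $\tilde\alpha>\tilde\alpha_{\rm s}$. The plan is to differentiate \eqref{eq:RD} at $z=x$. Writing $w=D^{-1}(x)$, $R=R(x)$, and $\tau:=x^2w/D'(w)$, and using $xw^2=\gamma R^2+(\gamma+1)R+1=(\gamma R+1)(R+1)$ from \eqref{eq:RD}, one obtains
\begin{equation*}
xR'(x)=\frac{(\gamma R+1)(R+1)+2\tau}{2\gamma R+\gamma+1},\qquad
1-Q=\frac{\gamma(R+1)^2-2\tau}{2\gamma R+\gamma+1},\qquad
1-\gamma Q=\frac{(\gamma R+1)^2-2\gamma\tau}{2\gamma R+\gamma+1}.
\end{equation*}
Multiplying $(1-Q)(1-\gamma Q)-\gamma P^2$ by $(2\gamma R+\gamma+1)^2>0$ and using the identity $\gamma^2(R+1)^2+(\gamma R+1)^2+2\gamma(\gamma R+1)(R+1)=(2\gamma R+\gamma+1)^2$, the expression collapses to $-2\tau(2\gamma R+\gamma+1)^2$, so $(1-Q)(1-\gamma Q)>\gamma P^2\iff\tau<0\iff D'\!\big(D^{-1}(\gamma/\alpha^2)\big)<0$; and the same substitution turns $Q<1$ into $2\tau<\gamma(R+1)^2$, which is implied by $\tau<0$. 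Finally, $\tau<0$ is exactly what the spectral condition supplies: by \eqref{eq:DeltaPCA_def} we have $\Delta_{\PCA}=-2\phi(w)/(\tilde\alpha^2 D'(w))>0$ with $\phi(w)>0$, so $D'(w)<0$ (equivalently, $D=\phi\bar\phi$ is strictly decreasing on $(b,\infty)$). This is the exact rectangular counterpart of the fact, used in Lemma~\ref{lemma:image}, that positivity of $\rho_\alpha^2$ forces $R'(1/\alpha)(1/\alpha)^2<1$. The main obstacle is carrying out this last step cleanly — keeping careful track of the sign of $2\gamma R+\gamma+1$ and of the convergence domains, and checking the algebraic collapse — after which everything else is bookkeeping of the kind already done for the square case; as a by-product the formula \eqref{eq:asbs} for $b^*$ (whose denominator $1+\gamma R(x)-\gamma xR'(x)=1-\gamma Q$ is now seen to be positive) will drop out when one later identifies the fixed point.
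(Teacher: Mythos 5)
Your proposal is correct, and its skeleton matches the paper's: bound $|h^\Omega_{s,t}|$ and $|h^\Sigma_{s,t}|$ entrywise using $\kappa_{2i}^\infty\ge 0$ together with \eqref{eq:R1rect} and \eqref{eq:R20rect} (your constants $\beta_\Omega,\beta_\Sigma$ and coefficients $P=xR'(x)$, $Q=xR'(x)-R(x)$ reproduce exactly the paper's \eqref{eq:cnd1}), reduce the choice of $(a_\Sigma,a_\Omega)$ to a $2\times 2$ linear feasibility problem, and verify feasibility via the conditions $1-Q>0$ and $(1-Q)(1-\gamma Q)>\gamma P^2$ --- the paper reaches the same pair of conditions by introducing the ratio $\beta=a_\Sigma/a_\Omega$ and checking that the two resulting bounds on $\beta$ in \eqref{eq:cnd2} are compatible, which is equivalent to your M-matrix criterion. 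Where you genuinely diverge is in proving the determinant inequality: the paper imports the alternative expression $\Delta_{\PCA}=\big(T(R(x))-xT'(R(x))R'(x)\big)/(1+\gamma R(x))$ from Fan's paper and observes that the positivity of its numerator, \eqref{eq:usefulin}, is (after expansion) literally \eqref{eq:lastineqrect}; you instead differentiate \eqref{eq:RD}, derive closed forms for $P$, $1-Q$, $1-\gamma Q$ in terms of $R(x)$ and $\tau=x^2D^{-1}(x)/D'(D^{-1}(x))$, and show the identity $(1-Q)(1-\gamma Q)-\gamma P^2=-2\tau$, so feasibility becomes $D'(D^{-1}(1/\tilde{\alpha}^2))<0$, which is immediate from \eqref{eq:DeltaPCA_def} (or from the strict monotonicity of $\phi$ and $\bar\phi$ on $(b,\infty)$). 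I checked your algebra --- the three closed forms, the square identity, the collapse to $-2\tau$, and the positivity of the denominator $2\gamma R+\gamma+1\ge \gamma+1$ are all right --- so your route is a valid, self-contained alternative that makes the mechanism transparent (the determinant is exactly $-2x^2D^{-1}(x)/D'(D^{-1}(x))$), at the cost of the differentiation computation; the paper's route is a one-liner but leans on the external formula from \cite{fan2020approximate}. Two cosmetic remarks: to ``enlarge slightly'' you should scale $(a_\Sigma,a_\Omega)$ by $c>1$ (which preserves $(I-A)a\ge\beta$ because $\beta\ge 0$) rather than add a multiple of the all-ones vector, since $(I-A)\mathbf{1}$ need not be nonnegative; and the degenerate case $P=Q=0$ makes both maps constant, so any box works there.
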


\begin{proof}
Let $(\bx, \by)\in \mathcal X_{I_\Sigma^*}\times\mathcal X_{I^*_{\Omega}}$. Then, the following chain of inequalities holds:
\begin{equation}\label{eq:hOm}
    \begin{split}
        |h_{s, t}^\Omega(\bx, \by)| &\stackrel{\mathclap{\mbox{\footnotesize (a)}}}{\le} \gamma\sum_{j=0}^{\infty} \sum_{k=0}^{\infty}  \left(\frac{\gamma}{\alpha^2}\right)^{j+k}\bigg(\kappa_{2(j+k+1)}^\infty\frac{1}{\alpha^2}(\alpha^2\Delta_{\rm PCA}+|x_{s-j, t-k}|)\\
    &\hspace{3em}+\kappa_{2(j+k+2)}^\infty\left(\frac{\gamma}{\alpha^2}\right)^2(\alpha^2\Delta_{\rm PCA}+|y_{s-j, t-k}|)\bigg)\\
        &\stackrel{\mathclap{\mbox{\footnotesize (b)}}}{\le} \gamma\sum_{j=0}^{\infty} \sum_{k=0}^{\infty}  \left(\frac{\gamma}{\alpha^2}\right)^{j+k}\bigg(\kappa_{2(j+k+1)}^\infty\frac{1}{\alpha^2}(\alpha^2\Delta_{\rm PCA}+a_\Sigma)\\
    &\hspace{3em}+\kappa_{2(j+k+2)}^\infty\left(\frac{\gamma}{\alpha^2}\right)^2(\alpha^2\Delta_{\rm PCA}+a_\Omega)\bigg)\\
    &\stackrel{\mathclap{\mbox{\footnotesize (c)}}}{=} \gamma\left(\left(\Delta_{\rm PCA}+\frac{a_\Sigma}{\alpha^2}\right)R'\left(\frac{\gamma}{\alpha^2}\right)+(\alpha^2\Delta_{\rm PCA}+a_\Omega)\left(\frac{\gamma}{\alpha^2}R'\left(\frac{\gamma}{\alpha^2}\right)-R\left(\frac{\gamma}{\alpha^2}\right)\right)\right).
    \end{split}
\end{equation}
Here, (a) follows from the hypothesis that $\kappa_i^\infty\ge 0$ for $i\ge 2$; (b) holds since $(\bx, \by)\in \mathcal X_{I_\Sigma^*}\times\mathcal X_{I^*_{\Omega}}$; and (c) uses \eqref{eq:R1rect}-\eqref{eq:R20rect}. With similar passages, we also obtain that
\begin{equation}\label{eq:hSi}
    |h_{s, t}^\Sigma(\bx, \by)|\le \left(\gamma^2\Delta_{\rm PCA}+\frac{\gamma^2 a_\Omega}{\alpha^2}\right)R'\left(\frac{\gamma}{\alpha^2}\right)+(\alpha^2\Delta_{\rm PCA}+a_\Sigma)\left(\frac{\gamma}{\alpha^2}R'\left(\frac{\gamma}{\alpha^2}\right)-R\left(\frac{\gamma}{\alpha^2}\right)\right).
\end{equation}
Set $x=\gamma/\alpha^2$. Then, by using \eqref{eq:hOm} and \eqref{eq:hSi}, we obtain that the desired result holds if the following pair of inequalities is satisfied:
\begin{equation}\label{eq:cnd1}
    \begin{split}
        &\Delta_{\rm PCA}(\gamma R'(x)+\gamma\alpha^2(x R'(x)-R(x)))+a_\Sigma x R'(x)+ a_\Omega \gamma (x R'(x)-R(x))\le a_\Omega,\\
        &\Delta_{\rm PCA}(\gamma^2 R'(x)+\alpha^2(x R'(x)-R(x)))+a_\Sigma (x R'(x)-R(x))+ a_\Omega \gamma x R'(x)\le a_\Sigma.
    \end{split}
\end{equation}
Set $\beta= a_\Sigma/a_\Omega$. Then, \eqref{eq:cnd1} can be rewritten as
\begin{equation*}
\begin{split}
    &\Delta_{\rm PCA}(\gamma R'(x)+\gamma\alpha^2(x R'(x)-R(x)))+a_\Omega \left(\beta x R'(x)+ \gamma (x R'(x)-R(x))\right)\le a_\Omega,\\
    &\Delta_{\rm PCA}(\gamma^2 R'(x)+\alpha^2(x R'(x)-R(x)))+a_\Omega\left(\beta (x R'(x)-R(x))+ \gamma x R'(x)\right)\le \beta a_\Omega.
\end{split}
\end{equation*}
This pair of inequalities holds for a sufficiently large $a_\Omega$ if
\begin{equation}\label{eq:cnd2}
    \begin{split}
        &\beta x R'(x)+ \gamma (x R'(x)-R(x)) < 1,\\
        &\beta (x R'(x)-R(x))+ \gamma x R'(x)<\beta.
    \end{split}
\end{equation}
Recall that, above the spectral threshold, namely, when $\tilde{\alpha} > \tilde{\alpha}_{\rm s}$, the PCA estimator $\bu_{\rm PCA}$ has strictly positive correlation with the signal $\bu^*$:
\begin{equation*}
    \frac{\langle\bu_{\rm PCA}, \bu^*\rangle^2}{n}\stackrel{\mathclap{\mbox{\footnotesize a.s.}}}{\longrightarrow}\Delta_{\rm PCA}>0.
\end{equation*}
Furthermore, from \cite[Eq. (7.32)]{fan2020approximate}, we have that $\Delta_{\rm PCA}$ can be expressed as 
\[
\Delta_{\PCA} = \frac{T(R(x))-x T'(R(x))R'(x)}{1+ \gamma R(x)},
\]
where $T(z)=(1+z)(1+\gamma z)$. We therefore obtain that
\begin{equation}\label{eq:usefulin}
    T(R(x))-x T'(R(x))R'(x) >0.
\end{equation}
 By using \eqref{eq:usefulin}, one can readily verify that $1-xR'(x)+R(x)> 0$. Furthermore, we have that $xR'(x)> 0$, as $x> 0$ and the rectangular free cumulants are non-negative. Since $xR'(x)> 0$ and $1-xR'(x)+R(x)> 0$, \eqref{eq:cnd2} can be rewritten as
\begin{equation*}
\begin{split}
    & \frac{\gamma x R'(x)}{1-xR'(x)+R(x)} < \beta< \frac{1-\gamma xR'(x)+\gamma R(x)}{x R'(x)}.
    \end{split}
\end{equation*}
These above inequalities can be simultaneously satisfied for some value of $\beta$ if
\begin{equation}\label{eq:ineqR}
\frac{\gamma x R'(x)}{1-xR'(x)+R(x)} < 
    \frac{1-\gamma xR'(x)+\gamma R(x)}{x R'(x)}.
\end{equation}
By using again that $xR'(x)> 0$ and $1-xR'(x)+R(x)> 0$, \eqref{eq:ineqR} can be rewritten as
\begin{equation}\label{eq:lastineqrect}
    1-(1+\gamma)(xR'(x)-R(x))+\gamma (xR'(x)-R(x))^2 > \gamma (x R'(x))^2.
\end{equation}
The inequality \eqref{eq:lastineqrect} can be readily obtained from \eqref{eq:usefulin}, and the proof is complete. 
\end{proof}

Next, we compute a fixed point of $(h^{\Sigma}, h^\Omega)$.

\begin{lemma}[Fixed point of limit maps -- Rectangular matrices]\label{lemma:fixedrect}
    Consider the maps $h^\Omega, h^{\Sigma}$ defined in \eqref{eq:fixedmaprect}. Let $\bx^*=(x^*_{s, t} : s, t\le 0)$ and $\by^*=(y^*_{s, t} : s, t\le 0)$ with $x^*_{s, t}=a^*$ and $y^*_{s, t}=b^*$, where $a^*$ and $b^*$ are defined in \eqref{eq:asbs}. Assume that $\tilde{\alpha}>\tilde{\alpha}_{\rm s}$. Then, $(\bx^*, \by^*)$ is a fixed point of $(h^\Sigma, h^{\Omega})$.
\end{lemma}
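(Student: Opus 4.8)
The plan is to verify the two defining identities of a fixed point, $h^\Sigma(\bx^*,\by^*)=\bx^*$ and $h^\Omega(\bx^*,\by^*)=\by^*$, by direct substitution, reducing each to a closed-form expression in $R$ and $R'$ evaluated at $x=\gamma/\alpha^2$. Since the entries of $\bx^*$ and $\by^*$ are the constants $a^*$ and $b^*$, plugging $(\bx^*,\by^*)$ into \eqref{eq:fixedmaprect} makes each double sum independent of $(s,t)$ and of the form $\sum_{j,k\ge0}\kappa^\infty_{2(j+k+1)}x^{j+k}$ or $\sum_{j,k\ge0}\kappa^\infty_{2(j+k+2)}x^{j+k}$, which by the double-counting identities \eqref{eq:R1rect} and \eqref{eq:R2rect} equal $R'(x)$ and $x^{-1}R'(x)-x^{-2}R(x)$, respectively. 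Using in addition that $a^*=\alpha^2(1-\Delta_{\PCA})$, so that $\alpha^2\Delta_{\PCA}+a^*=\alpha^2$, one obtains
\[
h^\Omega(\bx^*,\by^*)=\gamma R'(x)+\gamma\big(\alpha^2\Delta_{\PCA}+b^*\big)\big(xR'(x)-R(x)\big),\qquad h^\Sigma(\bx^*,\by^*)=\gamma x\big(\alpha^2\Delta_{\PCA}+b^*\big)R'(x)+\alpha^2\big(xR'(x)-R(x)\big).
\]

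For the $\Omega$ equation, imposing $h^\Omega(\bx^*,\by^*)=b^*$ is a \emph{linear} equation for $b^*$; solving it yields $b^*=\big(\gamma R'(x)+\gamma\alpha^2\Delta_{\PCA}(xR'(x)-R(x))\big)\big/\big(1+\gamma R(x)-\gamma xR'(x)\big)$, which is exactly the value assigned in \eqref{eq:asbs}. (The denominator is positive: from non-negativity of the $\kappa^\infty_{2i}$ one has $xR'(x)-R(x)=\sum_{i\ge2}(i-1)\kappa^\infty_{2i}x^i\ge0$, and $xR'(x)-R(x)<1$ was shown in the proof of Lemma \ref{lemma:imagerect}, so $1-\gamma(xR'(x)-R(x))>1-\gamma\ge0$.) Thus the $\Omega$ identity holds by the very definition of $b^*$.

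It remains to verify $h^\Sigma(\bx^*,\by^*)=a^*$, i.e.
\[
\gamma x\big(\alpha^2\Delta_{\PCA}+b^*\big)R'(x)=\alpha^2\big(1-\Delta_{\PCA}-xR'(x)+R(x)\big).
\]
Writing $r=R(x)$, $r'=R'(x)$, $p=xr'$ (so $\alpha^2p=\gamma r'$ and $\gamma xr'=\gamma p$) and substituting the value of $b^*$ from the previous step, the left-hand side collapses to $\gamma p\,\alpha^2(\Delta_{\PCA}+p)\big/\big(1-\gamma(p-r)\big)$, so the identity becomes $\gamma p(\Delta_{\PCA}+p)=(1-\Delta_{\PCA}-p+r)(1+\gamma r-\gamma p)$. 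To close this I would use the explicit expression for $\Delta_{\PCA}$ recorded in \cite[Eq. (7.32)]{fan2020approximate}, namely $\Delta_{\PCA}=\big(T(r)-pT'(r)\big)\big/(1+\gamma r)$ with $T(z)=(1+z)(1+\gamma z)$; a two-line computation then gives $\Delta_{\PCA}+p=(1+r)(1+\gamma r-\gamma p)/(1+\gamma r)$ and $1-\Delta_{\PCA}-p+r=\gamma p(1+r)/(1+\gamma r)$, whence both sides of the displayed identity equal $\gamma p(1+r)(1+\gamma r-\gamma p)/(1+\gamma r)$.

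The only step requiring real work is this last one: in contrast to the square case (Lemma \ref{lemma:fixed}), where the fixed point of the single map is almost immediate from \eqref{eq:R1G}, here $\bx^*$ and $\by^*$ are genuinely coupled through $h^\Sigma$, and closing the verification relies on the rational form of $\Delta_{\PCA}$ in terms of $T(z)=(1+z)(1+\gamma z)$. Steps~1 and~2 are routine manipulations with the rectangular $R$-transform identities \eqref{eq:R1rect}--\eqref{eq:R20rect}.
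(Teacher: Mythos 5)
Your proof is correct and follows essentially the same route as the paper: both reduce the fixed-point claim to the linear system obtained by substituting the constant matrices into \eqref{eq:fixedmaprect} via the series identities \eqref{eq:R1rect}--\eqref{eq:R20rect}, and both close the argument using the rational expression for $\Delta_{\PCA}$ in terms of $T(z)=(1+z)(1+\gamma z)$ from \cite[Eq.\ (7.32)]{fan2020approximate}. The only difference is organizational (you verify the given $(a^*,b^*)$ by substitution, spelling out the algebra the paper leaves implicit, whereas the paper solves the $2\times 2$ system), and your computations check out.
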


\begin{proof}
Note that, for $z=\gamma/\alpha^2$, the power series expansion \eqref{eq:R1rect} of $R'$ converges to a finite limit as $\tilde{\alpha}>\tilde{\alpha}_{\rm s}$. Hence, by using the definition \eqref{eq:fixedmaprect}, we have that 
\begin{equation}\label{eq:fprect1}
\begin{split}
        h_{s, t}^\Omega(\bx^*, \by^*) &=\gamma\left(\left(\Delta_{\rm PCA}+\frac{a^*}{\alpha^2}\right)R'\left(\frac{\gamma}{\alpha^2}\right)+(\alpha^2\Delta_{\rm PCA}+b^*)\left(\frac{\gamma}{\alpha^2}R'\left(\frac{\gamma}{\alpha^2}\right)-R\left(\frac{\gamma}{\alpha^2}\right)\right)\right),\\
        h_{s, t}^\Sigma(\bx^*, \by^*) &=  \left(\gamma^2\Delta_{\rm PCA}+\frac{\gamma^2 b^*}{\alpha^2}\right)R'\left(\frac{\gamma}{\alpha^2}\right)+(\alpha^2\Delta_{\rm PCA}+a^*)\left(\frac{\gamma}{\alpha^2}R'\left(\frac{\gamma}{\alpha^2}\right)-R\left(\frac{\gamma}{\alpha^2}\right)\right).  
\end{split}
\end{equation}
Since a fixed point should satisfy $h_{s, t}^\Omega(\bx^*, \by^*)=b^*$ and $h_{s, t}^\Sigma(\bx^*, \by^*)=a^*$, writing $x=\gamma/\alpha^2$,  \eqref{eq:fprect1} becomes
\begin{equation}
\left\{\begin{array}{l}
        \gamma\Delta_{\rm PCA}(R'(x)+\alpha^2(xR'(x)-R(x)))
        +a^* x R'(x)+b^* \gamma (xR'(x)-R(x))=b^*, \\
        \Delta_{\rm PCA}(\gamma^2 R'(x)+\alpha^2(xR'(x)-R(x)))
        +a^* (x R'(x)-R(x))+b^* \gamma xR'(x)=a^*.
\end{array}\right.
\label{eq:asbs_solve}
\end{equation}
Solving \eqref{eq:asbs_solve} for $a^*$ and $b^*$, and  using the expression for $\Delta_{\rm PCA}$ given in \cite[Eq. (7.32)]{fan2020approximate}, we obtain the formulas for $(a^*, b^*)$ given in  \eqref{eq:asbs}. 
\end{proof}

The next step is to show Lipschitz bounds on the maps $h^{\Sigma}, h^\Omega$.

\begin{lemma}[Lipschitz bounds on limit maps]\label{lemma:contrect}
    Consider the map $(h^\Omega(\bx, \by), h^\Sigma(\bx, \by)):\mathcal X_{I^*_{\Omega}}\times \mathcal X_{I_\Sigma^*}\to \mathcal X_{I^*_{\Omega}}\times \mathcal X_{I_\Sigma^*}$ defined in \eqref{eq:fixedmaprect} and where $I^*_\Omega$, $I^*_\Sigma$ are given by Lemma \ref{lemma:imagerect}. Assume that $\kappa_{2i}^\infty\ge 0$ for all $i\ge 1$, and let $\xi <1$ be such that $\tilde{\alpha}\sqrt{\xi}>\tilde{\alpha}_{\rm s}$. Then, for any $(\bx, \by) \in \mathcal X_{I^*_{\Sigma}}\times \mathcal X_{I_\Omega^*}$, 
    \begin{equation}\label{eq:Liprect}
    \begin{split}
            \|h^\Omega(\bx, \by)-h^\Omega(\bx', \by')\|_\xi &\le \tilde{x} R'(\tilde{x}) \|\bx-\bx'\|_\xi+\gamma\left(\tilde{x}R'(\tilde{x})-R(\tilde{x})\right)\|\by-\by'\|_\xi,
    \end{split}
    \end{equation}
\begin{equation}\label{eq:Liprect2}
    \begin{split}
\|h^\Sigma(\bx, \by)-h^\Sigma(\bx', \by')\|_\xi & \le \gamma \tilde{x} R'(\tilde{x}) \|\by-\by'\|_\xi+\left(\tilde{x}R'(\tilde{x})-R(\tilde{x})\right)\|\bx-\bx'\|_\xi,
            \end{split}
    \end{equation}
    where we have set $\tilde{x}=\gamma/(\xi\alpha^2)$.
\end{lemma}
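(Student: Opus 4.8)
The plan is to mirror the proof of Lemma~\ref{lemma:cont} in the square case, now handling the coupled pair of maps $(h^\Omega, h^\Sigma)$ simultaneously. First I would fix $(\bx,\by),(\bx',\by') \in \mathcal{X}_{I^*_\Sigma}\times\mathcal{X}_{I^*_\Omega}$ and observe that, since $\Delta_{\rm PCA}$ is a constant, it cancels in every difference $h^\Omega_{s,t}(\bx,\by)-h^\Omega_{s,t}(\bx',\by')$ and $h^\Sigma_{s,t}(\bx,\by)-h^\Sigma_{s,t}(\bx',\by')$. Concretely, from \eqref{eq:fixedmaprect},
\begin{equation*}
\begin{split}
h^\Omega_{s,t}(\bx,\by)-h^\Omega_{s,t}(\bx',\by') &= \gamma\sum_{j,k\ge 0}\Big(\tfrac{\gamma}{\alpha^2}\Big)^{j+k}\Big(\tfrac{\kappa^\infty_{2(j+k+1)}}{\alpha^2}(x_{s-j,t-k}-x'_{s-j,t-k})\\
&\qquad +\kappa^\infty_{2(j+k+2)}\big(\tfrac{\gamma}{\alpha^2}\big)^2(y_{s-j,t-k}-y'_{s-j,t-k})\Big),
\end{split}
\end{equation*}
and similarly for $h^\Sigma$ with the roles of $\bx,\by$ and the prefactors swapped. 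Taking absolute values and using $\kappa^\infty_{2i}\ge 0$ for $i\ge 1$ (so the triangle inequality is tight termwise), then bounding $|x_{s-j,t-k}-x'_{s-j,t-k}|\le \|\bx-\bx'\|_\xi\,\xi^{-\max(|s-j|,|t-k|)}$ exactly as in \eqref{eq:int2}, and finally using $\xi<1$ to get $\xi^{-\max(|s-j|,|t-k|)}\le \xi^{-\max(|s|,|t|)}\xi^{-(j+k)}$, I would pull the weight $\xi^{\max(|s|,|t|)}$ through and take the supremum over $s,t\le 0$.

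The second step is to recognize the resulting double sums as evaluations of the rectangular $R$-transform and its derivative at the shifted argument $\tilde x = \gamma/(\xi\alpha^2)$. Using \eqref{eq:R1rect}, $\sum_{j,k\ge 0}\kappa^\infty_{2(j+k+1)}\,\tilde x^{\,j+k} = R'(\tilde x)$, so the term coming from the $x$-difference in $h^\Omega$ contributes $\gamma\cdot\tfrac{1}{\alpha^2}\cdot\tfrac{1}{\xi}\cdot(\xi\tilde x)^{?}$... more precisely, after factoring $(\gamma/\alpha^2)^{j+k} = (\xi)^{j+k}\tilde x^{\,j+k}$ against the $\xi^{-(j+k)}$ from the weight bound, the net power is $\tilde x^{\,j+k}$, so the $x$-coefficient is $\gamma\cdot\frac{1}{\alpha^2}R'(\tilde x) = \tilde x R'(\tilde x)$, and the $y$-coefficient is $\gamma\cdot(\gamma/\alpha^2)^2\cdot\sum\kappa^\infty_{2(j+k+2)}\tilde x^{j+k}$, which by \eqref{eq:R2rect} (or \eqref{eq:R20rect} after reindexing) equals $\gamma(\tilde x R'(\tilde x)-R(\tilde x))$. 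This gives \eqref{eq:Liprect}; the identical bookkeeping with the prefactors $(\gamma/\alpha)^2$ and $(\gamma/\alpha^2)^2$ from the definition of $h^\Sigma$ yields \eqref{eq:Liprect2}, noting $(\gamma/\alpha)^2\cdot(1/\alpha^2)$-type combinations produce the coefficients $\gamma\tilde x R'(\tilde x)$ and $\tilde x R'(\tilde x)-R(\tilde x)$ respectively. The convergence of all these series at $\tilde x=\gamma/(\xi\alpha^2)$ is guaranteed because $\tilde\alpha\sqrt\xi>\tilde\alpha_{\rm s}$ means $\tilde x < 1/\tilde\alpha_{\rm s}^2 = D(b^+)$, the radius of convergence noted after \eqref{eq:R2rect}.

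The main obstacle I anticipate is purely organizational rather than conceptual: keeping the four different scalar prefactors $(\gamma/\alpha^2)^{j+k}$, $1/\alpha^2$, $(\gamma/\alpha)^2$, $(\gamma/\alpha^2)^2$ straight while reindexing the double sums so that they line up with the standard forms \eqref{eq:R1rect} and \eqref{eq:R2rect}/\eqref{eq:R20rect}, and making sure the reindexing $j+k \mapsto i$ with multiplicity $i+1$ is applied correctly in the $\kappa^\infty_{2(j+k+2)}$ sums. There is also a minor subtlety that \eqref{eq:Liprect}--\eqref{eq:Liprect2} as stated are not yet a contraction on the product space; this is presumably addressed in a subsequent lemma by combining the two bounds into a single contraction estimate for the appropriate norm on $\mathcal{X}\times\mathcal{X}$ (using the inequality \eqref{eq:lastineqrect} established in the proof of Lemma~\ref{lemma:imagerect} to control the relevant $2\times2$ coefficient matrix), so here I would only need to establish the two one-sided Lipschitz bounds and not worry about extracting the contraction constant. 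Once the termwise bounds and the transform identities are in place, the proof is a direct transcription of \eqref{eq:int1}--\eqref{eq:int3} from the square case, doubled.
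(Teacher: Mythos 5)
Your proposal follows essentially the same route as the paper's proof: termwise positivity of the rectangular free cumulants, the weighted bound $|x_{s-j,t-k}-x'_{s-j,t-k}|\le\|\bx-\bx'\|_\xi\,\xi^{-\max(|s|,|t|)-j-k}$, and identification of the resulting double sums with $R'(\tilde x)$ and $\tilde x R'(\tilde x)-R(\tilde x)$ via \eqref{eq:R1rect} and \eqref{eq:R20rect}. One small bookkeeping correction: the outer prefactors are not reindexed away, so the coefficients you obtain are actually $\xi\,\tilde x R'(\tilde x)$ and $\xi^2\gamma(\tilde x R'(\tilde x)-R(\tilde x))$ (not equalities with the stated constants); the claimed bounds then follow because $\xi<1$ and $\tilde x R'(\tilde x)\ge R(\tilde x)\ge 0$, which is exactly how the paper discards these residual factors.
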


\begin{proof}
Since $\kappa_{2i}^\infty\ge 0$ for $i\ge 1$, we have
\begin{equation}\label{eq:int1rect}
    \begin{split}
        |h_{s, t}^\Omega(\bx, \by)-h_{s, t}^\Omega(\bx', \by')| &\le \gamma \sum_{j=0}^\infty\sum_{k=0}^\infty \left(\frac{\gamma}{\alpha^2}\right)^{j+k}\bigg( \kappa_{2(j+k+1)}^\infty\frac{1}{\alpha^2}|x_{s-j, t-k}-x'_{s-j, t-k}|\\
        &\hspace{5em}+\kappa_{2(j+k+2)}^\infty\frac{\gamma^2}{\alpha^4}|y_{s-j, t-k}-y'_{s-j, t-k}|\bigg).
    \end{split}
\end{equation}
Note that 
\begin{equation}\label{eq:int2rect}
    \begin{split}
        |x_{s-j, t-k}-x'_{s-j, t-k}|&\le\|\bx-\bx'\|_\xi\xi^{-\max(|s-j|, |t-k|)}\le\|\bx-\bx'\|_\xi\xi^{-\max(|s|, |t|)-j-k}, \\
        |y_{s-j, t-k}-y'_{s-j, t-k}|&\le\|\by-\by'\|_\xi \xi^{-\max(|s-j|, |t-k|)}\le\|\by-\by'\|_\xi \xi^{-\max(|s|, |t|)-j-k}.
    \end{split}
\end{equation}
Thus, by combining \eqref{eq:int1rect} and \eqref{eq:int2rect}, we have
\begin{equation}\label{eq:int3rect}
    \|h^\Omega(\bx, \by)-h^\Omega(\bx', \by')\|_\xi\le \frac{\gamma}{\alpha^2} \sum_{j=0}^\infty\sum_{k=0}^\infty \left(\frac{\gamma}{\xi\alpha^2}\right)^{j+k}
    \hspace{-4pt}\bigg( \kappa_{2(j+k+1)}^\infty\|\bx-\bx'\|_\xi+\kappa_{2(j+k+2)}^\infty\frac{\gamma^2}{\alpha^2}\|\by-\by'\|_\xi\bigg). 
\end{equation}
By using \eqref{eq:R1rect} and \eqref{eq:R20rect} to compute the sums in \eqref{eq:int3rect}, we deduce that
  \begin{equation}\label{eq:Lipint1}
    \begin{split}
            \|h^\Omega(\bx, \by)-h^\Omega(\bx', \by')\|_\xi &\le \frac{\gamma}{\alpha^2} R'\left(\frac{\gamma}{\xi\alpha^2}\right) \|\bx-\bx'\|_\xi\\
            &\hspace{1em}+ \xi^2\gamma
            \left(\frac{\gamma}{\xi\alpha^2} R'\left(\frac{\gamma}{\xi\alpha^2}\right)-R\left(\frac{\gamma}{\xi\alpha^2}\right)\right)\|\by-\by'\|_\xi.
    \end{split}
    \end{equation}
Recall that $\xi<1$ and note from \eqref{eq:R20rect} that $\tilde{x}R'(\tilde{x})\ge R(\tilde{x})\ge 0$ with $\tilde{x}=\gamma/(\xi\alpha^2)$. Thus, the claim \eqref{eq:Liprect} readily follows from \eqref{eq:Lipint1}.

The proof of \eqref{eq:Liprect2} is analogous. First, we use that $\kappa_{2i}^\infty\ge 0$ for $i\ge 1$ and obtain
\begin{equation}\label{eq:int1rect2}
    \begin{split}
        |h_{s, t}^\Sigma(\bx, \by)-h_{s, t}^\Sigma(\bx', \by')| &\le  \sum_{j=0}^\infty\sum_{k=0}^\infty \left(\frac{\gamma}{\alpha^2}\right)^{j+k}\bigg( \kappa_{2(j+k+1)}^\infty\frac{\gamma^2}{\alpha^2}|y_{s-j, t-k}-y'_{s-j, t-k}|\\
        &\hspace{5em}+\kappa_{2(j+k+2)}^\infty\frac{\gamma^2}{\alpha^4}|x_{s-j, t-k}-x'_{s-j, t-k}|\bigg).
    \end{split}
\end{equation}
Thus, by using \eqref{eq:int2rect}, we have
\begin{equation}\label{eq:int3rect2}
    \|h^\Sigma(\bx, \by)-h^\Sigma(\bx', \by')\|_\xi\le \sum_{j=0}^\infty\sum_{k=0}^\infty \left(\frac{\gamma}{\xi\alpha^2}\right)^{j+k}\bigg( \kappa_{2(j+k+1)}^\infty\frac{\gamma^2}{\alpha^2}\|\by-\by'\|_\xi+\kappa_{2(j+k+2)}^\infty\frac{\gamma^2}{\alpha^4}\|\bx-\bx'\|_\xi\bigg).
\end{equation}
Finally, by using \eqref{eq:R1rect} and \eqref{eq:R20rect} to compute the sums in \eqref{eq:int3rect2}, we deduce that
\begin{equation}
    \begin{split}
\|h^\Sigma(\bx, \by)-h^\Sigma(\bx', \by')\|_\xi & \le \frac{\gamma^2}{\alpha^2} R'\left(\frac{\gamma}{\xi\alpha^2}\right) \|\by-\by'\|_\xi\\
&\hspace{1em}+ \xi^2\left(\frac{\gamma}{\xi\alpha^2} R'\left(\frac{\gamma}{\xi\alpha^2}\right)-R\left(\frac{\gamma}{\xi\alpha^2}\right)\right)\|\bx-\bx'\|_\xi,
            \end{split}
    \end{equation}
which readily leads to \eqref{eq:Liprect2}.
\end{proof}

Let us consider the map $G^{\Omega, \Sigma}$ obtained by the successive composition of $(\bx, \by)\mapsto (\bx, h^\Omega(\bx, \by))$ and $(\bx, \by)\mapsto (h^\Sigma(\bx, \by), \by)$, i.e.,
\begin{equation}\label{eq:Gmap}
    G^{\Omega, \Sigma}(\bx, \by) = (G_x^{\Omega, \Sigma}(\bx, \by), G_y^{\Omega, \Sigma}(\bx, \by)) = \left(h^\Sigma(\bx, h^\Omega(\bx, \by)), h^\Omega(\bx, \by)\right).
\end{equation}
Given $\beta>0$, define the norm $\|\cdot\|_{\xi, \beta}$ as
\begin{equation}
    \|(\bx, \by)\|_{\xi, \beta}=\|\bx\|_\xi+\beta\|\by\|_\xi.
\end{equation}
We now use the Lipschitz bounds of Lemma \ref{lemma:contrect} to prove that $G^{\Omega, \Sigma}$ is a contraction for a certain value of $\beta$. 

\begin{lemma}[Composition of limit maps is a contraction]\label{lemma:contrect2}
    Consider the map $G^{\Omega, \Sigma}$ defined in \eqref{eq:Gmap}, and let $I^*_\Omega$, $I^*_\Sigma$ be the sets given by Lemma \ref{lemma:imagerect}. Assume that $\kappa_{2i}^\infty\ge 0$ for all $i\ge 1$, and let $\xi <1$ be such that $\tilde{\alpha}\sqrt{\xi}>\tilde{\alpha}_{\rm s}$. Then, if $(\bx, \by) \in \mathcal X_{I^*_{\Sigma}}\times \mathcal X_{I_\Omega^*}$, we have that $G^{\Omega, \Sigma}(\bx, \by)\in \mathcal X_{I^*_{\Omega}}\times \mathcal X_{I_\Sigma^*}$. Furthermore, 
there exists $\beta^*>0$ and $\tau<1$ such that, for any $(\bx, \by) \in \mathcal X_{I^*_{\Sigma}}\times \mathcal X_{I_\Omega^*}$, 
    \begin{equation}\label{eq:Liprect3}
    \begin{split}
            \|G^{\Omega, \Sigma}(\bx, \by)-G^{\Omega, \Sigma}(\bx', \by')\|_{\xi, \beta^*} &\le \tau \|(\bx, \by)-(\bx', \by')\|_{\xi, \beta^*}.
    \end{split}
    \end{equation}
\end{lemma}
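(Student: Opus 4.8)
The plan is to combine the two Lipschitz bounds of Lemma~\ref{lemma:contrect} with a judicious choice of the mixing parameter $\beta$ so that $G^{\Omega,\Sigma}$ becomes a genuine contraction in the weighted norm $\|\cdot\|_{\xi,\beta^*}$. First I would dispose of the easy ``invariance'' claim: since $(\bx,\by)\in\mathcal X_{I^*_\Sigma}\times\mathcal X_{I^*_\Omega}$, Lemma~\ref{lemma:imagerect} gives $h^\Omega(\bx,\by)\in\mathcal X_{I^*_\Omega}$; then $(\bx,h^\Omega(\bx,\by))\in\mathcal X_{I^*_\Sigma}\times\mathcal X_{I^*_\Omega}$, so applying Lemma~\ref{lemma:imagerect} a second time gives $h^\Sigma(\bx,h^\Omega(\bx,\by))\in\mathcal X_{I^*_\Sigma}$. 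Hence $G^{\Omega,\Sigma}(\bx,\by)=\big(h^\Sigma(\bx,h^\Omega(\bx,\by)),\,h^\Omega(\bx,\by)\big)\in\mathcal X_{I^*_\Sigma}\times\mathcal X_{I^*_\Omega}$; note this is consistent with the stated codomain $\mathcal X_{I^*_\Omega}\times\mathcal X_{I^*_\Sigma}$ only up to the labelling convention, and I would simply follow the convention used in the lemma statement.

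Next I would set up the contraction estimate. Write $\tilde x=\gamma/(\xi\alpha^2)$ and abbreviate $p=\tilde xR'(\tilde x)$, $q=\gamma(\tilde xR'(\tilde x)-R(\tilde x))$, $r=\gamma\tilde xR'(\tilde x)$, $s=\tilde xR'(\tilde x)-R(\tilde x)$; all are nonnegative because $\kappa_{2i}^\infty\ge 0$, $\xi<1$, and $\tilde\alpha\sqrt\xi>\tilde\alpha_{\rm s}$ ensures convergence of the $R$-series. By Lemma~\ref{lemma:contrect}, $\|h^\Omega(\bx,\by)-h^\Omega(\bx',\by')\|_\xi\le p\|\bx-\bx'\|_\xi+q\|\by-\by'\|_\xi$, and, applying \eqref{eq:Liprect2} with second argument $h^\Omega(\bx,\by)$ resp.\ $h^\Omega(\bx',\by')$,
\[
\|h^\Sigma(\bx,h^\Omega(\bx,\by))-h^\Sigma(\bx',h^\Omega(\bx',\by'))\|_\xi\le r\,\|h^\Omega(\bx,\by)-h^\Omega(\bx',\by')\|_\xi+s\,\|\bx-\bx'\|_\xi.
\]
Substituting the first bound into the second and writing $a=\|\bx-\bx'\|_\xi$, $b=\|\by-\by'\|_\xi$, one gets
\[
\|G^{\Omega,\Sigma}(\bx,\by)-G^{\Omega,\Sigma}(\bx',\by')\|_{\xi,\beta}\le (rp+s)\,a+rq\,b+\beta\big(p\,a+q\,b\big)=(rp+s+\beta p)\,a+(rq+\beta q)\,b.
\]
I want this bounded by $\tau(a+\beta b)$ with $\tau<1$, i.e.\ $rp+s+\beta p\le\tau$ and $q(r+\beta)\le\tau\beta$. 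The second condition reads $\tau\ge q(r+\beta)/\beta=q r/\beta+q$, which can be made close to $q$ by taking $\beta$ large; the first condition reads $\tau\ge rp+s+\beta p$, which for large $\beta$ grows like $\beta p$ — so these pull in opposite directions and I must be careful. The fix is to choose $\beta=\beta^*$ balancing the two: set $rp+s+\beta p = q r/\beta + q$, a quadratic in $\beta$ with a unique positive root, and take $\tau$ to be that common value (strictly, $\tau$ slightly above it so that both inequalities are strict on the compact sets).

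The main obstacle — and the crux of the lemma — is proving that this common value $\tau$ is $<1$, equivalently that for a suitable $\beta^*>0$ one has simultaneously $rp+s+\beta^* p<1$ and $q(r+\beta^*)<\beta^*$. These are exactly the inequalities \eqref{eq:cnd2} from the proof of Lemma~\ref{lemma:imagerect} (with $\beta\leftrightarrow\beta^*$ and the roles of the bound constants identified via $p=s+r\cdot(\text{something})$; more precisely $p=\tilde xR'(\tilde x)$, $s=\tilde xR'(\tilde x)-R(\tilde x)$, $q=\gamma s$, $r=\gamma p$), and Lemma~\ref{lemma:imagerect}'s proof already established — via the key positivity \eqref{eq:usefulin}, i.e.\ $T(R(x))-xT'(R(x))R'(x)>0$ coming from $\Delta_{\rm PCA}>0$ — that the feasibility region for $\beta$ is nonempty, namely $\gamma p/(1-s)<\beta<(1-\gamma p)/s$ (after checking $1-s=1-\tilde xR'(\tilde x)+R(\tilde x)>0$ and $p>0$), reducing to the single inequality $\gamma(\tilde xR'(\tilde x))^2<1-(1+\gamma)(\tilde xR'(\tilde x)-R(\tilde x))+\gamma(\tilde xR'(\tilde x)-R(\tilde x))^2$, which is \eqref{eq:lastineqrect} evaluated at $\tilde x$ instead of $x$. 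Since $\tilde\alpha\sqrt\xi>\tilde\alpha_{\rm s}$, the argument $\tilde x<1/\tilde\alpha_{\rm s}^2$ lies in the convergence region, so the same computation as in Lemma~\ref{lemma:imagerect} — now invoking \eqref{eq:usefulin} with $\tilde x$ in place of $x$, which still holds because $\Delta_{\rm PCA}$'s defining positivity persists throughout the open interval — yields \eqref{eq:lastineqrect} at $\tilde x$. Picking $\beta^*$ strictly inside the feasible interval and then $\tau\in(0,1)$ the corresponding contraction constant completes the argument; the closedness of $\mathcal X_{I^*_\Sigma}\times\mathcal X_{I^*_\Omega}$ under $\|\cdot\|_{\xi,\beta^*}$ (which is equivalent to $\|\cdot\|_\xi$ on each factor up to the constant $\beta^*$) makes the Banach setting applicable for the uniqueness conclusion later.
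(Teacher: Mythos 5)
Your overall strategy is exactly the paper's: dispose of the invariance claim by two applications of Lemma \ref{lemma:imagerect}, compose the two Lipschitz bounds of Lemma \ref{lemma:contrect}, read off the pair of conditions on $(\beta,\tau)$, and verify their joint feasibility with $\tau<1$ via the positivity \eqref{eq:usefulin} evaluated at $\tilde{x}=\gamma/(\xi\alpha^2)$. Your contraction estimate $(rp+s+\beta p)\,a+(rq+\beta q)\,b$ and the resulting conditions $rp+s+\beta p\le\tau$ and $q(r+\beta)\le\tau\beta$ coincide with the paper's \eqref{eq:taubeta}, and your handling of the codomain labelling is fine (the paper's statement is indeed off by a transposition of the two factors).

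One step is misstated, however. The conditions $rp+s+\beta p<1$ and $q(r+\beta)<\beta$ are \emph{not} the inequalities \eqref{eq:cnd2}: those read $\beta p+q<1$ and $\beta s+r<\beta$ and contain no products $rp$, $rq$, so the feasible interval for $\beta$ here is genuinely different. The correct interval is $rq/(1-q)<\beta<(1-rp-s)/p$, i.e.\ $\gamma^2 ps/(1-\gamma s)<\beta<(1-\gamma p^2-s)/p$, whereas the interval you quote is (a garbled version of) the one from Lemma \ref{lemma:imagerect}. What rescues the argument is that cross-multiplying the nonemptiness condition for the \emph{correct} interval and expanding gives $\gamma p^2<1-(1+\gamma)s+\gamma s^2$, which is exactly \eqref{eq:lastineqrect} at $\tilde{x}$ --- coincidentally the same single inequality to which \eqref{eq:cnd2} also reduces --- and this follows from \eqref{eq:usefulin} at $\tilde{x}$, valid because $\tilde{\alpha}\sqrt{\xi}>\tilde{\alpha}_{\rm s}$. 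So the final inequality you invoke, and its source, are the right ones; you just need to derive it from the correct pair of conditions (this is the content of the paper's \eqref{eq:ineqR2}) rather than by identification with \eqref{eq:cnd2}. With that repair the proof is complete and matches the paper's.
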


\begin{proof}
The claim that $G^{\Omega, \Sigma}: \mathcal X_{I^*_{\Omega}}\times \mathcal X_{I_\Sigma^*}\to \mathcal X_{I^*_{\Omega}}\times \mathcal X_{I_\Sigma^*}$ follows directly from Lemma \ref{lemma:imagerect}. We now show that \eqref{eq:Liprect3} holds. By using the definition \eqref{eq:Gmap} and the Lipschitz bounds \eqref{eq:Liprect}-\eqref{eq:Liprect2} of Lemma \ref{lemma:contrect}, we obtain that 
\begin{equation}
\begin{split}
     \|G^{\Omega, \Sigma}(\bx, \by)-G^{\Omega, \Sigma}(\bx', \by')\|_{\xi, \beta}&\le \|\bx-\bx'\|_\xi \left(\tilde{x}R'(\tilde{x})-R(\tilde{x})+\gamma(\tilde{x}R'(\tilde{x}))^2+\beta \tilde{x}R'(\tilde{x})\right)\\
     &\hspace{-3.5em}+\|\by-\by'\|_\xi\left(\gamma^2(\tilde{x}R'(\tilde{x}))^2-\gamma^2\tilde{x}R'(\tilde{x})R(\tilde{x}) +\beta\gamma(\tilde{x}R'(\tilde{x})-R(\tilde{x}))\right),
\end{split}
\end{equation}
where we have set $\tilde{x}=\gamma/(\xi\alpha^2)$. Hence, the claim of the lemma holds if there exists $\beta^*>0$ and $\tau<1$ such that
\begin{equation}\label{eq:taubeta}
    \begin{split}
        &\beta^* \tilde{x} R'(\tilde{x})+ \gamma (\tilde{x} R'(\tilde{x}))^2 -R(\tilde{x})+\tilde{x} R'(\tilde{x}) \le \tau,\\
        &\beta^* \gamma(\tilde{x} R'(\tilde{x})-R(\tilde{x}))+ \gamma^2 (\tilde{x} R'(\tilde{x}))^2-\gamma^2 \tilde{x}R(\tilde{x})R'(\tilde{x})\le \tau\beta^*.
    \end{split}
\end{equation}
We note that, as $\tilde{\alpha}\sqrt{\xi} > \tilde{\alpha}_{\rm s}$, \eqref{eq:usefulin} holds with $\tilde{x}$ in place of $x$. Hence, one readily verifies that $1-\gamma\tilde{x}R'(\tilde{x})+R(\tilde{x})> 0$. Furthermore, we have that $\tilde{x}R'(\tilde{x})>0$, as $\tilde{x}>0$ and the rectangular free cumulants are non-negative. Thus, the two inequalities in \eqref{eq:taubeta} can be satisfied simultaneously if there exists $\beta^*>0$ such that
\begin{equation*}
    \frac{\gamma^2 (\tilde{x} R'(\tilde{x}))^2-\gamma^2 \tilde{x}R(\tilde{x})R'(\tilde{x})}{1-\gamma\tilde{x}R'(\tilde{x})+\gamma R(\tilde{x})} < \beta^*< \frac{1-\gamma (\tilde{x}R'(\tilde{x}))^2-\tilde{x}R'(\tilde{x})+ R(x)}{\tilde{x} R'(\tilde{x})}.
\end{equation*}
These last two inequalities can be satisfied simultaneously if
\begin{equation}\label{eq:ineqR2}
\frac{\gamma^2 (\tilde{x} R'(\tilde{x}))^2-\gamma^2 \tilde{x}R(\tilde{x})R'(\tilde{x})}{1-\gamma\tilde{x}R'(\tilde{x})+\gamma R(\tilde{x})} < 
    \frac{1-\gamma (\tilde{x}R'(\tilde{x}))^2-\tilde{x}R'(\tilde{x})+ R(x)}{\tilde{x} R'(\tilde{x})}.
\end{equation}
By using again that $1-\gamma\tilde{x}R'(\tilde{x})+R(\tilde{x})> 0$ and $\tilde{x}R'(\tilde{x})>0$, \eqref{eq:ineqR2} can be rewritten as
\begin{equation*}
\begin{split}
&\left(1-\gamma (\tilde{x}R'(\tilde{x}))^2-\tilde{x}R'(\tilde{x})+ R(x)\right)\left(1-\gamma\tilde{x}R'(\tilde{x})+\gamma R(\tilde{x})\right) \\
&\hspace{12em}> \tilde{x} R'(\tilde{x}) \left(\gamma^2 (\tilde{x} R'(\tilde{x}))^2-\gamma^2 \tilde{x}R(\tilde{x})R'(\tilde{x})\right),
\end{split}
\end{equation*}
which again follows from \eqref{eq:usefulin} with $\tilde{x}$ in place of $x$. Thus, there exists $\beta^* >0$ and $\tau <1$ such that \eqref{eq:taubeta} is satisfied, completing the proof.
\end{proof}

At this point, we show that the state evolution of $\btSigma_{\bar{T}}$, $\btOmega_{\bar{T}}$ can be approximated via the fixed maps $h^\Sigma, h^\Omega$.

\begin{lemma}[Limit maps approximate SE maps -- Rectangular matrices]\label{lemma:apprect}
Consider the map $(h^\Omega(\bx, \by), h^\Sigma(\bx, \by)):\mathcal X_{I^*_{\Omega}}\times \mathcal X_{I_\Sigma^*}\to \mathcal X_{I^*_{\Omega}}\times \mathcal X_{I_\Sigma^*}$ defined in \eqref{eq:fixedmaprect}, where $I^*_\Omega$, $I^*_\Sigma$ are given by Lemma \ref{lemma:imagerect}. Assume that $\kappa_{2i}^\infty\ge 0$ for all $i\ge 1$, and let $\xi <1$ be such that $\tilde{\alpha}\sqrt{\xi}>\tilde{\alpha}_{\rm s}$. Then, for any $(\bx, \by) \in \mathcal X_{I^*_{\Sigma}}\times \mathcal X_{I_\Omega^*}$, 
\begin{equation}\label{eq:convrect1}
\begin{split}
    \|\btOmega_{\bar{T}} - h^\Omega(\bx, \by)\|_\xi &\le \tilde{x} R'(\tilde{x}) \|\btSigma_{\bar{T}-1} - \bx\|_\xi + \gamma(\tilde{x}R'(\tilde{x})-R(\tilde{x}))\|\btOmega_{\bar{T}-1} - \by\|_\xi+F_1(\bar{T}),
    \end{split}
\end{equation}
\begin{equation}\label{eq:convrect2}
\begin{split}
    \|\btSigma_{\bar{T}} - h^\Sigma(\bx, \by)\|_\xi &\le \gamma\tilde{x} R'(\tilde{x}) \|\btOmega_{\bar{T}-1} - \by\|_\xi + (\tilde{x}R'(\tilde{x})-R(\tilde{x}))\|\btSigma_{\bar{T}-1} - \bx\|_\xi+F_2(\bar{T}),
    \end{split}
\end{equation}
where $\tilde{x}=\gamma/(\xi\alpha^2)$ and
\begin{equation}
    \lim_{\bar{T}\to\infty}F_1(\bar{T})=0, \quad     \lim_{\bar{T}\to\infty}F_2(\bar{T})=0.
\end{equation}
\end{lemma}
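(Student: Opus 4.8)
# Proof Proposal for Lemma \ref{lemma:apprect}

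The plan is to mimic almost verbatim the argument used in the square case (Lemma \ref{lemma:app}), but now carried out simultaneously for the pair of recursions $\btSigma_{\bar{T}}$ and $\btOmega_{\bar{T}}$. Recall that the state evolution recursion for the first phase, written in \eqref{eq:SEfake1rect}, has exactly the same structure as the fixed maps $h^\Sigma, h^\Omega$ in \eqref{eq:fixedmaprect}, except that (i) the indices are shifted by one (the recursion couples $\tilde\sigma_{s-j-1,t-k-1}$ and $\tilde\omega_{s-j-1,t-k-1}$, whereas the limit maps use $x_{s-j,t-k}$ and $y_{s-j,t-k}$), and (ii) the double sums in the recursion truncate at $j=s-1$, $k=t-1$ because of the initialization $\tilde\sigma_{0,t}=\tilde\sigma_{t,0}=0$, whereas the limit maps sum over all $j,k\ge 0$. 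The index shift is deliberately built into the definition of $h^\Sigma, h^\Omega$ so that the truncated tail is easy to control, exactly as explained after \eqref{eq:fixedmap} in the square case. Concretely, I would fix $(\bx,\by)\in\mathcal X_{I^*_\Sigma}\times\mathcal X_{I^*_\Omega}$, embed $\btSigma_{\bar T},\btOmega_{\bar T}$ into $\mathcal X$ by the coordinate identification given just before \eqref{eq:fixedmaprect}, and split $\|\btOmega_{\bar T}-h^\Omega(\bx,\by)\|_\xi$ (and analogously for $\Sigma$) into the supremum over $\max(|s|,|t|)<\bar T$ and the supremum over $\max(|s|,|t|)\ge \bar T$.

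For the regime $\max(|s|,|t|)<\bar T$, I would write $|(\btOmega_{\bar T})_{s,t}-h^\Omega_{s,t}(\bx,\by)|$ as the difference of the two double sums, bound it by a ``matching'' term plus a ``tail'' term (call them $T_1$ and $T_2$ as in \eqref{eq:case1int1}). The matching term $T_1$ collects, for each $(j,k)$ in the truncated range, the difference between $\tilde\sigma_{s-j-1+\bar T-1,\dots}$ (resp.\ $\tilde\omega$) and $x_{s-j,t-k}$ (resp.\ $y_{s-j,t-k}$); using $\kappa_{2i}^\infty\ge 0$, the bound $|x_{s-j,t-k}-\tilde\sigma_{\dots}|\le \|\btSigma_{\bar T-1}-\bx\|_\xi\,\xi^{-\max(|s-j|,|t-k|)}\le \|\btSigma_{\bar T-1}-\bx\|_\xi\,\xi^{-\max(|s|,|t|)-j-k}$ (and similarly for $\btOmega$), and then the identities \eqref{eq:R1rect}--\eqref{eq:R20rect} to resum, $T_1$ produces precisely the linear combination $\tilde x R'(\tilde x)\|\btSigma_{\bar T-1}-\bx\|_\xi + \gamma(\tilde x R'(\tilde x)-R(\tilde x))\|\btOmega_{\bar T-1}-\by\|_\xi$ times $\xi^{-\max(|s|,|t|)}$ (for the $\Omega$ map; the $\Sigma$ map gives the analogous expression with the roles and factors of $\gamma$ swapped, matching \eqref{eq:convrect1}--\eqref{eq:convrect2}). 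The tail term $T_2$, which sums $\kappa_{2i}^\infty$-weighted contributions over the index set $I_1=\{(j,k):j\ge s+\bar T \text{ or } k\ge t+\bar T\}$ times bounded quantities $(\alpha^2\Delta_{\PCA}+x_{\dots})$ or $(\alpha^2\Delta_{\PCA}+y_{\dots})$ with $|x_{\dots}|\le a_\Sigma$, $|y_{\dots}|\le a_\Omega$ since $(\bx,\by)\in\mathcal X_{I^*_\Sigma}\times\mathcal X_{I^*_\Omega}$, is bounded by a constant times $\sum_{i\ge \bar T-\max(|s|,|t|)}(i+1)\kappa_{2(i+1)}^\infty (\gamma/\alpha^2)^i$, which after multiplying by $\xi^{\max(|s|,|t|)}$ goes to $0$ as $\bar T\to\infty$ because the series \eqref{eq:R1rect} converges for argument $\gamma/(\xi\alpha^2)<1/\tilde\alpha_{\rm s}^2$. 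This whole contribution is absorbed into $F_1(\bar T)$ (resp.\ $F_2(\bar T)$). For the regime $\max(|s|,|t|)\ge \bar T$, I would use that $|h^\Omega_{s,t}(\bx,\by)|\le a_\Omega$, $|h^\Sigma_{s,t}(\bx,\by)|\le a_\Sigma$ (Lemma \ref{lemma:imagerect}) and that the recursion values are likewise uniformly bounded (by the same Gronwall-type argument as in Lemma \ref{lemma:imagerect} one shows $|\tilde\sigma_{s,t}|\le a_\Sigma$, $|\tilde\omega_{s,t}|\le a_\Omega$ for all $s,t$), so $|(\btOmega_{\bar T})_{s,t}-h^\Omega_{s,t}(\bx,\by)|\le c_1$ for a constant $c_1$ independent of $s,t,\bar T$, hence $\xi^{\max(|s|,|t|)}|(\btOmega_{\bar T})_{s,t}-h^\Omega_{s,t}(\bx,\by)|\le c_1\xi^{\bar T}\to 0$; this too goes into $F_1,F_2$.

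Putting the two regimes together yields \eqref{eq:convrect1}--\eqref{eq:convrect2} with $F_1(\bar T),F_2(\bar T)$ equal to the sum of a $c_1\xi^{\bar T}$ term and a term of the form $C\sup_{0\le t\le\bar T}\xi^t\sum_{i\ge \bar T-t}(i+1)\kappa_{2(i+1)}^\infty(\gamma/\alpha^2)^i$, both of which vanish as $\bar T\to\infty$ — the latter by the same elementary estimate used at the end of the proof of Lemma \ref{lemma:app}, exploiting convergence of the rectangular $R$-transform series for $z<1/\tilde\alpha_{\rm s}^2$ together with $\tilde\alpha\sqrt\xi>\tilde\alpha_{\rm s}$. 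The main subtlety — the part that genuinely differs from the square case and needs care — is bookkeeping the \emph{coupled} structure: in the $\Omega$-recursion the terms $\kappa_{2(j+k+1)}^\infty$ multiply $(\alpha^2\Delta_{\PCA}+\tilde\sigma_{\dots})$ while $\kappa_{2(j+k+2)}^\infty$ multiplies $(\alpha^2\Delta_{\PCA}+\tilde\omega_{\dots})$, and vice versa in the $\Sigma$-recursion, so one must track two separate $\|\cdot\|_\xi$-distances ($\|\btSigma_{\bar T-1}-\bx\|_\xi$ and $\|\btOmega_{\bar T-1}-\by\|_\xi$) through the resummation and verify that the coefficients produced by \eqref{eq:R1rect} and \eqref{eq:R20rect} — namely $\tilde x R'(\tilde x)$ from the $\kappa_{2(j+k+1)}$ sum and $\gamma(\tilde x R'(\tilde x)-R(\tilde x))$ (or the $\Sigma$-analogue) from the $\kappa_{2(j+k+2)}$ sum — come out exactly as in the statement, including all factors of $\gamma$ and $1/\alpha^2$. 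Once this identification is checked on one of the two maps, the other follows by the same computation with the obvious relabelling, and the tail and boundary estimates are identical for both. No genuinely new idea beyond the square-case argument is needed; the work is purely in carefully propagating the two-component bound.
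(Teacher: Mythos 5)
Your proposal is correct and follows essentially the same route as the paper's proof: the same two-regime split of the $\|\cdot\|_\xi$ supremum, the same $T_1$/$T_2$ decomposition into a matching term (resummed via \eqref{eq:R1rect}--\eqref{eq:R20rect} using non-negativity of the rectangular free cumulants) and a tail term controlled by convergence of the series at $\gamma/(\xi\alpha^2)$, and the same $c_1\xi^{\bar T}$ bound on the boundary regime. You also correctly identify the only real extra work relative to Lemma \ref{lemma:app}, namely tracking the two coupled distances $\|\btSigma-\bx\|_\xi$ and $\|\btOmega-\by\|_\xi$ through the resummation.
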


\begin{proof}
First, we write
\begin{equation*}
\begin{split}
    \|\btOmega_{\bar{T}} - h^\Omega(\bx, \by)\|_\xi &= \sup_{s, t\le 0} \xi^{\max(|s|, |t|)}|(\btOmega_{\bar{T}})_{s, t} - h^\Omega_{s, t}(\bx, \by)|\\
    &= \max\Bigg(\sup_{\substack{s, t\le 0\\\max(|s|, |t|)<\bar{T}}} \xi^{\max(|s|, |t|)}|(\btOmega_{\bar{T}})_{s, t} - h^\Omega_{s, t}(\bx, \by)|,\\
    &\hspace{5em}\sup_{\substack{s, t\le 0\\\max(|s|, |t|)\ge \bar{T}}} \xi^{\max(|s|, |t|)}|(\btOmega_{\bar{T}})_{s, t} - h^\Omega_{s, t}(\bx, \by)|\Bigg),
    \end{split}
\end{equation*}
where $(\btOmega_{\bar{T}})_{s, t}=\tilde{\omega}_{s+\bar{T}, t+\bar{T}}$ if $s\ge -\bar{T}$ and $t\ge -\bar{T}$, and $(\btOmega_{\bar{T}})_{s, t}=0$ otherwise. 

Let us look at the case $\max(|s|, |t|)<\bar{T}$, and define $I_1 = \{(j, k): j\ge s+\bar{T} \mbox{ or }k\ge t+\bar{T}\}$. Then, 
\begin{equation}\label{eq:case1int1rect}
\begin{split}
&    |(\btOmega_{\bar{T}})_{s, t} - h^\Omega_{s, t}(\bx, \by)| \\
&= \bigg|\gamma\sum_{j=0}^{s+\bar{T}-1}\sum_{k=0}^{t+\bar{T}-1}\left(\frac{\gamma}{\alpha^2}\right)^{j+k}
    \bigg(\kappa_{2(j+k+1)}^\infty\frac{1}{\alpha^2}\left(\alpha^2\Delta_{\rm PCA}+\tilde{\sigma}_{s-j+\bar{T}-1, t-k+\bar{T}-1}\right)\\
    &\hspace{12em}+\kappa_{2(j+k+2)}^\infty\frac{\gamma^2}{\alpha^4}\left(\alpha^2\Delta_{\rm PCA}+\tilde{\omega}_{s-j+\bar{T}-1, t-k+\bar{T}-1}\right)\bigg)\\
    &\hspace{2em}-\gamma\sum_{j=0}^{\infty}\sum_{k=0}^{\infty}\left(\frac{\gamma}{\alpha^2}\right)^{j+k}
    \bigg(\kappa_{2(j+k+1)}^\infty\frac{1}{\alpha^2}\left(\alpha^2\Delta_{\rm PCA}+x_{s-j, t-k}\right)\\
    &\hspace{12em}+\kappa_{2(j+k+2)}^\infty\frac{\gamma^2}{\alpha^4}\left(\alpha^2\Delta_{\rm PCA}+y_{s-j, t-k}\right)\bigg)\bigg|\\
    &\le \bigg|\gamma\sum_{j=0}^{s+\bar{T}-1}\sum_{k=0}^{t+\bar{T}-1}\left(\frac{\gamma}{\alpha^2}\right)^{j+k}
    \bigg(\kappa_{2(j+k+1)}^\infty\frac{1}{\alpha^2}\left(x_{s-j, t-k}-\tilde{\sigma}_{s-j+\bar{T}-1, t-k+\bar{T}-1}\right)\\
    &\hspace{12em}+\kappa_{2(j+k+2)}^\infty\frac{\gamma^2}{\alpha^4}\left(y_{s-j, t-k}-\tilde{\omega}_{s-j+\bar{T}-1, t-k+\bar{T}-1}\right)\bigg)\bigg|\\
    &+\bigg|\gamma\sum_{j, k\in I_1}\left(\frac{\gamma}{\alpha^2}\right)^{j+k}
    \bigg(\kappa_{2(j+k+1)}^\infty\frac{1}{\alpha^2}\left(\alpha^2\Delta_{\rm PCA}+x_{s-j, t-k}\right)\\
    &\hspace{12em}+\kappa_{2(j+k+2)}^\infty\frac{\gamma^2}{\alpha^4}\left(\alpha^2\Delta_{\rm PCA}+y_{s-j, t-k}\right)\bigg)\bigg|:= T_1 + T_2.
\end{split}
\end{equation}
The term $T_1$ can be upper bounded as follows:
\begin{equation*}
    \begin{split}
        T_1 &\stackrel{\mathclap{\mbox{\footnotesize (a)}}}{\le} \gamma\sum_{j=0}^{s+\bar{T}-1}\sum_{k=0}^{t+\bar{T}-1}\left(\frac{\gamma}{\alpha^2}\right)^{j+k}
    \bigg(\kappa_{2(j+k+1)}^\infty\frac{1}{\alpha^2}\left|x_{s-j, t-k}-\tilde{\sigma}_{s-j+\bar{T}-1, t-k+\bar{T}-1}\right|\\
    &\hspace{12em}+\kappa_{2(j+k+2)}^\infty\frac{\gamma^2}{\alpha^4}\left|y_{s-j, t-k}-\tilde{\omega}_{s-j+\bar{T}-1, t-k+\bar{T}-1}\right|\bigg)\\
        &\le\|\btSigma_{\bar{T}-1}-\bx\|_\xi\xi^{-\max(|s|, |t|)}\gamma\sum_{j=0}^{s+\bar{T}-1}\sum_{k=0}^{t+\bar{T}-1}\left(\frac{\gamma}{\xi\alpha^2}\right)^{j+k}
    \kappa_{2(j+k+1)}^\infty\frac{1}{\alpha^2}\\
    &\hspace{4em}+\|\btOmega_{\bar{T}-1}-\by\|_\xi\xi^{-\max(|s|, |t|)}\gamma\sum_{j=0}^{s+\bar{T}-1}\sum_{k=0}^{t+\bar{T}-1}\left(\frac{\gamma}{\xi\alpha^2}\right)^{j+k}
    \kappa_{2(j+k+2)}^\infty\frac{\gamma}{\alpha^4}
    \end{split}
\end{equation*}    
    \begin{equation}\label{eq:case1int2rect}
    \begin{split}
      \phantom{T_1\hspace{2em}}  &\stackrel{\mathclap{\mbox{\footnotesize (b)}}}{\le}\|\btSigma_{\bar{T}-1}-\bx\|_\xi\xi^{-\max(|s|, |t|)}\gamma\sum_{j=0}^{\infty}\sum_{k=0}^{\infty}\left(\frac{\gamma}{\xi\alpha^2}\right)^{j+k}
    \kappa_{2(j+k+1)}^\infty\frac{1}{\alpha^2}\\
    &\hspace{4em}+\|\btOmega_{\bar{T}-1}-\by\|_\xi\xi^{-\max(|s|, |t|)}\gamma\sum_{j=0}^{\infty}\sum_{k=0}^{\infty}\left(\frac{\gamma}{\xi\alpha^2}\right)^{j+k}
    \kappa_{2(j+k+2)}^\infty\frac{\gamma}{\alpha^4}\\
        &\stackrel{\mathclap{\mbox{\footnotesize (c)}}}{\le} \|\btSigma_{\bar{T}-1}-\bx\|_\xi\xi^{-\max(|s|, |t|)} \tilde{x}R'(\tilde{x})+\|\btOmega_{\bar{T}-1}-\by\|_\xi\xi^{-\max(|s|, |t|)}\gamma (\tilde{x}R'(\tilde{x})-R(\tilde{x})),
    \end{split}
\end{equation}
where $\tilde{x}=\gamma/(\xi\alpha^2)$.
Here, (a) and (b) follow from the hypothesis that $\kappa_{2i}^\infty\ge 0$ for $i\ge 1$, (c) uses \eqref{eq:R1rect}, \eqref{eq:R20rect} and that $\xi\le 1$. By using that $(\bx, \by) \in \mathcal X_{I^*_{\Sigma}}\times \mathcal X_{I_\Omega^*}$, the term $T_2$ can be upper bounded as follows:
\begin{equation}\label{eq:case1int3rect}
    \begin{split}
        T_2&\le C_1\sum_{j, k\in I_1}\left(\frac{\gamma}{\alpha^2}\right)^{j+k}(\kappa_{2(j+k+1)}^\infty+\kappa_{2(j+k+2)}^\infty),
    \end{split}
\end{equation}
where $C_1$ is a constant independent of $s, t, \bar{T}$.
Note that, if $(j, k)\in I_1$, then $j+k\ge-\max(|s|,|t|)+\bar{T}$. Consequently, the RHS of \eqref{eq:case1int3rect} can upper bounded by \begin{equation}\label{eq:case1int4rect}
    C_2\sum_{i=\bar{T}-\max(|s|, |t|)}^\infty \left(\frac{\gamma}{\alpha^2}\right)^i (i+1)\kappa^\infty_{2(i+1)},
\end{equation}
where $C_2$ is a constant independent of $s, t, \bar{T}$.
By combining \eqref{eq:case1int1rect}, \eqref{eq:case1int2rect}, \eqref{eq:case1int3rect} and \eqref{eq:case1int4rect}, we obtain that 
\begin{equation}\label{eq:case1finrect}
\begin{split}
    &\sup_{\substack{s, t\le 0\\\max(|s|, |t|)<\bar{T}}} \xi^{\max(|s|, |t|)}|(\btOmega_{\bar{T}})_{s, t} - h^\Omega_{s, t}(\bx, \by)|\le \|\btSigma_{\bar{T}-1} - \bx\|_\xi \tilde{x}R'(\tilde{x})\\
    &\hspace{4em}+\|\btOmega_{\bar{T}-1} - \by\|_\xi\gamma(\tilde{x}R'(\tilde{x})-R(\tilde{x}))+C_2 \sup_{0\le t\le \bar{T}}\xi^t\sum_{i=\bar{T}-t}^\infty \left(\frac{\gamma}{\alpha^2}\right)^i (i+1)\kappa^\infty_{2(i+1)}.
\end{split}
\end{equation}

Let us now look at the case $\max(|s|, |t|)\ge \bar{T}$. Recall that $|h^\Omega_{s, t}(\bx, \by)|\le a_\Omega$, $\tilde{\sigma}_{0, 0}=(1-\Delta_{\rm PCA})\alpha^2$ and $\tilde{\sigma}_{0, t}=0$ for $t\in [1, \bar{T}]$. Thus,
\begin{equation*}
    |(\btOmega_{\bar{T}})_{s, t} - h^\Omega_{s, t}(\bx, \by)|\le C_3,
\end{equation*}
where $C_3$ is a constant independent of $s, t, \bar{T}$. This immediately implies that
\begin{equation*}
    \sup_{\substack{s, t\le 0\\\max(|s|, |t|)\ge \bar{T}}} \xi^{\max(|s|, |t|)}|(\btOmega_{\bar{T}})_{s, t} - h^\Omega_{s, t}(\bx, \by)|\le C_3 \xi^{\bar{T}},
\end{equation*}
which combined with \eqref{eq:case1finrect} allows us to conclude that
\begin{equation}\label{eq:case1finrect2}
\begin{split}
    &\|\btOmega_{\bar{T}} - h^\Omega(\bx, \by)\|_\xi\le \|\btSigma_{\bar{T}-1} - \bx\|_\xi \tilde{x}R'(\tilde{x})\\
    &\hspace{2em}+\|\btOmega_{\bar{T}-1} - \by\|_\xi\gamma(\tilde{x}R'(\tilde{x})-R(\tilde{x}))+C_2 \sup_{0\le t\le \bar{T}}\xi^t\sum_{i=\bar{T}-t}^\infty \left(\frac{\gamma}{\alpha^2}\right)^i (i+1)\kappa^\infty_{2(i+1)}+C_3\xi^{\bar{T}}.
\end{split}
\end{equation}
As $\tilde{\alpha}>\tilde{\alpha}_{\rm s}$ and the series in \eqref{eq:R1rect} is convergent for $z<1/(\tilde{\alpha}_{\rm s})^2$, one readily verifies that 
\begin{equation}\label{eq:limver}
\lim_{\bar{T}\to\infty}    \sup_{0\le t\le \bar{T}}\xi^t\sum_{i=\bar{T}-t}^\infty \left(\frac{\gamma}{\alpha^2}\right)^i (i+1)\kappa^\infty_{2(i+1)}=0,
\end{equation}
which concludes the proof of \eqref{eq:convrect1}.

The proof of \eqref{eq:convrect2} follows similar passages, and we outline them below. First, we write
\begin{equation*}
\begin{split}
    \|\btSigma_{\bar{T}} - h^\Sigma(\bx, \by)\|_\xi &= \sup_{s, t\le 0} \xi^{\max(|s|, |t|)}|(\btSigma_{\bar{T}})_{s, t} - h^\Sigma_{s, t}(\bx, \by)|\\
    &= \max\Bigg(\sup_{\substack{s, t\le 0\\\max(|s|, |t|)<\bar{T}}} \xi^{\max(|s|, |t|)}|(\btSigma_{\bar{T}})_{s, t} - h^\Sigma_{s, t}(\bx, \by)|,\\
    &\hspace{5em}\sup_{\substack{s, t\le 0\\\max(|s|, |t|)\ge \bar{T}}} \xi^{\max(|s|, |t|)}|(\btSigma_{\bar{T}})_{s, t} - h^\Sigma_{s, t}(\bx, \by)|\Bigg),
    \end{split}
\end{equation*}
where $(\btSigma_{\bar{T}})_{s, t}=\tilde{\sigma}_{s+\bar{T}, t+\bar{T}}$ if $s\ge -\bar{T}$ and $t\ge -\bar{T}$, and $(\btSigma_{\bar{T}})_{s, t}=0$ otherwise. For the case $\max(|s|, |t|)<\bar{T}$, we have
\begin{equation}\label{eq:lcase1}
\begin{split}
&    |(\btSigma_{\bar{T}})_{s, t} - h^\Sigma_{s, t}(\bx, \by)| \\
    &\le \bigg|\sum_{j=0}^{s+\bar{T}-1}\sum_{k=0}^{t+\bar{T}-1}\left(\frac{\gamma}{\alpha^2}\right)^{j+k}
    \bigg(\kappa_{2(j+k+1)}^\infty\frac{\gamma^2}{\alpha^2}\left(y_{s-j, t-k}-\tilde{\omega}_{s-j+\bar{T}, t-k+\bar{T}}\right)\\
    &\hspace{12em}+\kappa_{2(j+k+2)}^\infty\frac{\gamma^2}{\alpha^4}\left(x_{s-j, t-k}-\tilde{\sigma}_{s-j+\bar{T}-1, t-k+\bar{T}-1}\right)\bigg)\bigg|\\
    &+\bigg|\sum_{j, k\in I_1}\left(\frac{\gamma}{\alpha^2}\right)^{j+k}
    \bigg(\kappa_{2(j+k+1)}^\infty\frac{\gamma^2}{\alpha^2}\left(\alpha^2\Delta_{\rm PCA}+y_{s-j, t-k}\right)\\
    &\hspace{12em}+\kappa_{2(j+k+2)}^\infty\frac{\gamma^2}{\alpha^4}\left(\alpha^2\Delta_{\rm PCA}+x_{s-j, t-k}\right)\bigg)\bigg|:= T_3 + T_4.
\end{split}
\end{equation}
By using \eqref{eq:R1rect}, \eqref{eq:R20rect} and the non-negativity of the rectangular free cumulants, the term $T_3$ can be upper bounded as follows:
\begin{equation}\label{eq:lcase2}
    \begin{split}
        T_3 &\le \|\btOmega_{\bar{T}}-\by\|_\xi\xi^{-\max(|s|, |t|)} \gamma\tilde{x}R'(\tilde{x})+\|\btSigma_{\bar{T}-1}-\bx\|_\xi\xi^{-\max(|s|, |t|)} (\tilde{x}R'(\tilde{x})-R(\tilde{x})).
    \end{split}
\end{equation}
Furthermore, the term $T_4$ can be upper bounded as
\begin{equation}\label{eq:lcase3}
   T_4\le  C_4\sum_{i=\bar{T}-\max(|s|, |t|)}^\infty \left(\frac{\gamma}{\alpha^2}\right)^i (i+1)\kappa^\infty_{2(i+1)},
\end{equation}
where $C_4$ is a constant independent of $s, t, \bar{T}$.
For the case $\max(|s|, |t|)\ge \bar{T}$, we have
\begin{equation}\label{eq:lcase4}
    \sup_{\substack{s, t\le 0\\\max(|s|, |t|)\ge \bar{T}}} \xi^{\max(|s|, |t|)}|(\btSigma_{\bar{T}})_{s, t} - h^\Sigma_{s, t}(\bx, \by)|\le C_5 \xi^{\bar{T}},
\end{equation}
where $C_5$ is a constant independent of $s, t, \bar{T}$. By combining \eqref{eq:lcase1}, \eqref{eq:lcase2}, \eqref{eq:lcase3} and \eqref{eq:lcase4}, we conclude that 
\begin{equation*}
\begin{split}
    &\|\btSigma_{\bar{T}} - h^\Sigma(\bx, \by)\|_\xi\le \|\btOmega_{\bar{T}} - \by\|_\xi \gamma\tilde{x}R'(\tilde{x})\\
    &\hspace{2em}+\|\btSigma_{\bar{T}-1} - \bx\|_\xi(\tilde{x}R'(\tilde{x})-R(\tilde{x}))+C_4 \sup_{0\le t\le \bar{T}}\xi^t\sum_{i=\bar{T}-t}^\infty \left(\frac{\gamma}{\alpha^2}\right)^i (i+1)\kappa^\infty_{2(i+1)}+C_5\xi^{\bar{T}},
\end{split}
\end{equation*}
which, together with \eqref{eq:limver}, concludes the proof of \eqref{eq:convrect2}.
\end{proof}

Finally, we can put everything together and prove Lemma \ref{lem:SE_FP_phase1rect}. 

\begin{proof}[Proof of Lemma \ref{lem:SE_FP_phase1rect}]
Fix $\epsilon>0$ and denote by $\left(G^{\Omega, \Sigma}\right)^{T_0}$ the $T_0$-fold composition of the map $G^{\Omega, \Sigma}$ defined in \eqref{eq:Gmap}. Note that Lemma \ref{lemma:fixedrect} implies that $(\bx^*, \by^*)$ is a fixed point of $G^{\Omega, \Sigma}$, and Lemma \ref{lemma:contrect2} implies that this fixed point is unique. Then, for any $(\bx, \by) \in \mathcal X_{I^*_{\Sigma}}\times \mathcal X_{I_\Omega^*}$,
\begin{equation}\label{eq:pt1rect}
\begin{split}
    \|\left(G^{\Omega, \Sigma}\right)^{T_0}(\bx, \by)-(\bx^*, \by^*)\|_{\xi, \beta^*} &= \|\left(G^{\Omega, \Sigma}\right)^{T_0}(\bx, \by)-\left(G^{\Omega, \Sigma}\right)^{T_0}(\bx^*, \by^*)\|_{\xi, \beta^*} \\ & \le \tau^{T_0}\|(\bx, \by)-(\bx^*, \by^*)\|_{\xi, \beta^*},
    \end{split}
\end{equation}
where the inequality follows from Lemma \ref{lemma:contrect2}. Note that $\tau<1$ and $\mathcal X_{I^*_{\Omega}}\times \mathcal X_{I_\Sigma^*}$ is bounded under $\|\cdot\|_{\xi, \beta^*}$. Hence, we can make the RHS of \eqref{eq:pt1rect} smaller than $\epsilon/2$ by choosing a sufficiently large $T_0$. Furthermore, an application of Lemma \ref{lemma:apprect} gives that
\begin{equation}
    \begin{split}
&        \|(\btSigma_{\bar{T}}, \btOmega_{\bar{T}})-G^{\Omega, \Sigma}(\bx, \by)\|_{\xi, \beta^*}\le \|\btSigma_{\bar{T}-1}-\bx\|_\xi \left(\tilde{x}R'(\tilde{x})-R(\tilde{x})+\gamma(\tilde{x}R'(\tilde{x}))^2+\beta^* \tilde{x}R'(\tilde{x})\right)\\
        &\hspace{4em}+\|\btOmega_{\bar{T}-1}-\by\|_\xi \left(\gamma^2(\tilde{x}R'(\tilde{x}))^2-\gamma^2 \tilde{x}R'(\tilde{x})R(\tilde{x})+\beta^*\gamma(\tilde{x}R'(\tilde{x})-R(\tilde{x}))\right)+H(\bar{T})\\
     &\le   \tau\|(\btSigma_{\bar{T}-1}, \btOmega_{\bar{T}-1})-(\bx, \by)\|_{\xi, \beta^*}+H(\bar{T}),
    \end{split}
\end{equation}
where $\lim_{\bar{T}\to\infty}H(\bar{T})=0$ and the inequality follows from \eqref{eq:taubeta}. Therefore, for  all sufficiently large $\bar{T}$,
\begin{equation}\label{eq:pt2rect}
    \|(\btSigma_{\bar{T}+T_0}, \btOmega_{\bar{T}+T_0})-\left(G^{\Omega, \Sigma}\right)^{T_0}(\bx, \by)\|_{\xi, \beta^*}\le\tau^{T_0}\|(\btSigma_{\bar{T}}, \btOmega_{\bar{T}})-(\bx, \by)\|_{\xi, \beta^*}+\frac{\epsilon}{4}. 
\end{equation}
Note that $(\bx, \by) \in \mathcal X_{I^*_{\Sigma}}\times \mathcal X_{I_\Omega^*}$ implies that $\|\bx\|_\xi\le a_\Sigma$ and $\|\by\|_\xi\le a_\Omega$. In addition, by following the same argument as in Lemma \ref{lemma:imagerect}, one can show that $|\tilde{\omega}_{s, t}|\le a_\Omega$ and $|\tilde{\sigma}_{s, t}|\le a_\Sigma$, which in turn implies that $\|\btOmega_{\bar{T}}\|_\xi\le a_\Omega$ and $\|\btSigma_{\bar{T}}\|_\xi\le a_\Sigma$. As a result, we can make the RHS of \eqref{eq:pt2rect} smaller than $\epsilon/2$ by choosing a sufficiently large $T_0$. As the RHS of both \eqref{eq:pt1rect} and \eqref{eq:pt2rect} can be made smaller than $\epsilon/2$, an application of the triangle inequality gives that  
\begin{equation}
    \limsup_{\bar{T}\to\infty}\|(\btSigma_{\bar{T}}, \btOmega_{\bar{T}}) -(\bx^*, \by^*)\|_{\xi, \beta^*}\le \epsilon,
\end{equation}
which, after setting $\bar{T}=T+1$, implies the desired result.
\end{proof}

\subsection{Convergence to PCA Estimator for the First Phase}\label{app:firstestrect}

In this section, we prove that the artificial AMP iterate at the end of the first phase converges in normalized $\ell_2$-norm to the left singular vector produced by PCA.

\begin{lemma}[Convergence to PCA estimator  -- Rectangular matrices]\label{lemma:convrect}
Consider the setting of Theorem \ref{thm:rect}, and the first phase of the artificial AMP iteration described in \eqref{eq:AMPfake1rect}, with initialization given by \eqref{eq:AMPfake_init_rect}. Assume that $\kappa_{2i}^\infty\ge 0$ for all $i\ge 1$, and that $\tilde{\alpha}>\tilde{\alpha}_{\rm s}$. Then, 
\begin{equation}
\lim_{T \to \infty} \lim_{n \to \infty} \frac{1}{\sqrt{m}} \|\tilde{\bu}^{T+1}-\sqrt{m}\bu_{\rm PCA}\| =0\,\,\, \text{ a.s.}
     \label{eq:xstxT_diffrect}
\end{equation}

\end{lemma}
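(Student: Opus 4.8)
The plan is to mirror the proof of Lemma~\ref{lem:Phase1_PCA_conv}, with the symmetric matrix $\bX\bX^\sT$ playing the role that $\bX$ played in the square case and $\sigma_1(\bX)^2$ playing the role of $G^{-1}(1/\alpha)$. By Theorem~2.8 of~\cite{benaych2012singular} and the identity $1/\tilde\alpha^2=\gamma/\alpha^2$, one has $\sigma_1(\bX)^2\to(D^{-1}(\gamma/\alpha^2))^2$ and $\sigma_2(\bX)^2\to b^2$ almost surely, with $b<D^{-1}(\gamma/\alpha^2)$ since $\tilde\alpha>\tilde\alpha_{\rm s}$. I would decompose $\tilde{\bu}^{T+1}=\zeta_{T+1}\bu_{\rm PCA}+\br^{T+1}$ with $\zeta_{T+1}=\langle\tilde{\bu}^{T+1},\bu_{\rm PCA}\rangle$ and $\langle\br^{T+1},\bu_{\rm PCA}\rangle=0$, as in~\eqref{eq:dc}, and set $\be^{T+1}=(\bX\bX^\sT-(D^{-1}(\gamma/\alpha^2))^2\bI_m)\tilde{\bu}^{T+1}$; the heuristic of Appendix~\ref{subsec:proof_sketch_rect}, together with \eqref{eq:Rrect}--\eqref{eq:RD}, identifies $(D^{-1}(\gamma/\alpha^2))^2$ as the eigenvalue of $\bX\bX^\sT$ that the limit iterate would have, so $\be^{T+1}$ is the natural residual to drive to zero.

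The lower-bound step is a verbatim translation of~\eqref{eq:comprt}--\eqref{eq:Lambda}: diagonalize the symmetric matrix $\bX\bX^\sT-(D^{-1}(\gamma/\alpha^2))^2\bI_m$, and use $\langle\br^{T+1},\bu_{\rm PCA}\rangle=0$ to replace its top eigenvalue $\sigma_1(\bX)^2-(D^{-1}(\gamma/\alpha^2))^2$ by $\sigma_2(\bX)^2-(D^{-1}(\gamma/\alpha^2))^2$, which is bounded away from zero; this gives $\|(\bX\bX^\sT-(D^{-1}(\gamma/\alpha^2))^2\bI_m)\br^{T+1}\|\ge c\,\|\br^{T+1}\|$ for a constant $c>0$ independent of $m,T$. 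Likewise $\frac1{\sqrt m}\|(\bX\bX^\sT-(D^{-1}(\gamma/\alpha^2))^2\bI_m)\zeta_{T+1}\bu_{\rm PCA}\|=\frac{|\zeta_{T+1}|}{\sqrt m}\,|\sigma_1(\bX)^2-(D^{-1}(\gamma/\alpha^2))^2|\to0$, since $\sigma_1(\bX)^2$ converges and $|\zeta_{T+1}|/\sqrt m\le\|\tilde{\bu}^{T+1}\|/\sqrt m$ is bounded for large $T$ by Proposition~\ref{prop:tilSErect} and Lemma~\ref{lem:SE_FP_phase1rect}.

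The crux is to show $\lim_{T\to\infty}\lim_{n\to\infty}\frac1m\|\be^{T+1}\|^2=0$. Using the first-phase recursion~\eqref{eq:AMPfake1rect} at iteration $T+1$, I would invert the relations $\tilde{\bv}^{T+1}=\tfrac\gamma\alpha\tilde{\bg}^{T+1}$, $\tilde{\bg}^{T+1}=\bX^\sT\tilde{\bu}^{T+1}-\sum_i\tilde{\sb}_{T+1,i}\tilde{\bv}^i$, $\tilde{\bdff}^{T+1}=\bX\tilde{\bv}^{T+1}-\sum_i\tilde{\sa}_{T+1,i}\tilde{\bu}^i$, together with $\bX\tilde{\bv}^i=\alpha\tilde{\bu}^{i+1}+\sum_j\tilde{\sa}_{i,j}\tilde{\bu}^j$ for $i\le T$, to express $\bX\bX^\sT\tilde{\bu}^{T+1}$ as an explicit linear combination of $\tilde{\bdff}^{T+1}$ and the first-phase iterates $\tilde{\bu}^1,\dots,\tilde{\bu}^{T+1}$, with coefficients built from the rectangular free cumulants. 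After replacing $\{\kappa_{2k}\}$ by $\{\kappa_{2k}^\infty\}$ at the cost of a vanishing error (as in~\eqref{eq:kappj_kappj_inf}--\eqref{eq:uiuj_inner_prod}), Proposition~\ref{prop:tilSErect} lets me evaluate $\lim_{n\to\infty}\frac1m\|\be^{T+1}\|^2$ as an expectation $\E\{(\,\cdot\,)^2\}$ in the state-evolution variables $\tilde F_{T+1}$ and $\tilde U_i=\tilde F_{i-1}/\alpha$. I would then split this expectation by a triangle inequality into pieces analogous to $S_1,S_2,S_3$ in~\eqref{eq:etbd2}: the ``difference'' terms vanish as $T\to\infty$ because $\tilde\sigma_{s,t}\to a^*$ and $\tilde\omega_{s,t}\to b^*$ by Lemma~\ref{lem:SE_FP_phase1rect} (so the $\tilde F_t$, $\tilde G_t$ become asymptotically stationary and equi-correlated with $U_*$, $V_*$), while the ``coefficient'' term vanishes because the summed coefficients collapse, via \eqref{eq:Rrect}--\eqref{eq:RD}, to exactly $(D^{-1}(\gamma/\alpha^2))^2$. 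This is the step I expect to be the main obstacle: in contrast with the square case, the first-phase recursion couples the two covariance arrays $(\tilde\sigma_{s,t})$ and $(\tilde\omega_{s,t})$ and the memory coefficients $\tilde{\sa},\tilde{\sb}$ have the more involved form in~\eqref{eq:AMPfake1rect}, so both the algebraic identity and the convergence bounds require heavier bookkeeping.

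Combining the two bounds gives $\lim_{T\to\infty}\lim_{n\to\infty}\frac1{\sqrt m}\|\be^{T+1}-(\bX\bX^\sT-(D^{-1}(\gamma/\alpha^2))^2\bI_m)\zeta_{T+1}\bu_{\rm PCA}\|=0$, hence $\frac1{\sqrt m}\|\br^{T+1}\|\to0$ in the iterated limit, so $\tilde{\bu}^{T+1}$ aligns with $\bu_{\rm PCA}$. Finally, Proposition~\ref{prop:tilSErect} and Lemma~\ref{lem:SE_FP_phase1rect} give $\frac1{\sqrt m}\|\tilde{\bu}^{T+1}\|\to\frac1\alpha\sqrt{\tilde\mu_T^2+\tilde\sigma_{T,T}}\to\frac1\alpha\sqrt{\alpha^2\Delta_{\rm PCA}+\alpha^2(1-\Delta_{\rm PCA})}=1$, which forces $\zeta_{T+1}\to1$ (the sign being fixed by the convention $\langle\bu^*,\bu_{\rm PCA}\rangle\ge0$, propagated through the iteration from the initialization~\eqref{eq:AMPfake_init_rect}), and the conclusion~\eqref{eq:xstxT_diffrect} follows.
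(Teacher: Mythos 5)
Your proposal is correct and follows essentially the same route as the paper's proof: the same decomposition $\tilde{\bu}^{T+1}=\zeta_{T+1}\bu_{\rm PCA}+\br^{T+1}$, the same residual $\be^{T+1}$ built from $\bX\bX^\sT-(D^{-1}(1/\tilde\alpha^2))^2\bI_m$, the same spectral-gap lower bound, and the same state-evolution computation of $\lim_{T}\lim_{n}\frac{1}{m}\|\be^{T+1}\|^2$ using Lemma \ref{lem:SE_FP_phase1rect} and the identities \eqref{eq:Rrect}--\eqref{eq:RD}. The bookkeeping you flag as the main obstacle is indeed where the paper spends its effort (the five-term split $S_1,\ldots,S_5$ in \eqref{eq:etbd2rect}), but the resulting expression depends only on the $\tilde\sigma$ array, so the coupling with $\tilde\omega$ enters only through the already-established fixed-point lemma.
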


\begin{proof}
Consider the following decomposition of $\tilde{\bu}^{T+1}$:
\begin{equation}\label{eq:dcrect}
    \tilde{\bu}^{T+1} = \zeta_{T+1} \bu_{\rm PCA}+\br^{T+1},
\end{equation}
where $\zeta_{T+1} = \langle \tilde{\bu}^{T+1}, \bu_{\rm PCA}\rangle$ and $\langle \br^{T+1}, \bu_{\rm PCA}\rangle=0$. Define
\begin{equation}\label{eq:errrect}
    \be^{T+1} = \left(\bX\bX^\sT-\left(D^{-1}\left(1/\tilde{\alpha}^2\right)\right)^2 \bI_m\right)\tilde{\bu}^{T+1},
\end{equation}
where $D^{-1}$ is the inverse of the $D$-transform of $\Lambda$. Then, by using \eqref{eq:dcrect}, \eqref{eq:errrect} can be rewritten as
\begin{equation}
    \left(\bX\bX^\sT-\left(D^{-1}\left(1/\tilde{\alpha}^2\right)\right)^2 \bI_m\right)\br^{T+1} = \be^{T+1} - \left(\bX\bX^\sT-\left(D^{-1}\left(1/\tilde{\alpha}^2\right)\right)^2 \bI_m\right)\zeta_{T+1}\bu_{\rm PCA}.
\end{equation}
Note that $\bX$ (and consequently $\bX\bX^\sT$) has a spectral gap, in the sense that, almost surely, $\sigma_1(\bX)\to D^{-1}(1/\tilde{\alpha}^2)$ and $\sigma_2(\bX)\to b<D^{-1}(1/\tilde{\alpha}^2)$. Furthermore, $\br^{T+1}$ is orthogonal to the left singular vector associated to the singular value $\sigma_1(\bX)$. Thus, by following passages analogous to \eqref{eq:comprt}, \eqref{eq:comprt2} and \eqref{eq:Lambda}, we obtain that 
\begin{equation}\label{eq:lb1lemmarect}
\left\|    \left(\bX\bX^\sT-\left(D^{-1}\left(1/\tilde{\alpha}^2\right)\right)^2 \bI_m\right)\br^{T+1}\right\|\ge c\|\br^{T+1}\|,
\end{equation}
where $c>0$ is a constant (independent of $n, m, T$).

Next, we prove that almost surely
\begin{equation}\label{eq:lastlimrect}
    \lim_{T\to\infty}\lim_{n\to\infty}\frac{1}{\sqrt{m}}\left\|\be^{T+1} - \left(\bX\bX^\sT-\left(D^{-1}\left(1/\tilde{\alpha}^2\right)\right)^2 \bI_m\right)\zeta_{T+1}\bu_{\rm PCA}\right\|=0.
\end{equation}
An application of the triangle inequality gives that 
\begin{equation}\label{eq:pt2rerect}
\begin{split}
&\left\|\be^{T+1} - \left(\bX\bX^\sT-\left(D^{-1}\left(1/\tilde{\alpha}^2\right)\right)^2 \bI_m\right)\zeta_{T+1}\bu_{\rm PCA}\right\|\\
&\hspace{3em}\le \left\|\be^{T+1}\right\|+\left\| \left(\bX\bX^\sT-\left(D^{-1}\left(1/\tilde{\alpha}^2\right)\right)^2 \bI_m\right)\zeta_{T+1}\bu_{\rm PCA}\right\|.
\end{split}
\end{equation}
The second term on the RHS of \eqref{eq:pt2rerect} is equal to 
\begin{equation}
    |\zeta_{T+1}| \left|\lambda_1(\bX\bX^\sT)-\left(D^{-1}\left(1/\tilde{\alpha}^2\right)\right)^2\right|.
\end{equation}
By using Theorem 2.8 of \cite{benaych2012singular}, we have that, for $\tilde{\alpha}>\tilde{\alpha}_{\rm s}$, almost surely,
\begin{equation}\label{eq:0limrect}
   \lim_{m\to\infty} \left|\lambda_1(\bX\bX^\sT)-\left(D^{-1}\left(1/\tilde{\alpha}^2\right)\right)^2\right|=0.
\end{equation}
Furthermore,
\begin{equation*}
    \frac{1}{\sqrt{m}}|\zeta_{T+1}|\le \frac{1}{\sqrt{m}}\|\tilde{\bu}^{T+1}\|=\frac{1}{\alpha\sqrt{m}}\|\tilde{\bdff}^{T}\|.
\end{equation*}
By Proposition \ref{prop:tilSErect}, we have that
\begin{equation*}
    \lim_{m\to\infty}\frac{1}{\alpha\sqrt{m}}\|\tilde{\bdff}^{T}\| = \frac{1}{\alpha}\sqrt{\tilde{\mu}_{T}^2+\tilde{\sigma}_{T, T}},
\end{equation*}
which, for sufficiently large $T$, is upper bounded by a constant independent of $n, m, T$, as $\tilde{\mu}_{T}=\alpha\sqrt{\Delta_{\rm PCA}}$ and $\tilde{\sigma}_{T, T}$ converges to $\alpha^2(1-\Delta_{\rm PCA})$ as $T\to\infty$ by Lemma \ref{lem:SE_FP_phase1rect}. By combining this result with \eqref{eq:0limrect}, we deduce that 
\begin{equation}\label{eq:scdrect}
   \lim_{T\to\infty}\lim_{m\to\infty}\frac{1}{\sqrt{m}}  \left\| \left(\bX\bX^\sT-\left(D^{-1}\left(1/\tilde{\alpha}^2\right)\right)^2 \bI_m\right)\zeta_{T+1}\bu_{\rm PCA}\right\|=0.
\end{equation}
In order to bound the first term on the RHS of \eqref{eq:pt2rerect}, we proceed as follows:
\begin{equation}\label{eq:etbdrect}
    \begin{split}
&\lim_{m\to\infty}        \frac{1}{m}\|\be^{T+1}\|^2 =\lim_{m\to\infty} \frac{1}{m}\left\|\left(\bX\bX^\sT-\left(D^{-1}\left(1/\tilde{\alpha}^2\right)\right)^2 \bI_m\right)\tilde{\bu}^{T+1}\right\|^2\\
        &\stackrel{\mathclap{\mbox{\footnotesize (a)}}}{=} \lim_{m\to\infty}\frac{1}{m}\bigg\|\tilde{\alpha}^2\Bigg(\frac{1}{\alpha} \tilde{\bdff}^{T+1}+\frac{1}{\tilde{\alpha}^2}\sum_{i=1}^{T+1} \kappa_{2(T-i+2)}\left(\frac{1}{\tilde{\alpha}^2}\right)^{T-i+1}\tilde{\bu}^i+\frac{\gamma}{\tilde{\alpha}^2}\sum_{i=1}^{T}\kappa_{2(T-i+1)} \left(\frac{1}{\tilde{\alpha}^2}\right)^{T-i}\\
        &\hspace{4em}\cdot \bigg(\tilde{\bu}^{i+1}+\frac{1}{\tilde{\alpha}^2}\sum_{j=1}^i \kappa_{2(i-j+1)}\left(\frac{1}{\tilde{\alpha}^2}\right)^{i-j}\tilde{\bu}^j\bigg)\Bigg) -\left(D^{-1}\left(1/\tilde{\alpha}^2\right)\right)^2\tilde{\bu}^{T+1}\bigg\|^2\\
        &\stackrel{\mathclap{\mbox{\footnotesize (b)}}}{=} \lim_{m\to\infty}\frac{1}{m}\bigg\|\tilde{\alpha}^2\Bigg( \frac{1}{\alpha}\tilde{\bdff}^{T+1}+\frac{1}{\tilde{\alpha}^2}\sum_{i=1}^{T+1} \kappa_{2(T-i+2)}^\infty\left(\frac{1}{\tilde{\alpha}^2}\right)^{T-i+1}\tilde{\bu}^i+\frac{\gamma}{\tilde{\alpha}^2}\sum_{i=1}^{T}\kappa_{2(T-i+1)}^\infty \left(\frac{1}{\tilde{\alpha}^2}\right)^{T-i}\\
        &\hspace{4em}\cdot \bigg(\tilde{\bu}^{i+1}+\frac{1}{\tilde{\alpha}^2}\sum_{j=1}^i \kappa_{2(i-j+1)}^\infty\left(\frac{1}{\tilde{\alpha}^2}\right)^{i-j}\tilde{\bu}^j\bigg)\Bigg) -\left(D^{-1}\left(1/\tilde{\alpha}^2\right)\right)^2\tilde{\bu}^{T+1}\bigg\|^2\\
        &\stackrel{\mathclap{\mbox{\footnotesize (c)}}}{=} \mathbb E\Bigg\{\bigg |\tilde{\alpha}^2\Bigg( \frac{1}{\alpha}\tilde{F}_{T+1}+\frac{1}{\tilde{\alpha}^2}\sum_{i=1}^{T+1} \kappa_{2(T-i+2)}^\infty\left(\frac{1}{\tilde{\alpha}^2}\right)^{T-i+1}\tilde{U}_i+\frac{\gamma}{\tilde{\alpha}^2}\sum_{i=1}^{T}\kappa_{2(T-i+1)}^\infty \left(\frac{1}{\tilde{\alpha}^2}\right)^{T-i}\\
        &\hspace{4em}\cdot \bigg(\tilde{U}_{i+1}+\frac{1}{\tilde{\alpha}^2}\sum_{j=1}^i \kappa_{2(i-j+1)}^\infty\left(\frac{1}{\tilde{\alpha}^2}\right)^{i-j}\tilde{U}_j\bigg)\Bigg) -\left(D^{-1}\left(1/\tilde{\alpha}^2\right)\right)^2\tilde{U}_{T+1}\bigg|^2\Bigg\}.
    \end{split}
\end{equation}
Here, (a) uses the iteration \eqref{eq:AMPfake1rect} of the first phase of the artificial AMP; (b) uses that, for all $i$, $\kappa_{2i}\to\kappa_{2i}^\infty$ as $n\to\infty$, as well as an argument similar to \eqref{eq:kappj_kappj_inf}-\eqref{eq:uiuj_inner_prod}; and (c) follows from Proposition \ref{prop:tilSErect}, where $\tilde{U}_t$ for $t\in [1, T+1]$ and $\tilde{F}_{T+1}$ are defined in \eqref{eq:F0FKtilde_rect} and \eqref{eq:Utilde_rect}. After some manipulations we can upper bound the RHS of \eqref{eq:etbdrect} by triangle inequality as
\begin{equation}\label{eq:etbd2rect}
\begin{split}
    5 \cdot &\E\Bigg\{\bigg(\tilde{\alpha}^2+\sum_{i=0}^{T}\kappa_{2(i+1)}^\infty\left(\frac{1}{\tilde{\alpha}^2}\right)^i+\gamma\sum_{i=0}^{T-1}\kappa_{2(i+1)}^\infty\left(\frac{1}{\tilde{\alpha}^2}\right)^i+\gamma \sum_{i=1}^{T}\sum_{j=0}^{T-i}\kappa_{2i}^\infty\kappa_{2(j+1)}^\infty\left(\frac{1}{\tilde{\alpha}^2}\right)^{i+j}\\
&\hspace{24em}    -\left(D^{-1}\left(1/\tilde{\alpha}^2\right)\right)^2\bigg)^2\tilde{U}_{T+1}^2\Bigg\} \\
    &+ 5\cdot\E\left\{\left(\gamma\sum_{i=1}^{T}\sum_{j=0}^{T-i} \kappa_{2i}^\infty\kappa_{2(j+1)}^\infty\left(\frac{1}{\tilde{\alpha}^2}\right)^{i+j}(\tilde{U}_{T-i-j+1}-\tilde{U}_{T+1})\right)^2\right\}\\
    &+ 5\cdot\E\left\{\left(\gamma\sum_{i=0}^{T-1} \kappa_{2(i+1)}^\infty\left(\frac{1}{\tilde{\alpha}^2}\right)^{i}(\tilde{U}_{T-i+1}-\tilde{U}_{T+1})\right)^2\right\}\\
    &+ 5\cdot\E\left\{\left(\sum_{i=0}^{T} \kappa_{2(i+1)}^\infty\left(\frac{1}{\tilde{\alpha}^2}\right)^{i}(\tilde{U}_{T-i+1}-\tilde{U}_{T+1})\right)^2\right\}\\
    &+ 5\cdot\E\left\{\tilde{\alpha}^4\left(\frac{1}{\alpha}\tilde{F}_{T+1}-\tilde{U}_{T+1}\right)^2\right\}:= S_1 + S_2 + S_3+ S_4+ S_5.
\end{split}
\end{equation}
The term $S_5$ can be expressed as
\begin{equation*}
        S_5 = 5\frac{\alpha^2}{\gamma^2} (\tilde{\sigma}_{T+1, T+1}-2\tilde{\sigma}_{T+1, T}+\tilde{\sigma}_{T, T}).
\end{equation*}
Thus, by Lemma \ref{lem:SE_FP_phase1rect}, we have that
\begin{equation}\label{eq:S5limrect}
    \lim_{T\to\infty} S_5 = 0.
\end{equation}
The term $S_1$ can be expressed as 
\begin{equation*}
        \begin{split}
& S_1 =5 \,  \frac{\tilde{\mu}_{T}^2+\tilde{\sigma}_{T, T}}{\alpha^2}\cdot \bigg(\tilde{\alpha}^2+\sum_{i=0}^{T}\kappa_{2(i+1)}^\infty\left(\frac{1}{\tilde{\alpha}^2}\right)^i+\gamma\sum_{i=0}^{T-1}\kappa_{2(i+1)}^\infty\left(\frac{1}{\tilde{\alpha}^2}\right)^i\\
&\hspace{10em}+\gamma \sum_{i=1}^{T}\sum_{j=0}^{T-i}\kappa_{2i}^\infty\kappa_{2(j+1)}^\infty\left(\frac{1}{\tilde{\alpha}^2}\right)^{i+j}
    -\left(D^{-1}\left(1/\tilde{\alpha}^2\right)\right)^2\bigg)^2.
\end{split}
\end{equation*}
Thus, by Lemma \ref{lem:SE_FP_phase1rect}, we have that
\begin{equation}\label{eq:S1limrect}
\begin{split}
 &   \lim_{T\to\infty} S_1 =5\cdot \bigg(\tilde{\alpha}^2+\sum_{i=0}^{\infty}\kappa_{2(i+1)}^\infty\left(\frac{1}{\tilde{\alpha}^2}\right)^i+\gamma\sum_{i=0}^{\infty}\kappa_{2(i+1)}^\infty\left(\frac{1}{\tilde{\alpha}^2}\right)^i\\
&\hspace{10em}+\gamma \sum_{i=1}^{\infty}\sum_{j=0}^{\infty}\kappa_{2i}^\infty\kappa_{2(j+1)}^\infty\left(\frac{1}{\tilde{\alpha}^2}\right)^{i+j}
    -\left(D^{-1}\left(1/\tilde{\alpha}^2\right)\right)^2\bigg)^2=0,
\end{split}
\end{equation}
where the last equality follows from  \eqref{eq:Rrect} and \eqref{eq:RD}.
The term $S_4$ can be expressed as 
\begin{equation*}
\begin{split}
S_4 =\frac{5}{\alpha^2}\sum_{i, j=0}^{T}\kappa_{2(i+1)}^\infty\kappa_{2(j+1)}^\infty \left(\frac{1}{\tilde{\alpha}^2}\right)^{i+j}\big(&\tilde{\sigma}_{T-j, T-i}+\tilde{\sigma}_{T, T}
-\tilde{\sigma}_{T, T-i}-\tilde{\sigma}_{T, T-j}\big),
\end{split}
\end{equation*}
which can upper bounded by 
\begin{equation}\label{eq:S1rerect}
\begin{split}
    &\frac{5}{\alpha^2}\sum_{i, j=0}^{T}\kappa_{2(i+1)}^\infty\kappa_{2(j+1)}^\infty \left(\frac{1}{\tilde{\alpha}^2}\right)^{i+j} \\
    &\hspace{2em}\big(|\tilde{\sigma}_{T-j, T-i}-\alpha^2(1-\Delta_{\rm PCA})|+|\tilde{\sigma}_{T, T}-\alpha^2(1-\Delta_{\rm PCA})|\\
    &\hspace{4em}+|\tilde{\sigma}_{T, T-i}-\alpha^2(1-\Delta_{\rm PCA})|+|\tilde{\sigma}_{T, T-j}-\alpha^2(1-\Delta_{\rm PCA})|\big).
\end{split}
\end{equation}
By Lemma \ref{lem:SE_FP_phase1rect}, for any $\epsilon>0$, there exists $T^*(\epsilon)$ such that for all $T>T^*(\epsilon)$, the quantity in \eqref{eq:S1rerect} is upper bounded by 
\begin{equation*}
\begin{split}
\epsilon &\cdot\frac{5}{\alpha^2}\sum_{i, j=0}^{T}\kappa_{2(i+1)}^\infty\kappa_{2(j+1)}^\infty \left(\frac{1}{\tilde{\alpha}^2}\right)^{i+j}\cdot  \big(\xi^{-\max(i, j)}+1+\xi^{-i}+\xi^{-j}\big)\\
&\stackrel{\mathclap{\mbox{\footnotesize (a)}}}{\le}\epsilon \cdot\frac{20}{\alpha^2}\sum_{i, j=0}^{T}\kappa_{2(i+1)}^\infty\kappa_{2(j+1)}^\infty \left(\frac{1}{\xi\tilde{\alpha}^2}\right)^{i+j}\\
&\stackrel{\mathclap{\mbox{\footnotesize (b)}}}{\le}\epsilon \cdot\frac{20}{\alpha^2}\sum_{i, j=0}^{\infty}\kappa_{2(i+1)}^\infty\kappa_{2(j+1)}^\infty \left(\frac{1}{\xi\tilde{\alpha}^2}\right)^{i+j}\\
&\stackrel{\mathclap{\mbox{\footnotesize (c)}}}{\le}\epsilon \cdot\frac{20}{\alpha^2}\left(R\left(\frac{1}{\xi\tilde{\alpha}^2}\right)\right)^2.
\end{split}
\end{equation*}
Here, (a) uses that $\xi<1$, (b) uses that $\kappa_{2i}\ge 0$ for $i\ge 1$, and (c) uses the power series expansion \eqref{eq:Rrect} of $R$, which converges to a finite limit as $\sqrt{\xi}\tilde{\alpha}>\tilde{\alpha}_{\rm s}$. Since $\epsilon$ can be taken arbitrarily small, we deduce that 
\begin{equation}\label{eq:S4limrect}
    \lim_{T\to\infty} S_4 = 0.
\end{equation}
By using the same argument, we also have that 
\begin{equation}\label{eq:S3limrect}
    \lim_{T\to\infty} S_3 = 0.
\end{equation}
Finally, the term $S_2$ is upper bounded by
\begin{equation}\label{eq:S1larerect}
    \begin{split}
        \frac{5\gamma^2}{\alpha^2}&\sum_{i=1}^{T}\sum_{j=0}^{T-i}\sum_{k=1}^{T}\sum_{\ell=0}^{T-k}\kappa^\infty_{2i}\kappa^\infty_{2k}\kappa^\infty_{2(j+1)}\kappa^\infty_{2(\ell+1)}\left(\frac{1}{\tilde{\alpha}^2}\right)^{i+j+k+\ell}\\
& \cdot     \big(|\sigma_{T, T}-\alpha^2(1-\Delta_{\rm PCA})|+|\sigma_{T, T-i-j}-\alpha^2(1-\Delta_{\rm PCA})|\\
&+|\sigma_{T, T-k-\ell}-\alpha^2(1-\Delta_{\rm PCA})|+|\sigma_{T-i-j, T-k-\ell}-\alpha^2(1-\Delta_{\rm PCA})|\big).
    \end{split}
\end{equation}
By Lemma \ref{lem:SE_FP_phase1rect}, for any $\epsilon>0$, there exists $T^*(\epsilon)$ such that for all $T>T^*(\epsilon)$, the quantity in \eqref{eq:S1larerect} is upper bounded by
\begin{equation*}
    \begin{split}
    \epsilon&\cdot     \frac{20\gamma^2}{\alpha^2}\sum_{i=1}^{T}\sum_{j=0}^{T-i}\sum_{k=1}^{T}\sum_{\ell=0}^{T-k}\kappa^\infty_{2i}\kappa^\infty_{2k}\kappa^\infty_{2(j+1)}\kappa^\infty_{2(\ell+1)}\left(\frac{1}{\xi\tilde{\alpha}^2}\right)^{i+j+k+\ell}\\
&\le \epsilon\cdot     \frac{20\gamma^2}{\alpha^2}\sum_{i=1}^{\infty}\sum_{j=0}^{\infty}\sum_{k=1}^{\infty}\sum_{\ell=0}^{\infty}\kappa^\infty_{2i}\kappa^\infty_{2k}\kappa^\infty_{2(j+1)}\kappa^\infty_{2(\ell+1)}\left(\frac{1}{\xi\tilde{\alpha}^2}\right)^{i+j+k+\ell}\\
&\le \epsilon\cdot    \frac{20\gamma^2}{\alpha^2}\left(R\left(\frac{1}{\xi\tilde{\alpha}^2}\right)\right)^4,
    \end{split}
\end{equation*}
where we use again that $\kappa_{2i}\ge 0$ for $i\ge 1$ and the power series expansion \eqref{eq:Rrect} of $R$. Since $\epsilon$ can be taken arbitrarily small, we deduce that 
\begin{equation}\label{eq:S2limrect}
    \lim_{T\to\infty} S_2 = 0.
\end{equation}
By combining \eqref{eq:etbdrect}, \eqref{eq:etbd2rect}, \eqref{eq:S5limrect}, \eqref{eq:S1limrect}, \eqref{eq:S4limrect}, \eqref{eq:S3limrect} and \eqref{eq:S2limrect}, we conclude that 
\begin{equation}
   \lim_{T\to\infty}\lim_{m\to\infty}\frac{1}{\sqrt{m}}  \left\|\be^{T+1}\right\|=0,
\end{equation}
which, combined with \eqref{eq:scdrect}, gives \eqref{eq:lastlimrect}. Finally, by using \eqref{eq:lb1lemmarect} and \eqref{eq:lastlimrect}, we have that
\begin{equation}
    \lim_{T\to\infty}\lim_{m\to\infty}\frac{1}{\sqrt{m}}  \left\|\br^{T+1}\right\|=0. 
\end{equation}
Thus, from the decomposition \eqref{eq:dcrect}, we conclude that, as $m\to\infty$ and $T\to\infty$, $\tilde{\bu}^{T+1}$ is aligned with $\bu_{\rm PCA}$. Furthermore, from another application of Proposition \ref{prop:tilSErect}, we obtain
\begin{equation}
 \lim_{T\to\infty}\lim_{m\to\infty}\frac{1}{\sqrt{m}}\|\tilde{\bu}^{T+1}\| = \lim_{T\to\infty}\frac{1}{\alpha}\sqrt{\tilde{\mu}_{T}^2+\tilde{\sigma}_{T, T}} = 1, 
\end{equation}
which implies that $\lim_{T\to\infty}\lim_{m\to\infty}\zeta_{T+1} = 1$ and concludes the proof.  
\end{proof}

\subsection{Analysis for the Second Phase}\label{sec:app_sec_phase_analysis_rect}

As in the proof of the square case, we define a  modified version of the true AMP algorithm, in which the memory coefficients $\{\sa_{t,i}, \sb_{t+1,i}\}_{i \in [1, t]}$ are replaced by deterministic values obtained from state evolution.  This modified AMP is initialized with
\begin{align}
\label{eq:AMP_rect_init_mod}
    \hbu^1 = \sqrt{m} \, \bu_{\rm PCA}, \quad 
    \hbg^1 = \Bigg( 1 + \gamma \sum_{i=1}^\infty \kappa_{2i}^\infty \Big( \frac{\gamma}{\alpha^2}\Big)^i\Bigg)^{-1}\hspace{-5pt}\bX^{\sT}\hbu^1, \,  \,  \quad \hbv^1 = v_1(\hbg^1)=\frac{\gamma}{\alpha} \hbg^1.
\end{align}
Then, for $t \geq 1$, we iteratively compute:
\begin{align}
    \hat{\bdff}^t \hspace{-.15em}=\hspace{-.15em}\bX \hbv^t \hspace{-.15em} - \hspace{-.15em} \sum_{i=1}^t  \bar{\sa}_{t,i} \hbu^i, \hspace{.95em}\hbu^{t+1}\hspace{-.15em}=\hspace{-.15em}\su_{t+1}(\hbf^t), \hspace{.95em}      \hbg^{t+1}\hspace{-.15em} =\hspace{-.15em}\bX \hbu^{t+1} \hspace{-.15em} - \hspace{-.15em} \sum_{i=1}^{t}  \bar{\sb}_{t+1,i} \hbv^i, \hspace{.95em} \hbv^{t+1}\hspace{-.15em}=\hspace{-.15em} \sv_{t+1}(\hbg^{t+1}).
     \label{eq:AMP_rectg_mod}
\end{align}
 The deterministic memory coefficients are: $\bar{\sa}_{1,1} =\alpha \sum_{i=1}^\infty \kappa_{2i}^\infty \big( \frac{\gamma}{\alpha^2}\big)^i$, and for $t \geq 2$:
\begin{align}
    & \bar{\sa}_{t,1} = \E\{  \sv_t'(G_t) \} \prod_{i=2}^t 
    \E\{ \su_i'(F_{i-1}) \}
    \E\{  \sv_{i-1}'(G_{i-1}) \} \left( \sum_{i=0}^\infty \kappa_{2(i+t)}^{\infty} \Big( \frac{\gamma}{\alpha^2}\Big)^i \right), \\
    & \bar{\sa}_{t,t-j} = \E\{  \sv_t'(G_t) \} \prod_{i=t-j+1}^t   \E\{ \su_i'(F_{i-1})\} \E\{  \sv_{i-1}'(G_{i-1}) \} \kappa_{2(j+1)}^{\infty}, \qquad \mbox{ for }\,\, (t-j) \in [2,t].
\end{align}
Furthermore, for $t \geq 1$,
\begin{align}
   &  \bar{\sb}_{t+1,1} =  \gamma \E\{ \su'_{t+1}(F_t) \} \prod_{i=2}^{t} \E\{ \sv_i'(G_i) \} \E\{ \su_i'(F_{i-1})\} 
   \left( \kappa_{2t}^{\infty} \, +  \, \sum_{i=1}^\infty \kappa_{2(i+t)}^{\infty} \Big( \frac{\gamma}{\alpha^2}\Big)^i\right), \\
   & \bar{\sb}_{t+1, t+1-j} = \gamma \E\{ \su'_{t+1}(F_t) \} \prod_{i=t+2-j}^{t} 
   \E\{ \sv_i'(G_i) \} \E \{ \su_i'(F_{i-1}) \}  \, \kappa_{2j}^{\infty}, \qquad \mbox{ for }\,\,(t+1-j) \in [2, t].
\end{align}
We recall that $\{ \kappa_{2i}^\infty \}$ are the rectangular free cumulants of the limiting singular value distribution $\Lambda$, and the random variables $\{F_i, G_i\}$ are given by \eqref{eq:F0FK_rect}-\eqref{eq:G0GK_rect}.  The following lemma shows that, as $T$ grows, the iterates of the second phase of the artificial AMP  (described in Section \ref{subsec:proof_sketch_rect}) approach those of the modified AMP algorithm above, as do the corresponding state evolution parameters.

\begin{lemma}\label{lem:sec_phase_rect}
Consider the setting of Theorem \ref{thm:rect}. Assume that $\kappa_{2i}^\infty\ge 0$ for all $i\ge 1$, and that $\tilde{\alpha}>\tilde{\alpha}_{\rm s}$. Consider the modified version of the true AMP in \eqref{eq:AMP_rect_init_mod}-\eqref{eq:AMP_rectg_mod}, and the artificial AMP in \eqref{eq:AMPfake_init_rect}, \eqref{eq:AMPfake1rect}, and \eqref{eq:AMPfake2rect}  along with its state evolution recursion given by \eqref{eq:SEfakeinitrect}-\eqref{eq:tsigma_update}. Then, the following results hold for $s,t \ge 1$:
 \begin{enumerate}
\item 
\begin{align}
    \lim_{T \to \infty} \tilde{\mu}_{T+t} = \mu_t, \qquad 
    \lim_{T \to \infty} \tilde{\sigma}_{T+s, T+t} = \sigma_{s,t}, 
    \label{eq:fake_true_SEconv_rect1} \\
    \lim_{T \to \infty} \tilde{\nu}_{T+t} = \nu_t, \qquad 
    \lim_{T \to \infty} \tilde{\omega}_{T+s, T+t} = \omega_{s,t}, 
    \label{eq:fake_true_SEconv_rect2}
\end{align} 
\item For any $\PL(2)$ functions $\psi:\reals^{2t+2} \to \reals$ and $\varphi: \reals^{2t+1} \to \reals$, we almost surely have:
\beq
\begin{split}
& \lim_{T \to \infty} \lim_{n \to \infty} 
 \bigg\vert  \frac{1}{m}  \sum_{i=1}^m \psi (u^*_{i}, \tu^{T+1}_i, \ldots, \tu^{T+t+1}_i, \tf^{T+1}_i, \ldots \tf^{T+t}_i)\\
&\hspace{10em} - \, 
\frac{1}{m}  \sum_{i=1}^m \psi (u^*_{i}, \hu^1_i, \ldots, \hu^{t+1}_i, \hf^1_i, \ldots \hf^{t}_i)  \bigg\vert  =0,
\end{split}
\label{eq:fake_modified_conv_uf}
\eeq
\beq
\begin{split}
& \lim_{T \to \infty} \lim_{n \to \infty} 
 \bigg\vert  \frac{1}{n}  \sum_{i=1}^n \varphi (v^*_{i}, \tv^{T+1}_i, \ldots, \tv^{T+t}_i, \tg^{T+1}_i, \ldots \tg^{T+t}_i) \\ 
 &\hspace{10em} - \, 
\frac{1}{n}  \sum_{i=1}^n \varphi (v^*_{i}, \hv^1_i, \ldots, \hv^{t}_i, \hg^1_i, \ldots \hg^{t}_i)  \bigg\vert  =0.
\end{split}
\label{eq:fake_modified_conv_vg}
\eeq
\end{enumerate}
\end{lemma}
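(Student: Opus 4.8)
\noindent The plan is to follow the proof of Lemma~\ref{lem:sec_phase_sq} step for step, adapted to the coupled two-chain recursion of the rectangular model. Concretely, one first establishes the state-evolution convergences \eqref{eq:fake_true_SEconv_rect1}--\eqref{eq:fake_true_SEconv_rect2} by induction on $t$, and then, using these, establishes the pseudo-Lipschitz comparisons \eqref{eq:fake_modified_conv_uf}--\eqref{eq:fake_modified_conv_vg} by a second induction, with the limit in $m$ (equivalently $n$) always taken before the limit in $T$ throughout.

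For the state-evolution part, the base case $t=1$ is immediate: from \eqref{eq:SEfake1rect} one has $\tmu_{T+1}=\tnu_{T+1}=\alpha\sqrt{\Delta_{\PCA}}$, and Lemma~\ref{lem:SE_FP_phase1rect} gives $\tsigma_{T+1,T+1}\to a^\ast$ and $\tomega_{T+1,T+1}\to b^\ast$; that these coincide with $\mu_1,\nu_1,\sigma_{1,1},\omega_{1,1}$ is an identity checked directly from \eqref{eq:sigma_rect}--\eqref{eq:omega_rect} and the formulas \eqref{eq:asbs}, using \eqref{eq:R1rect}--\eqref{eq:R20rect} and \eqref{eq:RD}. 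For the induction step, one expands $\tsigma_{T+s,T+t}$ and $\tomega_{T+s+1,T+t+1}$ via \eqref{eq:tsigma_update} and \eqref{eq:tomega_update} and splits each double sum over $(j,k)$ into the region where both of the relevant $\tU,\tV$ indices lie in the second phase (so $\tu_\cdot=u_\cdot$, $\tv_\cdot=v_\cdot$) and the complementary region where one or both indices reach into the first phase (so the corresponding $\tU$ or $\tV$ is a rescaled $\tF$ or $\tG$). On the ``recent'' region one invokes the induction hypothesis together with the continuous mapping theorem, uniform integrability (from the moment bounds $\E\{U_\ast^{2+\varepsilon}\},\E\{V_\ast^{2+\varepsilon}\}<\infty$ and the Lipschitzness of $u_i,v_i$), and Lemma~\ref{lem:lipderiv} for the derivative factors $\E\{u_i'(\tF_{T+i-1})\}$, $\E\{v_i'(\tG_{T+i})\}$, obtaining convergence to the matching terms of $\sigma_{s,t},\omega_{s,t}$. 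On the ``old'' region one substitutes $\tU_{T+s-j}=\tF_{T-(j-s)}/\alpha$, $\tV_{T+s-j}=\tfrac{\gamma}{\alpha}\tG_{T-(j-s)}$, bounds $\E\{\tF\tF\},\E\{\tG\tG\},\E\{\tF\tG\}$ within $\xi^{-\max(j-s,\,k-t)}$ of their fixed-point constants via Lemma~\ref{lem:SE_FP_phase1rect}, and absorbs the geometric blow-up $\xi^{-\max}$ into the geometric cumulant tail since $\sum_i\kappa_{2i}^\infty(\gamma/(\xi\alpha^2))^i=R(\gamma/(\xi\alpha^2))<\infty$ for $\sqrt{\xi}\,\tilde\alpha>\tilde\alpha_{\rm s}$; the resulting error terms vanish as $T\to\infty$, exactly as in \eqref{eq:S2_sumexp}--\eqref{eq:S2_small_bound}. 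Matching the surviving sums against \eqref{eq:sigma_rect}--\eqref{eq:omega_rect} closes the induction.

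For the pseudo-Lipschitz part, one bounds the two differences in \eqref{eq:fake_modified_conv_uf}--\eqref{eq:fake_modified_conv_vg} via the $\PL(2)$ property and Cauchy--Schwarz (twice), as in \eqref{eq:PL2_psi_bnd}, reducing the statement to (i) $\|\tbu^{T+\ell}-\hbu^\ell\|^2/m\to 0$, $\|\tbf^{T+\ell}-\hbf^\ell\|^2/m\to 0$ and the analogues for $\tbv,\tbg$ with $n$, and (ii) convergence of each norm $\|\tbu^{T+\ell}\|^2/m$, $\|\tbf^{T+\ell}\|^2/m$, etc., to a finite deterministic value, which follows from Part~1 and Proposition~\ref{prop:tilSErect}. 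One proves (i) by induction on $\ell$: the base case $\ell=1$ uses Lemma~\ref{lemma:convrect} for $\|\tbu^{T+1}-\hbu^1\|^2/m\to0$, after which $\tbg^{T+1}-\hbg^1$ splits into $\bX^\sT(\tbu^{T+1}-\hbu^1)$ plus the difference of the first-phase memory term $\sum_i\tsb_{T+1,i}\tbv^i$ and the deterministic coefficient of \eqref{eq:AMP_rect_init_mod} times $\hbu^1$, the latter being controlled by the decomposition into $\sum_j c_j^\infty(\tbv^{T+1-j}-\tbv^{T+1})$, a telescoping part, and a cumulant tail, as in \eqref{eq:three_term_split}--\eqref{eq:fT1f1_term2}, using Lemma~\ref{lem:SE_FP_phase1rect} for $\E\{(\tV_{T+1-j}-\tV_{T+1})^2\}\to0$ and the convergence of $R(\gamma/\alpha^2)$; Lipschitzness of $v_2,u_2$ then propagates this to $\tbv^{T+2}-\hbv^2$, $\tbf^{T+1}-\hbf^1$, $\tbu^{T+2}-\hbu^2$, and $\|\bX\|_{\rm op}=\sigma_1(\bX)\to D^{-1}(1/\tilde\alpha^2)$ a.s.\ by Theorem~2.8 of \cite{benaych2012singular} controls the operator-norm terms. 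The induction step mirrors \eqref{eq:fTt_ft_bnd}--\eqref{eq:tbusum_hbu1_zero}, now carried out in parallel for the $\bu/\bdff$ chain and the $\bv/\bg$ chain, with the memory coefficients $\tsa_{T+t,\cdot}$, $\tsb_{T+t,\cdot}$ of \eqref{eq:tsa_def} and the following display shown to converge to the deterministic $\bar\sa,\bar\sb$ of \eqref{eq:AMP_rectg_mod} by Lemma~\ref{lem:lipderiv} and Part~1.

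The main obstacle is the bookkeeping forced by the coupled structure: in Part~1 the $\tsigma$- and $\tomega$-recursions are intertwined through \eqref{eq:SEfake1rect}, so the induction must advance both at once, and in Part~2 the $\bu/\bdff$ and $\bv/\bg$ chains and their two memory-coefficient families must all be pushed through together. The analytically delicate point, just as in the square case, is suppressing the first-phase contributions to $\tsigma,\tomega$ and to the memory terms; this relies crucially on Lemma~\ref{lem:SE_FP_phase1rect} supplying geometric-in-$\xi$ decay of the deviations from the fixed point, combined with the fact that the rectangular free-cumulant generating series $R$ has radius of convergence strictly larger than $\gamma/\alpha^2$ (guaranteed by $\tilde\alpha>\tilde\alpha_{\rm s}$ together with $\kappa_{2i}^\infty\ge0$), so that the blow-up $\xi^{-\max}$ is absorbed.
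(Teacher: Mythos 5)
Your proposal is correct and follows essentially the same route as the paper's proof: a double induction in which Part 1 is established by splitting the $\tsigma$/$\tomega$ double sums into the second-phase region (handled by the induction hypothesis, uniform integrability, and Lemma \ref{lem:lipderiv}) and the first-phase region (handled by the geometric bounds of Lemma \ref{lem:SE_FP_phase1rect} absorbed into the convergent cumulant series), and Part 2 is reduced via the $\PL(2)$/Cauchy--Schwarz bound to the vanishing of the normalized iterate differences, seeded by Lemma \ref{lemma:convrect} and propagated through the coupled $\bu/\bdff$ and $\bv/\bg$ chains exactly as in the square case. The only cosmetic difference is that the paper organizes the index split into four explicit pieces ($O_1$--$O_4$) rather than your two-region description, and the cross-moment $\E\{\tF\tG\}$ you mention never actually appears in the recursion; neither point affects the argument.
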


\begin{proof}
\textbf{Proof of \eqref{eq:fake_true_SEconv_rect1}-
\eqref{eq:fake_true_SEconv_rect2}}. For $t \in [1, T+1]$, from \eqref{eq:SEfake1rect} we have $\tmu_t = \tnu_{t} = \alpha \sqrt{\Delta_{\PCA}} = \mu_1 = \nu_1$.  Next, Lemma \ref{lemma:fixedrect} shows that $\lim_{T \to \infty} \tsigma_{T+1, T+1} = a^*$ and $\lim_{T \to \infty} \tomega_{T+1, T+1}=b^*$, where $a^*, b^*$ are defined in \eqref{eq:asbs}. We now verify that $\sigma_{11} = a^*$ and $\omega_{11} =b^*$. Setting $s=t=0$ in  \eqref{eq:omega_rect} and solving for $\omega_{11}$, we obtain:
\beq
\omega_{1,1} = b^* = \frac{\Delta_{\PCA} \gamma\alpha^2 (x R'(x) - R(x))  \, + \, \gamma R'(x)}{1 + \gamma R(x) - \gamma x R'(x)},\  \text{ where }  \ x=\frac{\gamma}{\alpha^2}. 
\label{eq:omega_11}
\eeq
Here, we have used \eqref{eq:R1rect} and \eqref{eq:R20rect} to express the double sums in terms of $R(x)$ and $R'(x)$.
Similarly, from \eqref{eq:sigma_rect}, we obtain
\beq
\sigma_{1,1} = \gamma x R'(x) (\alpha^2 \Delta_{\PCA} + \omega_{1,1}) \, + \, \gamma R'(x) - \alpha^2 R(x), \  \text{ where }  \ x= \gamma/\alpha^2.
\eeq
Using the formula for $\Delta_{\PCA}$ in \cite[Eq. (7.32)]{fan2020approximate}, it can be verified that the above expression for $\sigma_{1,1}$ reduces to  $a^* = \alpha^2(1-\Delta_{\PCA})$, as required. 

Assume towards induction that  the following holds for $ 1 \le k, \ell \le t$:
\beq
\lim_{T \to \infty} \tmu_{T+\ell}= \mu_\ell, \quad 
\lim_{T \to \infty} \tsigma_{T+k, T+\ell}= 
\sigma_{k, \ell},  \quad 
\lim_{T \to \infty} \tnu_{T+\ell}= \nu_{\ell}, \quad 
\lim_{T \to \infty} \tomega_{T+k, T+\ell}= \omega_{k,\ell}.
\label{eq:ind_hyp_SE_rect}
\eeq
Consider $\tnu_{T+t+1} = \alpha \E\{\tU_{T+t+1} U_* \} = 
\alpha \E\{ \su_{t+1}(\tF_{T+t}) U_* \}$. By the induction hypothesis $\tF_{T+t} = \tmu_{T+t} U_*  +  \tY_{T+t}$ converges in distribution to $F_t= \mu_tU_* + Y_t$, and by arguments similar to \eqref{eq:UI_utU},  the sequence of random variables $\{ \su_{t+1}(\tF_{T+t}) U_*\}$ is uniformly integrable. Hence, 
\beq
\lim_{T \to \infty} \tnu_{T+t+1} = 
\alpha \E\{ \su_{t+1}( F_{t}) U_* \} = \nu_{t+1}.
\label{eq:nu_t1_conv}
\eeq
Next, for $s \le t$,  consider $\tomega_{T+s+1, T+t+1}$ which is defined via \eqref{eq:tomega_update}. We write  $\tomega_{T+s+1, T+t+1} = O_1 + O_2 + O_3 + O_4$ , where
\begin{align}
    O_1  & = \gamma \sum_{j=0}^{s-1} \sum_{k=0}^{t-1} 
    \Big( \prod_{i=s-j+2}^{s+1}  \hspace{-5pt} \E\{ \su_i'(\tF_{T+i-1})\} \E\{ \sv_{i-1}'(\tG_{T+i-1})\} \Big)  \nonumber \\ 
    & \ \ \Big( \prod_{i=t-k+2}^{t+1}  \hspace{-5pt} \E\{ \su_i'(\tF_{T+i-1})\} \E\{ \sv_{i-1}'(\tG_{T+i-1})\} \Big)  \cdot \Big[ \kappa_{2(j+ k+1)}^\infty \E\{ \tU_{T+s+1-j} \tU_{T+t+1-k} \}  \nonumber   \\
    &  \quad + \kappa_{2(j+k+2)}^\infty \E\{ \su'_{s+1-j}(\tF_{T+s-j}) \} \, \E\{\su'_{t+1-k}(\tF_{T+t-k}) \}  \E\{ \tV_{T+s-j} \tV_{T+t-k} \} \Big],
     \label{eq:O1_def_rect}
\end{align}
\begin{align}
    O_2 & = \gamma \sum_{j=0}^{s-1} \sum_{k=t}^{T+t} 
    \left(\frac{\gamma}{\alpha^2} \right)^{k-t} \Big( \prod_{i=s-j+2}^{s+1}  \hspace{-5pt} \E\{ \su_i'(\tF_{T+i-1})\} \E\{ \sv_{i-1}'(\tG_{T+i-1})\} \Big)  \nonumber \\ 
    & \ \ \Big( \prod_{i=2}^{t+1}    \E\{ \su_i'(\tF_{T+i-1})\} \E\{ \sv_{i-1}'(\tG_{T+i-1})\} \Big)  \cdot \Big[ \kappa_{2(j+ k+1)}^\infty \E\{ \tU_{T+s+1-j} \tU_{T+t+1-k} \}  \nonumber   \\
    &  \quad + \kappa_{2(j+k+2)}^\infty \frac{1}{\alpha} \, \E\{ \su'_{s+1-j}(\tF_{T+s-j}) \}  
     \E\{ \tV_{T+s-j} \tV_{T+t-k} \} \Big],
      \label{eq:O2_def_rect}
\end{align}
\begin{align}
    O_3 & = \gamma \sum_{j=s}^{T+s} \sum_{k=0}^{t-1} 
    \left(\frac{\gamma}{\alpha^2} \right)^{j-s} \Big( \prod_{i=2}^{s+1}   \E\{ \su_i'(\tF_{T+i-1})\} \E\{ \sv_{i-1}'(\tG_{T+i-1})\} \Big)  \nonumber \\ 
    & \ \ \Big( \prod_{i=t-k+2}^{t+1}    \E\{ \su_i'(\tF_{T+i-1})\} \E\{ \sv_{i-1}'(\tG_{T+i-1})\} \Big)  \cdot \Big[ \kappa_{2(j+ k+1)}^\infty \E\{ \tU_{T+s+1-j} \tU_{T+t+1-k} \}  \nonumber   \\
    &  \quad + \kappa_{2(j+k+2)}^\infty \frac{1}{\alpha} \, 
    \E\{ \su'_{t+1-k}(\tF_{T+t-k}) \}  
     \E\{ \tV_{T+s-j} \tV_{T+t-k} \} \Big],
      \label{eq:O3_def_rect}
\end{align}
\begin{align}
    O_4 & = \gamma \sum_{j=s}^{T+s} \sum_{k=t}^{T+t} 
    \left(\frac{\gamma}{\alpha^2} \right)^{j+k -s -t} \Big( \prod_{i=2}^{s+1}  \E\{ \su_i'(\tF_{T+i-1})\} \E\{ \sv_{i-1}'(\tG_{T+i-1})\} \Big)  \nonumber \\ 
    & \ \ \Big( \prod_{i=2}^{t+1}    \E\{ \su_i'(\tF_{T+i-1})\} \E\{ \sv_{i-1}'(\tG_{T+i-1})\} \Big)  \cdot \Big[ \kappa_{2(j+ k+1)}^\infty \E\{ \tU_{T+s+1-j} \tU_{T+t+1-k} \}  \nonumber   \\
    &  \quad + \kappa_{2(j+k+2)}^\infty \frac{1}{\alpha^2}  \, \E\{ \tV_{T+s-j} \tV_{T+t-k} \} \Big].
      \label{eq:O4_def_rect}
\end{align}
By the induction hypothesis, for $i \in [2,t+1]$, we have  $\tF_{T+i-1} \stackrel{d}{\to} F_{i-1}$ and $\tG_{T+i-1} \stackrel{d}{\to} G_{i-1}$. Since $u_i$ and $v_{i-1}$ are Lipschitz and continuously differentiable, Lemma \ref{lem:lipderiv} implies
\beq
\begin{split}
 \lim_{T \to \infty} \E\{ \su_i'(\tF_{T+i-1})\} = 
\E\{ \su_i'(F_{i-1}) \}, \quad 
& \lim_{T \to \infty} \E\{ \sv_{i-1}'(\tG_{T+i-1})\} = 
\E\{ \sv_{i-1}'(G_{i-1}) \}, \\
& \qquad \text{ for } \ i\in [2, t+1].
\end{split}
\label{eq:uivi_SEconv}
\eeq
Next, note that 
\beq
\begin{split}
    (\tU_{T+s+1-j}, \, \tV_{T+s-j} ) = 
    \begin{cases} 
    (\su_{s+1-j}(\tF_{T+s-j}), \, \sv_{s-j}(\tG_{T+s-j}) ), & 0 \leq j \leq s-1, \\
    (\tF_{T+s-j}/\alpha, \, \tG_{T+s-j} \gamma/\alpha ), & s \leq j \leq T+s-1, \\
    (\tF_{0}/\alpha, \, 0 ), & j=T+s.
    \end{cases}
\end{split}
\label{eq:tUtVsj}
\eeq 
An analogous set of expressions  holds for the pair $ (\tU_{T+t+1-k}, \, \tV_{T+t-k})$.
For $j \in [0, s-1]$ and $k \in [0, t-1]$, using an argument similar to that used to obtain \eqref{eq:usuk_conv}, we deduce that the sequences $\{ \su_{s+1-j}(\tF_{T+s-j}) \su_{t+1-k}(\tF_{T+t-k}) \}$ and 
$\{ \sv_{s-j}(\tG_{T+s-j}) \sv_{t-k}(\tG_{T+t-k}) \}$ are each uniformly integrable. This, together with the induction hypothesis, implies that 
\begin{equation}
    \begin{split}
      \lim_{T \to \infty} O_1 &  =     \gamma \sum_{j=0}^{s-1} \sum_{k=0}^{t-1}      \Big( \prod_{i=s-j+2}^{s+1}  \hspace{-5pt} \E\{ \su_i'(F_{i-1})\} \E\{ \sv_{i-1}'(G_{i-1})\} \Big) \\
     & \qquad \Big( \prod_{i=t-k+2}^{t+1}  \hspace{-5pt} \E\{ \su_i'(F_{i-1})\} \E\{ \sv_{i-1}'(G_{i-1})\} \Big)  \cdot \Big[ \kappa_{2(j+ k+1)}^\infty \E\{ U_{s+1-j} U_{t+1-k} \}  \\
    &  \qquad  \qquad + \,  \kappa_{2(j+k+2)}^\infty \E\{ \su'_{s+1-j}(F_{s-j}) \} \, \E\{\su'_{t+1-k}(F_{t-k}) \}  \E\{ V_{s-j} V_{t-k} \} \Big].
    \end{split}
     \label{eq:O1_lim_rect}
\end{equation}
Next consider the term $O_4$. In this case, for  $j \in [s, T+s-1]$ and $k \in [t, T+t-1]$:
\begin{equation}
    \begin{split}
        & \E\{ \tU_{T+s+1-j} \tU_{T+t+1-k} \} = \frac{1}{\alpha^2} \E\{\tF_{T+s-j} \tF_{T+t-k} \} = \Delta_{\PCA} + \frac{1}{\alpha^2} \tsigma_{T- (j-s),\, T-(k-t)}, \\
        & \E\{ \tV_{T+s-j} \tV_{T+t-k} \} = \frac{\gamma^2}{\alpha^2}\E\{\tG_{T+s-j} \tG_{T+t-k} \}
        = \frac{\gamma^2}{\alpha^2} (\alpha^2\Delta_{\PCA}  +  \tomega_{T- (j-s),\, T-(k-t)}).
    \end{split}
    \label{eq:UVTsUVtk}
\end{equation}
When $j=T+s$ or $k=T+t$, the formula above for $\E\{ \tU_{T+s+1-j} \tU_{T+t+1-k} \}$ still holds, while the one for $\E\{ \tV_{T+s-j} \tV_{T+t-k} \}$ becomes 0 as $\tV_0 = 0$.
From Lemma \ref{lem:SE_FP_phase1rect}, for any $\delta >0$, for sufficiently large $T$ we have 
\beq
\begin{split}
    & |\tilde{\sigma}_{T+s-j, T+t-k}-a^*|  < \delta \xi^{-\max\{j+1-s, k+1-t \}}, \\
    & |\tilde{\omega}_{T+s-j, T+t-k}-b^*| < \delta \xi^{-\max\{j+1-s, k+1-t \}},
 \quad j\in [s, T+s], \ k \in [t, T+t],
    \label{eq:sig_omega_conv}
\end{split}
\eeq
for some $\xi >0$ such that $ \tilde{\alpha} \sqrt{\xi} > \tilde{\alpha}_{\rm s}$.   From \eqref{eq:F0FK_rect}-\eqref{eq:V0VK_rect}, we note that 
$\E\{ U_{s-j} U_{t-k} \} = \frac{1}{\alpha^2}\E\{ F_0^2 \} =1$ and 
$\E\{ V_{s-j} V_{t-k} \} = \frac{\gamma^2}{\alpha^2} \E\{ G_1^2 \} =
\frac{\gamma^2}{\alpha^2}(  \alpha^2 \Delta_{\PCA} + b^* )$. 
Combining this with  \eqref{eq:UVTsUVtk} and \eqref{eq:sig_omega_conv}, we have for sufficiently large $T$:
\begin{equation}\label{eq:sig_omega_conv2}
    \begin{split}
       & |\E\{ \tU_{T+1+s-j} \tU_{T+1+t-k} \} - \E\{ U_{s-j} U_{t-k} \}| < \frac{\delta}{\alpha^2} \xi^{-\max\{j+1-s, k+1-t \}}, \  
       \\
        &|\E\{ \tV_{T+s-j} \tV_{T+t-k} \} - \E\{ V_{s-j} V_{t-k} \}| < \frac{\gamma^2 \delta}{\alpha^2} \xi^{-\max\{j+1-s, k+1-t \}}, \  \text{ for } j \ge s, k \ge t.
    \end{split}
\end{equation}
We now write $O_4$ in \eqref{eq:O4_def_rect} as 
\begin{align}
         O_4 &   = 
\gamma \Big( \prod_{i=2}^{s+1}  \E\{ \su_i'(\tF_{T+i-1})\} \E\{ \sv_{i-1}'(\tG_{T+i-1})\} \Big)  \Big( \prod_{i=2}^{t+1}    \E\{ \su_i'(\tF_{T+i-1})\} \E\{ \sv_{i-1}'(\tG_{T+i-1})\} \Big) \nonumber  \\ 
    & \Bigg[\sum_{j=s}^{T+s} \sum_{k=t}^{T+t}      \left(\frac{\gamma}{\alpha^2} \right)^{j+k -s -t} \Big[ \kappa_{2(j+ k+1)}^\infty \E\{ U_{s+1-j} U_{t+1-k} \}   + \kappa_{2(j+k+2)}^\infty \frac{1}{\alpha^2}  \, \E\{ V_{s-j} V_{t-k}  \} \Big]  \nonumber \\
    & \hspace{2in} \,  + \  \Delta_{4U} \,  + \,  \Delta_{4V} \Bigg],
    \label{eq:O4_delsplit}
\end{align}
where
\begin{equation}
    \begin{split}
            \Delta_{4U} & =  \sum_{j=s}^{T+s} \sum_{k=t}^{T+t}      \left(\frac{\gamma}{\alpha^2} \right)^{j+k -s -t} \kappa_{2(j+k+1)}^\infty [ \E\{ \tU_{T+1+s-j} \tU_{T+1+t-k} \}  - \E\{ U_{s+1-j} U_{t+1-k} \} ],  \\
    \Delta_{4V} & = \frac{1}{\alpha^2} \sum_{j=s}^{T+s} \sum_{k=t}^{T+t}      \left(\frac{\gamma}{\alpha^2} \right)^{j+k -s -t} \kappa_{2(j+k+2)}^\infty [ \E\{ \tV_{T+s-j} \tV_{T+t-k} \}  
    - \E\{ V_{s-j} V_{t-k} \} ].
    \end{split}
\end{equation}
Using \eqref{eq:sig_omega_conv2}, for sufficiently large $T$ we have
\begin{equation}
    \begin{split}
        |\Delta_{4U}| & < \frac{\delta}{\alpha^2} \sum_{j=0}^T \sum_{k=0}^T \left(\frac{\gamma}{\xi \alpha^2} \right)^{j+k} \kappa^\infty_{2(j+k +s+t+1)} \, < \, \delta C_{s,t} , \\
        |\Delta_{4V}| & < \frac{\gamma^2\delta}{\alpha^2} \sum_{j=0}^T \sum_{k=0}^T \left(\frac{\gamma}{\xi \alpha^2} \right)^{j+k} \kappa^\infty_{2(j+k +s+t+2)} \,  < \, \delta C_{s,t},
    \end{split}
    \label{eq:abs_Del4_bnd}
\end{equation}
for a positive constant $C_{s,t}$, since each of the double sums in \eqref{eq:abs_Del4_bnd} is bounded as $T \to \infty$, for $  \xi \tilde{\alpha}^2 := \xi\alpha^2/\gamma > \ \tilde{\alpha}_{\rm s}^2$. Therefore, $\Delta_{4U}, \Delta_{4V}$ both tend to $0$ as $T \to \infty$. Using this in \eqref{eq:O4_delsplit} along with \eqref{eq:uivi_SEconv}, we obtain 
\begin{equation}
    \begin{split}
      & \lim_{T \to \infty} O_4  =  \gamma \, \prod_{i=2}^{s+1}  \E\{ \su_i'(F_{i-1})\} \E\{ \sv_{i-1}'(G_{i-1})\}    \prod_{i=2}^{t+1}    \E\{ \su_i'(F_{i-1})\} \E\{ \sv_{i-1}'(G_{i-1})\}   \\ 
    &  \qquad \sum_{j=s}^{\infty} \sum_{k=t}^{\infty}      \left(\frac{\gamma}{\alpha^2} \right)^{j+k -s -t} \Big[ \kappa_{2(j+ k+1)}^\infty \E\{ U_{s+1-j} U_{t+1-k} \}   + \kappa_{2(j+k+2)}^\infty \frac{1}{\alpha^2}  \, \E\{ V_{s-j} V_{t-k}  \} \Big].
    \end{split}
    \label{eq:O4lim}
\end{equation}

Next, consider $O_2$ in \eqref{eq:O2_def_rect}, which we write as 
\begin{equation}
    \begin{split}
     & O_2  = \gamma \Big( \prod_{i=2}^{t+1}    \E\{ \su_i'(\tF_{T+i-1})\} \E\{ \sv_{i-1}'(\tG_{T+i-1})\} \Big)  \sum_{j=0}^{s-1}  \prod_{i=s-j+2}^{s+1}  \hspace{-5pt} \E\{ \su_i'(\tF_{T+i-1})\} \E\{ \sv_{i-1}'(\tG_{T+i-1})\}   \\
     &  \Bigg[  \sum_{k=t}^{T+t} 
    \left(\frac{\gamma}{\alpha^2} \right)^{k-t}    \Big[ \kappa_{2(j+ k+1)}^\infty \E\{ U_{s+1-j} U_{t+1-k} \}   +    \frac{\kappa_{2(j+k+2)}^\infty}{\alpha} \, \E\{ \su'_{s+1-j}(\tF_{T+s-j}) \}    \E\{ V_{s-j} V_{t-k} \} \Big]  \\
    & \quad \Delta_{3U, j} \, + \, \Delta_{3V, j}\Bigg],
    \end{split}
    \label{eq:O2_del_split}
\end{equation}
where
\begin{equation}
    \begin{split}
        \Delta_{3U, j} & = \sum_{k=t}^{T+t} 
    \left(\frac{\gamma}{\alpha^2} \right)^{k-t}   \kappa_{2(j+ k+1)}^\infty [ \E\{ \tU_{T+s+1-j} \tU_{T+t+1-k} \} \, - \,  \E\{ U_{s+1-j} U_{t+1-k} \}], \\
    \Delta_{3V, j} & =   \frac{1}{\alpha} \E\{ \su'_{s+1-j}(\tF_{T+s-j}) \} \sum_{k=t}^{T+t} 
    \left(\frac{\gamma}{\alpha^2} \right)^{k-t}  \kappa_{2(j+k+2)}^\infty  \, 
  [  \E\{ \tV_{T+s-j} \tV_{T+t-k} \} -  \E\{ V_{s-j} V_{t-k} \}].
    \end{split}
    \label{eq:Del3UV_def}
\end{equation}
From \eqref{eq:tUtVsj}, we recall that for $j \in [0, s-1]$, $k \in [t, T+t]$:
\begin{equation}
    \begin{split}
        & \E\{ \tU_{T+s+1-j} \tU_{T+t+1-k} \} = \frac{1}{\alpha} \E\{  \su_{s+1-j}(\tF_{T+s-j}) \tF_{T-(k-t)}  \}, \\ 
        & \E\{ \tV_{T+s-j} \tV_{T+t-k}  \} = 
        \frac{\gamma}{\alpha}\E\{  \sv_{s-j}(\tG_{T+s-j}) \tG_{T-(k-t)}\}.
    \end{split}
\end{equation}
Using the induction hypothesis and arguments similar to \eqref{eq:tUjtUk_split}-\eqref{eq:Delta2_bnd0}, for any $\delta >0$ and sufficiently large $T$ we have
\beq
\begin{split}
& | \E\{ \tU_{T+s+1-j} \tU_{T+t+1-k} \} \, - \,  \E\{ U_{s+1-j} U_{t+1-k} \} | < \frac{\delta}{\alpha} \xi^{-(k-t)}, \\
& |E\{ \tV_{T+s-j} \tV_{T+t-k}  \} \, - \,  E\{ V_{s-j} V_{t-k}  \}| < \frac{\gamma \delta}{\alpha} \xi^{-(k-t)},  \quad j \in [0, s-1], \ k \in [t, T+t].
\end{split}
\eeq
Using this in \eqref{eq:Del3UV_def}, following steps similar to \eqref{eq:Delta2_def} and \eqref{eq:bnd_Csj}, and  noting the convergence of the power series defining $R(\gamma/\xi\alpha^2)$,  we have $\lim_{T \to \infty} \Delta_{3U,j} = \lim_{T \to \infty} \Delta_{3V,j} =0$ for $j \in [0, s-1]$. Using this in \eqref{eq:O2_del_split} along with \eqref{eq:uivi_SEconv}, we have 
\begin{equation}
    \begin{split}
       &  \lim_{T \to \infty} O_2  = \gamma \Big( \prod_{i=2}^{t+1}    \E\{ \su_i'(F_{i-1})\} \E\{ \sv_{i-1}'(G_{i-1})\} \Big)  \sum_{j=0}^{s-1} \,  \prod_{i=s-j+2}^{s+1}  \hspace{-5pt} \E\{ \su_i'(F_{i-1})\} \E\{ \sv_{i-1}'(G_{i-1})\}   \\
     & \  \sum_{k=t}^{\infty} 
    \left(\frac{\gamma}{\alpha^2} \right)^{k-t}    \Big[ \kappa_{2(j+ k+1)}^\infty \E\{ U_{s+1-j} U_{t+1-k} \}  +   \kappa_{2(j+k+2)}^\infty \frac{1}{\alpha} \, \E\{ \su'_{s+1-j}(F_{s-j}) \}    \E\{ V_{s-j} V_{t-k} \} \Big].
    \end{split}
    \label{eq:O2lim}
\end{equation}

Using a similar sequence of steps, we also have 
\begin{equation}
    \begin{split}
       &  \lim_{T \to \infty} O_3  = \gamma \Big( \prod_{i=2}^{s+1}    \E\{ \su_i'(F_{i-1})\} \E\{ \sv_{i-1}'(G_{i-1})\} \Big)  \sum_{k=0}^{t-1} \,  \prod_{i=t-k+2}^{t+1}  \hspace{-5pt} \E\{ \su_i'(F_{i-1})\} \E\{ \sv_{i-1}'(G_{i-1})\}   \\
     & \  \sum_{j=s}^{\infty} 
    \left(\frac{\gamma}{\alpha^2} \right)^{j-s}    \Big[ \kappa_{2(j+ k+1)}^\infty \E\{  U_{s+1-j}U_{t+1-k} \}   +   \kappa_{2(j+k+2)}^\infty \frac{1}{\alpha} \, \E\{ \su'_{t+1-k}(F_{t-k}) \}    \E\{  V_{s-j}V_{t-k} \} \Big].
    \end{split}
    \label{eq:O3lim}
\end{equation}
Noting that the sums of the limits in \eqref{eq:O1_lim_rect}, \eqref{eq:O4lim}, \eqref{eq:O2lim} and \eqref{eq:O3lim} equals $\omega_{s+1, t+1}$ (defined in \eqref{eq:omega_rect}), we have shown that $\lim_{T \to \infty} \tomega_{T+s+1, T+t+1} = \omega_{s+1, t+1}$. The sequence of steps to show that $\lim_{T \to \infty} \tsigma_{T+s+1, T+t+1} = \sigma_{s+1, t+1}$ is very similar, and is omitted to avoid repetition.

\textbf{Proof of \eqref{eq:fake_modified_conv_uf}-\eqref{eq:fake_modified_conv_vg}}.
Since $\psi, \varphi \in \PL(2)$,
using the Cauchy-Schwarz inequality (as in \eqref{eq:PL2_psi_bnd}), for a universal constant $C >0$ we have
\begin{align}
   &  \bigg\vert  \frac{1}{m}  \sum_{i=1}^m \psi (u^*_{i}, \tu^{T+1}_i, \ldots, \tu^{T+t+1}_i, \tf^{T+1}_i, \ldots \tf^{T+t}_i) - 
\frac{1}{m}  \sum_{i=1}^m \psi (u^*_{i}, \hu^1_i, \ldots, \hu^{t+1}_i, \hf^1_i, \ldots \hf^{t}_i)  \bigg\vert \nonumber  \\
& \le 2C(t+2) \left[ 1 + \frac{\| \bu^* \|^2}{m}  +  
 \sum_{\ell=1}^{t+1} \Big( \frac{\| \tbu^{T+\ell} \|^2}{m} +
 \frac{\| \hbu^{\ell} \|^2}{m} \Big) + 
 \sum_{\ell=1}^{t} \Big( \frac{\| \tbf^{T+\ell} \|^2}{m}
 + \frac{\| \hbf^{\ell} \|^2}{m} \Big) \right]^{\frac{1}{2}} \nonumber  \\ 
 &  \cdot \left( \frac{\| \tbu^{T+1} - \hbu^1 \|^2}{m}+ \ldots + \frac{\| \tbu^{T+t+1} - \hbu^{t+1} \|^2}{m}  +\frac{\| \tbf^{T+1} - \hbf^1  \|^2}{m} + \ldots  + \frac{\| \tbf^{T+t} - \hbf^t  \|^2}{m}\right)^{\frac{1}{2}},
 \label{eq:PL2psi_bnd_rect}
\end{align}
\begin{align}
 & \bigg\vert  \frac{1}{n}  \sum_{i=1}^n \varphi (v^*_{i}, \tv^{T+1}_i, \ldots, \tv^{T+t}_i, \tg^{T+1}_i, \ldots \tg^{T+t}_i) -  
\frac{1}{n}  \sum_{i=1}^n \varphi (v^*_{i}, \hv^1_i, \ldots, \hv^{t}_i, \hg^1_i, \ldots \hg^{t}_i)  \bigg\vert \nonumber \\
& \le 2C(t+2)\left[ 1 + \frac{\| \bv^* \|^2}{n}  +  
 \sum_{\ell=1}^{t} \Big( \frac{\| \tbv^{T+\ell} \|^2}{n} +
 \frac{\| \hbv^{\ell} \|^2}{n} \Big) + 
 \sum_{\ell=1}^{t} \Big( \frac{\| \tbg^{T+\ell} \|^2}{n}
 + \frac{\| \hbg^{\ell} \|^2}{n} \Big) \right]^{\frac{1}{2}} \nonumber \\
 &  \cdot \left( \frac{\| \tbv^{T+1} - \hbv^1 \|^2}{n}+ \ldots + \frac{\| \tbv^{T+t} - \hbv^{t} \|^2}{n}  +\frac{\| \tbg^{T+1} - \hbg^1  \|^2}{n} + \ldots  + \frac{\| \tbg^{T+t} - \hbg^t  \|^2}{n}\right)^{\frac{1}{2}}.
  \label{eq:PL2phi_bnd_rect}
\end{align}
The proof strategy is similar to the square case. We inductively show that in the limit $T, n \to \infty$ (with the limit in $n$ taken first): i)  the terms in the last line of \eqref{eq:PL2psi_bnd_rect} and \eqref{eq:PL2phi_bnd_rect} all converge to 0 almost surely, and ii) each of the terms within the square brackets in \eqref{eq:PL2psi_bnd_rect} and \eqref{eq:PL2phi_bnd_rect} converges to a finite deterministic value.

\underline{Base case $t=1$}: Recalling that $\hbu^1 = \sqrt{m} \bu_{\PCA}$, from Lemma \ref{lemma:convrect}, we have 
\beq
\lim_{T \to \infty} \lim_{m \to \infty} \frac{\| \tbu^{T+1} - \hbu^1 \|^2}{m}=0.
\label{eq:tuhu_conv_rect}
\eeq
 Writing $x = \gamma/\alpha^2$ for brevity,
recall that $\sum_{i=1}^\infty \kappa_{2i}^\infty x^i = R(x)$.  
From the definitions of $\tbg^{T+1}$ and $\hbg^1$ in \eqref{eq:AMPfake1rect} and \eqref{eq:AMP_rect_init_mod} and  we have 
\begin{equation*}
    \begin{split}
     \tbg^{T+1} - \hbg^1 & = \frac{1}{1+ \gamma R(x)}  \bX^\sT(\tbu^{T+1} - \hbu^1) \, + \, \Bigg[ \frac{\gamma R(x)}{ 1+ \gamma R(x)}  \bX^\sT\tbu^{T+1}  - \alpha \sum_{j=1}^{T} \kappa_{2j} x^j \tbv^{T+1-j}\Bigg], 
    \end{split}
\end{equation*}
where we have used $\tsb_{T+1, T+1-j} = \alpha \kappa_{2j} x^j$ for $j \in [1,T]$. Therefore
\begin{equation}
    \begin{split}
      &   \frac{\|\tbg^{T+1} - \hbg^1 \|^2}{n} \le
        \frac{2}{(1+ \gamma R(x))^2} \| \bX \|^2_{\rm op} \frac{\| \tbu^{T+1} - \hbu^1 \|^2}{n} \\
        & \qquad + \, \frac{2}{n}\Bigg\| \frac{\gamma R(x)}{ 1+ \gamma R(x)}  \bX^\sT\tbu^{T+1}  - \alpha \sum_{j=1}^{T} \kappa_{2j} x^j \tbv^{T+1-j} \Bigg\|^2 \, =: \, 2( S_1 +  S_2).
    \end{split}
    \label{eq:tgt1_hbg1_diff}
\end{equation}
Since $\| \bX \|_{\rm op} \stackrel{n \to \infty}{\longrightarrow} D^{-1}(x)$,  from \eqref{eq:tuhu_conv_rect} we have $\lim_{T, n \to \infty} S_1=0$. (Here and in the remainder of the proof,  $\lim_{T, n \to \infty}$  denotes the limit  $n \to \infty$ taken first and then $T \to \infty$.) Next, using the definition of $\tbg^{T+1}$ in \eqref{eq:AMPfake1rect}, we write the second term $S_2$ as
\begin{align}
  S_2 & = \frac{1}{n}\Bigg\| \frac{\gamma R(x)}{ 1+ \gamma R(x)}  \tbg^{T+1}  - \frac{\alpha}{1+ \gamma R(x)} \sum_{j=1}^{T} \kappa_{2j} x^j \tbv^{T+1-j} \Bigg\|^2   \nonumber  \\
   & \le \frac{2}{n}\Bigg\| \frac{\gamma R(x)}{ 1+ \gamma R(x)}  \tbg^{T+1}  - \frac{\alpha}{1+ \gamma R(x)} \sum_{j=1}^{T} \kappa_{2j}^\infty \, x^j \tbv^{T+1-j} \Bigg\|^2  + \frac{2\alpha^2}{(1+ \gamma R(x))^2} \Delta_{S_2}, \label{eq:S2_delS2}
   \end{align}
where 
\beq
\Delta_{S_2} := \frac{1}{n} \Big\|  \sum_{j=1}^{T} (\kappa_{2j}^\infty - \kappa_{2j}) x^j \tbv^{T+1-j} \Big \|^2 = \frac{1}{n} \hspace{-1pt} \sum_{i,j=1}^T (\kappa_{2i}^\infty - \kappa_{2i}) (\kappa_{2j}^\infty - \kappa_{2j}) x^{i+j} 
\frac{\< \tbv^{T+1-i}, \, \tbv^{T+1-j} \>}{n}.
\label{eq:DelS2_def}
\eeq
Using the state evolution result of Proposition \ref{prop:tilSErect}, we almost surely have 
\beq
\begin{split}
\lim_{n \to \infty} \, \frac{\< \tbv^{T+1-i}, \, \tbv^{T+1-j} \>}{n} & =  \E\{ \tV_{T+1-i} \tV_{T+1-j} \} \\ 
& =\frac{\gamma^2}{\alpha^2}(\alpha^2 \Delta_{\PCA} \,  + \, \tomega_{T+1-i, T+1-j}) < C,
\end{split}
\label{eq:tvitvj}
\eeq
for some universal constant $C>0$.
Here, $\tomega_{T+1-i, T+1-j}$ is defined in \eqref{eq:tomega_update}, and we recall from \eqref{eq:G0GKtilde_rect}-\eqref{eq:Vtilde_rect} that
\beq
\tV_{T+1-j}  = \frac{\gamma}{\alpha} \tG_{T+1-j} \ \text{ with }  \ \tG_{T+1-j}
= \alpha\sqrt{\Delta_{\PCA}} V_* + \tZ_{T+1-j}, \quad \text{ for } j \in [0, T].
\label{eq:tVtG}
\eeq
Since $\kappa_{2i} \to \kappa^\infty_{2i}$  as $n \to \infty$, for $i \in [1, T]$ (by the model assumptions), using \eqref{eq:tvitvj} in \eqref{eq:DelS2_def},
\beq
\lim_{T \to \infty} \lim_{n \to \infty} \, \Delta_{S_2} =0 \ \text{ almost surely}.
\label{eq:DelS2_lim}
\eeq
Next, using Proposition \ref{prop:tilSErect}, for any $T >0$, the first term in \eqref{eq:S2_delS2} has the following almost sure limit as $n \to \infty$:
\begin{align}
    & \lim_{n \to \infty}  \frac{1}{n}\Bigg\| \frac{\gamma R(x)}{ 1+ \gamma R(x)}  \tbg^{T+1}  - \frac{\alpha}{1+ \gamma R(x)} \sum_{j=1}^{T} \kappa_{2j}^\infty x^j \tbv^{T+1-j} \Bigg\|^2  \nonumber \\
    & = \E \Bigg\{ \Bigg( \frac{\gamma R(x)}{ 1+ \gamma R(x)} \tG_{T+1} - \frac{\alpha}{1+ \gamma R(x)} \sum_{j=1}^{T} \kappa_{2j}^\infty x^j \tV_{T+1-j} \Bigg)^2 \Bigg\}  \nonumber \\
    &  \stackrel{\mathclap{\mbox{\footnotesize (a)}}}{=} \frac{\gamma^2 }{(1+ \gamma R(x))^2} \, \E \Bigg\{  \Bigg(\Big( R(x) - \sum_{j=1}^T \kappa_{2j}^\infty x^j \Big) \tG_{T+1} 
    + \sum_{j=1}^T \kappa_{2j}^\infty x^j (\tG_{T+1} - \tG_{T+1-j}) \Bigg)^2 \Bigg\},
    \label{eq:S2_term1_rect}
\end{align}
where (a) is obtained using \eqref{eq:tVtG}. From \eqref{eq:Rrect}, we  have $\lim_{T \to \infty} \sum_{j=1}^T \kappa_{2j}^\infty x^j =R(x)$. Furthermore,  using \eqref{eq:tVtG} we have
\begin{align}
    & \E \Bigg\{  \Bigg(\sum_{j=1}^T \kappa_{2j}^\infty x^j (\tG_{T+1} - \tG_{T+1-j}) \Bigg)^2 \Bigg\} \nonumber \\ 
   & = 
   \sum_{i,j=1}^T \kappa_{2i}^\infty \kappa_{2j}^\infty \, x^{i+j} \, (\tomega_{T+1, T+1} - \tomega_{T+1, T+1-i} -  \tomega_{T+1, T+1-j} + \tomega_{T+1-i, T+1-j}) \nonumber  \\
    & \quad \longrightarrow 0 \ \text{ as } \ T \to \infty,
   \label{eq:tsigij}
\end{align}
where the $T \to \infty$  limit is obtained using Lemma \ref{lem:SE_FP_phase1rect} and steps similar to \eqref{eq:S2_sumexp}-\eqref{eq:S2lim}. Using \eqref{eq:DelS2_lim}-\eqref{eq:tsigij} in \eqref{eq:S2_delS2}, we have
\beq
\lim_{T \to \infty} \lim_{n \to \infty} S_2 =0 \ \  \text{ almost surely}.
\eeq
Hence using \eqref{eq:tgt1_hbg1_diff}, we have shown that
$\lim_{T,n  \to \infty} \frac{1}{n} \| \tbg^{T+1} - \hbg^1 \|^2 =0$ almost surely.

The proof that $\lim_{T, n \to \infty} \frac{1}{n} \| \tbf^{T+1} - \hbf^1 \|^2=0$ uses similar steps: from the definitions of $\tbf^{T+1}$ and $\hbf^1$ in \eqref{eq:AMPfake1rect} and \eqref{eq:AMP_rectg_mod}, we have
\begin{equation}
    \begin{split}
        \tbf^{T+1} - \hbf^1 = \frac{\gamma}{\alpha} \bX (\tbg^{T+1} - \hbg^1) \, + \, \bar{\sa}_{1,1} \hbu^1 - \sum_{j=0}^T \tsa_{T+1, T+1-j} \tbu^{T+1-j},
    \end{split}
\end{equation}
where $\bar{\sa}_{1,1} = \alpha \sum_{j=0}^\infty \kappa^\infty_{2(j+1)} x^{j+1}$ and $\tsa_{T+1, T+1-j} = \alpha \kappa_{2(j+1)} x^{j+1}$ for $j \in [0,T]$. Therefore, 
\begin{equation}
    \begin{split}
       &  \frac{\| \tbf^{T+1} - \hbf^1 \|^2}{n}   \leq \frac{5\gamma^2}{\alpha^2} \| \bX \|^2_{\rm op} 
        \frac{\| \tbg^{T+1} - \hbg^1\|^2}{n} 
       \,  + \, 5\bar{\sa}_{1,1}^2 \frac{ \| \hbu^1 - \tbu^{T+1} \|^2}{n} \\
        & \quad + \frac{5}{n} 
        \Bigg\| \alpha \sum_{j=0}^T \kappa^\infty_{2(j+1)} x^{j+1} (\tbu^{T+1}- \tbu^{T+1-j})\Bigg\|^2 \,  +  \, 5\alpha^2 \Bigg(  \sum_{j=T+1}^\infty \kappa^\infty_{2(j+1)} x^{j+1} \Bigg)^2 \frac{\| \hbu^1 \|^2}{n} \\ 
        & \quad + \frac{5}{n}  \Bigg\| \alpha \sum_{j=0}^T 
        (\kappa^\infty_{2(j+1)} - \kappa_{2(j+1)} ) x^{j+1} \tbu^{T+1-j}\Bigg\|^2.
    \end{split}
    \label{eq:tfT1hf1_split}
\end{equation} 
We have shown $\lim_{T,n  \to \infty} \frac{1}{n} \| \tbg^{T+1} - \hbg^1 \|^2 =0$ and $\lim_{T,n  \to \infty} \frac{1}{n} \| \tbu^{T+1} - \hbu^1 \|^2$, hence the first two terms in \eqref{eq:tfT1hf1_split} converge to 0.   For the third term in \eqref{eq:tfT1hf1_split}, we first apply Proposition \ref{prop:tilSErect} to express the $n \to \infty$ limit in terms of state evolution parameters of the artificial AMP,  which  can then be shown to converge to $0$ as $T \to \infty$ using Lemma \ref{lem:SE_FP_phase1rect} and steps similar to \eqref{eq:S2_sumexp}-\eqref{eq:S2lim}. Since the power series $\sum_{j=0}^\infty \kappa^\infty_{2(j+1)} x^{j+1} = R(x)$ converges, and $\| \hbu^1 \|^2/n = m/n = \gamma$, the fourth term  converges to $0$ as $T, n \to \infty$. As $\kappa_{2(j+1)} \to \kappa^\infty_{2(j+1)}$ as $n \to \infty$, by arguments similar to  \eqref{eq:kappj_kappj_inf}-\eqref{eq:uiuj_inner_prod}, the final term in \eqref{eq:tfT1hf1_split} also converges to $0$.

Recalling that $\tbv^{T+1} - \hbv^1 = \frac{\gamma}{\alpha}(\tbg^{T+1} - \hbg^1)$, it follows that $\lim_{T, n \to \infty} \frac{1}{n}\| \tbv^{T+1} - \hbv^1 \|^2 =0$ almost surely. Finally, a triangle inequality sandwiching argument like the one used in \eqref{eq:tbu_triangle}-\eqref{eq:hu1_lim} yields
\begin{equation}
\begin{split}
      & \lim_{T \to \infty}  \lim_{n \to \infty} \frac{\| \tbv^{T+1} \|^2}{n} =   
    \lim_{T \to \infty}  \lim_{n \to \infty} \frac{\| \hbv^{1} \|^2}{n} = \frac{\gamma^2}{\alpha^2}( \alpha^2\Delta_{\PCA} + \omega_{1,1}), \\
    &  \lim_{T \to \infty}  \lim_{n \to \infty} \frac{\| \tbu^{T+1} \|^2}{m} =   
    \lim_{T \to \infty}  \lim_{n \to \infty} \frac{\| \hbu^{1} \|^2}{m} =1.  
\end{split}
\label{eq:hatu1_hatv1_lim}
\end{equation}
This completes the proof of \eqref{eq:fake_modified_conv_uf}-\eqref{eq:fake_modified_conv_vg} for $t=1$.

\underline{Induction step}: For $t \ge 1$, assume that the following hold almost surely for $\ell \in [1, t]$:
\begin{equation}
    \begin{split}
         \lim_{T \to \infty} \lim_{n \to \infty} \frac{\| \hbu^\ell - \tbu^{T+\ell}\|^2}{m} \,   = \,  \lim_{T \to \infty} \lim_{n\to \infty} \frac{\| \hbg^\ell - \tbg^{T+\ell}\|^2}{n}
        &  \,   = \, 
        \lim_{T \to \infty} \lim_{n\to \infty}  \frac{\| \hbv^\ell - \tbv^{T+\ell}\|^2}{n} =0.
    \end{split}
\end{equation}
We now show that   $\lim_{T, n \to \infty} \frac{1}{n}\| \tbf^{T+t} - \hbf^{t} \|^2=0$.  We have already shown this for $t=1$ above. For $t\ge 2$,  using the definitions  $\tbf^{T+t}$ and $\hbf^t$ in \eqref{eq:AMPfake1rect} and \eqref{eq:AMP_rectg_mod}, and applying the Cauchy-Schwarz inequality, we have 
\begin{equation}
    \begin{split}
           &  \frac{1}{n}\| \tbf^{T+t} - \hbf^{t} \|^2 \leq 
   \frac{(t+1)}{n}\Bigg( \| \bX (\tbv^{T+t} - \hbv^t) \|^2 + 
    \sum_{\ell=2}^t \| \tsa_{T+t,T+\ell} \tbu^{T+\ell} - \bar{\sa}_{t,\ell} \hbu^\ell \|^2   \\
   & \hspace{2in}   +  \ \Big\|  \sum_{i=1}^{T+1} \tsa_{T+t,i} \tbu^i  - \bar{\sa}_{t,1} \hbu^1\Big\|^2 \Bigg).
    \end{split}
    \label{eq:tfT_hft_split_rect}
\end{equation}
The decomposition and the analysis of the three terms in \eqref{eq:tfT_hft_split_rect} is similar to that in \eqref{eq:fTt_ft_bnd} for the square case. Using arguments similar to \eqref{eq:XuTthut}-\eqref{eq:fTt_ft_lim}, we obtain  $\lim_{T, n \to \infty} \frac{1}{n}\| \tbf^{T+t} - \hbf^{t} \|^2 =0 $. Recalling that $\hbu^{t+1} = \su_{t+1}(\hbf^t)$ and $\tbu^{T+t+1}=\su_{t+1}(\tbf^{T+t})$ with $\su_{t+1}$ Lipschitz, we also  have  $\lim_{T, n \to \infty} \frac{1}{n}\| \tbu^{T+t+1} - \hbu^{t+1} \|^2 =0 $ almost surely.  The proof that 
$\lim_{T, n \to \infty} \frac{1}{n}\| \tbg^{T+t+1} - \hbg^{t+1} \|^2 =0 $ uses a decomposition similar to \eqref{eq:tfT_hft_split_rect} and  is along the same lines. Since $\hbv^{t+1} = \sv_{t+1}(\hbg^{t+1})$ and $\tbv^{T+t+1}=\sv_{t+1}(\tbg^{T+t+1})$ with $\sv_{t+1}$ Lipschitz, it follows that  $\lim_{T, n \to \infty} \frac{1}{n}\| \tbv^{T+t+1} - \hbv^{t+1} \|^2 =0 $ almost surely.

Using these results together with a triangle inequality sandwich argument similar to \eqref{eq:tbu_triangle}-\eqref{eq:hu1_lim}, we  have $$ \lim_{n \to \infty} \frac{1}{n} \| \hbu^{t+1} \|^2 = \lim_{T, n \to \infty}\| \tbu^{T+t+1} \|^2 = \E\{ \su_{t+1}(F_t)^2 \}.$$ Similarly, $$ \lim_{n \to \infty} \frac{1}{n} 
\| \hbv^{t+1} \|^2 = \lim_{T, n \to \infty} \frac{1}{n} 
\| \tbv^{T+t+1} \|^2 = \E\{ \sv_{t+1}(G_{t+1})^2 \}.$$ Using these results in \eqref{eq:PL2psi_bnd_rect} and \eqref{eq:PL2phi_bnd_rect} completes the inductive proof of
\eqref{eq:fake_modified_conv_uf}-\eqref{eq:fake_modified_conv_vg}.
\end{proof}

\subsection{Proof of Theorem \ref{thm:rect}}\label{subsec:app_thm_prof_rect}
The proof is along the same lines as that for the square case in Section \ref{subsec:app_thm_prof_sq}; to avoid repetition,  we only sketch the main steps.
The first step is to show using Lemma \ref{lem:sec_phase_rect}  that the state evolution result holds for the the modified AMP. That is, the following almost sure limits hold  for $t \ge 1$:
\begin{align}
\lim_{m \to \infty} \frac{1}{m} \sum_{i=1}^m 
 \psi (u^*_{i}, \hu^1_i, \ldots, \hu^{t+1}_i, \hf^1_i, \ldots \hf^{t}_i)
 = \E \left\{ \psi(U_*, U_1, \ldots, U_{t+1}, F_1, \ldots, F_t) \right\}, \label{eq:ustat_hat_rect}  \\
 \lim_{n \to \infty} \frac{1}{n} \sum_{i=1}^n \varphi (\hv^*_{i}, \hv^1_i, \ldots, \hv^t_i, \hg^1_i, \ldots g^{t}_i)
 = \E \left\{ \varphi(V_*,V_1, \ldots, V_t, G_1, \ldots, G_t) \right\}. \label{eq:vstat_hat_rect}
\end{align}
For each of \eqref{eq:ustat_hat_rect} and \eqref{eq:vstat_hat_rect}, we use a three-term decomposition as in \eqref{eq:S1S2S3_sq}. Using arguments similar to those used to analyze \eqref{eq:S1S2S3_sq}, we can show that each of the terms goes to 0 as $T, n \to \infty$.

The second part of the proof is to inductively show that the following statements hold almost surely for $t \ge 1$:
\begin{align}
    & \lim_{m \to \infty} \abs{\frac{1}{m} \sum_{i=1}^m \psi(u_i^*, u_i^1, \ldots, u^{t+1}_i, f^1_i, \ldots, f^t_i) - 
\frac{1}{m} \sum_{i=1}^m \psi(u_i^*, \hu_i^1, \ldots, \hu^{t+1}_i, \hf^1_i, \ldots, \hf^t_i)} =0, \label{eq:lim_bubf_hbuhbf_joint_rect} \\
 & \lim_{m \to \infty} \frac{\| \bdff^{t} - \hbf^{t} \|^2}{n}=0, \quad  \lim_{m \to \infty} \frac{\| \bu^{t+1} - \hbu^{t+1} \|^2}{m}=0,
 \label{eq:lim_bubf_hbuhbf_sep_rect} \\
  & \lim_{n \to \infty} \abs{\frac{1}{n} \sum_{i=1}^n \varphi(v_i^*, v_i^1, \ldots, v^{t}_i, g^1_i, \ldots, g^t_i) - 
\frac{1}{n} \sum_{i=1}^n \varphi(v_i^*, \hv_i^1, \ldots, \hv^{t}_i, \hg^1_i, \ldots, \hg^t_i)} =0, \label{eq:lim_bvbg_hbvhbg_joint_rect} \\
 & \lim_{n \to \infty} \frac{\| \bg^{t} - \hbg^{t} \|^2}{n}=0, \quad  \lim_{n \to \infty} \frac{\| \bv^{t} - \hbv^{t} \|^2}{n}=0.
 \label{eq:lim_bvbg_hbvhbg_sep_rect}
\end{align}
Since  $\psi \in \PL(2)$, by the same arguments as in \eqref{eq:PL2_uf_hbuf_bnd}, we have 
\begin{align}
& 
\left\vert \frac{1}{m}  \sum_{i=1}^m \psi (u^*_{i},  u^{1}_i, \ldots, u^{t+1}_i, f^1_i, \ldots f^{t}_i)
\, - \, 
\frac{1}{m}  \sum_{i=1}^m \psi (u^*_{i}, \hu^1_i, \ldots, \hu^{t+1}_i, \hf^1_i, \ldots \hf^{t}_i)  \right\vert  \nonumber \\
  & \le 2C(t+2) \left[ 1 + \frac{\| \bu^* \|^2}{m}  +  
 \sum_{\ell=1}^{t+1} \Big( \frac{\| \bu^{\ell} \|^2}{m} +
 \frac{\| \hbu^{\ell} \|^2}{m} \Big) + 
 \sum_{\ell=1}^{t} \Big( \frac{\| \bdff^{\ell} \|^2}{m}
 + \frac{\| \hbf^{\ell} \|^2}{m} \Big) \right]^{\frac{1}{2}} \nonumber  \\ 
 &  \quad \cdot \left( \frac{\| \hbu^{1} - \bu^1 \|^2}{m}+ \ldots + \frac{\| \bu^{t+1} - \hbu^{t+1} \|^2}{m}  +\frac{\| \bdff^{1} - \hbf^1  \|^2}{m} + \ldots  + \frac{\| \bdff^{t} - \hbf^t  \|^2}{m}\right)^{\frac{1}{2}}. \label{eq:PL2_uf_hbuf_bnd_rect}
\end{align}
Using $\varphi \in \PL(2)$, an analogous bound holds for  the term in \eqref{eq:lim_bvbg_hbvhbg_joint_rect}.

 We then argue that $\lim_{n \to \infty} \frac{1}{m} \| \bdff^t - \hbf^t  \|^2 =0$; this follows from a  bound similar to \eqref{eq:ft_hft_bnd} 
 and the induction hypothesis. (In the argument,  $\hbu^t, \bu^t$, $\{ \sb_{t, \ell}, \bar{\sb}_{t, \ell} \}_{\ell \in [1,t]}$ in \eqref{eq:ft_hft_bnd}  are replaced by $\hbv^t, \bv^t$, $\{ \sa_{t, \ell}, \bar{\sa}_{t, \ell} \}_{\ell \in [1,t]}$, respectively.)  Then, recalling $\hbu^{t+1} = \su_{t+1}(\hbf^t)$ and $\bu^{t+1} = \su_{t+1}(\bdff^t)$, since $\su_{t+1}$ Lipschitz, it follows that $\lim_{n \to \infty} \frac{1}{m} \| \bu^{t+1} - \hbu^{t+1}  \|^2 =0$. Using the triangle inequality sandwiching argument in \eqref{eq:tbu_triangle}, the terms $\frac{1}{m}\|  \bdff^t\|^2$, $\frac{1}{m}\| \hbf^t\|^2$, $\frac{1}{m}\|  \bu^t\|^2$, and $\frac{1}{m}\|  \hbu^t\|^2$ converge to deterministic limits (analogous to 
\eqref{eq:fl_hfl_ul_hul_sq}). This leads to \eqref{eq:lim_bubf_hbuhbf_joint_rect} via  \eqref{eq:PL2_uf_hbuf_bnd_rect}. The results  \eqref{eq:lim_bvbg_hbvhbg_joint_rect}-\eqref{eq:lim_bvbg_hbvhbg_sep_rect} are obtained using a similar sequence of steps.

Combining \eqref{eq:lim_bubf_hbuhbf_joint_rect} with \eqref{eq:ustat_hat_rect} and \eqref{eq:lim_bvbg_hbvhbg_joint_rect} with \eqref{eq:vstat_hat_rect} yields the result of Theorem \ref{thm:rect}.  \qed

\section{An auxiliary lemma}
The following result is proved in \cite[Lemma 6]{BM-MPCS-2011}.
\begin{lemma}
\label{lem:lipderiv}
Let $F \colon \reals \to\reals$ be a Lipschitz function, with derivative $F'$ that is continuous almost everywhere in the first argument. Let $U_m$  be a sequence of random variables in $\reals$ converging in distribution to the random variable $U$ as $m \to \infty$. Furthermore, assume that the distribution of $U$ is absolutely continuous with respect to the Lebesgue measure. Then,
\[ \lim_{m \to \infty}  \E\{ F'(U_m) \} = \E\{ F'(U) \}. \]
\end{lemma}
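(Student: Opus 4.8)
The plan is to deduce the claim from the weak convergence $U_m \to U$ combined with a version of the continuous mapping theorem that permits the test function to be discontinuous on a set which is negligible under the limiting law. First I would record two elementary consequences of $F$ being Lipschitz with constant $L$: (i) by Rademacher's theorem $F$ is differentiable at Lebesgue-a.e.\ point, so $F'$ is well defined outside a Lebesgue-null set $N_0$ (on which we may set $F' \equiv 0$), and (ii) $|F'(x)| \le L$ for every $x \notin N_0$, so that $F'$ is a bounded Borel function. By hypothesis the set $D$ of discontinuity points of $F'$ is also Lebesgue-null, so $N := N_0 \cup D$ has Lebesgue measure zero.

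Next I would invoke the absolute continuity of the law of $U$: since $N$ is Lebesgue-null, $\mathbb{P}(U \in N) = 0$, i.e.\ $F'$ is continuous at $U$ almost surely. The mapping theorem (see e.g.\ \cite{billingsley2008probability}) then yields $F'(U_m) \to F'(U)$ in distribution, starting from $U_m \to U$ in distribution. Finally, since $|F'(U_m)| \le L$ almost surely for every $m$, the family $\{ F'(U_m) \}_m$ is uniformly bounded, hence uniformly integrable, and convergence in distribution upgrades to convergence of expectations: $\lim_{m\to\infty} \E\{ F'(U_m) \} = \E\{ F'(U) \}$, which is the claim.

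The only delicate point is the bookkeeping around $F'$ being defined, and continuous, merely Lebesgue-a.e.; this is handled precisely by the absolute continuity hypothesis on $U$, which transfers the property ``Lebesgue-null'' to ``null under the law of $U$'' and thereby licenses the use of the mapping theorem. Without that hypothesis the statement can fail (e.g.\ $F(x) = |x|$, with $F'$ discontinuous at $0$, and $U_m = 1/m \to 0 = U$).
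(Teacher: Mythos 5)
Your argument is correct and is the standard proof of this fact: the paper does not reprove the lemma but simply cites \cite{BM-MPCS-2011} (Lemma 6 there), whose proof proceeds exactly as you do --- Rademacher plus the Lipschitz bound to get a bounded Borel $F'$, the mapping theorem for a function whose discontinuity set is null under the (absolutely continuous) limit law, and then bounded, hence uniformly integrable, convergence to pass to expectations. The one bookkeeping caveat is that extending $F'$ by $0$ on the non-differentiability set $N_0$ could in principle create new discontinuities at limit points of $N_0$; this is harmless if one reads the hypothesis as asserting that some everywhere-defined version of $F'$ is continuous off a Lebesgue-null set and works with that version (in the paper's applications $u_t'$ is in fact continuous everywhere, so the point is moot).
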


\end{document}